\newtheorem{theorem}{Theorem}
\newtheorem{corollary}[theorem]{Corollary}
\newtheorem{lemma}{Lemma}
\newtheorem{proposition}[theorem]{Proposition}
\newtheorem{definition}{Definition}
\newtheorem{assumption}{Assumption}
\theoremstyle{definition}
\newtheorem{remark}[theorem]{Remark}
\newtheorem{example}[theorem]{Example}
\newlength{\widebarargwidth}
\newlength{\widebarargheight}
\newlength{\widebarargdepth}
\newcommand{\E}{\ensuremath{\mathbb{E}}}
\newcommand{\mprob}{\ensuremath{\mathbb{P}}}
\newcommand{\real}{\ensuremath{\mathbb{R}}}
\newcommand{\sigmatil}{\ensuremath{\widetilde{\sigma}}}
\newcommand{\sigmahat}{\ensuremath{\widehat{\sigma}}}
\newcommand{\betastar}{\ensuremath{\beta^*}}
\newcommand{\gammastar}{\ensuremath{\gamma^*}}
\newcommand{\betahat}{\ensuremath{\widehat{\beta}}}
\newcommand{\gammahat}{\ensuremath{\widehat{\gamma}}}
\newcommand{\supp}{\ensuremath{\operatorname{supp}}}
\newcommand{\inprod}[2]{\ensuremath{\langle #1 , \, #2 \rangle}}
\newcommand{\Xtil}{\ensuremath{\widetilde{X}}}
\newcommand{\opnorm}[1]{\left|\!\left|\!\left|{#1}\right|\!\right|\!\right|}
\title{Provable Training Set Debugging for Linear Regression
}
\author{
  Xiaomin Zhang \\
  Department of Computer Sciences\\
  University of Wisconsin-Madison \\
  \texttt{xzhang682@wisc.edu} \\
  \And
  Xiaojin Zhu \\
  Department of Computer Sciences\\
  University of Wisconsin-Madison \\
  \texttt{jerryzhu@cs.wisc.edu} \\
  \And
  Po-Ling Loh \\
  Department of Pure Mathematics and Mathematical Statistics \\ 
  University of Cambridge  \\
  \texttt{pll28@cam.ac.uk} \\
}
\begin{document}

\maketitle

\begin{abstract}
We investigate problems in penalized $M$-estimation, inspired by applications in machine learning debugging. Data are collected from two pools, one containing data with possibly contaminated labels, and the other which is known to contain only cleanly labeled points. We first formulate a general statistical algorithm for identifying buggy points and provide rigorous theoretical guarantees when the data follow a linear model. We then propose an algorithm for tuning parameter selection of our Lasso-based algorithm with theoretical guarantees. Finally, we consider a two-person ``game" played between a bug generator and a debugger, where the debugger can augment the contaminated data set with cleanly labeled versions of points in the original data pool. We develop and analyze a debugging strategy in terms of a Mixed Integer Linear Programming (MILP). Finally, we provide empirical results to verify our theoretical results and the utility of the MILP strategy.
\keywords{Robust Statistics \and Outlier Detection \and Tuning Parameter Selection \and Optimization}
\end{abstract}

\section{Introduction}\label{sec:intro}
\label{intro}
Modern machine learning systems are extremely sensitive to training set contamination. Since sources of error and noise are unavoidable in real-world data (e.g., due to Mechanical Turkers, selection bias, or adversarial attacks), an urgent need has arisen to perform automatic debugging of large data sets. Cadamuro et al.~\cite{cadamuro2016debugging} and Zhang et al.~\cite{zhang2018training} proposed a method called ``machine learning debugging'' to identify training set errors by introducing new clean data. Consider the following real-world scenario: Company \textit{A} collects movie ratings for users on a media platform, from which it learns relationships between features of movies and ratings in order to perform future recommendations. A competing company \textit{B} knows \textit{A}'s learning method and hires some users to provide malicious ratings. Company \textit{A} could employ a robust method for learning contaminated data---but in the long run, it would be more effective for company \textit{A} to \emph{identify} the adversarial users and prevent them from submitting additional buggy ratings in the future. This distinguishes debugging from classical learning. The debugging problem also assumes that company \textit{A} can hire an expert to help rate movies, from which it obtains a second trusted data set which is generally smaller than the original data set due to budget limitations.  In this paper, we will study a theoretical framework for the machine learning debugging problem in a linear regression setting, where the main goal is to identify bugs in the data. We will also discuss theory and algorithms for selecting the trusted data set.

Our \emph{first contribution} is to provide a rigorous theoretical framework explaining how to identify errors in the ``buggy" data pool. Specifically, we embed a squared loss term applied to the trusted data pool into the extended Lasso algorithm proposed by Nguyen and Tran~\cite{nguyen2013robust}, and reformulate the objective to better service the debugging task. Borrowing techniques from robust statistics~\cite{huber2009robust,she2011outlier,nguyen2013robust,foygel2014corrupted,slawski2017linear} and leveraging results on support recovery analysis~\cite{wainwright2009sharp,meinshausen2009lasso}, we provide sufficient conditions for successful debugging in linear regression. We emphasize that our setting, involving data coming from multiple pools, has not been studied in any of the earlier papers.

The work of Nguyen and Tran~\cite{nguyen2013robust} and Foygel and Mackey~\cite{foygel2014corrupted} (and more recently, Sasai and Fujisawa~\cite{sasai2020robust}) provided results for the extended Lasso with a theoretically optimal choice of tuning parameter, which depends on the unknown noise variance in the linear model. Our \emph{second contribution} is to discuss a rigorous procedure for tuning parameter selection which does not require such an assumption. Specifically, our algorithm starts from a sufficiently large initial tuning parameter that produces the all-zeros vector as an estimator. Assuming the sufficient conditions for successful support recovery are met, this tuning parameter selection algorithm is guaranteed to terminate with a correct choice of tuning parameter after a logarithmic number of steps. Note that when outliers exist in the training data set, it is improper to use cross-validation to select the tuning parameter due to possible outliers in the validation data set.

Our \textit{third contribution} considers how to design a second clean data pool, which is an  important but previously unstudied problem in machine learning debugging. We consider a two-player ``game" between a bug generator and debugger, where the bug generator performs adversarial attacks~\cite{chakraborty2018adversarial}, and the debugger applies Lasso-based linear regression to the augmented data set. 
On the theoretical side, we establish a sufficient condition under which the debugger can always beat the bug generator, and show how to translate this condition into a debugging strategy based on mixed integer linear programming. Our theory is only derived in the ``noiseless” setting; nonetheless, empirical simulations show that our debugging strategy also performs well in the noisy setting.
We experimentally compare our method to two other algorithms motivated by the machine learning literature, which involve designing two neural networks, one to correct labels and one to fit cleaned data~\cite{veit2017learning}; and a method based on semi-supervised learning that weights the noisy and clean datasets differently and employs a similarity matrix based on the graph Laplacian~\cite{fergus2009semi}.

The remainder of the paper is organized as follows: Section~\ref{sec:formu} introduces our novel framework for machine learning debugging using weighted $M$-estimators. Section~\ref{sec:supprec} provides theoretical guarantees for recovery of buggy data points. Section~\ref{sec:tune} presents our algorithm for tuning parameter selection and corresponding theoretical guarantees. Section~\ref{sec:active} discusses strategies for designing the second pool. Section~\ref{sec:exp} provides experimental results. Section~\ref{sec:conclusion} concludes the paper.

\textbf{Notation:} We write $\Lambda_{\min}(A)$ and $\Lambda_{\max}(A)$ to denote the minimum and maximum eigenvalues, respectively, of a matrix $A$. We use $Null(A)$ to denote the nullspace of $A$. For subsets of row and column indices $S$ and $T$, we write $A_{S,T}$ to denote the corresponding submatrix of $A$.
We write $\|A\|_{\max}$ to denote the elementwise $\ell_\infty$-norm, $\|A\|_2$ to denote the spectral norm, and $\|A\|_\infty$ to denote the $\ell_\infty$-operator norm. For a vector $v \in \real^n$, we write $\supp(v) \subseteq \{1, \dots, n\}$ to denote the support of $v$, and $\|v\|_{\infty} = \max|v_i|$ to denote the maximum absolute entry. We write $\|v\|_p$ to denote the $\ell_p$-norm, for $p \ge 1$. We write $\diag(v)$ to denote the $n \times n$ diagonal matrix with entries equal to the components of $v$.
For $S \subseteq \{1, \dots, n\}$, we write $v_S$ to denote the $|S|$-dimensional vector obtained by restricting $v$ to $S$. We write $[n]$ as shorthand for $\{1, \dots, n\}$.

\section{PROBLEM FORMULATION}
\label{sec:formu}

We first formalize the data-generating models analyzed in this paper.
Suppose we have observation pairs $\{(x_i, y_i)\}_{i=1}^n$ from the contaminated linear model
\begin{equation}
\label{EqnLinModel}
y_i = x_i^\top  \betastar + \gammastar_i + \epsilon_i, \qquad 1 \le i \le n,
\end{equation}
where $\betastar \in \real^p$ is the unknown regression vector, $\gammastar \in \real^n$ represents possible contamination in the labels, and the $\epsilon_i$'s are i.i.d.\ sub-Gaussian noise variables with variance parameter $\sigma^2$. We also assume the $x_i$'s are i.i.d.\ and $x_i \condind \epsilon_i$. This constitutes the ``first pool." Note that the vector $\gammastar$ is unknown and may be generated by some adversary. If $\gammastar_i = 0$, the $i^{\text{th}}$ point is uncontaminated and follows the usual linear model; if $\gammastar_i \neq 0$, the $i^{\text{th}}$ point is contaminated/buggy. Let $T : = \supp(\gammastar)$ denote the indices of the buggy points, and let $t := |T|$ denote the number of bugs.

We also assume we have a clean data set which we call the ``second pool." We observe $\{(\xtil_i,\ytil_i)\}_{i=1}^m$ satisfying
\begin{equation}
\ytil_i = \xtil_i^\top \betastar + \epsilontil_i, \qquad 1 \leq i \leq m,
\end{equation}
where the $\epsilontil_i$'s are i.i.d. sub-Gaussian noise variables with parameter $\sigmatil^2$. Let $L := \frac{\sigma}{\sigmatil}$, and suppose $L \geq 1$. Unlike the first pool, the data points in the second pool are all known to be uncontaminated. 

For notational convenience, we also use $X \in \real^{n \times p}$, $y \in \real^n$, and $\epsilon \in \real^m$ to denote the matrix/vectors containing the $x_i$'s, $y_i$'s, and $\epsilon_i$'s, respectively. Similarly, we define the matrices $\Xtil \in \real^{m \times p}, \ytil \in \real^m$, and $\epsilontil \in \real^m$. Note that $\betastar, \gammastar, T, t$, 
and the noise parameters $\sigma$ and $\sigmatil$ are all assumed to be unknown to the debugger. In this paper, we will work in settings where $X^\top X$ is invertible.

\paragraph{Goal:} Upon observing $\{(x_i, y_i)\}_{i=1}^n$, the debugger is allowed to design $m$ points $\Xtil$ in a stochastic or deterministic manner and query their corresponding labels $\ytil$, with the goal of recovering the support of $\gammastar$. 
We have the following definitions:
\begin{definition}
An estimator $\gammahat$ satisfies \textbf{subset support recovery} if $\supp(\gammahat) \subseteq \supp(\gammastar)$. It satisfies \textbf{exact support recovery} if $\supp(\gammahat) = \supp(\gammastar)$.
\end{definition}
In words, when $\gammahat$ satisfies subset support recovery, all estimated bugs are true bugs. When $\gammahat$ satisfies exact support recovery, the debugger correctly flags \emph{all} bugs. We are primarily interested in exact support recovery. 

\textbf{Weighted $M$-estimation Algorithm:} We propose to optimize the joint objective
\begin{align}
\label{EqnObj2}
\begin{split}
(\betahat, \gammahat) \in & \arg\min_{\beta \in \real^p, \gamma \in \real^n} \left\{\frac{1}{2n} \|y - X\beta - \gamma\|_2^2  + \frac{\eta}{2m} \|\ytil - \Xtil \beta\|_2^2 + \lambda \|\gamma\|_1\right\},
\end{split}
\end{align}
where the weight parameter $\eta > 0$ determines the relative importance of the two data pools. The objective function applies the usual squared loss to the points in the second pool and introduces the additional variable $\gamma$ to help identify bugs in the first pool. Furthermore, the $\ell_1$-penalty encourages $\gammahat$ to be sparse, since we are working in settings where the number of outliers is relatively small compared to the total number of data points. Note that the objective function~\eqref{EqnObj2} may equivalently be formulated as a weighted sum of $M$-estimators applied to the first and second pools, where the loss for the first pool is the robust Huber loss and the loss for the second pool is the squared loss (cf.\ Proposition~\ref{PropWeightedM} in Appendix~\ref{sec:app-additional-discussion}).


\textbf{Lasso Reformulation:} Recall that our main goal is to estimate (the support of) $\gammastar$ rather than $\betastar$. Thus, we will restrict our attention to $\gammastar$ by reformulating the objectives appropriately. We first introduce some notation: Define the stacked vectors/matrices
\begin{equation}
\label{EqnStack}
X' = \begin{pmatrix} X \\ \sqrt{\frac{\eta n}{m}} \Xtil
\end{pmatrix},  y' = \begin{pmatrix} y \\ \sqrt{\frac{\eta n}{m}} \ytil \end{pmatrix}, 
\epsilon' = \begin{pmatrix} \epsilon \\ \sqrt{\frac{\eta n}{m}} \tilde{\epsilon} \end{pmatrix},
\end{equation}
where $X' \in \real^{(m+n) \times p}$ and $y', \epsilon'  \in \real^{m+n}$. For a matrix $A$, let $P_A=A(A^\top A)^{-1}A^\top $ and $P_A^\perp=I - A(A^\top A)^{-1}A^\top $ denote projection matrices onto the range of the column space of $A$ and its orthogonal complement, respectively. For a matrix $S \subseteq [n]$, let $M_S$ denote the $(n+m) \times |S|$ matrix with $i^{\text{th}}$ column equal to the canonical vector $e_{S(i)}$. Thus, right-multiplying by $M_S$ truncates a matrix to only include columns indexed by $S$. We have the following useful result:

\begin{proposition}
\label{prop:obj-reform}
The objective function
\begin{equation}
\label{EqnPenGamma}
\gammahat \in \arg\min_{\gamma \in \real^n} \Big\{\frac{1}{2n} \|P_{X'}^\perp y' - P_{X'}^\perp M_{[n]} \gamma\|_2^2 + \lambda \|\gamma\|_1\Big\}
\end{equation}
shares the same solution for $\gammahat$ with the objective function~\eqref{EqnObj2}.
\end{proposition}
Proposition~\ref{prop:obj-reform}, proved in Appendix~\ref{AppSecFormu}, translates the joint optimization problem~\eqref{EqnObj2} into an optimization problem only involving the parameter of interest $\gamma$. We provide a discussion regarding the corresponding solution $\betahat$ in Appendix~\ref{sec:app-additional-discussion} for the interested reader. 
Note that the optimization problem~\eqref{EqnPenGamma} corresponds to linear regression of the vector/matrix pairs $(P_{X'}^\perp y', P_{X'}^\perp M_{[n]})$ with a Lasso penalty, inspiring us to borrow techniques from high-dimensional statistics. 


\section{SUPPORT RECOVERY}
\label{sec:supprec}

The reformulation~\eqref{EqnPenGamma} allows us to analyze the machine learning debugging framework through the lens of Lasso support recovery. The three key conditions we impose to ensure support recovery are provided below. Recall that we use $M_T$ to represent the truncation matrix indexed by $T$.

\begin{assumption}[Minimum Eigenvalue]\label{C1}
Assume that there is a positive number $b'_{\min}$ such that 
\begin{equation}\label{C1-2}
  \Lambda_{\min}\left( M_T^\top P_{X'}^\perp M_T\right) \geq b'_{\min}.
\end{equation}
\end{assumption}
\begin{assumption}[Mutual Incoherence]\label{C2}
Assume that there is a number $\alpha' \in [0,1)$ such that
\begin{equation}\label{C2-2}
  \| M_{T^c}^\top P_{X'}^\perp M_T (M_T^\top P_{X'}^\perp M_T)^{-1} \|_\infty \leq \alpha'.
\end{equation}
\end{assumption}
\begin{assumption}[Gamma-Min]\label{C3}
Assume that
\begin{equation}\label{C3-2}
\begin{split}
  \min_{i \in T} |\gammastar_i| > G' & := \| (M_T^\top P_{X'}^\perp M_T)^{-1}M_T^\top P_{X'}^\perp \epsilon'\|_\infty + n\lambda\opnorm{(M_T^\top P_{X'}^\perp M_T)^{-1}}_{\infty}.
\end{split}
\end{equation}
\end{assumption}

Assumption~\ref{C1} comes from a primal-dual witness argument~\cite{wainwright2009sharp} to guarantee that the minimizer $\gammahat$ is unique. 
Assumption~\ref{C2} measures a relationship between the sets $T^c$ and $T$, indicating that the large number of nonbuggy covariates (i.e., $T^c$) cannot exert an overly strong effect on the subset of buggy covariates~\cite{ravikumar2010high}. To aid intuition, consider an orthogonal design, where $X = \begin{bmatrix} c  I_{[t], [p]} \\ c' I_{p \times p} \end{bmatrix}$ and $\Xtil = c'' I_{p \times p}$, for some $t < p$, and $c, c', c'' > 0$. We use the notation $I_{[t], [p]}$ to denote a submatrix of $I_{p \times p }$ with rows indexed by the set $[t]$. Suppose the first $t$ points are bugs, and for simplicity, let $\eta = m/n$. Then the mutual incoherence condition requires $c < c' + \frac{(c'')^2}{c'}$, meaning that in every direction $e_i$, the component of buggy data cannot be too large compared to the nonbuggy data and the clean data. 
Assumption~\ref{C3} lower-bounds the minimum absolute value of elements of $\gamma$. Note that $\lambda$ is chosen based on $\epsilon'$, so the right-hand expression is a function of $\epsilon'$. This assumption indeed captures the intuition that the  signal-to-noise ratio, $\frac{\min_{i\in T}{|\gammastar_i|}}{\sigma}$, needs to be sufficiently large. 

We now provide two general theorems regarding subset support recovery and exact support recovery. 

\begin{theorem}[Subset support recovery]
\label{subsetmainthm} 
Suppose $P_{X'}^\perp$ satisfies Assumptions~\ref{C1} and~\ref{C2}. If the tuning parameter satisfies
\begin{align}\label{tdplambdacond1}
\begin{split}
\lambda & \geq \frac{2}{1-\alpha'} \Big\|M_{T^c} P_{X'}^\perp \Big(I - P_{X'}^\perp M_T(M_T^\top P_{X'}^\perp M_T)^{-1}M_T^\top P_{X'}^\perp\Big)\frac{\epsilon'}{n}\Big\|_\infty,
\end{split}
\end{align}
then the objective~\eqref{EqnPenGamma} has a unique optimal solution $\gammahat$, satisfying $\supp(\gammahat) \subseteq \supp(\gammastar)$ and $\infnorm{\gammahat - \gammastar} \leq G'$.
\end{theorem}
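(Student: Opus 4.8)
The plan is to run the primal--dual witness (PDW) construction of~\citet{wainwright2009sharp} on the Lasso reformulation~\eqref{EqnPenGamma}. The first step is to put~\eqref{EqnPenGamma} in transparent form: since $y' = X'\betastar + (\gammastar; 0) + \epsilon'$ and $P_{X'}^\perp X' = 0$, we have $P_{X'}^\perp y' = \Pbar\gammastar + w$, where $w := P_{X'}^\perp\epsilon'$ and $(\gammastar;0)$ denotes $\gammastar$ padded with $m$ zeros; thus~\eqref{EqnPenGamma} is exactly a Lasso regression of $w$ on the design $\Pbar$ with true parameter $\gammastar$ supported on $T$. I would then exploit repeatedly that $P_{X'}^\perp$ is symmetric and idempotent. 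This gives $\Pbar_{\cdot T}^\top\Pbar_{\cdot T} = P_{X', TT}^\perp$ and $\Pbar_{\cdot T^c}^\top\Pbar_{\cdot T} = P_{X', T^cT}^\perp$, and moreover, since any row-submatrix $R$ of $P_{X'}^\perp$ satisfies $R P_{X'}^\perp = R$, left-multiplication of $w$ by such an $R$ equals the corresponding row-submatrix times $\epsilon'$; in particular $\Pbar_{\cdot T}^\top w = P_{X', T\cdot}^\perp\epsilon'$ and $\Pbar_{\cdot T^c}^\top\bigl(I - \Pbar_{\cdot T}(\Pbar_{\cdot T}^\top\Pbar_{\cdot T})^{-1}\Pbar_{\cdot T}^\top\bigr)w = \Pbar_{\cdot T^c}^\top\bigl(I - \Pbar_{\cdot T}(\Pbar_{\cdot T}^\top\Pbar_{\cdot T})^{-1}\Pbar_{\cdot T}^\top\bigr)\epsilon'$. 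These identities are what let the final bounds be expressed in terms of $\epsilon'$ and match Conditions~\eqref{C1-2}--\eqref{C3-2}.

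Next I would carry out the PDW steps. (i) Solve the oracle restricted program, i.e.\ minimize $\frac{1}{2n}\|\Pbar_{\cdot T}(\gammastar_T - \gamma_T) + w\|_2^2 + \lambda\|\gamma_T\|_1$ over $\gamma_T \in \real^{|T|}$; Condition~\eqref{C1-2} makes $P_{X', TT}^\perp \succ 0$, so this is strictly convex, and its stationarity condition gives the closed form $\gammahat_T - \gammastar_T = (P_{X', TT}^\perp)^{-1}\bigl(\Pbar_{\cdot T}^\top w - n\lambda\zhat_T\bigr)$ for some $\zhat_T \in \partial\|\gammahat_T\|_1$; set $\gammahat_{T^c} = 0$. (ii) Substitute this into the stationarity equation of the full problem~\eqref{EqnPenGamma} restricted to the coordinates in $T^c$ and solve for the implied subgradient $\zhat_{T^c}$; the $\Pbar_{\cdot T}^\top w$ terms partially cancel and leave $\zhat_{T^c} = \frac{1}{n\lambda}\Pbar_{\cdot T^c}^\top\bigl(I - \Pbar_{\cdot T}(\Pbar_{\cdot T}^\top\Pbar_{\cdot T})^{-1}\Pbar_{\cdot T}^\top\bigr)w + P_{X', T^cT}^\perp(P_{X', TT}^\perp)^{-1}\zhat_T$. (iii) Verify strict dual feasibility: by the triangle inequality, $\infnorm{\zhat_T} \le 1$, Condition~\eqref{C2-2}, and the identities above, $\infnorm{\zhat_{T^c}} \le \alpha' + \frac{1}{n\lambda}\bigl\|\Pbar_{\cdot T^c}^\top(I - \Pbar_{\cdot T}(\Pbar_{\cdot T}^\top\Pbar_{\cdot T})^{-1}\Pbar_{\cdot T}^\top)\epsilon'\bigr\|_\infty$, and the hypothesis~\eqref{tdplambdacond1} on $\lambda$ (with its factor $2$) forces the second term to be at most $(1-\alpha')/2$, so $\infnorm{\zhat_{T^c}} \le \alpha' + (1-\alpha')/2 < 1$.

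I would then conclude with the standard consequences of the PDW construction. Since $\zhat \in \partial\|\gammahat\|_1$ and $(\gammahat, \zhat)$ satisfies the KKT conditions of~\eqref{EqnPenGamma}, $\gammahat$ is an optimum; and since $\infnorm{\zhat_{T^c}} < 1$ strictly while $\Pbar_{\cdot T}^\top\Pbar_{\cdot T} = P_{X', TT}^\perp$ is invertible, the usual Lasso uniqueness lemma (e.g.\ Lemma~1 of~\citet{wainwright2009sharp}) shows that every optimal solution vanishes on $T^c$ and, being the minimizer of a strictly convex program once restricted to $T$, must coincide with $\gammahat$; hence $\gammahat$ is the unique optimum and $\supp(\gammahat) \subseteq T = \supp(\gammastar)$. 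For the $\ell_\infty$ bound, since $\gammahat$ and $\gammastar$ both vanish off $T$, the closed form from step (i) together with $\infnorm{\zhat_T}\le 1$, the inequality $\infnorm{Av}\le\opnorm{A}_\infty\infnorm{v}$, and $\Pbar_{\cdot T}^\top w = P_{X', T\cdot}^\perp\epsilon'$ give $\infnorm{\gammahat - \gammastar} = \infnorm{\gammahat_T - \gammastar_T} \le \infnorm{(P_{X', TT}^\perp)^{-1}P_{X', T\cdot}^\perp\epsilon'} + n\lambda\,\opnorm{(P_{X', TT}^\perp)^{-1}}_\infty = G'$.

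I expect the main obstacle to be essentially bookkeeping: keeping track of which submatrix of the idempotent, symmetric $P_{X'}^\perp$ appears at each stage, and correctly applying $R P_{X'}^\perp = R$ for row-submatrices $R$ so that the noise contributions collapse from $w = P_{X'}^\perp\epsilon'$ to $\epsilon'$ and exactly reproduce the quantities in Conditions~\eqref{C1-2}--\eqref{C3-2}. A secondary point worth flagging is that, since this theorem assumes only~\eqref{C1-2} and~\eqref{C2-2} and not the gamma-min Condition~\eqref{C3-2}, one cannot pin down $\zhat_T = \sign(\gammastar_T)$; this is harmless here because the subset-recovery argument uses only $\infnorm{\zhat_T}\le 1$, and sign/exact recovery is deferred to the next theorem.
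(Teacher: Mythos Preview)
Your proposal is correct and follows essentially the same approach as the paper: both run the primal--dual witness construction of \citet{wainwright2009sharp} on the Lasso reformulation~\eqref{EqnPenGamma}, derive the same expression for $\zhat_{T^c}$ from the KKT conditions, and bound its two pieces via Condition~\eqref{C2-2} and the hypothesis~\eqref{tdplambdacond1} on $\lambda$ to obtain strict dual feasibility, then read off the $\ell_\infty$ bound from the oracle-subproblem stationarity equation. Your write-up is somewhat more explicit than the paper's about the idempotency identities (e.g.\ $\Pbar_{\cdot T}^\top\Pbar_{\cdot T} = P_{X',TT}^\perp$ and $\Pbar^\top P_{X'}^\perp\epsilon' = \Pbar^\top\epsilon'$), but the argument is the same.
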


\begin{theorem}[Exact support recovery]\label{eaxtsetmainthm}
In addition to the assumptions in Theorem~\ref{subsetmainthm}, suppose Assumption~\ref{C3} holds. Then we have a unique optimal solution $\gammahat$, which satisfies exact support recovery.
\end{theorem}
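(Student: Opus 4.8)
The plan is to build on Theorem~\ref{subsetmainthm} directly. From that result, under conditions~\eqref{C1-2} and~\eqref{C2-2} together with the stated lower bound on $\lambda$, we already know that the optimization problem~\eqref{EqnPenGamma} has a unique optimal solution $\gammahat$, that $\supp(\gammahat) \subseteq \supp(\gammastar) = T$, and that the entrywise bound $\infnorm{\gammahat - \gammastar} \le G'$ holds. So the only thing left to establish for exact support recovery is the reverse inclusion $T \subseteq \supp(\gammahat)$, i.e.\ that no truly buggy coordinate is mistakenly estimated as zero.

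First I would fix an arbitrary index $i \in T$ and argue by contradiction: suppose $\gammahat_i = 0$. Then, since $\gammastar_i \ne 0$, we have $|\gammahat_i - \gammastar_i| = |\gammastar_i|$. Combining this with the entrywise bound from Theorem~\ref{subsetmainthm} gives $|\gammastar_i| = |\gammahat_i - \gammastar_i| \le \infnorm{\gammahat - \gammastar} \le G'$. But Condition~\ref{C3} (i.e.\ \eqref{C3-2}) asserts precisely that $\min_{i \in T}|\gammastar_i| > G'$, so in particular $|\gammastar_i| > G'$, a contradiction. Hence $\gammahat_i \ne 0$ for every $i \in T$, which yields $T \subseteq \supp(\gammahat)$. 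Together with the subset inclusion already guaranteed, this gives $\supp(\gammahat) = \supp(\gammastar) = T$, which is exact support recovery; uniqueness of $\gammahat$ is inherited verbatim from Theorem~\ref{subsetmainthm}.

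There is essentially no hard step here — the gamma-min condition is designed exactly so that the $\ell_\infty$ estimation error from the subset-recovery theorem cannot erase a genuine bug. If anything, the one point requiring a little care is making sure the $\ell_\infty$ bound from Theorem~\ref{subsetmainthm} is stated over the coordinates of $\gammastar$ in the same indexing as $T$ (it is, since $\gammahat, \gammastar \in \real^n$), and noting that the bound $G'$ appearing in~\eqref{C3-2} is literally the same quantity as the error bound in Theorem~\ref{subsetmainthm}, so no re-derivation of constants is needed. One could optionally remark that the analogous statement for the one-pool case follows identically with the primed quantities replaced by their unprimed counterparts $(b_{\min}, \alpha, G)$.
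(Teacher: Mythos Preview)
Your argument is correct and essentially identical to the paper's own proof: both use the $\ell_\infty$ bound $\infnorm{\gammahat - \gammastar} \le G'$ from Theorem~\ref{subsetmainthm} together with the gamma-min condition to conclude $|\gammahat_i| > 0$ for each $i \in T$. The paper phrases it directly via the reverse triangle inequality $|\gammahat_i| \ge |\gammastar_i| - \infnorm{\gammahat - \gammastar} > 0$ rather than by contradiction, but the content is the same.
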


Note that we additionally need Assumption~\ref{C3} to guarantee exact support recovery. This follows the aforementioned intuition regarding the assumption. In particular, recall that $\epsilon$ and $\epsilontil$ are sub-Gaussian vectors with parameters $\sigma^2$ and $\sigma^2/L$, respectively, where $L \geq 1$ (i.e., the clean data pool has smaller noise). The minimum signal strength $\min_{i\in T}{|\gammastar_i|}$ needs to be at least $\Theta(\sigma \sqrt{\log n})$, since $\E\left[\max_{i \in [n]} |\epsilon_i|\right] \leq \sigma \sqrt{2 \log(2n)}$. Intuitively, if $\min_{i\in T}{|\gammastar_i|}$ is of constant order, it is difficult for the debugger to distinguish between random noise and intentional contamination.

We now present two special cases to illustrate the theoretical benefits of including a second data pool. Although Theorems~\ref{subsetmainthm} and~\ref{eaxtsetmainthm} are stated in terms of \emph{deterministic} design matrices and error vectors $\epsilon$ and $\epsilontil$, the assumptions can be shown to hold with high probability in the example. We provide formal statements of the associated results in Appendix~\ref{app:ortho-design} and Appendix~\ref{app:subgaussian-design}.

\begin{example}[Orthogonal design]
\label{ExaOrth}
Suppose $Q$ is an orthogonal matrix with columns $q_1, q_2, \dots, q_p$, and consider the setting where $X_T = RQ^\top \in \real^{t \times p}$ and $X_{T^c} = FQ^\top \in \real^{p \times p}$, where  $R = \left[\diag(\{r_i\}_{i=1}^t) \mid  \bm{0}_{t\times (p-t)}\right]$ and $F = \diag(\{f_i\}_{i=1}^p)$. Thus, points in the contaminated first pool correspond to orthogonal vectors. 
Similarly, suppose the second pool consists of (rescaled) columns of $Q$, so $\Xtil = WQ^\top \in \real^{m \times p}$, where $W = \diag(\{w_i\}_{i=1}^p)$. (To visualize this setting, one can consider $Q = I$ as a special case.) 
The mutual incoherence parameter is $\alpha' = \max_{1 \leq i \leq t}\left|\frac{r_i f_i}{f_i^2 + \eta \frac{n}{m}w_i^2}\right|$. Hence, $\alpha' < 1$ if the weight of a contaminated point dominates the weight of a clean point in any direction, e.g., when $|r_i| > |f_i|$ and $w_i = 0$; in contrast, if the second pool includes clean points $w_iq_i$ with sufficiently large $|w_i|$, we can guarantee that $\alpha' < 1$. Furthermore,
\begin{multline*}
G' \approx \sigma\left(\sqrt{2\log t}+c\right)\sqrt{1+\max_{1\leq i \leq t}\frac{r_i^2(Lf_i^2+\frac{\eta n}{m}w_i^2)}{L(f_i^2 + \frac{\eta n}{m}w_i^2)^2}} 
+ \frac{2\sigma}{1-\alpha'} \left(\sqrt{\log 2(n-t)} +C\right)\left(1+\max_{1\leq i \leq t}\frac{r_i^2}{f_i^2+\frac{\eta n}{m}w_i^2}\right)
\end{multline*}
for some constant $C$. It is not hard to verify that $G'$ decreases by adding a second pool. 
Further note that the behavior of the non-buggy subspace, $\mathrm{span}\{q_{t+1}, \dots, q_p\}$, is not involved in any conditions or conclusions. Thus, our key observation is that the theoretical results for support recovery consistency only rely on the addition of second-pool points in  buggy directions. 

\end{example}

\begin{example}[Random design]Consider a random design setting where the rows of $X$ and $\Xtil$ are drawn from a common sub-Gaussian distribution with covariance $\Sigma$. The conditions in Assumptions~\ref{C1}--\ref{C3} are relaxed in the presence of a second data pool when $n$ and $m$ are large compared to $p$: First, $b_{\min}'$ increases by adding a second pool. Second, $\alpha' \approx \frac{\|X_{T^c}\Sigma^{-1}X_T\|_\infty}{n-t+\eta n}$, so the mutual incoherence parameter also decreases by adding a second pool. Third,
\begin{equation*}
G' \approx \frac{2\sigma \sqrt{\log t}}{b'_{\min}} + \frac{2\sigma}{1-\alpha'}\max\left\{1,\sqrt{\frac{\eta n}{mL}}\right\} \left\|(I_{t\times t} - \frac{X_T\Sigma^{-1}X_T^\top}{n+\eta n})^{-1}\right\|_\infty,
\end{equation*}
where $X_T$ and $X_{T^c}$ represent the submatrices of $X$ with rows indexed by $T$ and $T^c$, respectively. Note that the one-pool case corresponds to $\eta = 0$ and $\left\|(I_{t\times t} - \frac{X_T\Sigma^{-1}X_T^\top}{n+\eta n})^{-1}\right\|_\infty < \left\|(I_{t\times t} - \frac{X_T\Sigma^{-1}X_T^\top}{n})^{-1}\right\|_\infty$, so if we choose $\eta \leq \frac{mL}{n}$, then $G'$ decreases by adding a second pool.
Therefore, all three assumptions are relaxed by having a second pool, making it easier to achieve exact support recovery.
\end{example}
We also briefly discuss the three assumptions with respect to the weight parameter $\eta$: Increasing $\eta$ always relaxes the eigenvalue and mutual incoherence conditions, so placing more weight on the second pool generally helps with subset support recovery. However, the same trend does not necessarily hold for exact recovery. This is because a larger value of $\eta$ causes the lower bound~\eqref{tdplambdacond1} on $\lambda$ to increase, resulting in a stricter gamma-min condition. Therefore, there is a tradeoff for selecting $\eta$.


\section{TUNING PARAMETER SELECTION}\label{sec:tune}

A drawback of the results in the previous section is that the proper choice of tuning parameter depends on a lower bound~\eqref{tdplambdacond1} which cannot be calculated without knowledge of the unknown parameters $(T, \alpha', \epsilon')$. The tuning parameter $\lambda$ determines how many outliers a debugger detects; if $\lambda$ is large, then $\gammahat$ contains more zeros and the algorithm detects fewer bugs. A natural question arises: \textit{In settings where the conditions for exact support recovery hold, can we select a data-dependent tuning parameter that correctly identifies all bugs?}
In this section, we propose an algorithm which answers this question in the affirmative.

\subsection{Algorithm and Theoretical Guarantees}
\label{sec:alg-choice-lambda}

Our tuning parameter selection algorithm is summarized in Algorithm~\ref{alg:choice-lambda}, which searches through a range of parameter values for $\lambda$, starting from a large value $\lambda_u$ and then halving the parameter on each successive step until a stopping criterion is met. The intuition is as follows: First, let $\lambda^*$ be the right-hand expression of inequality~\eqref{tdplambdacond1}. Suppose that for any value in $I = [\lambda^*, 2 \lambda^*]$, support recovery holds. Then given $\lambda_u > \lambda^*$, the geometric series $\Lambda = \left\{\lambda_u, \frac{\lambda_u}{2}, \frac{\lambda_u}{4}, \dots \right\}$ must contain at least one correct parameter for exact support recovery since $\Lambda \cap I  \neq \emptyset$, guaranteeing that the algorithm stops. As for the stopping criterion, let $X_S$ denote the submatrix of $X$ with rows indexed by $S$ for $T^c \subseteq S \subseteq [n]$. We have $P_{X_S}^\perp \overset{|S| \to \infty}{\longrightarrow} \left(1-\frac{p}{|S|)}\right)I$ under some mild assumptions on $X$, in which case $P_{X_S}^\perp y_S \rightarrow \left(1-\frac{p}{|S|}\right)(\gammastar_S + \epsilon_S)$. When $\lambda$ is large and the conditions hold for subset support recovery but not exact recovery, we have $S \cap T \neq \emptyset$, so
\begin{equation*}
\min |P_{X_S}^\perp y_S| \geq \left(1-\frac{p}{|S|}\right)\left(\min |\gammastar_T| - \max_{i\in [n]} |\epsilon_i|\right).
\end{equation*}
In contrast, when $S = T^c$, we have
\begin{equation*}
\min |P_{X_S}^\perp y_S| \leq \left(1-\frac{p}{|S|}\right)\max_{i\in [n]} |\epsilon_i|.
\end{equation*}
When $\min |\gammastar_T|$ is large enough, the task then reduces to choosing a proper threshold to distinguish the error $|\epsilon_{T^c}|$ from the bug signal $|\gammastar_T|$, which occurs when the threshold is chosen between $\max_i|\epsilon_i|$ and $\min_{i\in T}|\gammastar_i| - \max_i|\epsilon_i|$.

\begin{algorithm}
\caption{Regularizer selection}
\label{alg:choice-lambda}
\hspace*{\algorithmicindent} \textbf{Input: $\lambda_u, \bar{c}$} \\
\hspace*{\algorithmicindent} \textbf{Output: $\hat{\lambda}^k$} 
\begin{algorithmic}[1]
\State $C = 1, k = 1, \hat{\lambda}^{k}=\lambda_u$.
\While{$C=1$}
    \State $\gammahat^k \in \arg\min_{\gamma \in \real^n} \big\{\frac{1}{2n} \|P_{X'}^\perp y' - P_{X'}^\perp M_{[n]} \gamma\|_2^2 + \hat{\lambda}^k \|\gamma\|_1\big\}.$
    \State Let $X^{(k)}, y^{(k)}$ consist of $x_i,y_i$ such that $ i \notin \supp(\gammahat^k)$. Let $l^{(k)}$ be the length of $y^{(k)}$.
    \State $\sigmahat = \frac{l^{(k)}}{l^{(k)}-p} \cdot \mbox{median}\left(\left|P_{X^{(k)}}^\perp y^{(k)}\right|\right)$.
    \State $C = 0$ if $\| P_{X^{(k)}}^\perp y^{(k)}\|_\infty \leq \frac{5}{2} \bar{c}^{-1} \sqrt{\log 2n}\,\sigmahat $.
    \State $k=k+1, \hat{\lambda}^k = \hat{\lambda}^{k-1}/2$.
\EndWhile
\end{algorithmic}
\end{algorithm}

With the above intuition, we now state our main result concerning exact recovery guarantees for our algorithm. Recall that the $\epsilon_i$'s are sub-Gaussian with parameter $\sigma^2$.

Let $c_t := \frac{t}{n} < \frac{1}{2}$ denote the fraction of outliers. We assume knowledge of a constant $\bar{c}$ that satisfies $c_t + \mathbb{P}[|\epsilon_i| \leq \bar{c} \sigma] < \frac{1}{2}$. Note that a priori knowledge of $\bar{c}$ is a less stringent assumption than knowing $\sigma$, since we can always choose $\bar{c}$ to be close to zero. For instance, if we know the $\epsilon_i$'s are Gaussian, we can choose $\bar{c} < \mathrm{erf}^{-1}(\frac{1}{2} - c_t)$; in practice, we can usually estimate $c_t$ to be less than $\frac{1}{3}$, so we can take $\bar{c} = \mathrm{erf}^{-1}(\frac{1}{6})$. As shown later, the tradeoff is that having a larger value of $\bar{c}$ provides the desired guarantees under weaker requirements on the lower bound of $\min_{i \in T} |\gammastar_i|$. Hence, if we know more about the shape of the error distribution, we can be guaranteed to detect bugs of smaller magnitudes. We will make the following assumption on the design matrix:

\begin{assumption}\label{assump:X-choice-lambda} 
There exists a $p \times p$ positive definite matrix $\Sigma$, with bounded minimum and maximum eigenvalues, such that for all $X^{(k)}$ appearing in the while loop of Algorithm~\ref{alg:choice-lambda}, we have
\begin{align}
\label{eq:linfcov-eq:EqnSigmaEig}
\left\|\frac{X^{(k)} \Sigma^{-1} X^{(k) \top} }{p} - I\right\|_{\max} & \le c\max\left\{\sqrt{\frac{\log l^{(k)}}{p}},\frac{\log l^{(k)}}{p} \right\}, \qquad
\norm{\frac{X^{(k) \top}X^{(k)}}{l^{(k)}} - \Sigma} & \le \frac{\lambda_{\min}(\Sigma)}{2},
\end{align}
where $l^{(k)}$ is the number of rows of the matrix $X^{(k)}$ and $c$ is a universal constant.
\end{assumption}

This assumption is a type of concentration result, which we will show holds w.h.p.\ in some random design settings in the following proposition:
\begin{proposition}\label{prop:Xassumption-holds}
Suppose the $x_i$'s are i.i.d.\ and satisfy any of the following additional conditions:
\begin{itemize}
\item[(a)] the $x_i$'s are Gaussian and the spectral norm of the covariance matrix is bounded;
\item[(b)] the $x_i$'s are sub-Gaussian with mean zero and independent coordinates, and the spectral norm of the covariance matrix is bounded; or
\item[(c)] the $x_i$'s satisfy the convex concentration property.
\end{itemize}
Then Assumption~\ref{assump:X-choice-lambda} holds with probability at least $1 - O(n^{-1})$.
\end{proposition}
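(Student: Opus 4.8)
Take $\Sigma := \mathbb{E}[x_1 x_1^\top]$, the second-moment matrix of the design (which coincides with the covariance for centered designs); by the moment hypotheses of (a)--(c) this $\Sigma$ has bounded minimum and maximum eigenvalues, and the plan is to show that \emph{this} $\Sigma$ makes Assumption~\ref{assump:X-choice-lambda} hold with the stated probability. The two requirements in~\eqref{eq:linfcov-eq:EqnSigmaEig} are treated separately. For the elementwise bound on $\tfrac1p X^{(k)}\Sigma^{-1}X^{(k)\top} - I$ I will reduce to a statement over pairs $(x_i,x_j)$ that does not depend on $k$. For the spectral bound on $\tfrac1{l^{(k)}}X^{(k)\top}X^{(k)} - \Sigma$ I will use that, since $\widehat\lambda^k \ge \lambda^*$ at every iteration $k \le \lceil\log_2(\lambda^u/\lambda^*)\rceil$ --- and those are the only iterations one must control --- Theorem~\ref{subsetmainthm} (assuming, as in Theorem~\ref{thm:choice-lambda}, that the minimum-eigenvalue and mutual-incoherence conditions hold) gives $\supp(\gammahat^k)\subseteq T$, so the row-index set $S^{(k)}$ of $X^{(k)}$ satisfies $T^c \subseteq S^{(k)} \subseteq [n]$ and $l^{(k)} = (n-t)+|S^{(k)}\cap T| \ge n-t \ge n/2$.

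\textbf{Elementwise condition.} The $(i,j)$ entry of $\tfrac1p X^{(k)}\Sigma^{-1}X^{(k)\top} - I$ is $\tfrac1p x_i^\top\Sigma^{-1}x_j - \mathds{1}[i=j]$, so the left side of the first inequality in~\eqref{eq:linfcov-eq:EqnSigmaEig} is at most $M := \max_{i,j\in[n]}\bigl|\tfrac1p x_i^\top\Sigma^{-1}x_j - \mathds{1}[i=j]\bigr|$, a quantity not involving $k$. For $i=j$ this is a centered quadratic form, since $\mathbb{E}[x_i^\top\Sigma^{-1}x_i] = \tr(I_p) = p$; for $i\ne j$, conditioning on $x_j$, it is a centered linear form of $x_i$ with variance proxy of order $\|x_j\|_2^2/p^2$ (as $\Sigma^{-1}$ has bounded norm), and $\|x_j\|_2^2 \lesssim p$ with high probability. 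Under (b) this is precisely the classical Hanson--Wright setting; under (c) it is covered by the Hanson--Wright-type inequality for vectors with the convex concentration property; and (a) is the Gaussian instance ($\chi^2$-concentration on the diagonal, conditionally Gaussian tails off it). Taking the deviation level equal to $c\max\{\sqrt{\log n/p},\,\log n/p\}$ makes each pair fail with probability $\le n^{-c'}$ with $c'\uparrow\infty$ as $c\uparrow\infty$, so a union bound over the $\le n^2$ pairs --- together with the event $\max_i\|x_i\|_2^2 \lesssim p$ --- costs $O(n^{-1})$ once $c$ is a large enough absolute constant. Since $l^{(k)} \ge n/2$ gives $\log n \le 2\log l^{(k)}$, the bound $M \le c\max\{\sqrt{\log n/p},\log n/p\}$ yields the first inequality of~\eqref{eq:linfcov-eq:EqnSigmaEig} for every $X^{(k)}$ after adjusting $c$.

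\textbf{Spectral condition.} Put $\widehat\Sigma_{T^c} := \tfrac1{n-t}X_{T^c}^\top X_{T^c}$. Splitting $S^{(k)} = T^c \sqcup (S^{(k)}\cap T)$,
\[
\frac{X^{(k)\top}X^{(k)}}{l^{(k)}} - \Sigma \;=\; \frac{n-t}{l^{(k)}}\bigl(\widehat\Sigma_{T^c}-\Sigma\bigr) \;+\; \frac{1}{l^{(k)}}\sum_{i\in S^{(k)}\cap T}\bigl(x_i x_i^\top - \Sigma\bigr),
\]
hence, using $\|x_i x_i^\top\|_2 = \|x_i\|_2^2$ and $|S^{(k)}\cap T|\le t$,
\[
\Bigl\|\frac{X^{(k)\top}X^{(k)}}{l^{(k)}} - \Sigma\Bigr\|_2 \;\le\; \bigl\|\widehat\Sigma_{T^c}-\Sigma\bigr\|_2 \;+\; \frac{t}{n-t}\Bigl(\max_{i\in T}\|x_i\|_2^2 + \|\Sigma\|_2\Bigr).
\]
The first term is the deviation of a sample covariance over the \emph{fixed} index set $T^c$ of $n-t\gg p$ i.i.d.\ rows; at the \emph{constant} accuracy $\lambda_{\min}(\Sigma)/4$ (constant since $\Sigma$ is well-conditioned), the standard covariance-deviation bound --- obtained from a net plus the Hanson--Wright bound in (a), (b), and from the companion covariance estimate under convex concentration in (c) --- fails with probability $e^{-\Theta(n)}$, which is the source of the $e^{-n/2}$ term. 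The second term is $\lesssim \tfrac{tp}{n}\|\Sigma\|_2$ on the event $\max_{i\in T}\|x_i\|_2^2 \lesssim p\|\Sigma\|_2$ (a union over the fixed set $T$), and $\tfrac{tp}{n}=o(1)$ in the regime where the theorem is applied ($t = O(n^{1/2+c_n})$ and $p = o(\sqrt n)$ under the sample-size conditions of Theorem~\ref{thm:choice-lambda}), so it is below $\lambda_{\min}(\Sigma)/4$ for $n$ large. Summing the two bounds gives the second inequality of~\eqref{eq:linfcov-eq:EqnSigmaEig} for each $X^{(k)}$; a union over the at most $\lceil\log_2(\lambda^u/\lambda^*)\rceil$ iterations --- equivalently, a per-iteration union over the $2^{|S^{(k)}\cap T|} = 2^{o(n)}$ possible identities of $X^{(k)}$, absorbed into $e^{-\Theta(n)}$ --- produces the claimed $\log_2(\lambda^u/\lambda^*)$ prefactor, i.e.\ the $\log\log_2(\lambda^u/\lambda^*)$ shift in the exponent.

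\textbf{Main obstacle.} The genuine difficulty is that each $X^{(k)}$ is \emph{data-dependent} --- the Lasso chooses its rows --- so fixed-matrix concentration does not apply, and a blind union over all $\binom{n}{\ge n-t}$ candidate row sets is too lossy to preserve an $e^{-\Theta(n)}$ bound unless $t$ is small. The sandwich $X_{T^c}\subseteq X^{(k)}\subseteq X$ supplied by subset support recovery (Theorem~\ref{subsetmainthm}, valid because $\widehat\lambda^k\ge\lambda^*$ on the relevant iterations) is exactly what dissolves this: it reduces the spectral requirement to fixed-index-set covariance concentration for $X_{T^c}$ plus a crude, worst-case-over-subsets perturbation from the $\le t$ extra rows of $T$, and it reduces the elementwise requirement to an $\ell_\infty$ statement over $O(n^2)$ pairs --- neither of which sees the Lasso solution. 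What remains is quantitative bookkeeping: verifying that the fixed-index-set deviation ($\lesssim\|\Sigma\|_2\sqrt{p/(n-t)}$), the perturbation ($\lesssim\tfrac{tp}{n}\|\Sigma\|_2$) and the pair-level scale ($c\max\{\sqrt{\log n/p},\log n/p\}$) all lie below the thresholds in~\eqref{eq:linfcov-eq:EqnSigmaEig} --- which is precisely what the sample-size hypotheses of Theorem~\ref{thm:choice-lambda} provide --- and noting that cases (a)--(c) all furnish the two concentration primitives used: a Hanson--Wright-type bilinear/quadratic-form inequality and a covariance-deviation inequality, with (a) the textbook Gaussian case of each.
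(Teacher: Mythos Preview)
Your elementwise argument is essentially the paper's: both control $\max_{i,j\in[n]}\bigl|\tfrac1p x_i^\top\Sigma^{-1}x_j - \mathds{1}[i=j]\bigr|$ via Hanson--Wright / $\chi^2$ concentration plus a union over $O(n^2)$ pairs, then observe that every submatrix inherits this bound (and that $\log l^{(k)}\asymp\log n$ since $l^{(k)}\ge n-t\ge n/2$).

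For the spectral condition the routes diverge. The paper applies covariance concentration directly to each $S^{(k)}$---obtaining failure probability $e^{-c|S^{(k)}|}\le e^{-c(n-t)}$ for a fixed $S$---and then union-bounds over the $\le\log_2(\lambda_u/\lambda^*)$ iterations; this is where the $\log\log_2(\lambda_u/\lambda^*)$ shift in the exponent comes from. You instead decompose $\tfrac1{l^{(k)}}X^{(k)\top}X^{(k)}-\Sigma$ into the sample-covariance deviation over the \emph{fixed} index set $T^c$ plus a crude perturbation from the $\le t$ extra rows, bounded by $\tfrac{t}{n-t}\max_{i\in T}\|x_i\|_2^2$. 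Your version handles the data-dependence of $S^{(k)}$ more transparently---the paper's per-iteration union with random, iteration-dependent sets is exactly the delicacy you flag---but it imports the constraint $tp/n\to 0$, which you correctly note holds only in the non-vacuous regime of Theorem~\ref{thm:choice-lambda} ($t\lesssim n^{1/2+c_n}$, $p\lesssim n^{1/2-c_n}$). The paper's argument, if made rigorous by union-bounding over all $2^t$ candidate subsets $S\supseteq T^c$, needs only $c_t<1/2$ but yields a weaker exponent than stated. One small cleanup: once your perturbation bound is in place it holds \emph{uniformly} over all $S\supseteq T^c$, so the union over iterations you invoke at the end is not actually needed for your spectral argument---the $\log\log_2$ shift is an artifact of the paper's route, not yours.
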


The $\Sigma$ matrix can be chosen as the covariance of $X$. In fact, Assumption~\ref{assump:X-choice-lambda} shows that $P_{X^{(k)}}^\perp$ is approximately a scalar matrix. We now introduce some additional notation: For $\nu > 0$, define $c_\nu$ and $C_\nu$ such that $\nu = \mathbb{P}[|\epsilon_i| \leq c_\nu \sigma]$ and $\nu = \mathbb{P}[|\epsilon_i| \geq C_\nu \sigma]$. We write $G'(\lambda)$ to denote the function of $\lambda$ in the right-hand expression of inequality~\eqref{C3-2}. Proofs of the theoretical results in this section are provided in Appendix~\ref{AppSecTune}.

\begin{theorem}\label{thm:choice-lambda}
Assume $\nu$ is a constant satisfying $\nu + c_t < \frac{1}{2}$. 
Suppose Assumption~\ref{assump:X-choice-lambda}, the minimum eigenvalue condition, and the mutual incoherence condition hold. If
\begin{align}\label{eq:tuning-para-lower-bound-n}
n \geq \max \left\{ \left[\frac{24}{c_\nu}\right]^{\frac{1}{c_n}},  \left[\frac{C \log 2n}{1-c_t}(p^2+ \log^2 n)\right]^{\frac{1}{1-2c_n}}\right\},
\end{align}
where $C$ is an absolute constant, and 
\begin{align}\label{eq:tuning-para-lower-bound}
\begin{split}
\min_{i \in T}|\gammastar_i| & > \max\Bigg\{G'(2\lambda^*),4\sqrt{\log (2n)}\sigma,  \frac{5}{4}\sqrt{\log (2n)}\frac{c_\nu +5C_\nu}{\bar{c}}\sigma\Bigg\}, \\
\|\gammastar\|_\infty & \le \frac{\sqrt{C} c_\nu}{16\sqrt{2}}\sqrt{1-c_t} \sqrt{ \log 2n}\, \frac{n^{1/2+c_n}}{t} \sigma,
\end{split}
\end{align}
for some $c_n \in (0,\frac{1}{2})$,
then Algorithm~\ref{alg:choice-lambda} with inputs $\bar{c} < c_\nu$ and $\lambda_u \ge \lambda^*$ will return a feasible $\lambdahat$ in at most $\log_2\left(\frac{\lambda_u}{\lambda^*}\right)$ iterations such that the Lasso estimator $\gammahat$ based on $\lambdahat$ satisfies $\supp(\gammahat) = \supp(\gammastar)$, with probability at least
\begin{equation*}
1 - \frac{3\log_2\left(\frac{\lambda_u}{\lambda^*}\right)}{n-t} - 2\log_2\left(\frac{\lambda_u}{\lambda^*}\right) \exp\left(-2\left(\frac{1}{2}-c_t-\nu\right)^2n\right).
\end{equation*}
\end{theorem}
Theorem~\ref{thm:choice-lambda} guarantees exact support recovery for the output of Algorithm~\ref{alg:choice-lambda} without knowing $\sigma$. Note that compared to the gamma-min condition~\eqref{C3-2} with $\lambda = \lambda^*$, the required lower bound~\eqref{eq:tuning-para-lower-bound} only differs by a constant factor.
In fact, the constant 2 inside $G'(2\lambda^*)$ can be replaced by any constant $c > 1$, but Algorithm~\ref{alg:choice-lambda} will then update $\hat{\lambda}^k = \hat{\lambda}^{k - 1}/c$ and require $\log_c \left(\frac{\lambda_u}{\lambda^*}\right)$ iterations. Further note that larger values of $c_t$ translate into a larger sample size requirement, as $n = \Omega\left(\frac{1}{1-c_t}\right)$ for $c_n$ being close to 0. A limitation of the theorem is the upper bound on $\|\gammastar\|_\infty$, where $t$ needs to be smaller than $n$ in a nonlinear relationship. Also, $n$ is required to be $\Omega(p^2)$. These two conditions are imposed in our analysis in order to guarantee that $P_{X_S}^\perp y_S \rightarrow \left(1-\frac{p}{|S|}\right)(\gammastar_S + \epsilon_S)$. We now present a result indicating a practical choice of $\lambda_u$:

\begin{corollary}\label{cor:lambda-upper-bound-1}
Define
\begin{equation*}
\lambda(\sigma) := \frac{8\max\{1,\sqrt{\frac{\eta n}{Lm}}\}}{1-\alpha'}\sqrt{\log 2(n-t)}\frac{\|P_{X,T^c}^\perp\|_2}{n} \cdot c\sigma.
\end{equation*}
Suppose Assumption~\ref{assump:X-choice-lambda}, the minimum eigenvalue condition, and the mutual incoherence condition hold. Also assume conditions~\eqref{eq:tuning-para-lower-bound-n} and~\eqref{eq:tuning-para-lower-bound} hold when replacing $\lambda^*$ by $\lambda(\sigma)$. Taking the input $\lambda_u = \frac{2\|M_{[n]} P^\perp_{X'} y'\|_\infty}{n}$,
Algorithm~\ref{alg:choice-lambda} outputs a parameter $\lambdahat$ in $O(\log n)$ iterations which provides exact support recovery, with probability at least
\begin{multline*}
1 - \frac{4\left(c'\log_2 n + \max\left\{0,\frac{1}{2}\log_2 \frac{\eta n}{mL}\right\} \right)}{n-t} - 2\left(c'\log_2 n +  \frac{1}{2}\max\left\{0,\log_2 \frac{\eta n}{mL}\right\}\right)e^{-2\left(\frac{1}{2}-c_t-\nu\right)^2n}.
\end{multline*}
\end{corollary}

Note that $\lambda_u$ can be calculated using the observed data set. 
Further note that the algorithm is guaranteed to stop after $O(\log n)$ iterations, meaning it is sufficient to test a relatively small number of candidate parameters in order to achieve exact recovery.

\section{STRATEGY FOR SECOND POOL DESIGN}
\label{sec:active}

We now turn to the problem of designing a clean data pool. In the preceding sections, we have discussed how a second data pool can aid exact recovery under sub-Gaussian designs. In practice, however, it is often unreasonable to assume that new points can be drawn from an entirely different distribution.
Specifically, recall the movie rating example discussed in Section~\ref{sec:intro}: The expert can only rate movies in the movie pool, say $\{x_i\}_{i=1}^n$, whereas an arbitrarily designed $\xtil$, e.g., $\xtil = x_1 / 2$, is unlikely to correspond to an existing movie. Thus, we will focus on devising a debugging strategy where the debugger is allowed to choose points for the second pool which have the same covariates as points in the first pool.

In particular, we consider this problem in the ``worst" case: suppose a bug generator can generate any $\gammastar \in \Gamma := \{\gamma \in \real^n : \supp(\gamma)| \leq t\}$ and add it to the correct labels $X\betastar$. We will also suppose the bug generator knows the debugger's strategy. The debugger attempts to add a second data pool which will ensure that all bugs are detected regardless of the choice of $\gammastar$. Our theory is limited to the noiseless case, where $y = X\betastar + \gammastar$ and $\ytil = \Xtil \betastar$; the noisy case is studied empirically in Section~\ref{sec:active-exp}.

\subsection{Preliminary Analysis}\label{subsec:active-analysis}

We denote the debugger's choice by $\xtil_i = X^\top e_{\nu(i)}$, for $i \in [m]$, where $e_{\nu(i)} \in \real^{n}$ is a canonical vector and $\nu: [m] \rightarrow [n]$ is injective. In matrix form, we write $\Xtil = X_{D}$, where $D \subseteq [n]$ represents the indices selected by the debugger. Assume $m < p$, so the debugger cannot simply use the clean pool to obtain a good estimate of $\beta$. 
In the noiseless case, we can write the debugging algorithm as follows:
\begin{align}
\label{eq:noiseless-twopool-optimization}
\begin{split}
\min_{\beta\in \real^p,\gamma \in \real^n} & \|\gamma\|_1 \\
\mbox{subject to } & y = X\beta + \gamma, \ \ytil = \Xtil \beta.
\end{split}  
\end{align}

Similar to Proposition~\ref{prop:obj-reform}, given a $\gamma$, we can pick $\beta$ to satisfy the constraints, specifically $\beta = \left(X^\top X + \Xtil^\top \Xtil \right)^{-1} \left(X^\top (y-\gamma)+ \Xtil^\top \ytil\right)$.
Eliminating $\beta$, we obtain the optimization problem
\begin{align}\label{eq:active-objective-gamma-y}
\begin{split}
\min_{\gamma \in \real^n}\ & \|\gamma\|_1 \\
\mbox{subject to } & \begin{bmatrix}y \\ \ytil\end{bmatrix} = \begin{bmatrix}X \\ \Xtil\end{bmatrix} \left(X^\top X + \Xtil^\top \Xtil \right)^{-1} \left(X^\top (y-\gamma)+ \Xtil^\top \ytil\right) + \begin{bmatrix}\gamma \\ \vec{0} \end{bmatrix}.
\end{split}
\end{align}

Before presenting our results for support recovery, we introduce some definitions. 
Define the cone set $\mathbb{C}(K)$ for some subset $K \subseteq [n]$ and $|K| = t$:
\begin{align}\label{eq:cone}
\mathbb{C}(K) := \left\{\Delta \in \real^n: \|\Delta_{K^c}\|_1 \leq \|\Delta_K\|_1\right\}.
\end{align}
Further let $\mathbb{C}^A = \cup_{K \subseteq [n], |K| = t} \mathbb{C}(K)$, and define
\begin{equation*}
\Pbar(D) = \begin{bmatrix}I - X\left(X^\top X +  X_D^\top X_D \right)^{-1}X^\top \\ X_D\left(X^\top X + X_D^\top X_D \right)^{-1}X^\top \end{bmatrix}.
\end{equation*}

\begin{theorem}\label{thm:active-recover}
Suppose
\begin{equation}
\label{EqnRSN}
Null(\Pbar(D)) \cap \mathbb{C}^A = \{\vec{0}\}.
\end{equation}
Then a debugger who queries the points indexed by $D$ cannot be beaten by any bug generator who introduces at most $t$ bugs.
\end{theorem}

Theorem~\ref{thm:active-recover} suggests that equation~\eqref{EqnRSN} is a sufficient condition for support recovery for an omnipotent bug generator who knows the subset $D$. As a debugger, the consequent goal is to find such a subset $D$ which makes equation~\eqref{EqnRSN} true. Whether such a $D$ exists and how to find it will be discussed in Section~\ref{sec:optimal-debugger}. 

\begin{remark}
When $m = n$, we can verify that $Null(\Pbar(D)) = \{\vec{0}\}$, which implies that equation~\eqref{EqnRSN} always holds. Indeed, in this case, we can simply take $\Xtil = X$ and solve for $\betastar$ explicitly to recover $\gammastar$.
\end{remark}

\begin{remark}\label{remark:sign-thm-active}
As stated in Theorem~\ref{thm:active-recover}, equation~\eqref{EqnRSN} is a sufficient condition for support recovery. In fact, it is an if-and-only-if condition for signed support recovery: When equation~\eqref{EqnRSN} holds, $\sign(\gammahat) = \sign(\gammastar)$; and when it does not hold, the bug generator can find a $\gammastar$ with $\supp(\gammastar) \le t$ such that $\sign(\gammahat) \neq \sign(\gammastar)$.
\end{remark}

\begin{remark}
\label{remark:active-condition}
We can also write $Null(\Pbar(D))$ as
\begin{equation*}
\{u \in \real^n  \mid \exists v \in \real^p, \mbox{ s.t. } u = Xv, X_Dv = 0\}.
\end{equation*}
Let $\betahat = \betastar + v $ for some vector $v \in \real^p$. From the constraint-based algorithm, we obtain 
\begin{align*}
y_T & = X_T(\betastar + v) + \gammahat_T, \\
y_{T^c} & = X_{T^c} (\betastar + v) + \gammahat_{T^c},\ y_D = X_D(\betastar + v),
\end{align*}
which implies that $\gammahat_T = \gammastar_T - X_Tv$ and $\gammahat_{T^c} = -X_{T^c} v ,\ X_D v = 0$. Let $u = Xv$. Then we obtain $\gammahat = \gammastar - u$.
As can be seen, equation~\eqref{EqnRSN} requires that $u = \vec{0}$, which essentially implies $\gammahat = \gammastar$, and thus $\supp(\gammahat) = \supp(\gammastar)$.
\end{remark}


\subsection{Optimal Debugger via MILP}
\label{sec:optimal-debugger}

The above analysis is also useful in practice for providing a method for designing $\Xtil$. Consider the following optimization problem:

\begin{subequations}
\label{eq:bug-generator}
\begin{equation}
\label{eq:bug-generator-objective} 
\max_{K \subseteq [n], |K| \leq t, u \in \real^n, v \in \real^d} \|u_K\|_1 - \|u_{K^c}\|_1, 
\end{equation}
\begin{equation}\label{eq:bug-generator-constraint}
\mbox{subject to } u = Xv, X_D v = 0, \|u\|_\infty \leq 1. 
\end{equation}
\end{subequations}

If the problem~\eqref{eq:bug-generator} has the unique solution $(u,v) = (\vec{0}, \vec{0})$, then a debugger who queries the points indexed by $D$ cannot be beaten by a bug generator who introduces at most $t$ bugs.

Based on this argument, we can construct a bilevel optimization problem for the debugger to solve by further minimizing the objective~\eqref{eq:bug-generator-objective} with respect to $D \subseteq [n]$ such that $|D| \le m$.
The optimization problem can then be transformed into a minimax MILP: 
\begin{align}\label{eq:milp}
\begin{split}
\min_{\xi \in \{0,1\}^n} & \max_{\substack{a,a^+,a^-\in \real^n, \\ u, u^+, u^- \in \real^n, v \in \real^d,\\ z,w \in \{0,1\}^n}}  \sum_{j=1}^n a_j^+ - a_j^-, \\
\mbox{subject to } & \Big\{ u = Xv, u = u^+ - u^-, u^+, u^- \geq 0, \\
& \quad a = u^+ + u^-,  u^+ \leq z, \ u^- \leq (\mathds{1}_n-z), \\
& \quad \sum_{i=1}^n w_i \leq t, a^+ \leq M w, \ a^- \leq M(\mathds{1}_n-w),  \\
& \quad a = a^+ + a^-, a^+ \geq 0, a^- \geq 0, \\
& \quad \sum_{i=1}^n \xi_i \leq m, u \leq (\mathds{1}_n - \xi), u \geq - (\mathds{1}_n - \xi).\Big\}
\end{split}
\end{align} 


\begin{theorem}[MILP for debugging]\label{thm:milp}
If the optimization problem~\eqref{eq:milp} has the unique solution $(u,v) = (\vec{0}, \vec{0})$, then the debugger can add $m$ points indexed by $D = \supp(\xi)$ to achieve support recovery.
\end{theorem}

\begin{remark}
For more information on efficient algorithms for optimizing minimax MILPs, we refer the reader to the references~\cite{tang2016class},~\cite{xu2014exact}, and~\cite{zeng2014solving}.
\end{remark}

\section{EXPERIMENTS}
\label{sec:exp}
In this section, we empirically validate our Lasso-based debugging method for support recovery. The section is organized as follows:
\begin{itemize}
\item Subsection~\ref{sec:exp-gaussian-design}, corresponding to Section~\ref{sec:supprec}, contains a number of experiments which investigate the performance of our proposed debugging formulation.
\item Subsection~\ref{sec:experiments-tuning-lambda}, corresponding to Section~\ref{sec:tune}, studies the proposed tuning parameter selection procedure.
\item Subsection~\ref{sec:experiments-clean-points} studies the Lasso-based debugging method with a clean data pool, including the proposed MILP algorithm from Section~\ref{sec:active}.
\end{itemize}
We also compare our proposed method to alternative methods motivated by existing literature.

We begin with an outline of the experimental settings used in most of our experiments:
\begin{enumerate}
\item[S1] Generate the feature design matrix $X \in \real^{n \times p}$ by sampling each row i.i.d.\ from $\mathcal{N}(\vec{0}_p, I_{p\times p})$.
\item[S2] Generate $\betastar \in \real^p$, where each entry $\betastar_i$ is drawn i.i.d.\ from $Unif(-1,1)$.
\item[S3] Generate $\epsilon \in \real^n$, where each entry $\epsilon_i$ is drawn i.i.d.\ from $\mathcal{N}(0,\sigma^2)$. 
\item[S4]\label{step4} Generate the bug vector $\gammastar \in \real^n$, where we draw $\gammastar_i = (10\sqrt{\log(2n)}\sigma + Unif(0,10))\cdot Bernoulli(\pm 1, 0.5)$ for $i \in [t]$ and take $\gammastar_i = 0$ for the remaining positions. 
\item[S5] Generate the labels by $y = X\betastar + \epsilon + \gammastar$.
\end{enumerate}
These five steps produce a synthetic dataset $(X,y)$; we will specify the particular parameters $(n,p,t, \sigma)$ in each task. If we use a real dataset, the first step changes to [S1']:
\begin{enumerate}
\item[S1'] Given the whole data pool $X_{real}$, uniformly sample $n$ data points from it to construct $X$.
\end{enumerate}

In the plot legends, we will refer to our Lasso-based debugging method as ``debugging." We may also invoke a postprocessing step on top of debugging, called ``debugging + postprocess," which first runs the Lasso optimization algorithm to obtain $\gammahat$ and an estimated support set $\hat{T}$, then removes the points $(X_{\hat{T},\cdot}, y_{\hat{T}})$ and runs ordinary least squares on the remaining points to obtain $\betahat$.

\subsection{Support Recovery}
\label{sec:exp-gaussian-design}
In this section, we design two experiments. The first experiment investigates the influence of the fraction of bugs $c_t:=\frac{t}{n}$ on the three assumptions imposed in our theory and the resulting recovery rates. We will vary the design of $X$ using different datasets. 
The second experiment compares debugging with four alternative regression methods, using the precision-recall metric. Note that we will take the tuning parameter $\lambda = 2\frac{\sqrt{\log 2(n-t)}}{n}$ for these experiments, since the other outlier detection methods we use for comparison do not propose a way to perform parameter tuning. We will explore the performance of the proposed algorithm for parameter selection in the next subsection.

\subsubsection{Number of Bugs vs.\ Different Measurements}
Our first experiment involves four different datasets with different values of $n$ and $c_t$. We track the performance of the three assumptions (Assumptions~\ref{C1}--\ref{C3}) and the subset/exact recovery rates, which measure the fraction of experiments which result in subset/exact recovery. The first dataset is generated using the synthetic mechanism described at the beginning of Section~\ref{sec:exp}, with $p=15$. The other three datasets are chosen from the UCI Machine Learning Repository: Combined Cycle Power Plant\footnote{\url{http://archive.ics.uci.edu/ml/datasets/Combined+Cycle+Power+Plant}}, temperature forecast\footnote{\url{http://archive.ics.uci.edu/ml/datasets/Bias+correction+of+numerical+prediction+model+temperature+forecast}}, and YearPredictionMSD\footnote{\url{http://archive.ics.uci.edu/ml/datasets/YearPredictionMSD}}. They are all associated to regression tasks, with varying feature dimensions (4, 21, and 90, respectively). In the temperature forecast dataset, we remove the attribute of station and date from the original dataset, since they are discrete objects. For each of the UCI datasets, after randomly picking $n$ data points from the entire data pool, we normalize the subsampled dataset according to $X_{\cdot,j} = \frac{X_{\cdot,j} - \frac{1}{n}\sum_{i \in [n]} X_{i,j}}{std[X_{\cdot,j} ]}$, where std represents the standard deviation. 

\begin{figure}[htp!]
\centering
\begin{subfigure}[b]{0.32\textwidth}
\centering
\includegraphics[width=\textwidth]{./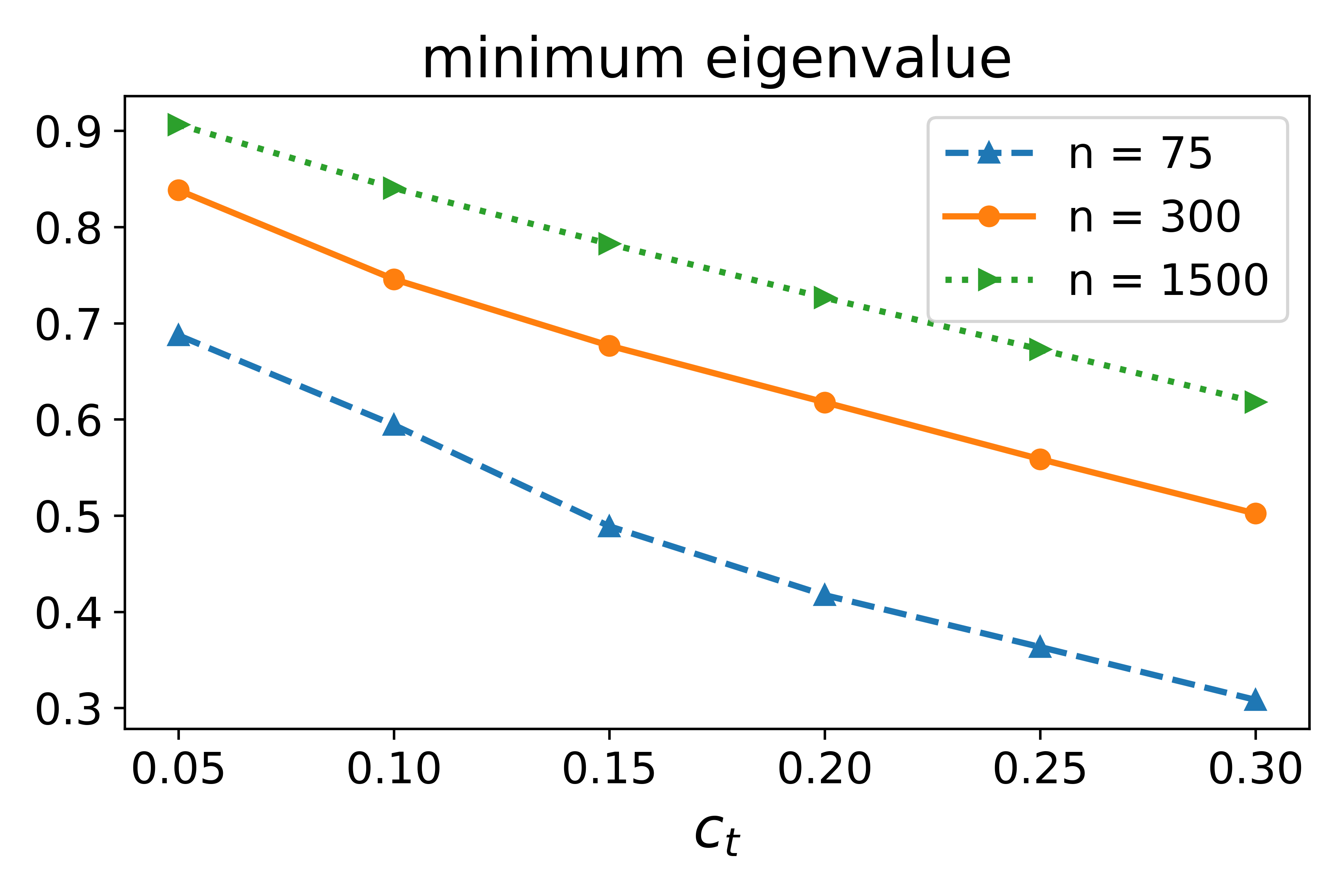}
\caption{Synthetic dataset}
\label{fig:conditions_synthetic_mineig}
\end{subfigure}
~
\begin{subfigure}[b]{0.32\textwidth}
\centering
\includegraphics[width=\textwidth]{./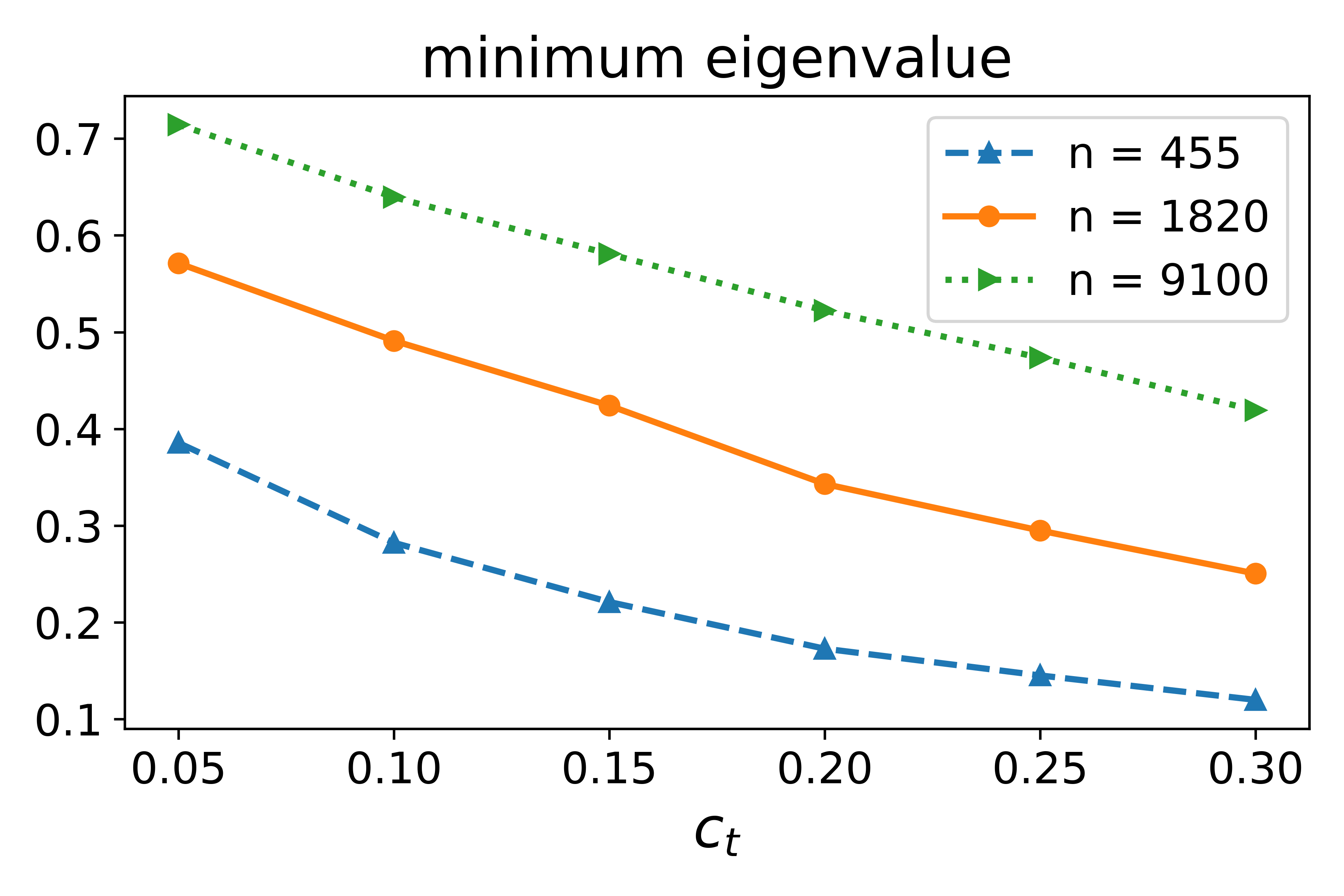}
\caption{YearPredictionMSD dataset}
\label{fig:conditions_msd_mineig}
\end{subfigure}
\vskip\baselineskip
\begin{subfigure}[b]{0.32\textwidth}
\centering
\includegraphics[width=\textwidth]{./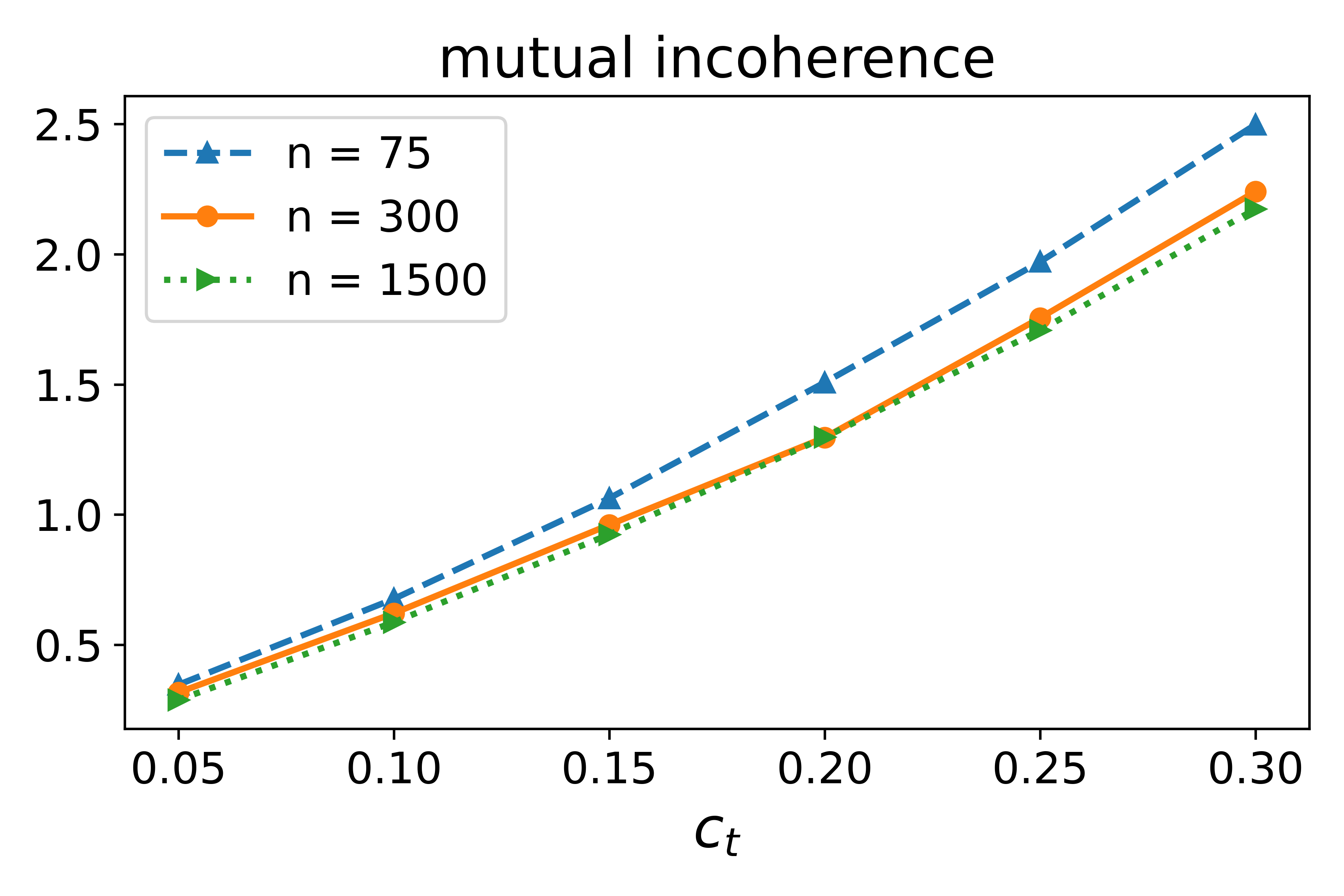}
\caption{Synthetic dataset}
\label{fig:conditions_synthetic_mutinc}
\end{subfigure}
~
\begin{subfigure}[b]{0.32\textwidth}
\centering
\includegraphics[width=\textwidth]{./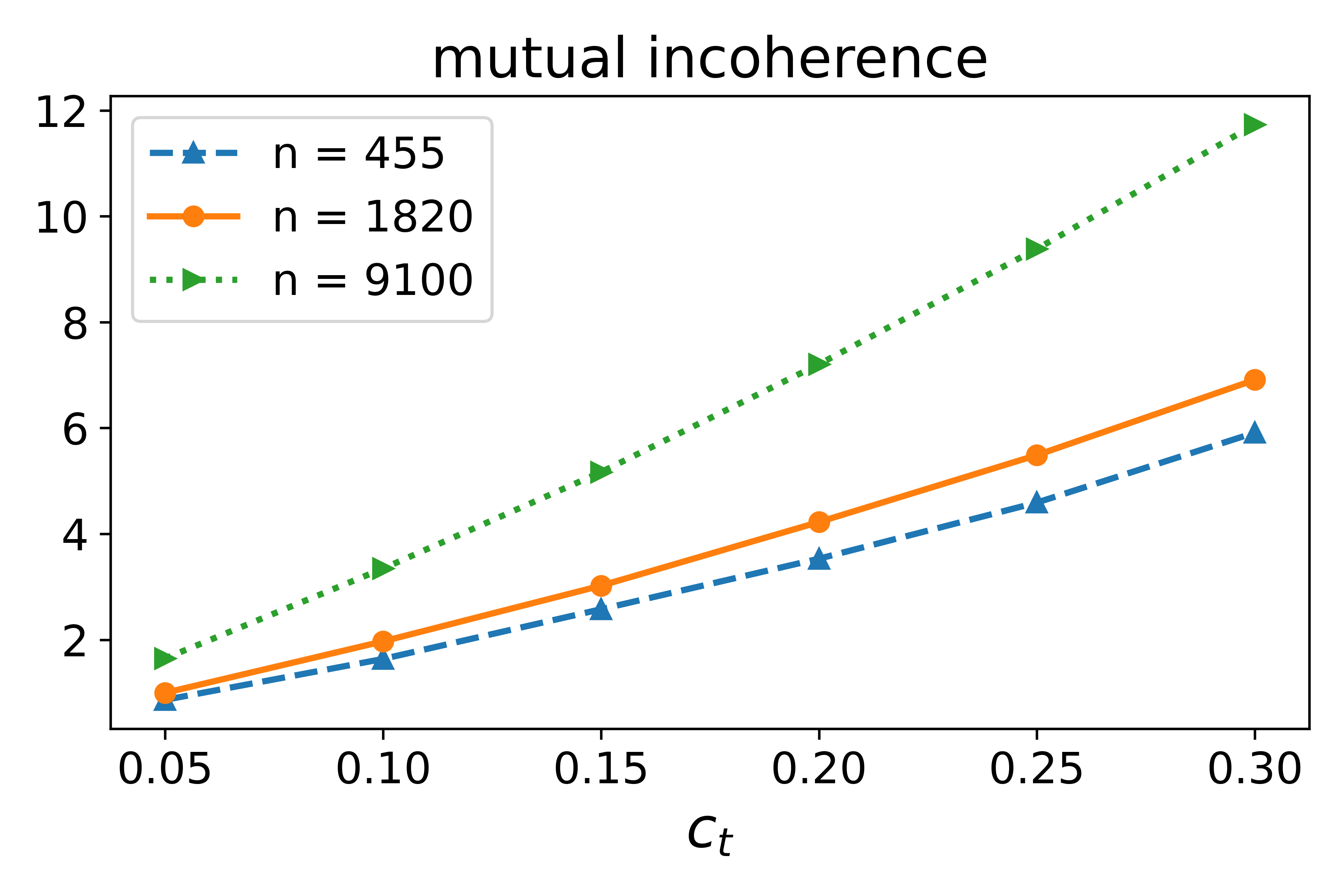}
\caption{YearPredictionMSD dataset}
\label{fig:conditions_msd_mutinc}
\end{subfigure}
\vskip\baselineskip
\begin{subfigure}[b]{0.32\textwidth}
\centering
\includegraphics[width=\textwidth]{./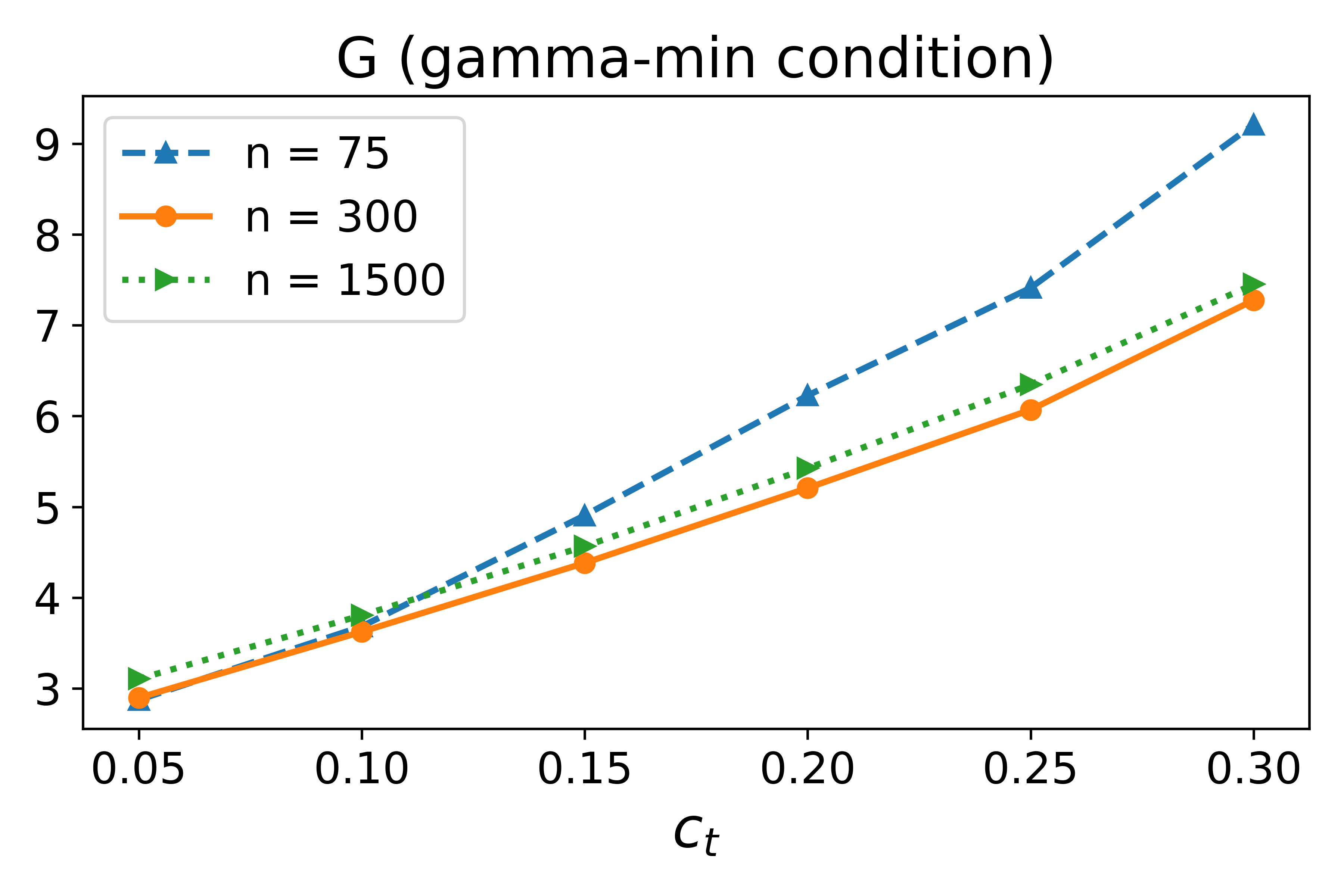}
\caption{Synthetic dataset}
\label{fig:conditions_synthetic_mingammaG}
\end{subfigure}
~
\begin{subfigure}[b]{0.32\textwidth}
\centering
\includegraphics[width=\textwidth]{./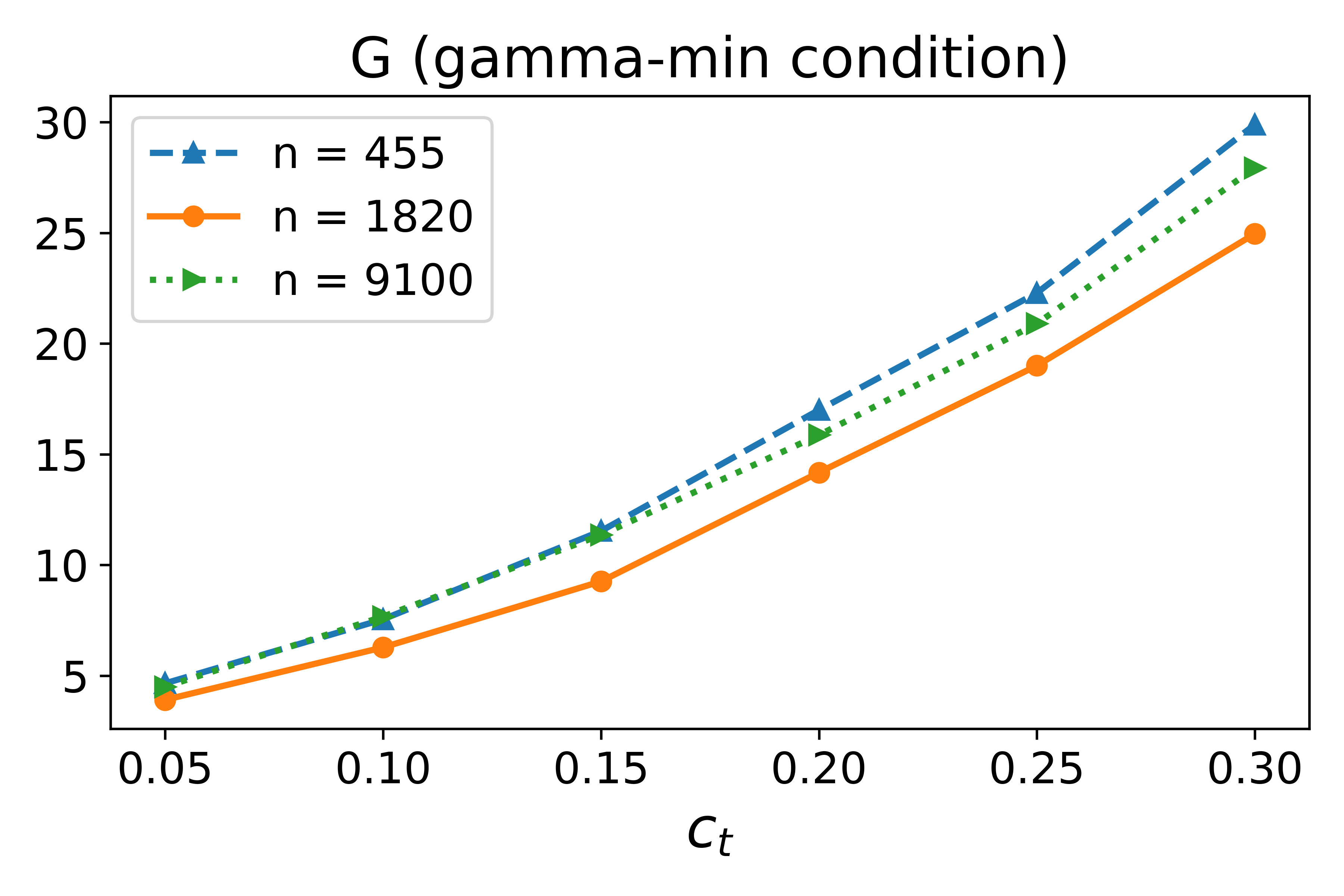}
\caption{YearPredictionMSD dataset}
\label{fig:conditions_msd_mingammaG}
\end{subfigure}
\vskip\baselineskip
\begin{subfigure}[b]{0.32\textwidth}
\centering
\includegraphics[width=\textwidth]{./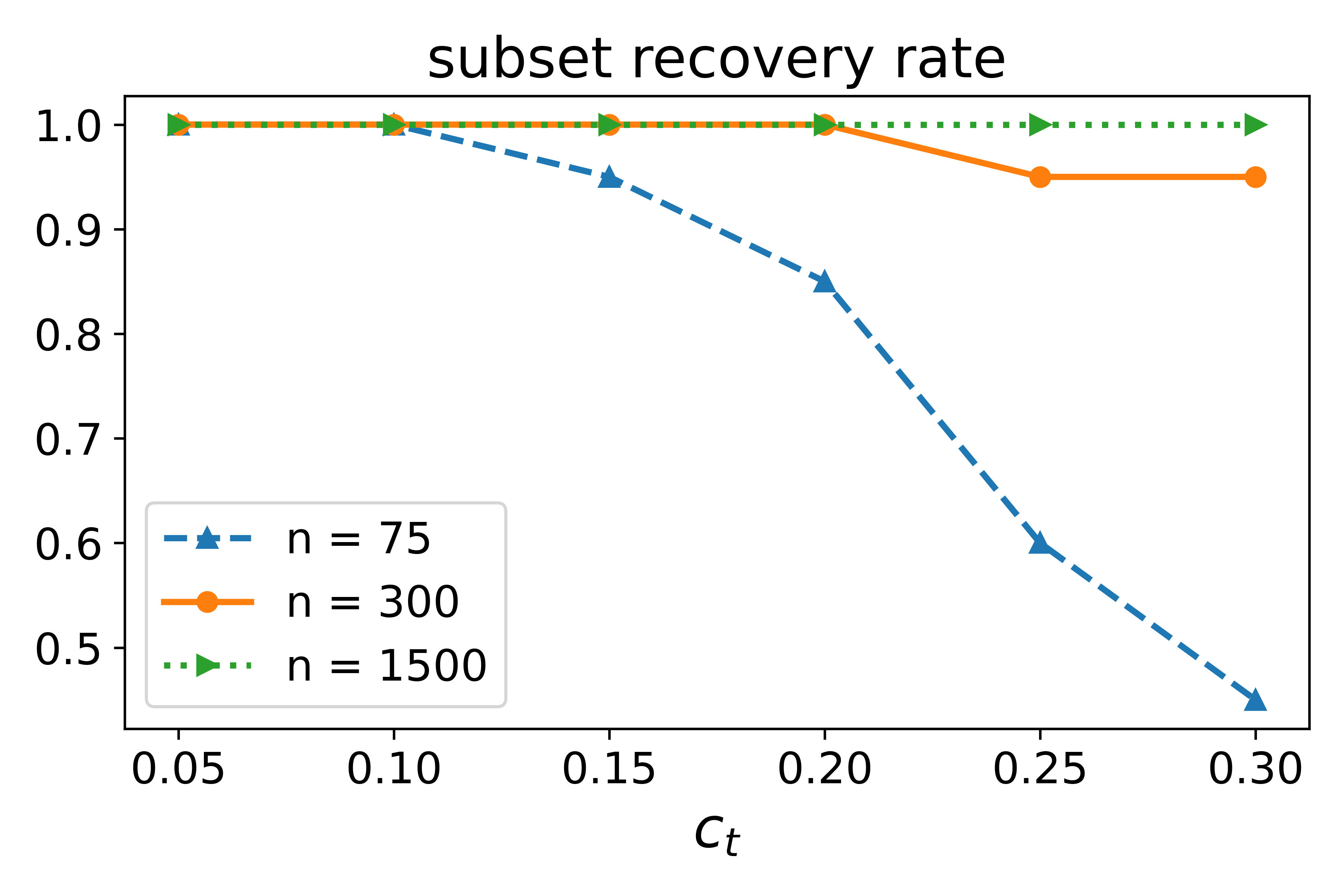}
\caption{Synthetic dataset}
\label{fig:conditions_synthetic_subrec}
\end{subfigure}
~
\begin{subfigure}[b]{0.32\textwidth}
\centering
\includegraphics[width=\textwidth]{./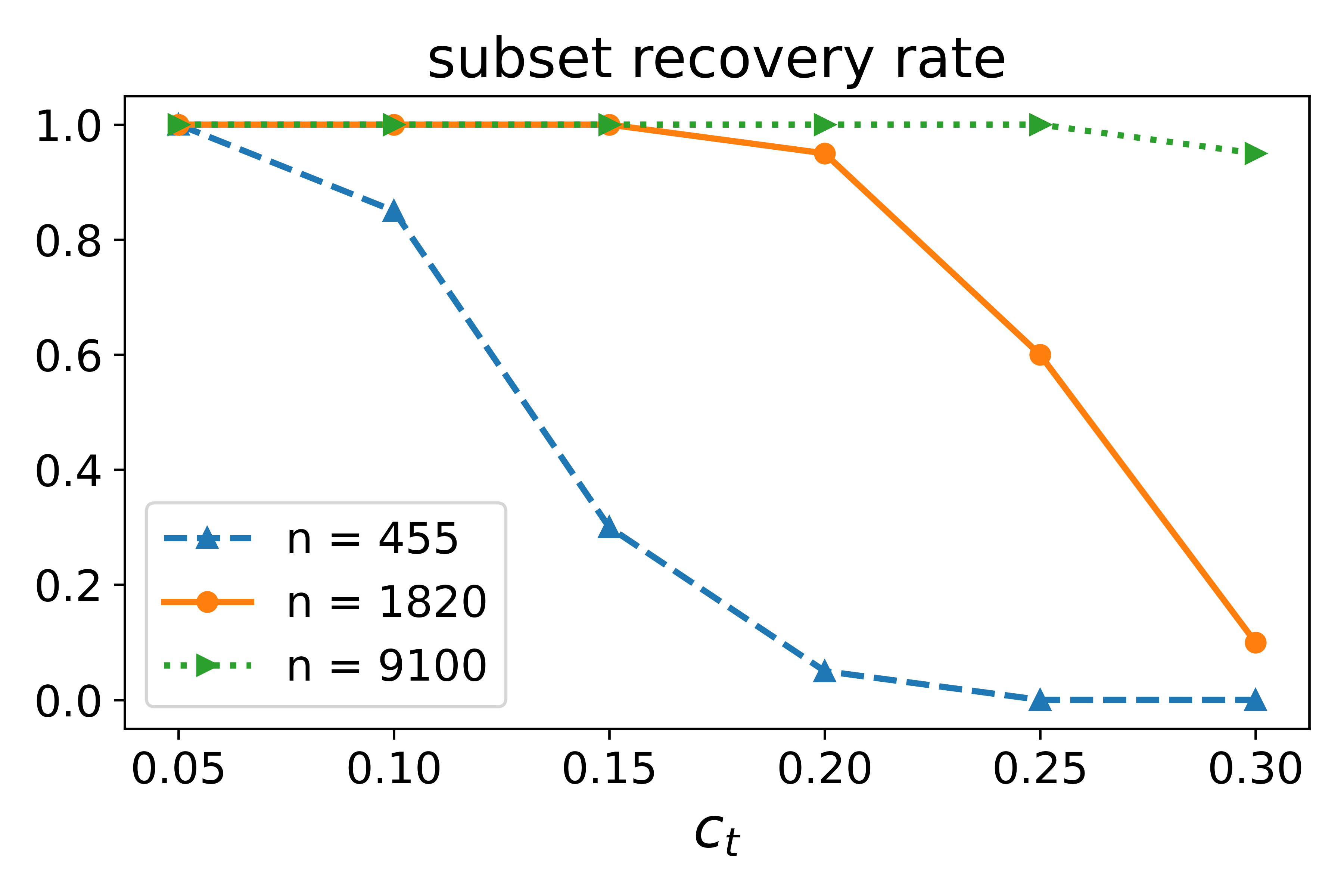}
\caption{YearPredictionMSD dataset}
\label{fig:conditions_msd_subrec}
\end{subfigure}
\vskip\baselineskip
\begin{subfigure}[b]{0.32\textwidth}
\centering
\includegraphics[width=\textwidth]{./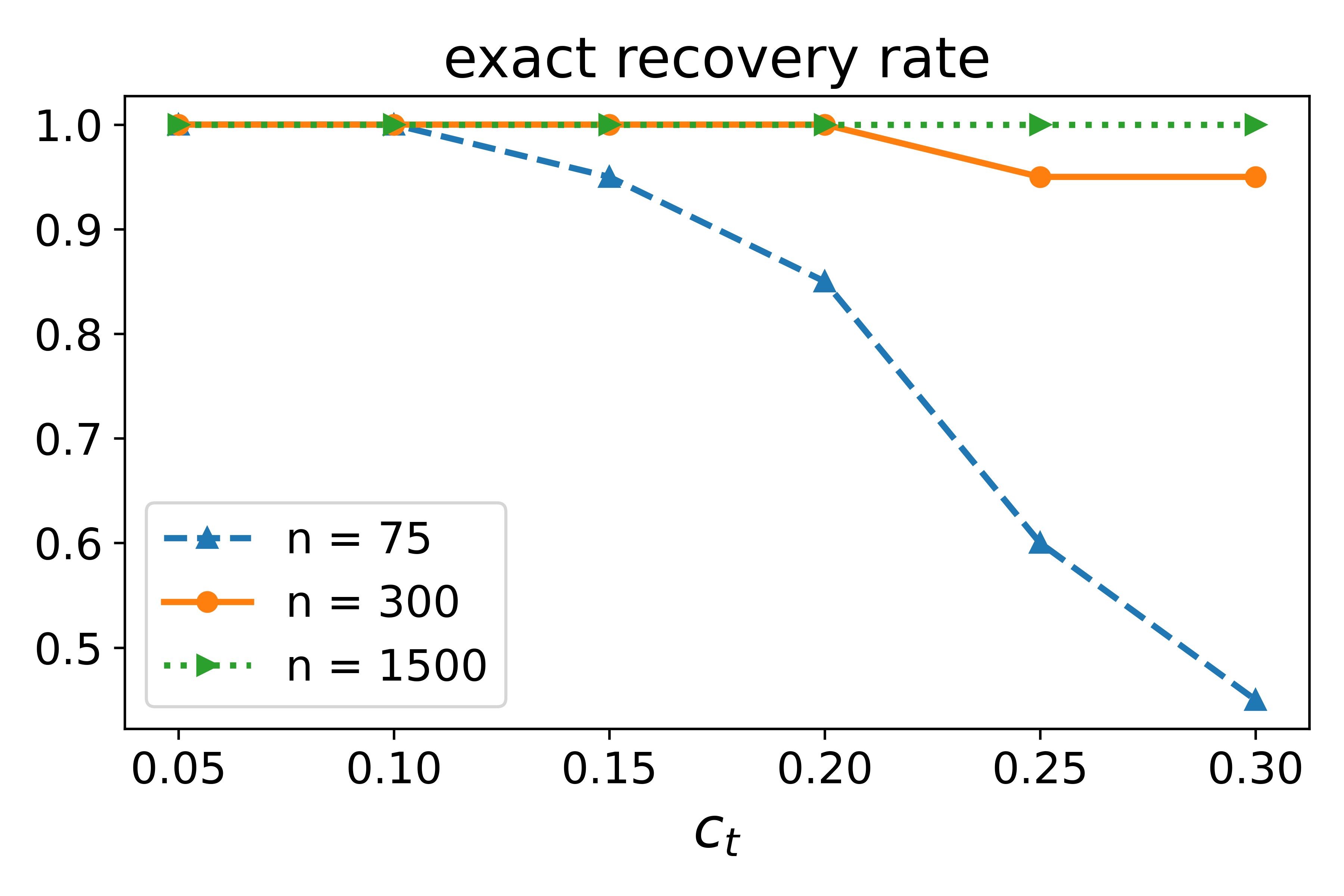}
\caption{Synthetic dataset}
\label{fig:conditions_synthetic_exactrec}
\end{subfigure}
~
\begin{subfigure}[b]{0.32\textwidth}
\centering
\includegraphics[width=\textwidth]{./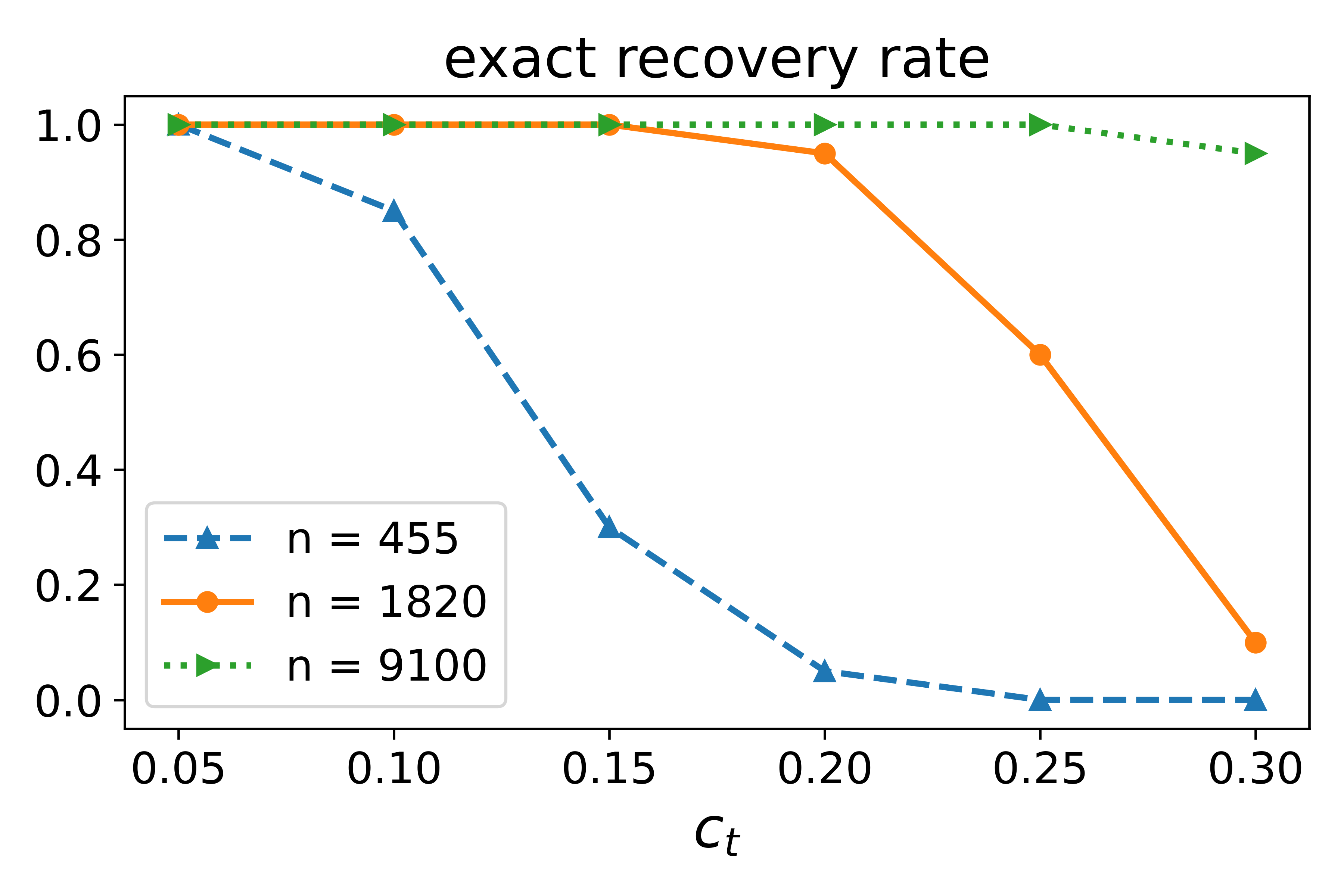}
\caption{YearPredictionMSD dataset}
\label{fig:conditions_synthetic_exactrec}
\end{subfigure}
\end{figure}%
\begin{figure}[htp!]\ContinuedFloat
\centering
\begin{subfigure}[b]{0.32\textwidth}
\centering
\includegraphics[width=\textwidth]{./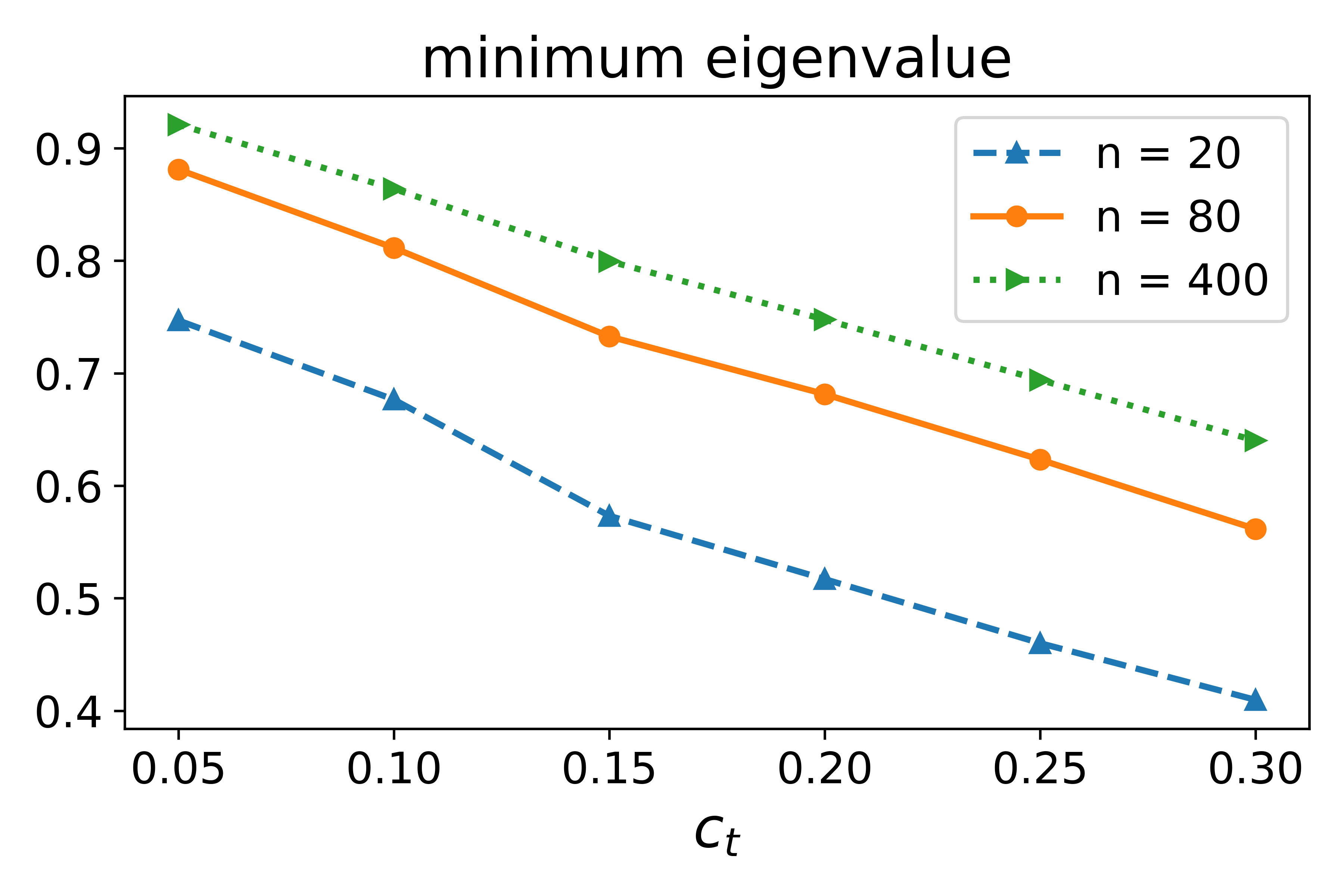}
\caption{Combined Cycle Power Plant dataset}
\label{fig:conditions_ccpp_mineig}
\end{subfigure}
~
\begin{subfigure}[b]{0.32\textwidth}
\centering
\includegraphics[width=\textwidth]{./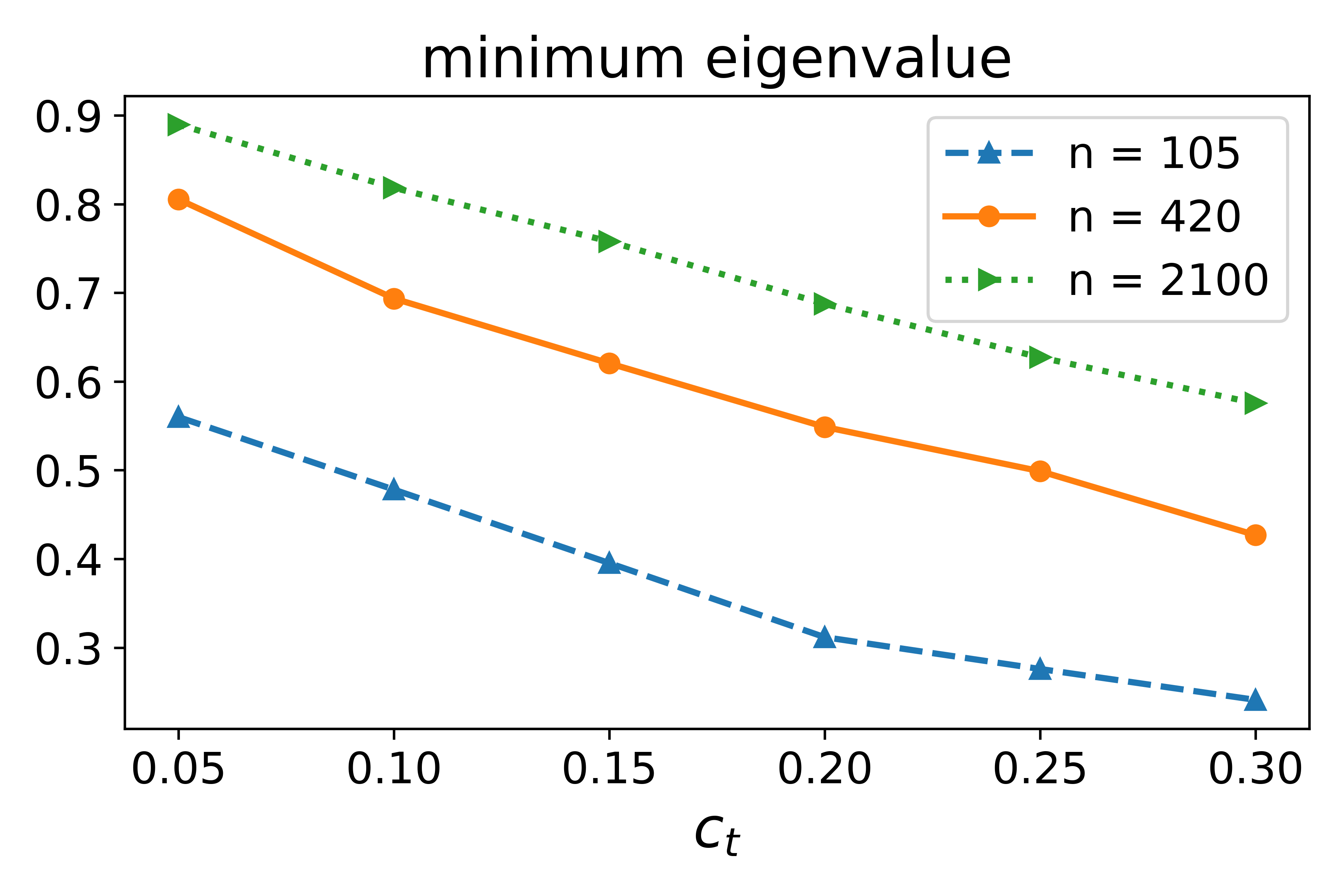}
\caption{Temperature forecast dataset}
\label{fig:conditions_bias_mineig}
\end{subfigure}
\vskip\baselineskip
\begin{subfigure}[b]{0.32\textwidth}
\centering
\includegraphics[width=\textwidth]{./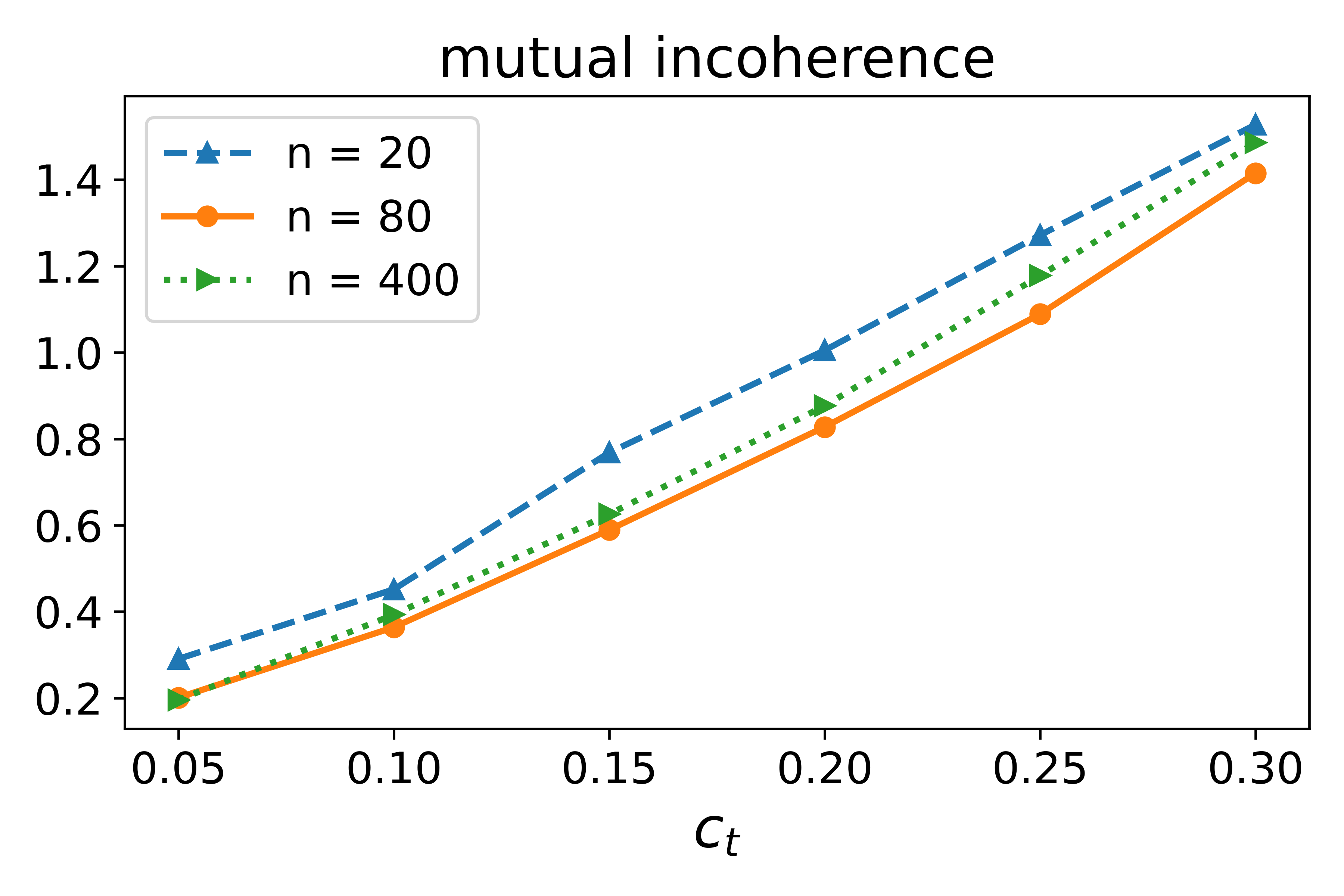}
\caption{Combined Cycle Power Plant dataset}
\label{fig:conditions_ccpp_mutinc}
\end{subfigure}
~
\begin{subfigure}[b]{0.32\textwidth}
\centering
\includegraphics[width=\textwidth]{./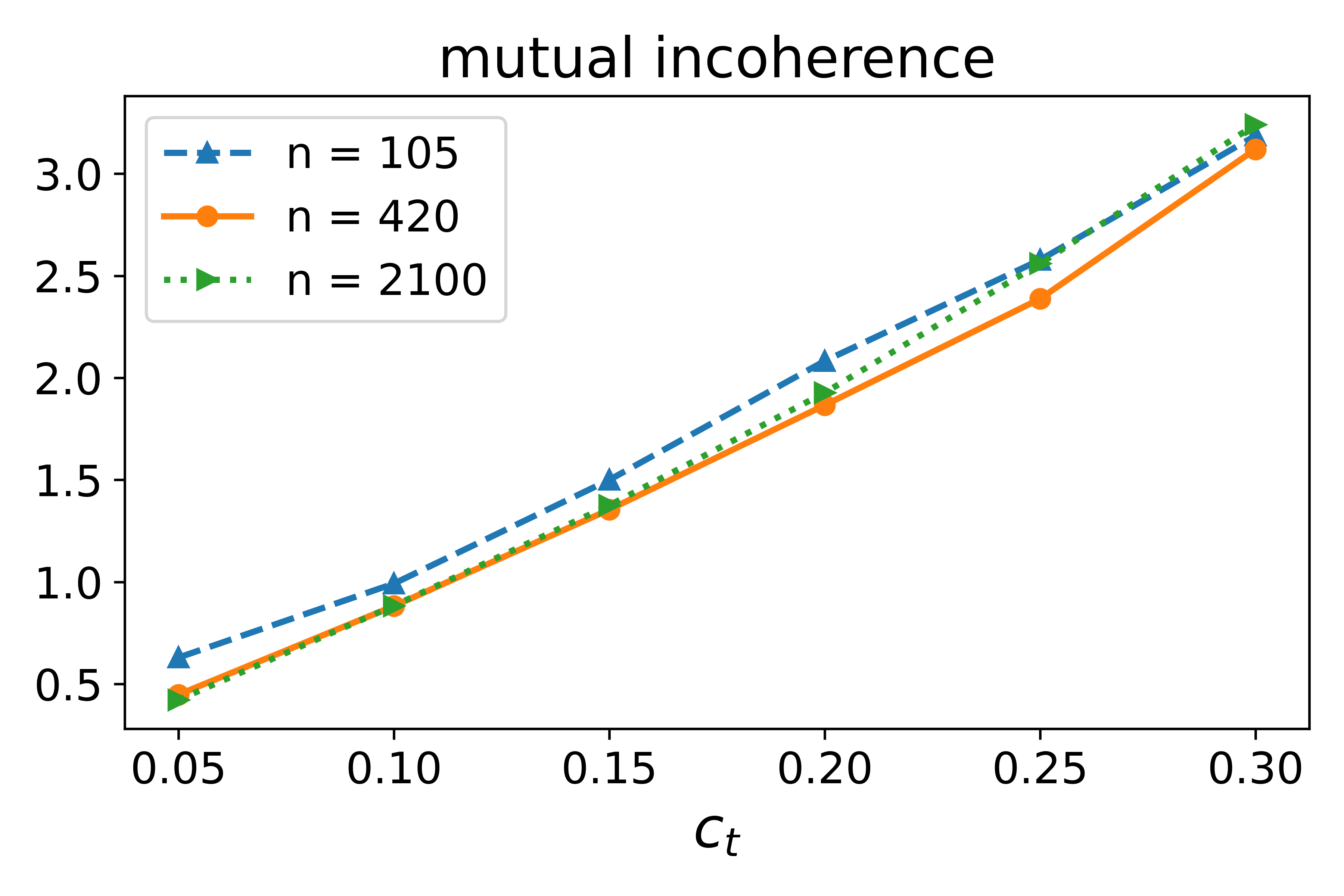}
\caption{Temperature forecast dataset}
\label{fig:conditions_bias_mutinc}
\end{subfigure}
\vskip\baselineskip
\begin{subfigure}[b]{0.32\textwidth}
\centering
\includegraphics[width=\textwidth]{./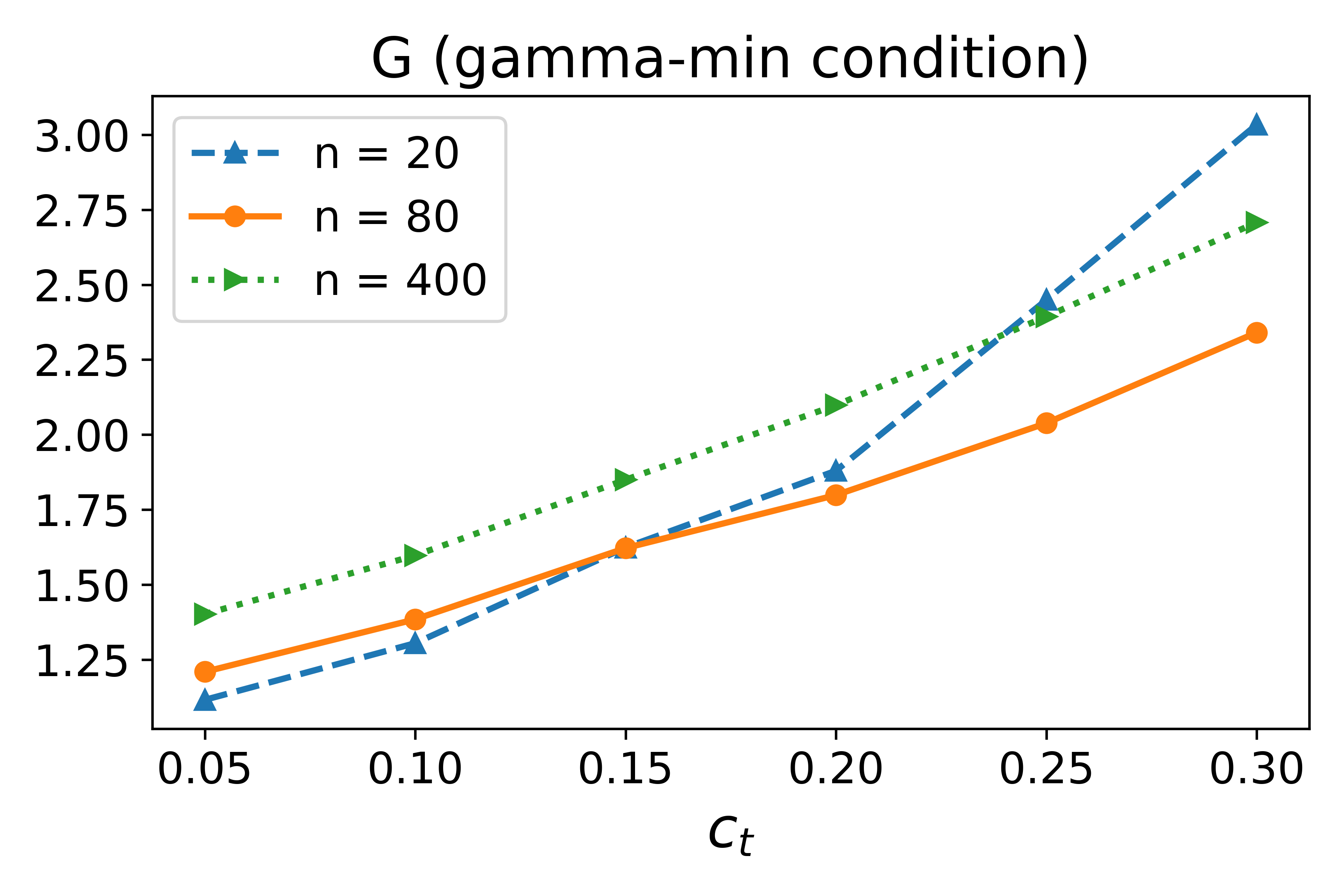}
\caption{Combined Cycle Power Plant dataset}
\label{fig:conditions_ccpp_mingammaG}
\end{subfigure}
~
\begin{subfigure}[b]{0.32\textwidth}
\centering
\includegraphics[width=\textwidth]{./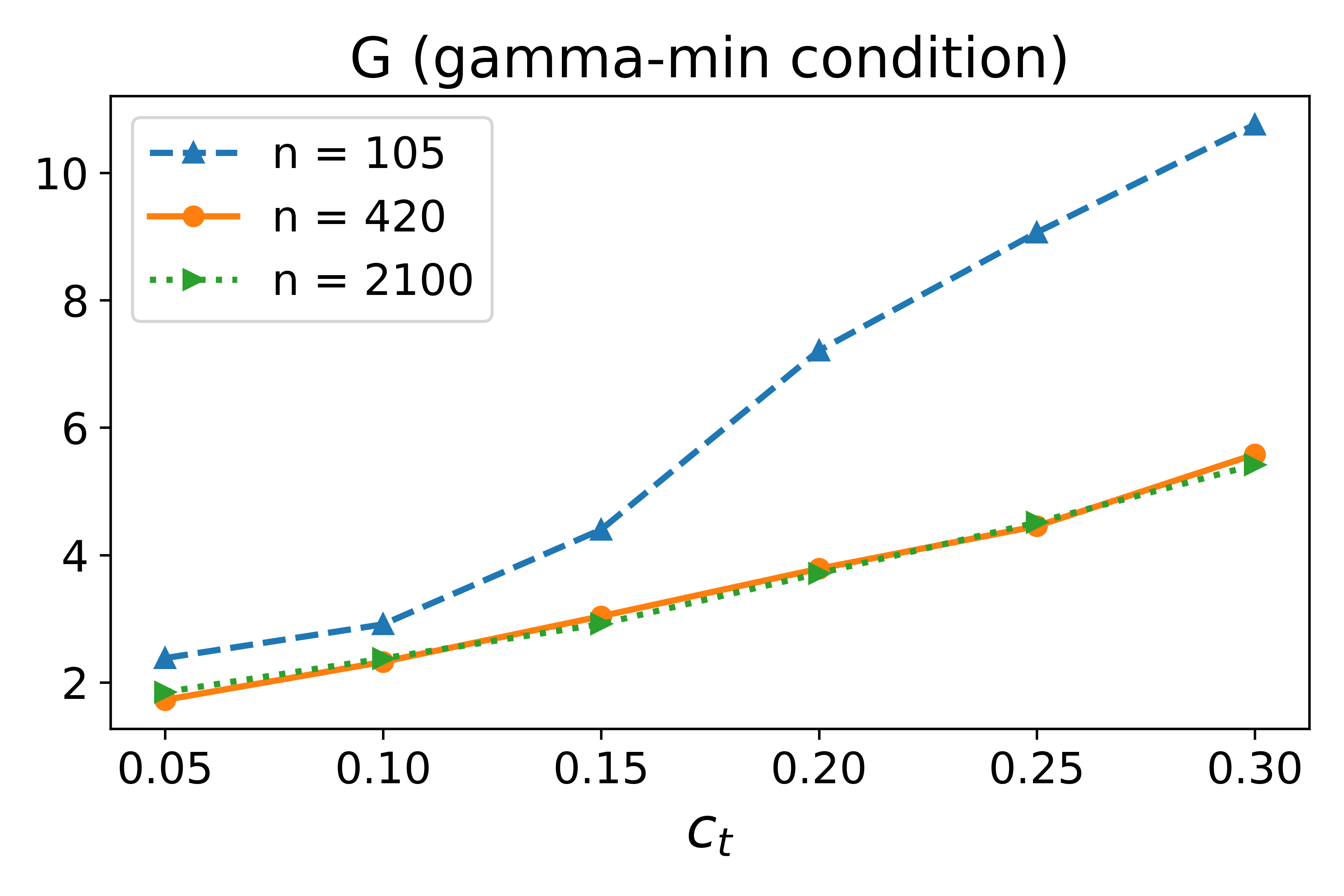}
\caption{Temperature forecast dataset}
\label{fig:conditions_synthetic_mingammaG}
\end{subfigure}
\vskip\baselineskip
\begin{subfigure}[b]{0.32\textwidth}
\centering
\includegraphics[width=\textwidth]{./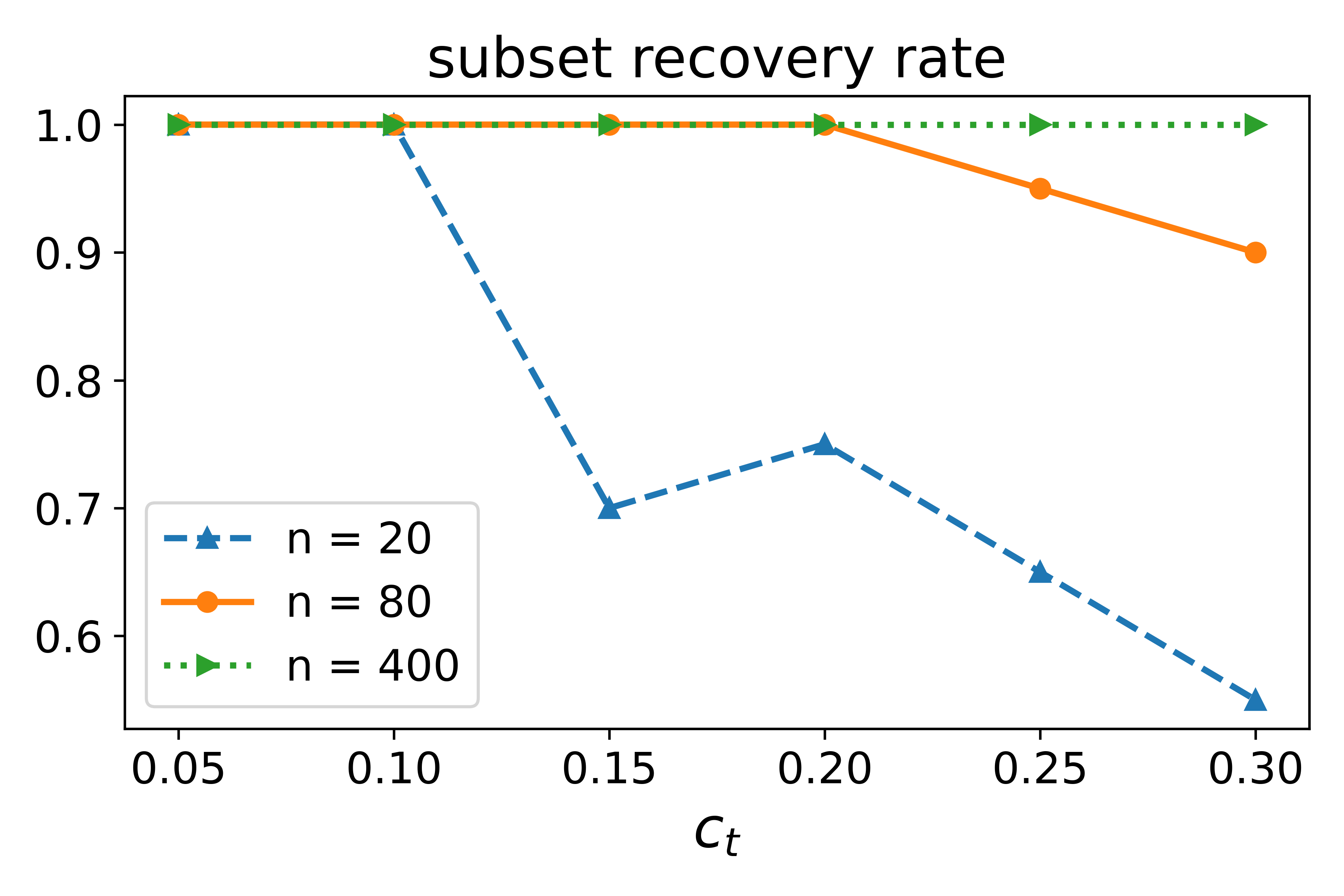}
\caption{Combined Cycle Power Plant dataset}
\label{fig:conditions_ccpp_subrec}
\end{subfigure}
~
\begin{subfigure}[b]{0.32\textwidth}
\centering
\includegraphics[width=\textwidth]{./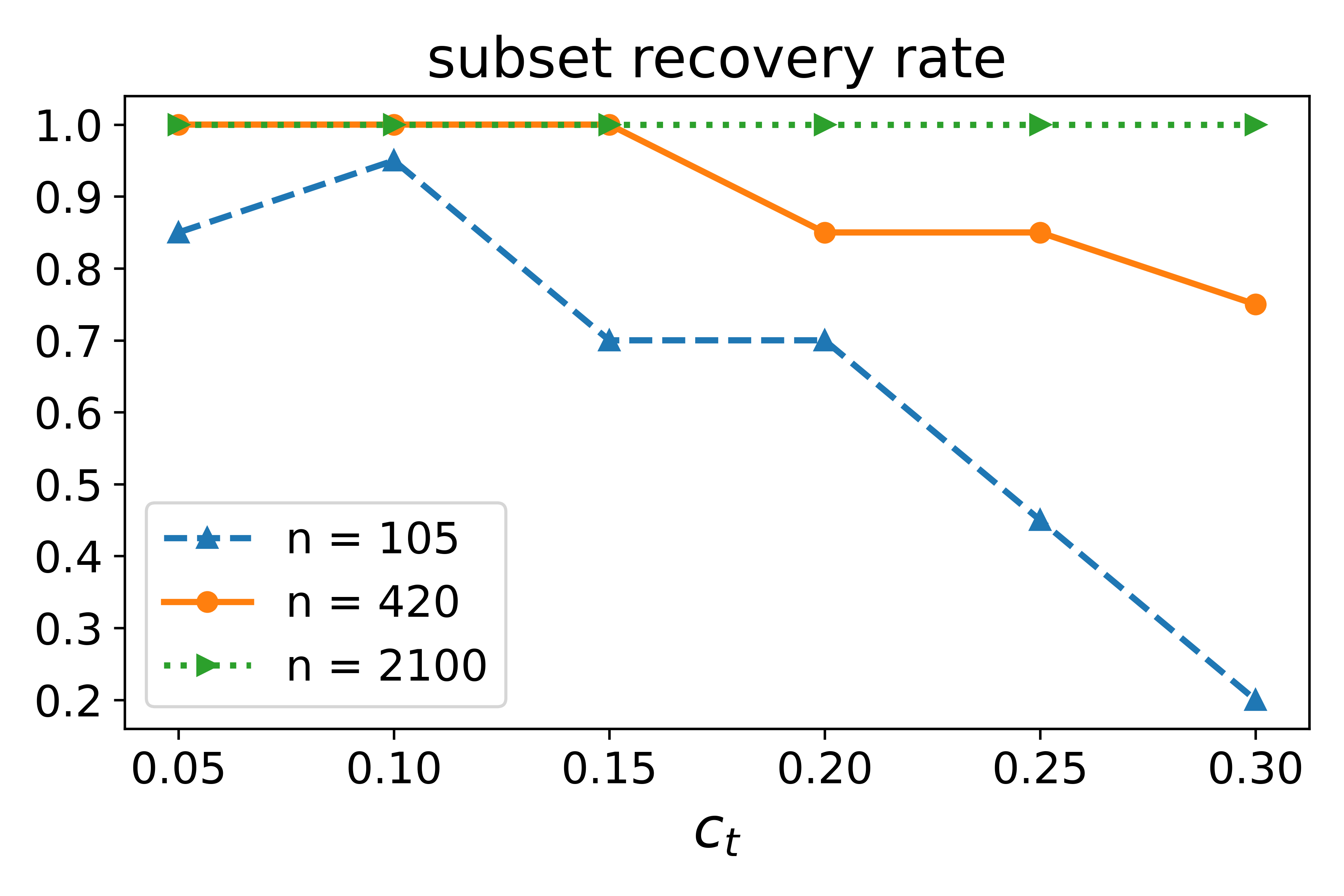}
\caption{Temperature forecast dataset}
\label{fig:conditions_bias_subrec}
\end{subfigure}
\vskip\baselineskip
\begin{subfigure}[b]{0.32\textwidth}
\centering
\includegraphics[width=\textwidth]{./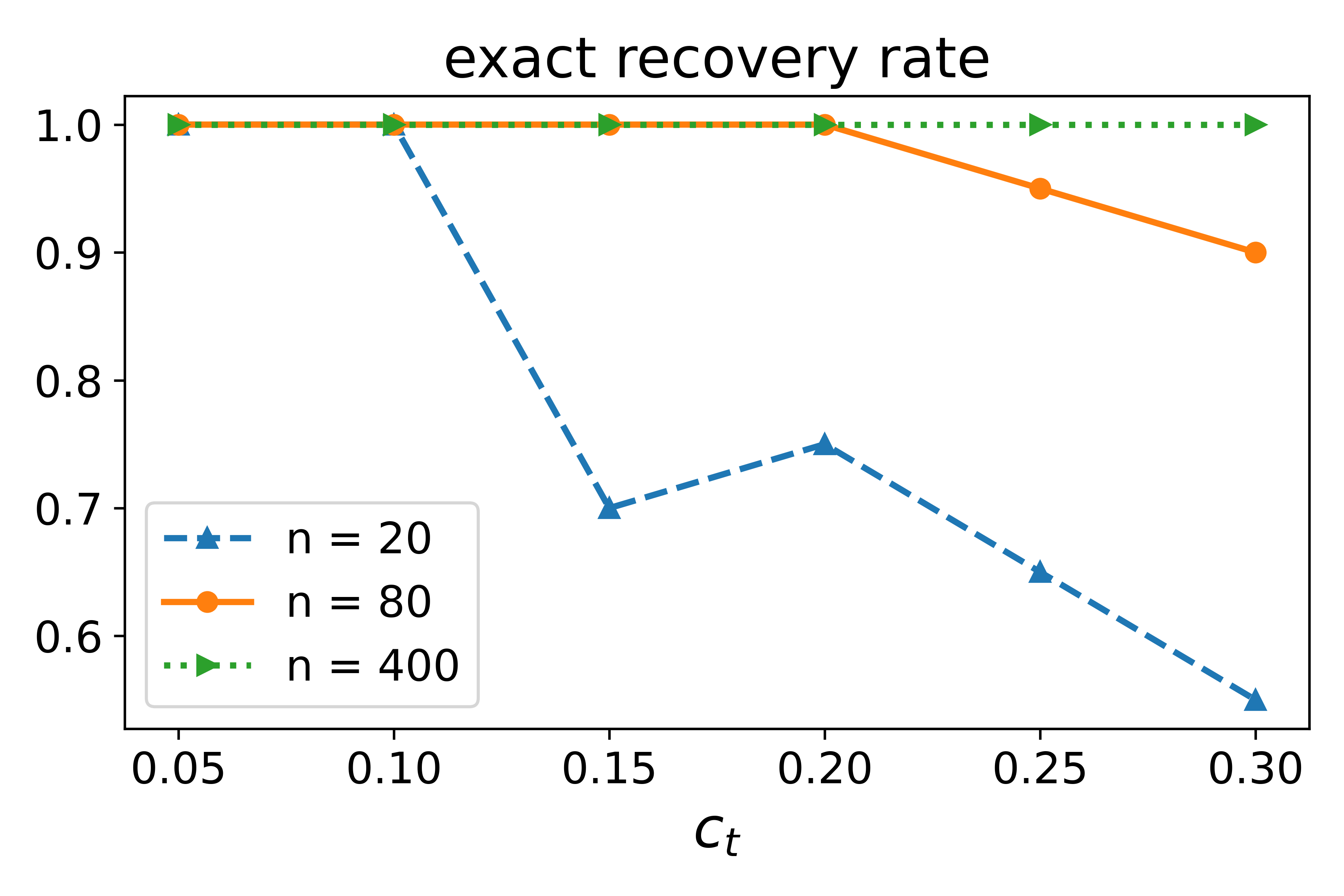}
\caption{Combined Cycle Power Plant dataset}
\label{fig:conditions_ccpp_exactrec}
\end{subfigure}
~
\begin{subfigure}[b]{0.32\textwidth}
\centering
\includegraphics[width=\textwidth]{./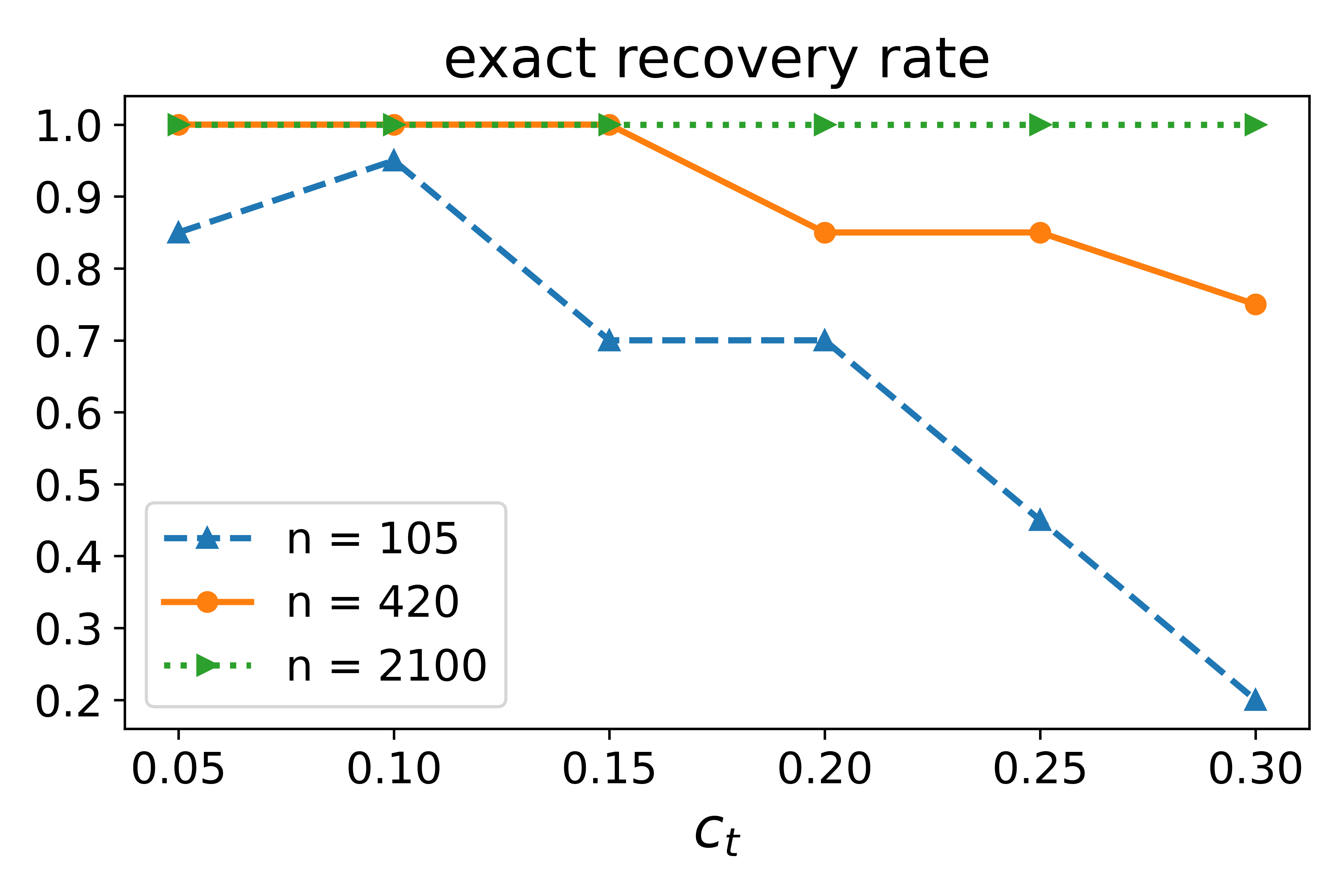}
\caption{Temperature forecast dataset}
\label{fig:conditions_synthetic_exactrec}
\end{subfigure}
\caption{Five Measurements on Four Datasets. Three different $n$'s are of values $5p,\ 20p$, and $100p$. The variance $\sigma$ is set to 0.1. The tuning parameter is set to $\lambda = 2\frac{\sqrt{\log 2(n-t)}}{n}$. Each dot is an average value of 20 random trials.} 
\label{fig:t_versus_assumptions}
\end{figure}
The results are displayed in Figure~\ref{fig:t_versus_assumptions}. For the minimum eigenvalue assumption, a key observation from all datasets is that the minimum eigenvalue becomes larger (improves) as $n$ increases, and becomes smaller as $c_t$ increases. For the mutual incoherence assumption, the synthetic dataset satisfies the condition with less than 15\% outliers. The Combined Cycle Power Plant dataset has mutual incoherence close to 1 when $c_t$ is approximately 20\%-25\%, and the mutual incoherence condition of the YearPredictionMSD dataset approaches 1 when $c_t$ is approximately 5\%. Therefore, we see that the validity of the assumption highly depends on the design of $X$. For the gamma-min condition, as $c_t$ increases, we need more obvious (larger $\min_i|\gammastar_i|$) outliers. Finally, with larger $n$ and smaller $c_t$, the subset/exact recovery rate improves. 

\subsubsection{Effectiveness for Recovery}

The second experiment compares our debugging method to other proposed methods in the robust statistics literature. We compare our method with the Fast LTS~\cite{rousseeuw2006computing}, E-lasso~\cite{nguyen2013robust}, Simplified $\Theta$-IPOD~\cite{she2011outlier}, and Least Squares methods. E-lasso is similar to our formulation, except it includes an additional penalty with $\beta$. The Simplified $\Theta$-IPOD method iteratively uses hard thresholding to eliminate the influence of outliers. For the experimental setup, we generate synthetic data with $n = 2000, t = 200, p = 15$, and $\sigma = 0.1$, but replace step [S4] by one of the following mechanisms for generating $\gammastar$:
\begin{enumerate}
\item We generate $\gammastar_i, i \in T$ by $Bernoulli(\pm 1,0.5) \cdot (10\sqrt{\log (2n) \sigma} + Unif(0,10))$.
\item We generate $\beta'$ elementwise from $Unif(-10,10)$ and take $\gammastar_i = x_i^\top (\beta'-\betastar), i \in T$.
\end{enumerate}
The first adversary is random, whereas the second adversary aims to attack the data by inducing the learner to fit another hyperplane. The precision/recall for Fast LTS and Least Squares are calculated by running the method once and applying various thresholds to clip $\gammahat$. For the other three methods, we apply different tuning parameters, compute precision/recall for each result, and finally combine them to plot a macro precision-recall curve.

In the left panel of Figure~\ref{fig:compare_subrecovery_pr}, Least Squares and Fast LTS reach perfect AUC, while the other three methods have slightly lower scores.
In the right panel of Figure~\ref{fig:compare_subrecovery_pr}, we see that debugging, E-lasso, and Fast LTS perform comparably well, and slightly better than Simplified $\Theta$-IPOD.
Not surprisingly, Least Squares performs somewhat worse, since it is not a robust procedure.

\begin{figure}[htp!]
\centering
\begin{subfigure}[b]{0.425\textwidth}
\centering
\includegraphics[width=\columnwidth]{./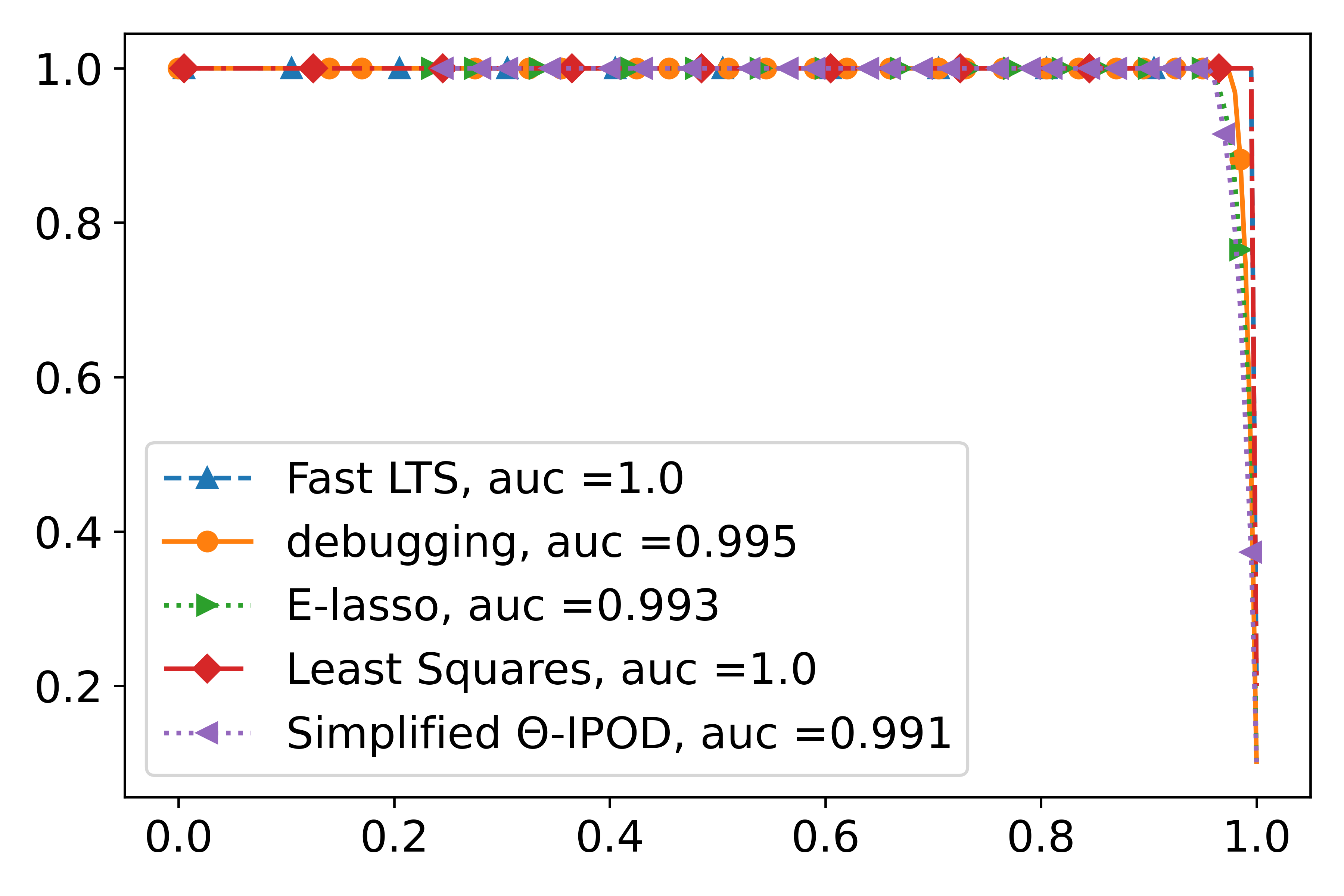}
\end{subfigure}
~
\begin{subfigure}[b]{0.425\textwidth}
\centering
\includegraphics[width=\columnwidth]{./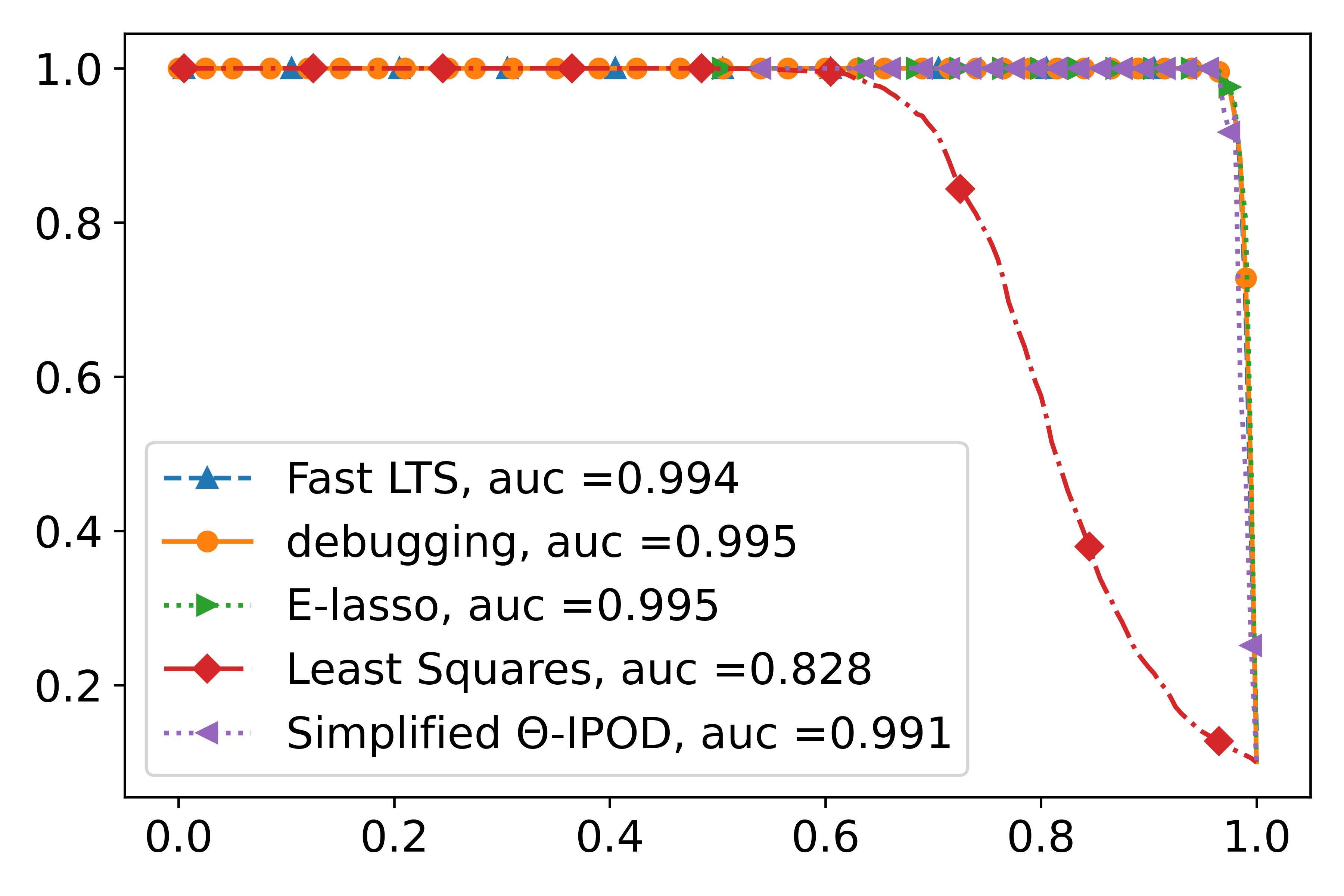}
\end{subfigure}
\caption{Precision Recall Curves over Different Regression Methods. The two plots correspond to the two settings described in the text for generating $\gamma^*$. To better view the curves, we only show the dots for every $c$ positions, where $c$ is an interger and different for different methods.}
\label{fig:compare_subrecovery_pr}
\end{figure}

\subsection{Tuning Parameter Selection}\label{sec:experiments-tuning-lambda}

We now present two experimental designs for tuning parameter selection. The first experiment runs Algorithm~\ref{alg:choice-lambda} for both one- and two-pool cases. We will present the recovery rates for a range of $n$'s and $c_t$'s, showing the effectiveness of our algorithm in a variety of situations. The second experiment compares Algorithm~\ref{alg:choice-lambda} in one- and two-pool cases to cross-validation, which is a popular alternative for parameter tuning. Our results indicate that Algorithm~\ref{alg:choice-lambda} outperforms cross-validation in terms of support recovery performance. 

We begin by describing the method used to generate the second data pool. Given the first data pool $(X,y)$ and the ground-truth parameters $(\betastar, \sigma)$, we describe two pipelines to generate the second pool. The first pipeline checks $m$ random points of the first pool, with steps [T1-T3]:
\begin{enumerate}
\item[T1] Select $m$ points uniformly at random from the first pool to construct $\Xtil$ for the second pool.
\item[T2] Generate $\epsilontil \in \real^m$, where each entry $\epsilontil_i$ is drawn i.i.d.\ from $\mathcal{N}(0,\sigma^2/L)$. 
\item[T3] Generate the labels by $\ytil = \Xtil\betastar + \epsilontil$.
\end{enumerate}
When the debugger is able to query features of clean points from a distribution $\mathcal{P}_X$, we can use a second pipeline, where [T1] is replaced by [T1']:
\begin{enumerate}
\item[T1'] Independently draw $m$ points from $\mathcal{P}_X$ to construct $\Xtil$.
\end{enumerate}

\subsubsection{Verification of Algorithm~\ref{alg:choice-lambda}}

We use the default procedure for generating the synthetic dataset, with parameters $p = 15$, $\sigma=0.1$, and $t = c_t n$, where $c_t$ ranges from 0.05 to 0.4 in increments of 0.05. In all cases, we input $\bar{c} = 0.2$ and $\lambda_u = \frac{2\|P_{X}^\perp y \|_\infty}{n}$ in Algorithm~\ref{alg:choice-lambda}.

\begin{figure}[htp!]
\centering
\begin{subfigure}[b]{0.45\textwidth}
\centering
\includegraphics[width=\textwidth]{./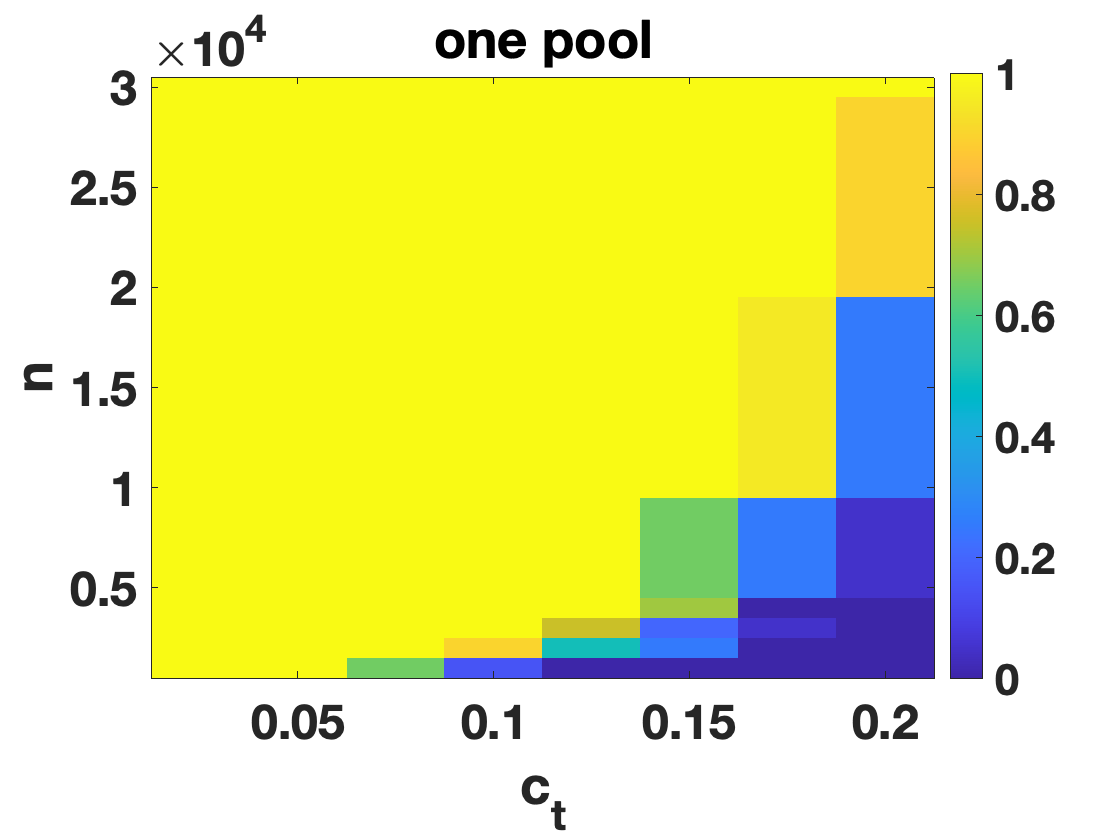}
\end{subfigure}
~
\begin{subfigure}[b]{0.45\textwidth}
\centering
\includegraphics[width=\textwidth]{./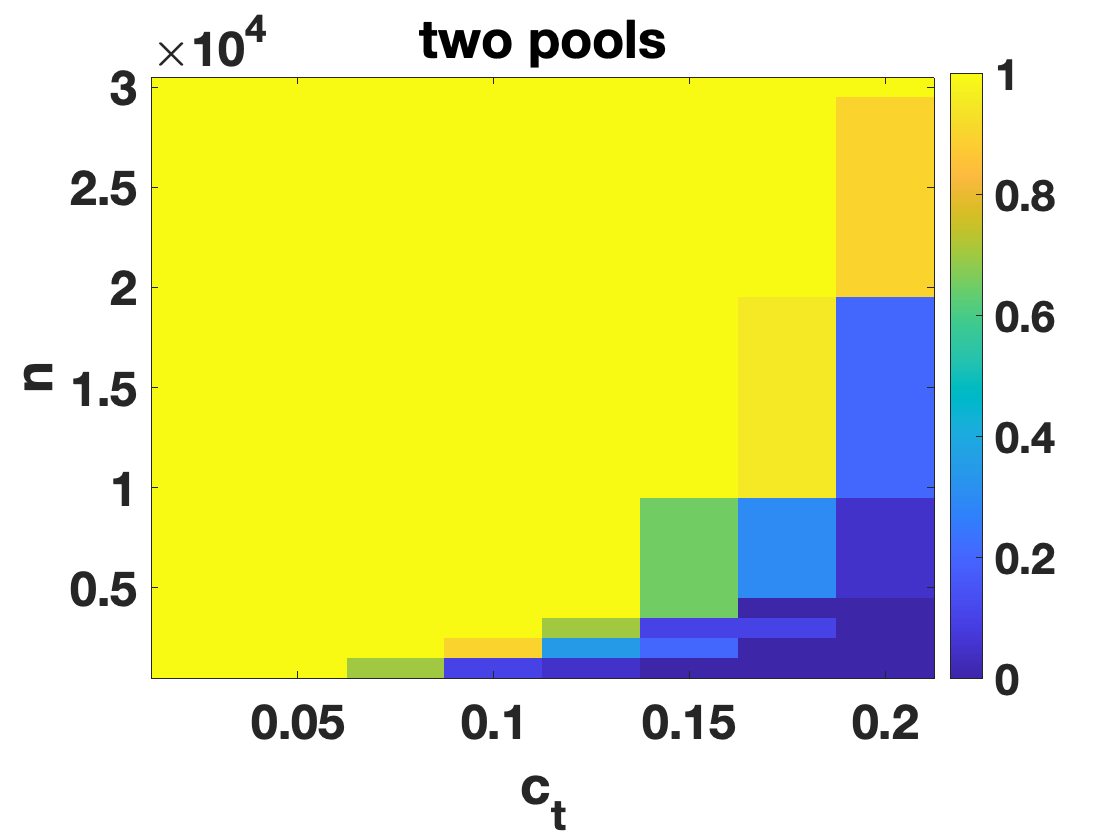}
\end{subfigure}
\caption{Exact Recovery Rate over 20 Trials. The recovery rate is shown in different cases varying by fraction of outliers $c_t$ and $n$. The left subfigure is for one-pool case and the right subfigure is for two-pool case. We set $m = 100, L = 5$ for the second pool.}
\label{fig:choice-lambda} 
\end{figure}

Figure~\ref{fig:choice-lambda} displays the results for $n \in \{1,2,3,4,5,10,20,30\}\cdot 10^3$. 
First, we see that Algorithm~\ref{alg:choice-lambda} achieves exact support recovery in all 20 trials in the yellow area.
Second, the exact recovery rate increases with increasing $n$ and decreasing $c_t$, showing that the algorithm is particularly useful for large-scale data sets. This trend can also be seen from the requirement on $n$ imposed in Theorem~\ref{thm:choice-lambda}. In particular, we see that the contour curve for the exact recovery rate matches the curve of $\left(1-c_t\right)^{-\frac{1}{1-2c_n}}$ for some constant $c_n \in (0,\frac{1}{2})$. 
However, a downside of Algorithm~\ref{alg:choice-lambda} is that it does not fully take advantage of the second pool in the two-pool case, as the left panel and the right panel display similar results.

\subsubsection{Effectiveness of Tuning Parameter Selection}

We now compare our method for tuning parameter selection to cross-validation.
We also use the postprocessing step described at the beginning of the section. Four measurements are presented, including two recovery rates, the $\ell_2$-error of $\betahat$, and the runtime. In both the one- and two-pool cases, we use our default methods for generating synthetic data, and we set $\bar{c} = 0.2$ for all the experiments.

The cross-validation method for the one-pool case splits the dataset into training and testing datasets with the ratio of $8:2$, then selects $\lambda$ with the smallest test error, $\|X_{test} \betahat - y_{test}\|_2$.
The procedure for the two-pool case is to run the Lasso-based debugging method with a list of candidate $\lambda$'s and test it on the second pool. Finally, we select the $\lambda$ value with the smallest test error, $\|\Xtil \betahat - \ytil\|_2$. We use 15 candidate values for $\lambda$, spaced evenly on a log scale between $10^{-6}$ and $
\lambda_u = \frac{2\|P_{X}^\perp y \|_\infty}{n}$.

Figure~\ref{fig:compare_tuning_method_onepool} compares the results in the one-pool case. We note that cross-validation does not perform very well for all the measurements except $\|\betahat - \betastar\|_2$. Specifically, it does not work at all for subset support recovery, since cross-validation tends to choose very small $\lambda$ values. For the $\ell_2$-error, we see that for small values of $c_t$, our algorithm can select a suitable choice of $\lambda$, so that after removing outliers, we can fit the remaining points very well. This is why the debugging + postprecessing methods gives the lowest error. As $c_t$ increases, our debugging method shows poorer performance in terms of support recovery, resulting in larger $\ell_2$-error for $\betahat$. Although cross-validation seems to perform well, carefully designed adversaries may still destroy the good performance of cross-validation, since its test dataset could be made to contain numerous buggy points. 
 
\begin{figure}[htp!]
\centering
\begin{subfigure}[b]{0.425\textwidth}
\centering
\includegraphics[width=\textwidth]{./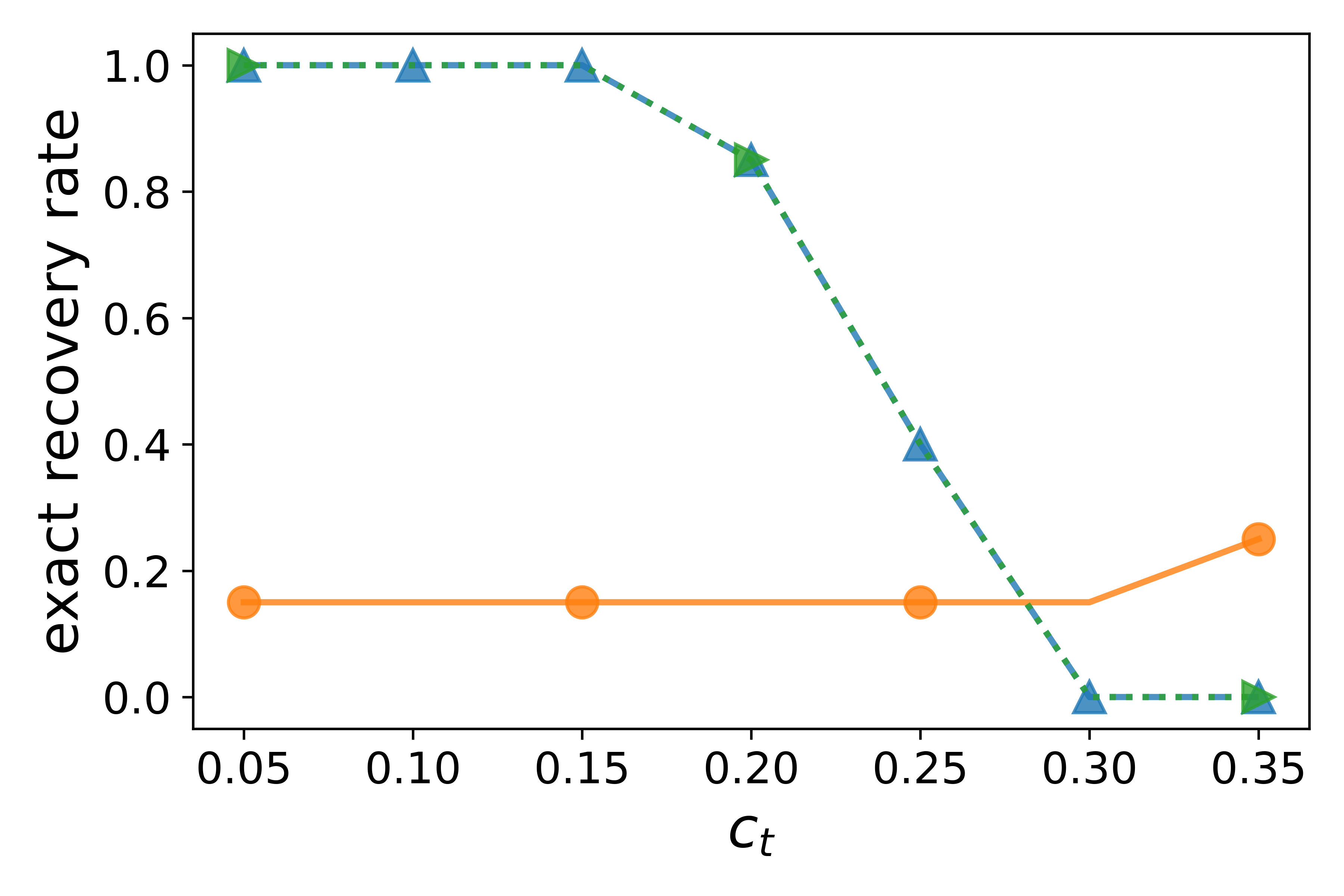} 
\end{subfigure}
~
\begin{subfigure}[b]{0.425\textwidth}
\centering
\includegraphics[width=\textwidth]{./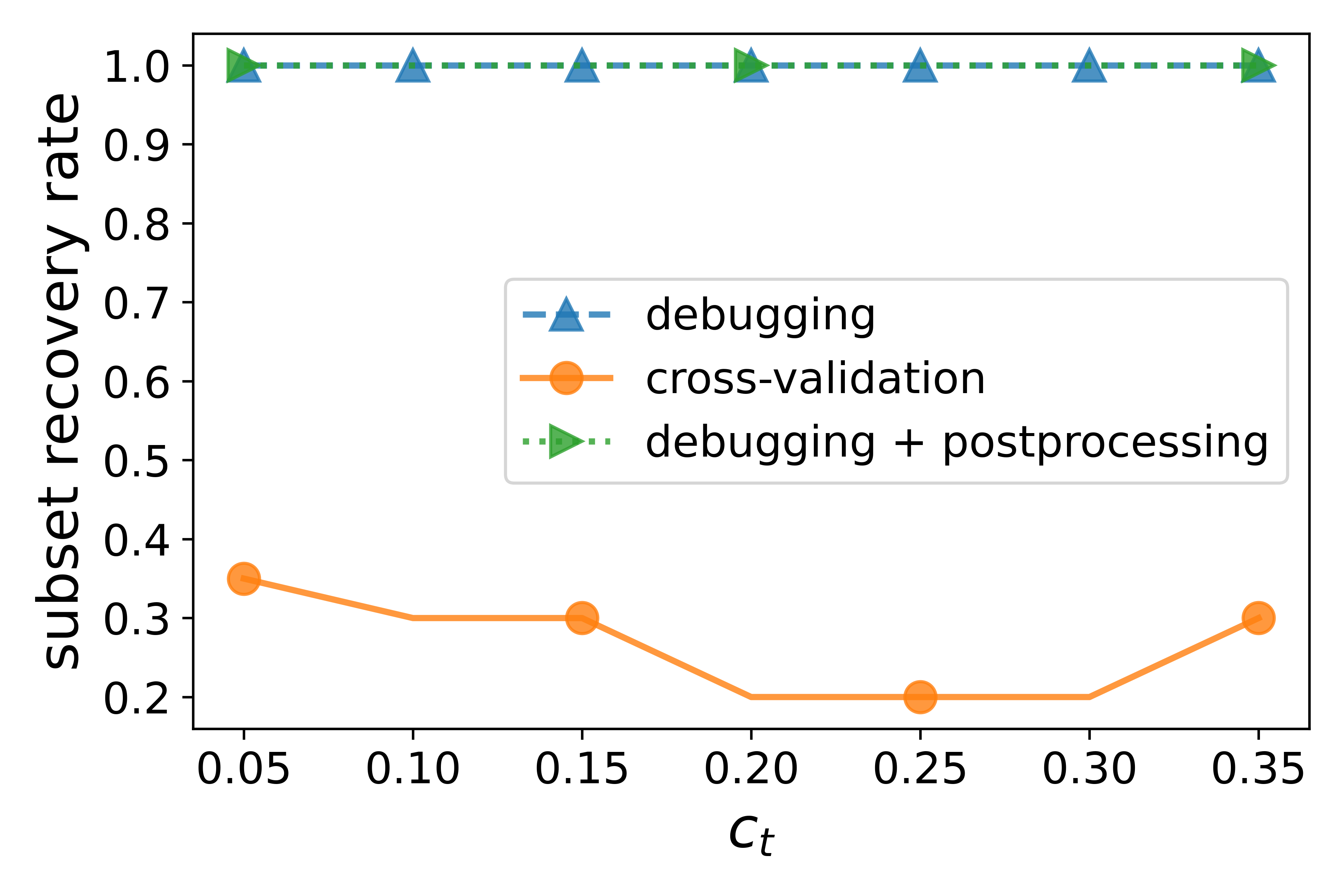}
\end{subfigure}
\vskip\baselineskip
\begin{subfigure}[b]{0.425\textwidth}
\centering
\includegraphics[width=\textwidth]{./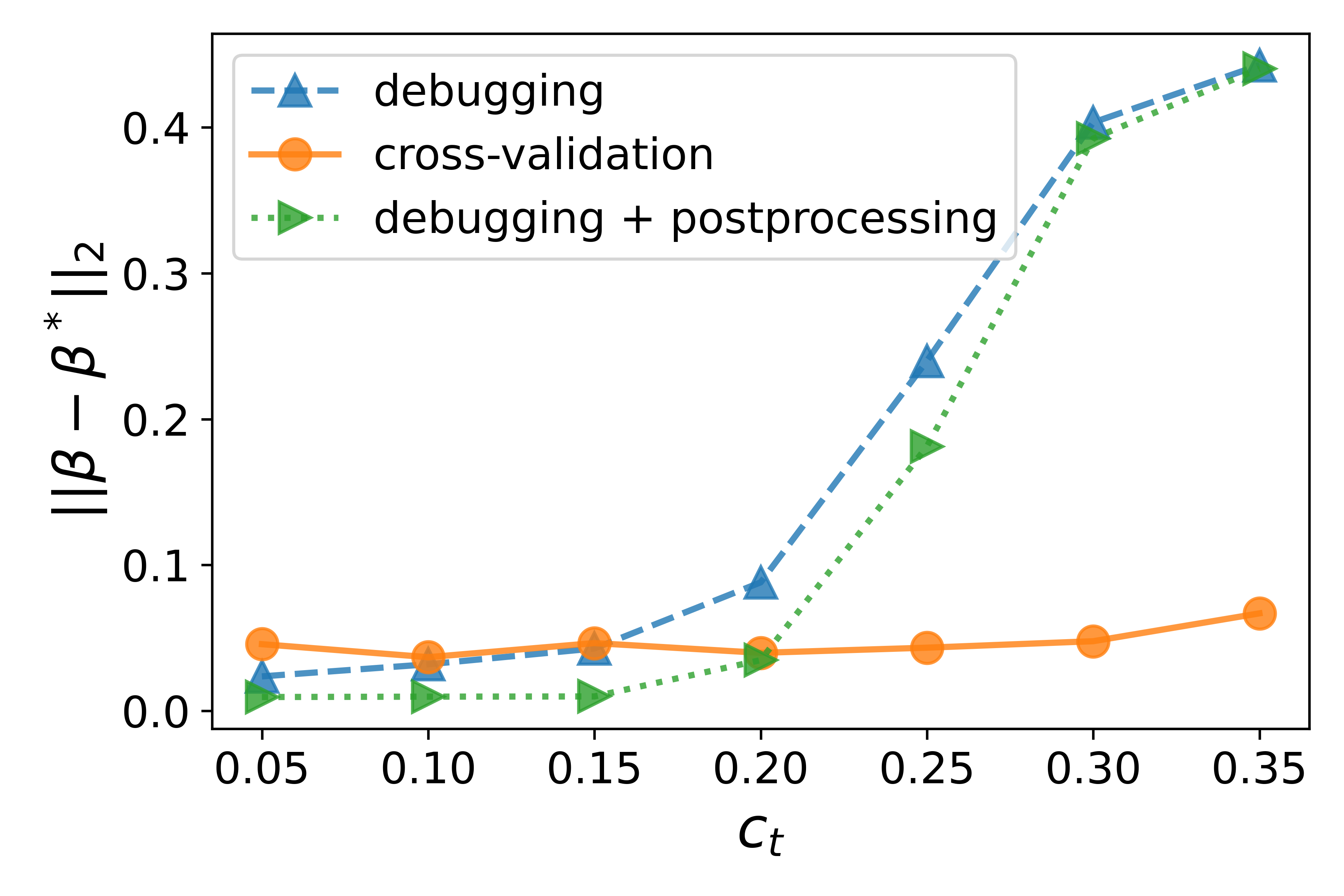}
\end{subfigure}
~
\begin{subfigure}[b]{0.425\textwidth}
\centering{}
\includegraphics[width=\textwidth]{./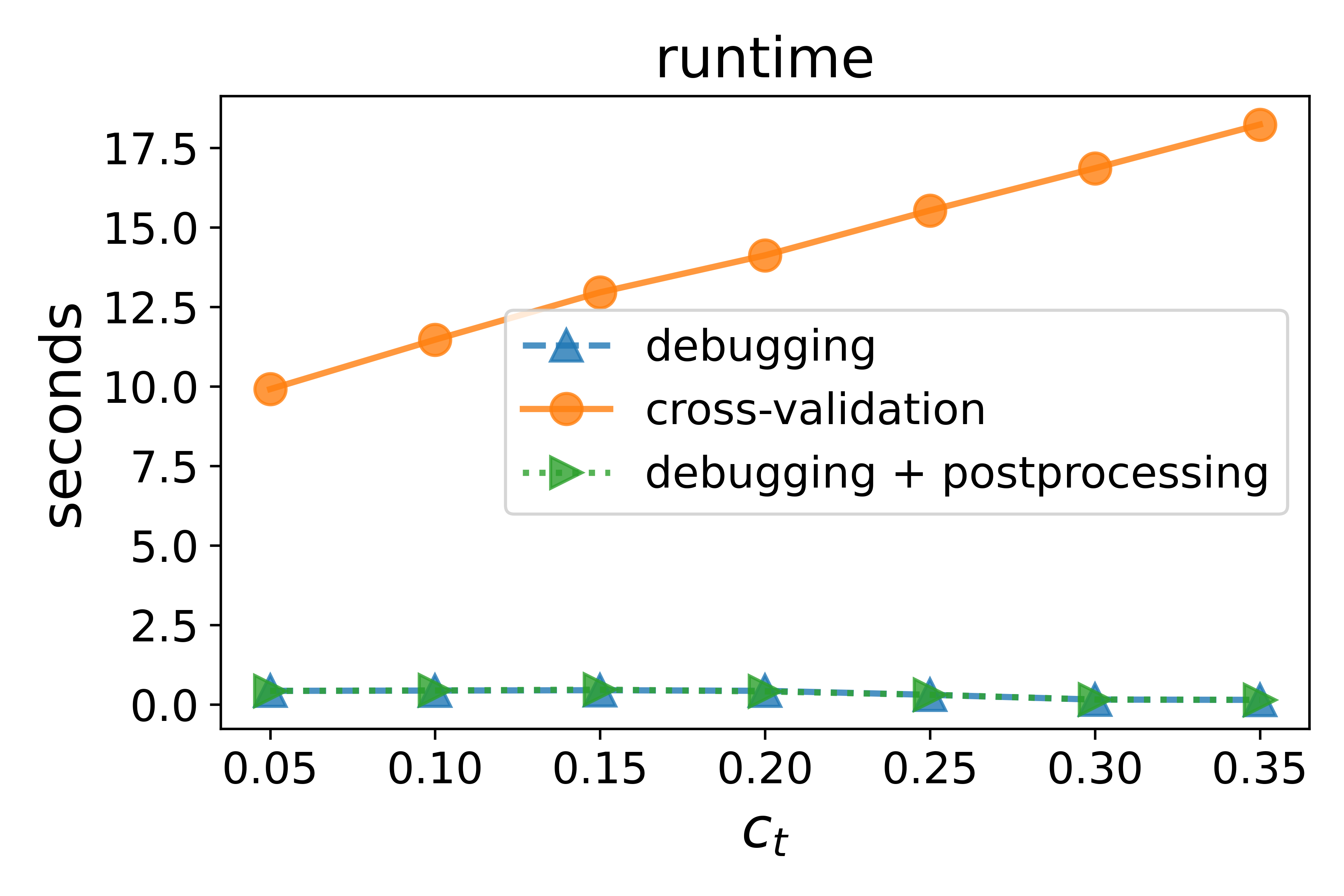}
\end{subfigure}
\caption{Effectiveness of Tuning Parameter Selection (One Pool). Each dot is the average result of 20 random trials. We set $n = 2000, p = 15$, and $\sigma=0.1$.}
\label{fig:compare_tuning_method_onepool}
\end{figure}

\begin{figure}[htp!]
\centering
\begin{subfigure}[b]{0.425\textwidth}
\centering
\includegraphics[width=\textwidth]{./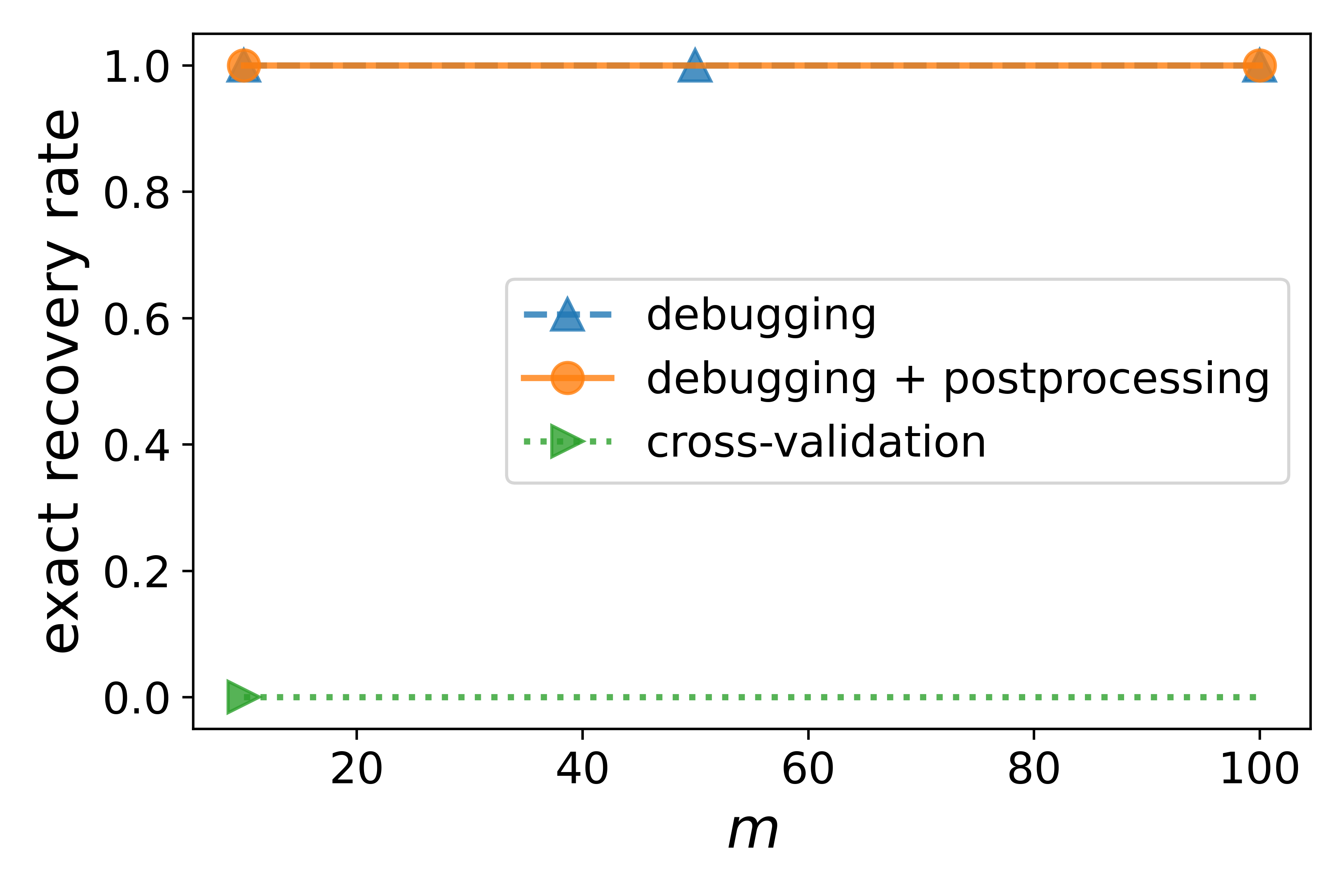}
\end{subfigure}
~
\begin{subfigure}[b]{0.425\textwidth}
\centering
\includegraphics[width=\textwidth]{./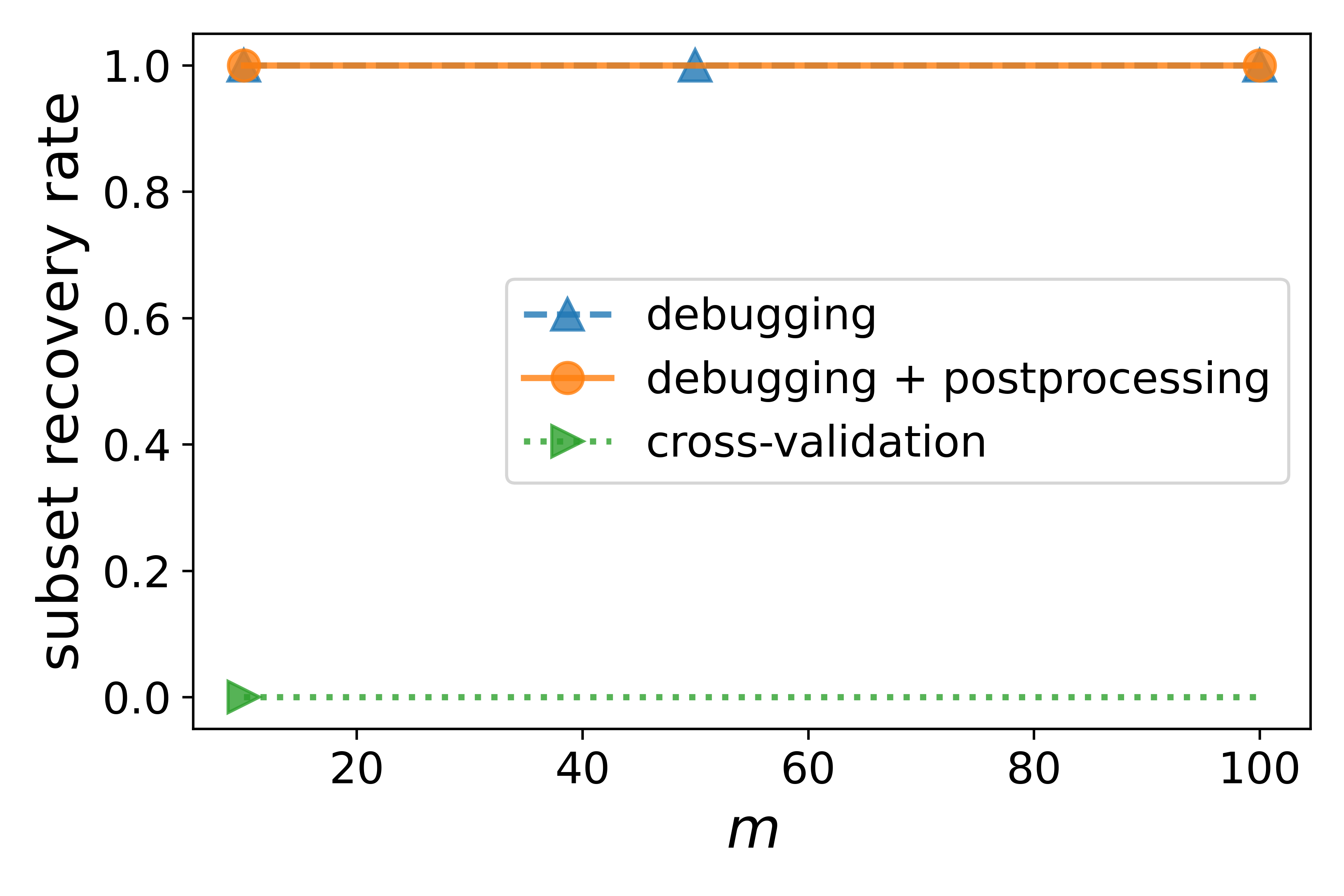}
\end{subfigure}
\vskip\baselineskip
\begin{subfigure}[b]{0.425\textwidth}
\centering
\includegraphics[width=\textwidth]{./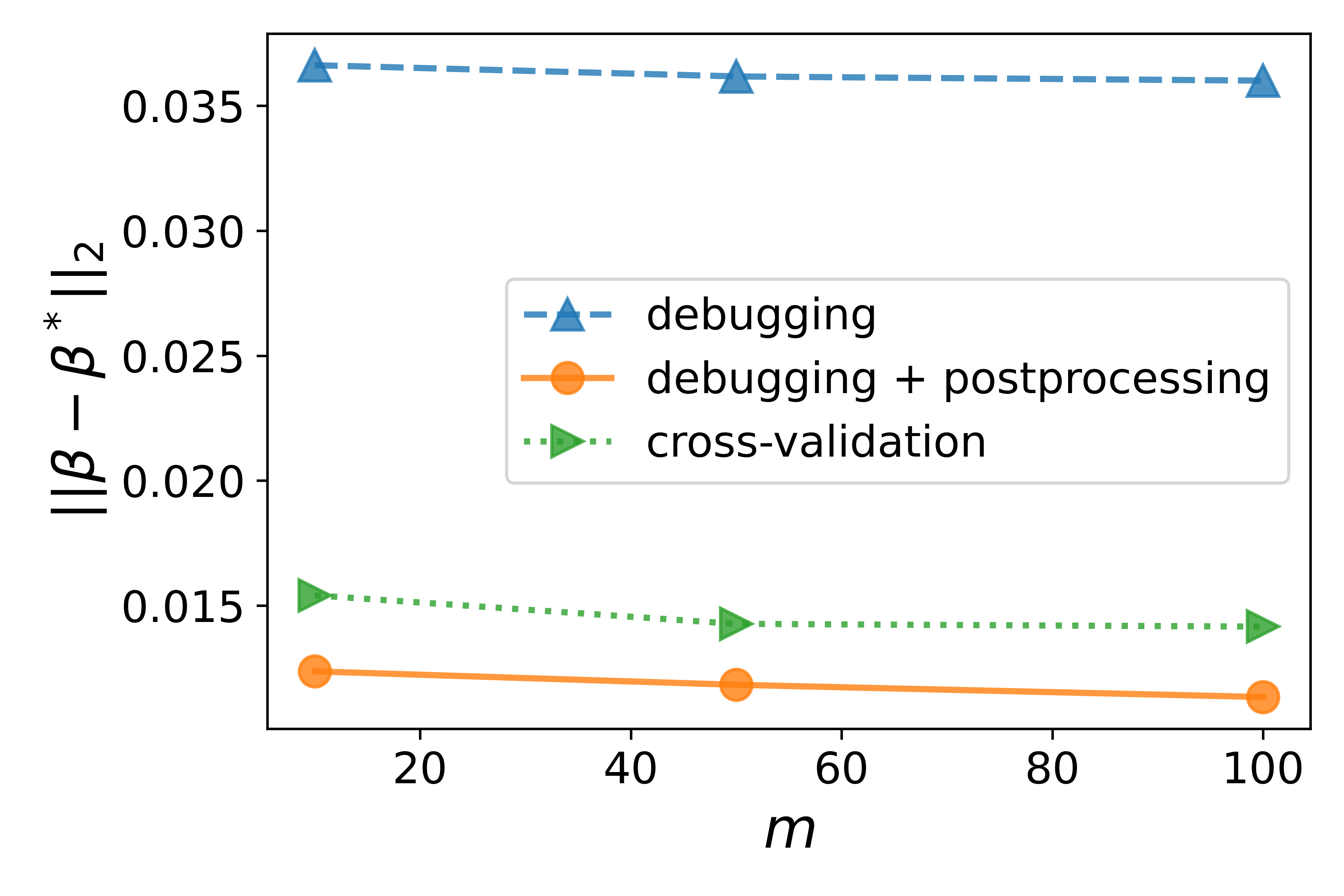}
\end{subfigure}
~
\begin{subfigure}[b]{0.425\textwidth}
\centering
\includegraphics[width=\textwidth]{./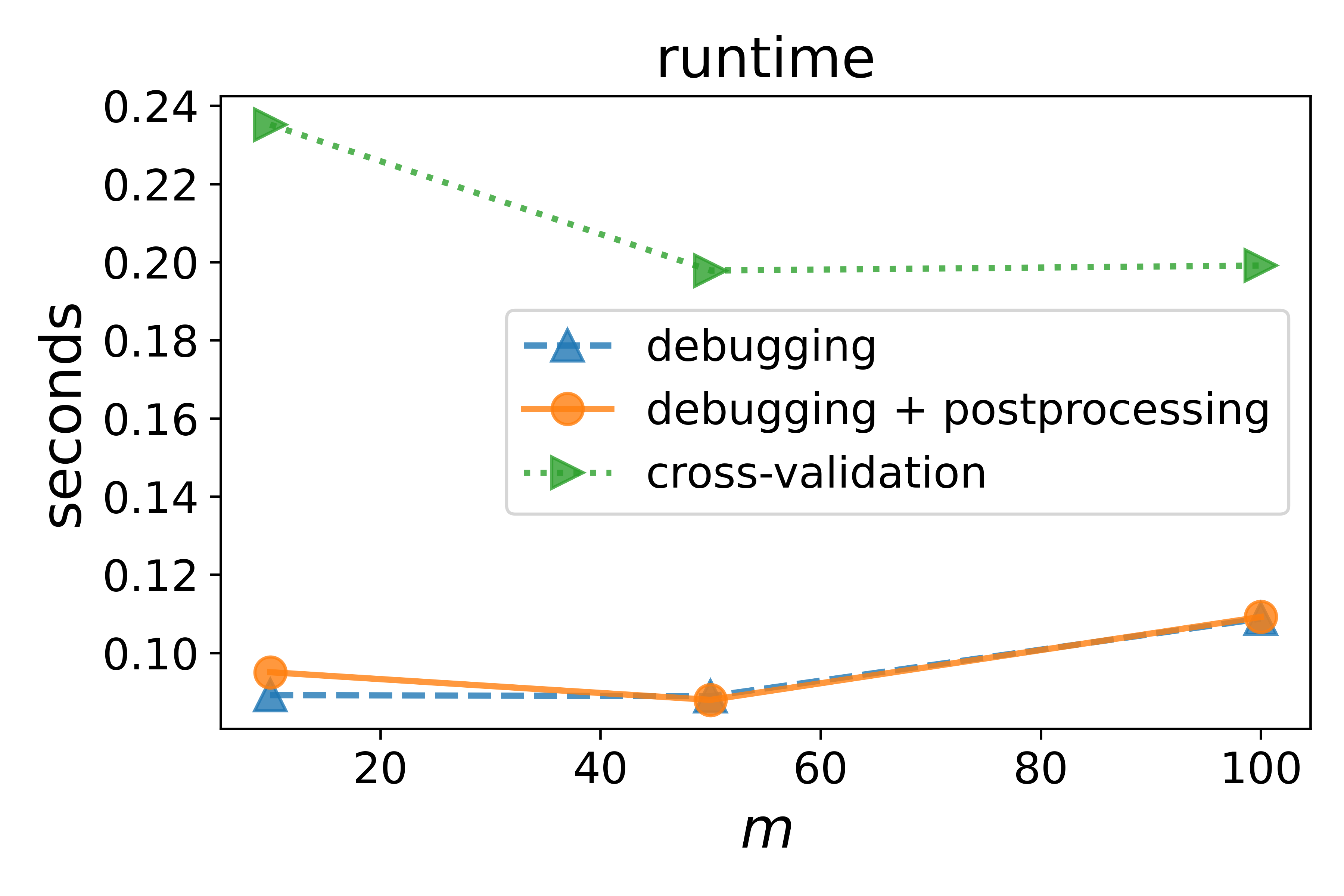}
\end{subfigure}
\caption{Effectiveness on Tuning Parameter Selection (Two Pools). Each dot is the average result of 20 random trials. We set $n = 1000, p = 15, t = 100, L = 5$, and $\sigma=0.1$.}
\label{fig:compare_tuning_method_clean}
\end{figure}

Figure~\ref{fig:compare_tuning_method_clean} displays the results for the two-pool  experiments, which are qualitatively similar to the results of the one-pool experiments. We emphasize that our method works well for support recovery; furthermore, the methods exhibit comparable performance in terms of the $\ell_2$-error. The slightly larger error of our debugging method can be attributed to the bias which arises from using an $\ell_1$-norm instead of an $\ell_0$-norm.

\subsection{Experiments with Clean Points}
\label{sec:experiments-clean-points}

We now focus on debugging methods involving a second clean pool. We have three experimental designs: First, we study the influence of $m$ on support recovery. Second, we compare debugging with alternative methods suggested in the literature. Third, we study the performance of our proposed MILP debugger, where we compare it to three other simple strategies. Different strategies for selecting clean points correspond to changing step [T1] in the setup described above.

\subsubsection{Number of Clean Points vs.\ Exact Recovery}
In this subsection, we present two experiments involving synthetic and YearPredictionMSD datasets, respectively ,to see how $m$ affects the exact recovery rate. Recall that the pipeline for generating the first pool is described at the beginning of Section~\ref{sec:exp}. For the second pool, we use steps [T'1, T2, T3] for the synthetic dataset, where we assume $\mathcal{P}_X$ is standard Gaussian. We take steps [T1-T3] for YearPredictionMSD to check the sample points in the first pool. 

\begin{figure}[htp!]
\centering
\begin{subfigure}[b]{0.45\textwidth}
\centering
\includegraphics[width=\textwidth]{./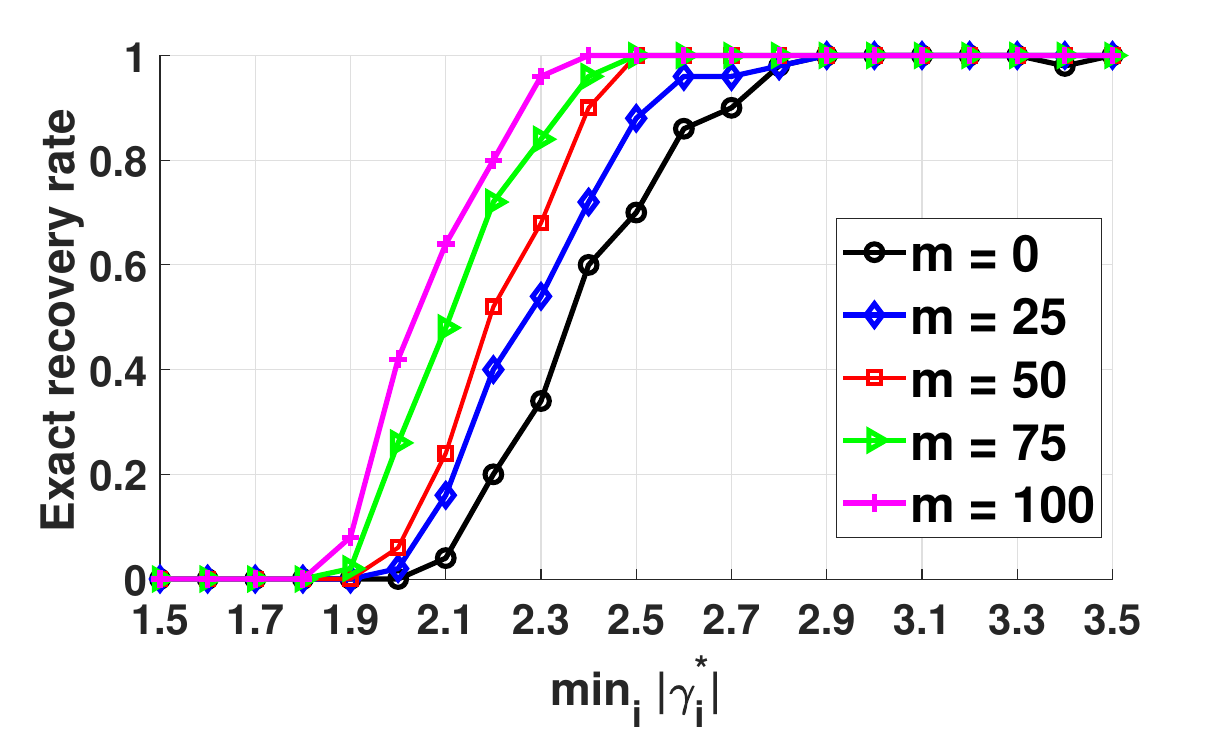}
\end{subfigure}
~
\begin{subfigure}[b]{0.45\textwidth}
\centering
\includegraphics[width=\textwidth]{./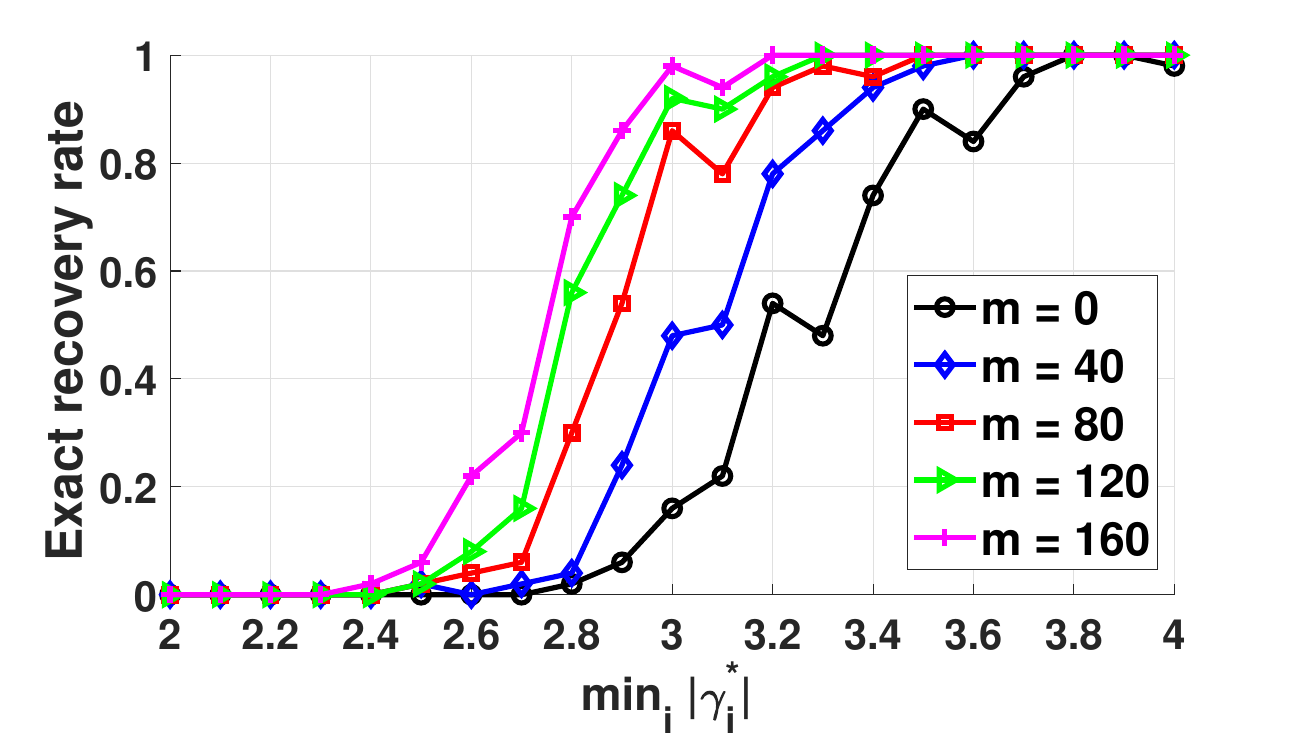}
\end{subfigure}
\caption{Minimal Gamma vs.\ Exact Recovery Rate on Synthetic Data. 
We run 50 trials for each dot and compute the average.} 
\label{fig:GammaMinvsExactRec}
\end{figure}

Recall that the YearPredictionMSD dataset is designed to predict the release year of a song from audio features. The dataset consists of 515,345 songs, each with 90 audio features. Therefore, for both experiments, we set $n = 500, t = 50, p = 90, \sigma = 0.1$, and $L = 10$, and take $\lambda = 2.5\frac{\sqrt{\log (n-t)}}{n}$. 

From Figure~\ref{fig:GammaMinvsExactRec}, we see that the phenomena are similar for the two different design matrices. In particular, increasing the number of clean points helps with exact recovery. For instance, in the left subfigure, for $m = 0$, when $\min_i |\gammastar_i| > 2.9$, the exact recovery rate goes to 1. For $m = 100$, the exact recovery rate goes to 1 when $\min_i |\gammastar_i| > 2.4$. Also, the slope of the curve for larger $m$ is sharper. Thus, adding a second pool helps relax the gamma-min condition.

\subsubsection{Comparisons to Methods with Clean Points}

In this experiment, we compare the debugging method for two pools with other methods suggested by the machine learning literature. We generate synthetic data using the default first-pool setup with $n = 1000, p = 15, t = 100$, and $\sigma=0.1$, and we run [T1-T3] to generate the second pool using different values of $m$. For our proposed debugging method, we use Algorithm~\ref{alg:choice-lambda} to select the tuning parameter. We compare the following methods: (1) debugging + postprocessing, (2) least squares, (3) simplified noisy neural network, and (4) semi-supervised eigvec. 
The least squares solution is applied using $\left\{\begin{pmatrix} X \\ \Xtil \end{pmatrix}, \begin{pmatrix} y \\ \ytil \end{pmatrix}\right\}$. 

The simplified noisy neural network method borrows an idea from Veit et al.~\cite{veit2017learning}, which is designed for image classification tasks for a datasets with noisy and clean points. This work introduced two kinds of networks and combines them together: the ``Label Cleaning Network," used to correct the labels, and the ``Image Classifier," which classifies images using CNN features as inputs and corrected labels as outputs. Each of them is associated with a loss, and the goal is to minimize the sum of the losses.  Let $w\in \real, \beta_1 \in \real^d$, and $\beta_2 \in \real^d$ be the variables to be optimized. For our linear regression setting, we design the ``Label Cleaning Network'' by defining $\hat{c}_i = y_i w - x_i^\top \beta_1$ as the corrected labels for both noisy and clean datasets. Then we define the loss $\mathcal{L}_{clean} = \sum_{i \in cleanset} |\tilde{y}_i - y_i w - x_i^\top \beta_1|$. The ``Image Classifier" is modified to the regression setting using predictions of $x_i^\top \beta_2$ and the squared loss. Therefore, the classification loss can be formalized as $\mathcal{L}_{classify} = \sum_{i \in cleanset} (x_i^\top \beta_2 - \tilde{y}_i)^2 + \sum_{i \in noisy set} (x_i^\top \beta_2 - \hat{c}_i)$. Together, the optimization problem becomes 
\[
\min_{\stackrel{\beta_1 \in \real^d, \beta_2 \in \real^d}{w \in \real}} \sum_{i \in clean set} \{(x_i^\top \beta_2 - \tilde{y}_i)^2 + |\tilde{y}_i - w y_i - x_i^\top \beta_1|\} +\sum_{i \in noisy set} (x_i^\top \beta_2 - w y_i - x_i^\top \beta_1)^2. 
\]
We use gradient descent to do the optimization, and initialize it with $w = 0$ and $\beta_1 = \beta_2 = \beta_{ls}$. The optimizer $\betahat_2$ is used for further predictions. We then calculate $\gammahat = y - X\betahat_2$. For gradient descent, we will validate multiple step sizes and choose the one with the best performance on the squared loss of the clean pool.

The method ``semi-supervised eigvec" is from Fergus et al.~\cite{fergus2009semi}, and is designed for the semi-supervised classification problem. It also contains an experimental setting that involves noisy and clean data. To further apply the ideas in our linear regression setting, we make the following modifications: Define the loss function as
$$J(f) = f^\top L f + \left(f-\begin{pmatrix} y \\ \tilde{y}
\end{pmatrix}\right)^\top \Lambda \left(f-\begin{pmatrix} y \\ \tilde{y}
\end{pmatrix}\right),$$
where $L = D-W (\varepsilon)$ is the graph Laplacian matrix and $\Lambda$ is a diagonal matrix whose diagonal elements are $\Lambda_{ii} = \lambda$ for clean points and $\Lambda_{ii} = \frac{\lambda}{c}$ for noisy points. In the classification setting, $f\in \real^{n+m}$ is to be optimized. The idea is to constrain the elements of $f$ by injecting smoothness/similarity using the Laplacian matrix $L$. Since we assume the linear regression model, we can further plug in $f = \begin{pmatrix} X \\ \tilde{X}\end{pmatrix}\beta$. Our goal is then to estimate $\beta$ by minimizing $J(\beta)$. As suggested in the original paper, we use the range of values $\varepsilon \in [0,1,1,5], c \in [1,10,50]$, and $\lambda \in [1,10,100]$. We will evaluate all 36 possible combinations and pick the one with the smallest squared loss on the clean pool.

\begin{figure}[htp!]
\centering
\begin{subfigure}[b]{0.425\textwidth}
\centering
\includegraphics[width=\textwidth]{./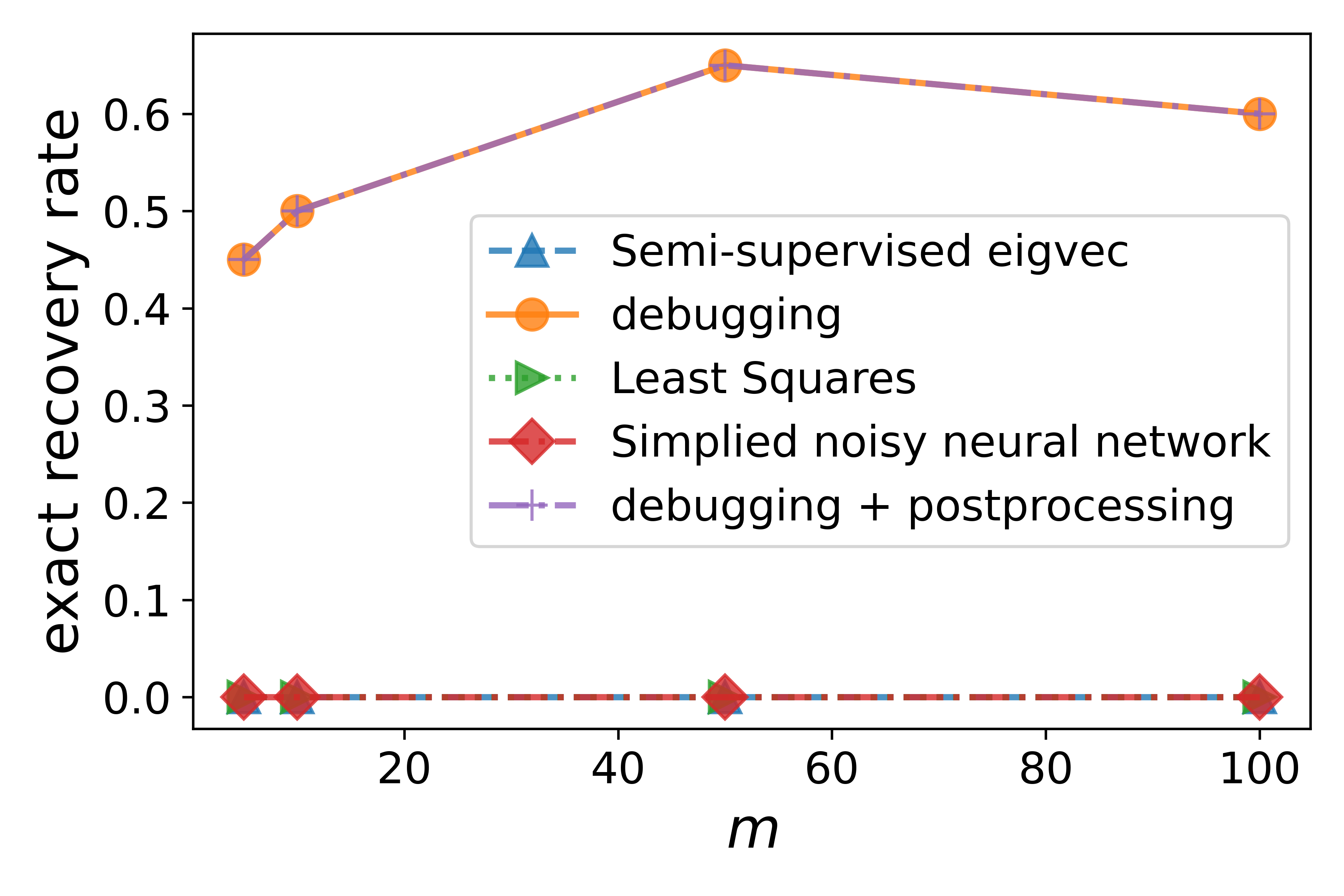}
\label{fig:compare_clean_exactrec}
\end{subfigure}
~
\begin{subfigure}[b]{0.425\textwidth}
\centering
\includegraphics[width=\textwidth]{./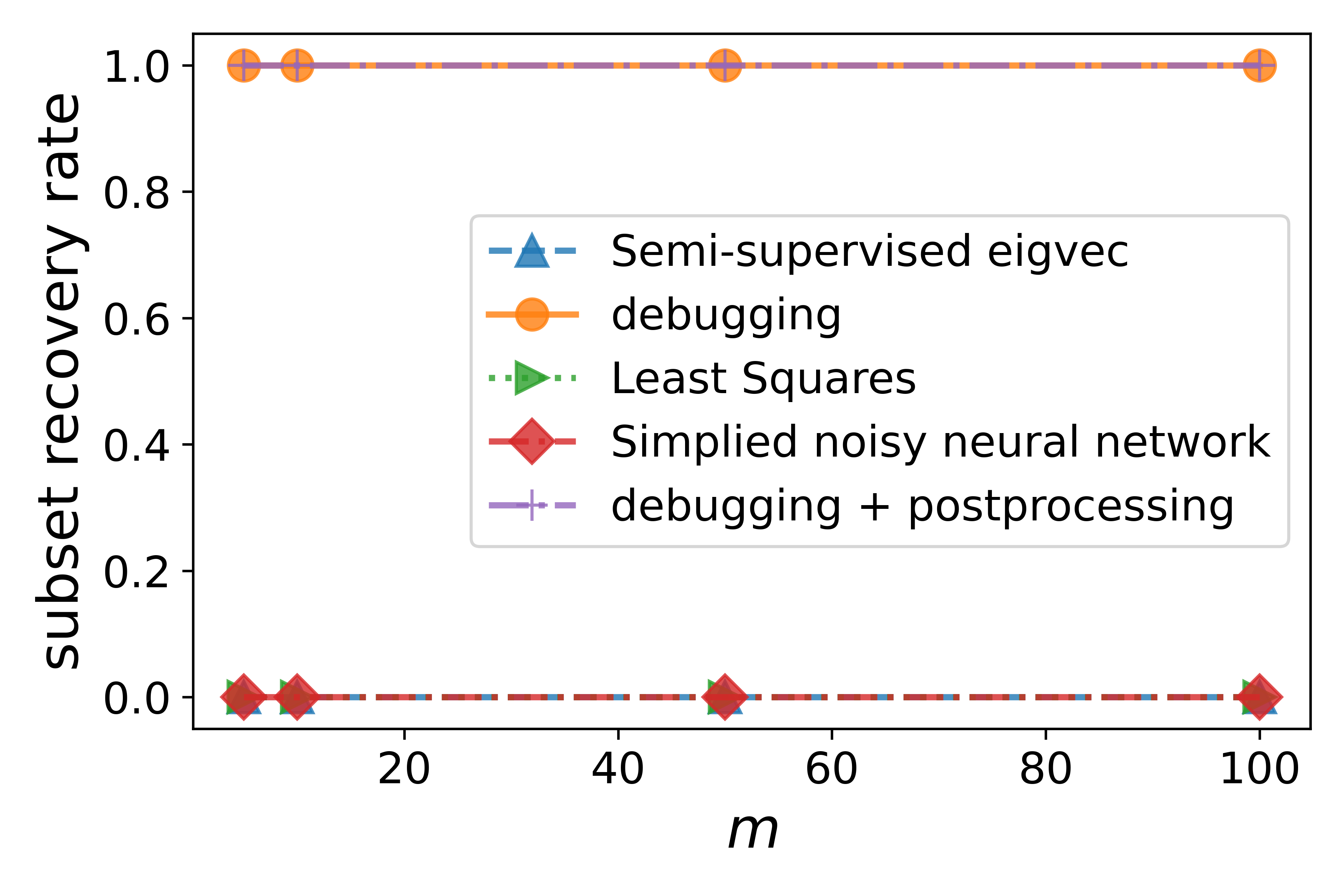}
\label{fig:compare_cleansubrec}
\end{subfigure}
\vskip\baselineskip
\begin{subfigure}[b]{0.425\textwidth}
\centering
\includegraphics[width=\textwidth]{./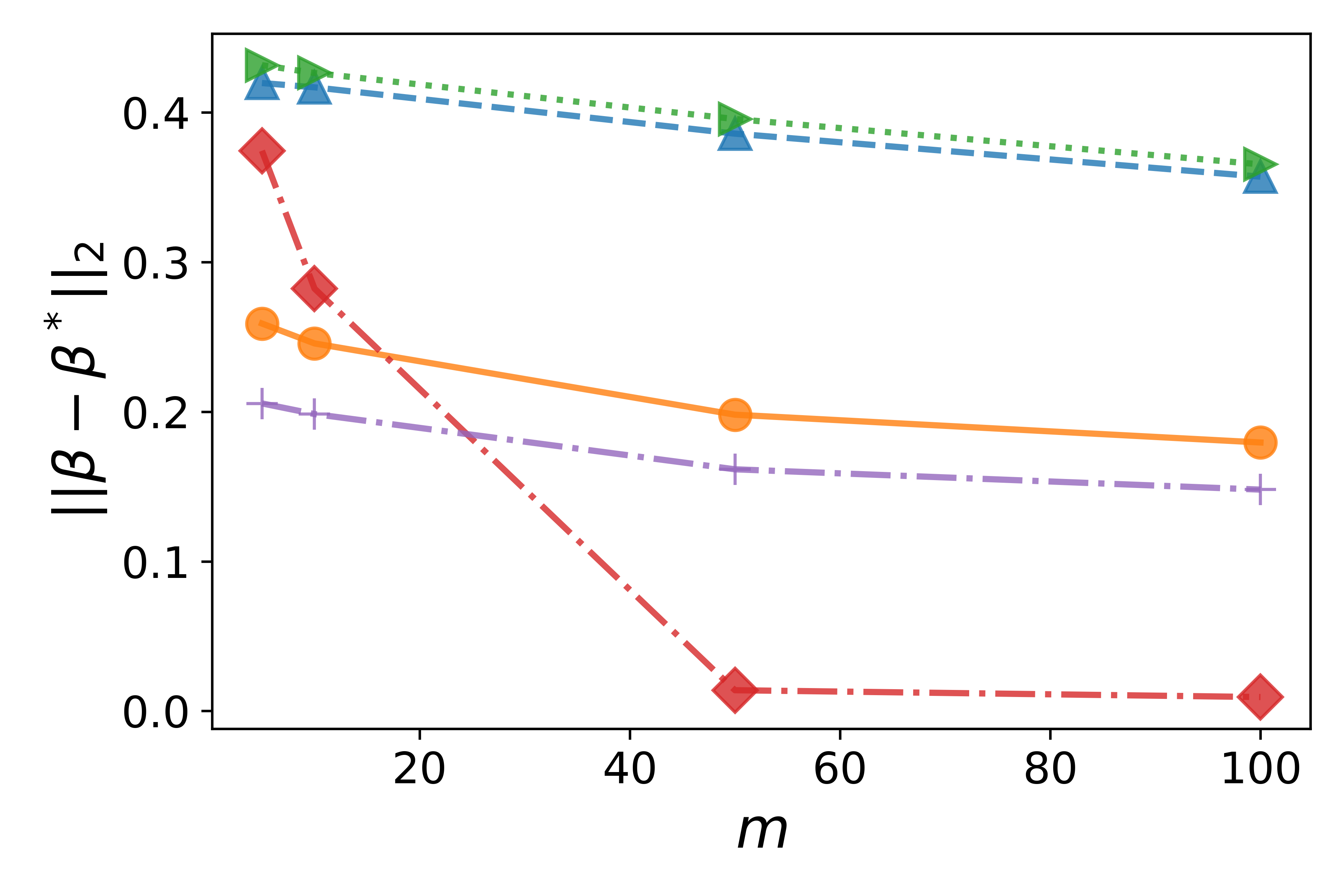}
\label{fig:compare_clean_beta}
\end{subfigure}
~
\begin{subfigure}[b]{0.425\textwidth}
\centering
\includegraphics[width=\textwidth]{./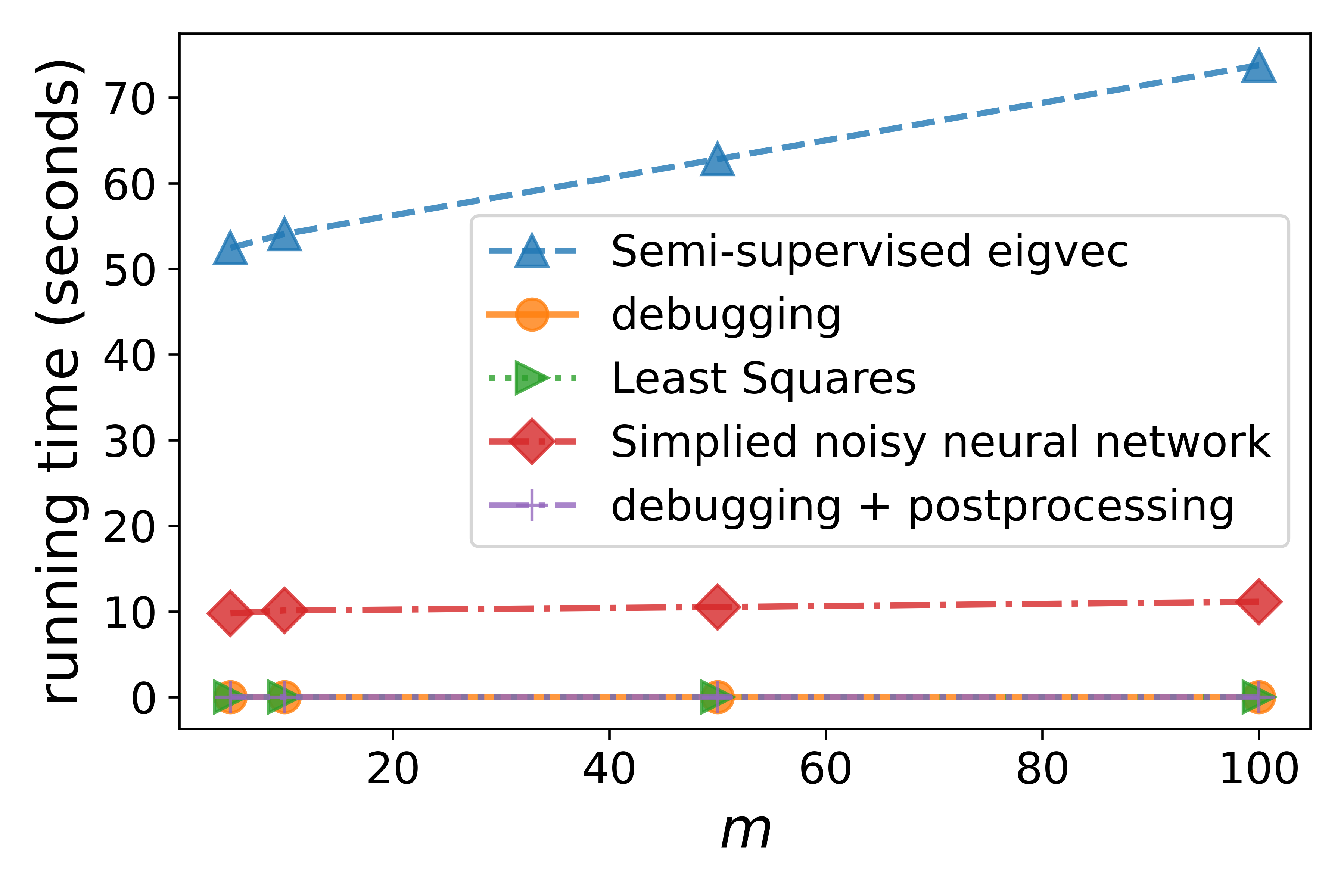}
\label{fig:compare_clean_time}
\end{subfigure}
\caption{Comparison to Methods involving Clean Points. Each dot is the average result of 20 random trials. We use the synthetic data setting, with $n = 500, p = 15, \sigma=0.1,t = 0.1n$, and $\min_i |\gammastar_i| = 10 \sqrt{\log 2n} \sigma$. The clean data pool is randomly chosen from the first pool without replacement; we query the labels of these chosen points.}
\label{fig:compare_clean_method}
\end{figure}

The results are shown in Figure~\ref{fig:compare_clean_method}. We observe that only the debugging method is effective for support recovery, as we have carefully designed our method for this goal. The method from Veit et al.~\cite{veit2017learning} works best in terms of $\ell_2$-error of $\beta$, especially when $m$ is large. The semi-supervised method, like least squares, does not perform well, possibly because it does not consider replacing/removing the influence of the noisy dataset.

\subsubsection{Effectiveness on Second Pool Design}
\label{sec:active-exp}

We now provide experiments to investigate the design of the clean pool, corresponding to Section~\ref{sec:active}. We use the Concrete Slump dataset\footnote{\url{https://archive.ics.uci.edu/ml/datasets/Concrete+Slump+Test}}, where $p = 7$. We limit our study to small datasets, since the runtime of the MILP optimizer is quite long.
We report the performance of the MILP debugging method in both noiseless and noisy settings. In our experiments, we compare the performance of the MILP debugger to a random debugger and a natural debugging method: adding high-leverage points into the second pool. In other words, D.milp selects $m$ clean points to query from running the MILP~\eqref{eq:milp}; D.leverage selects the $m$ points with the largest values of $x_i^\top(X^\top X)^{-1}x_i$; and D.random randomly chooses $m$ points from the first pool without replacement. After choosing the clean pool, the debugger applies the Lasso-based algorithm. In Zhang et al.~\cite{zhang2018training}, all the second pool points are chosen either randomly or artificially. Therefore, we may consider D.random as an implementation of the method in Zhang et al.~\cite{zhang2018training}, which will be compared to our D.milp.

In the noiseless setting, we define $\betastar$ to be the least squares solution computed from all data points. We randomly select $n$ data points as the $x_i$'s. For D.milp and D.leverage, since the bug generator knows their strategies or the selected $D$, it generates bugs according to the optimization problem~\eqref{eq:bug-generator}. Let $T \subseteq [n]$ be the index set of the $t$ largest $|u_i|$'s, for $i = 1,\dots, n$. The bug generator takes $\gammastar_T = u_T$ if the solution $u$ is nonzero, and otherwise randomly generates a subset $T$ of size $t$ to create $\gammastar_T = \vec{1}$. Thus, $y_i = x_i^\top \betastar + \gammastar$. For D.onepool, the bug generator follows the above description with $D = \emptyset$. The orange bars indicate whether the bug generator succeeds in exact recovery in the one-pool case. For D.random, the bug generator generates bugs using the same mechanism as for D.onepool. Note the above bug generating methods are the ``worst'' in the sense of signed support recovery: The debuggers run~\eqref{eq:active-objective-gamma-y} using their selected $X_D$. From Figure~\ref{fig:noiseless-compare-optimal-vs-leverage}, there is an obvious advantage of D.milp over D.onepool and D.leverage. This suggests improved performance of our MILP algorithm. D.random is sometimes successful even when $n$ and $t$ are small because the bug generator cannot control the randomness, but it performs worse than D.milp overall.

\begin{figure}[h!]
\centering
\begin{minipage}[c]{0.48\textwidth}
\centering
\includegraphics[width = \linewidth]{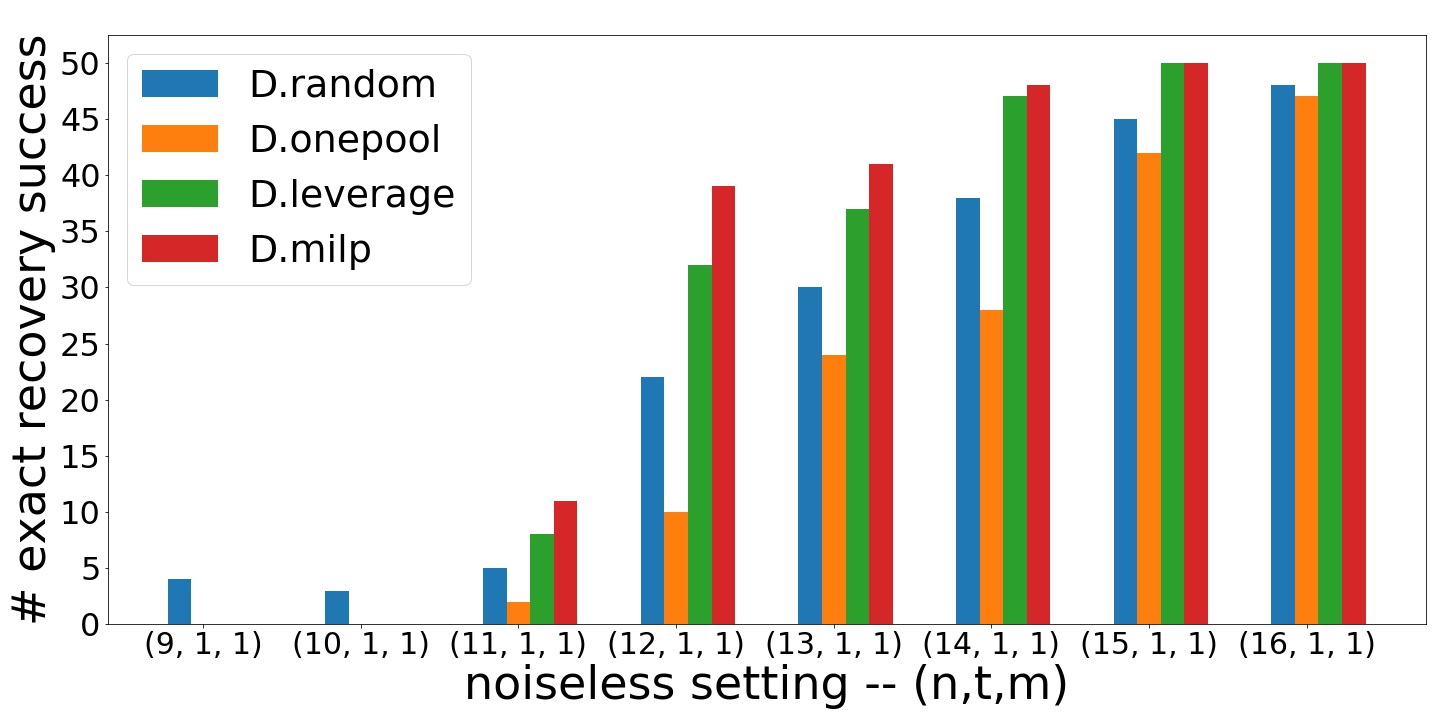}
\end{minipage}
~
\begin{minipage}[c]{0.48\textwidth}
\centering
\includegraphics[width = \linewidth]{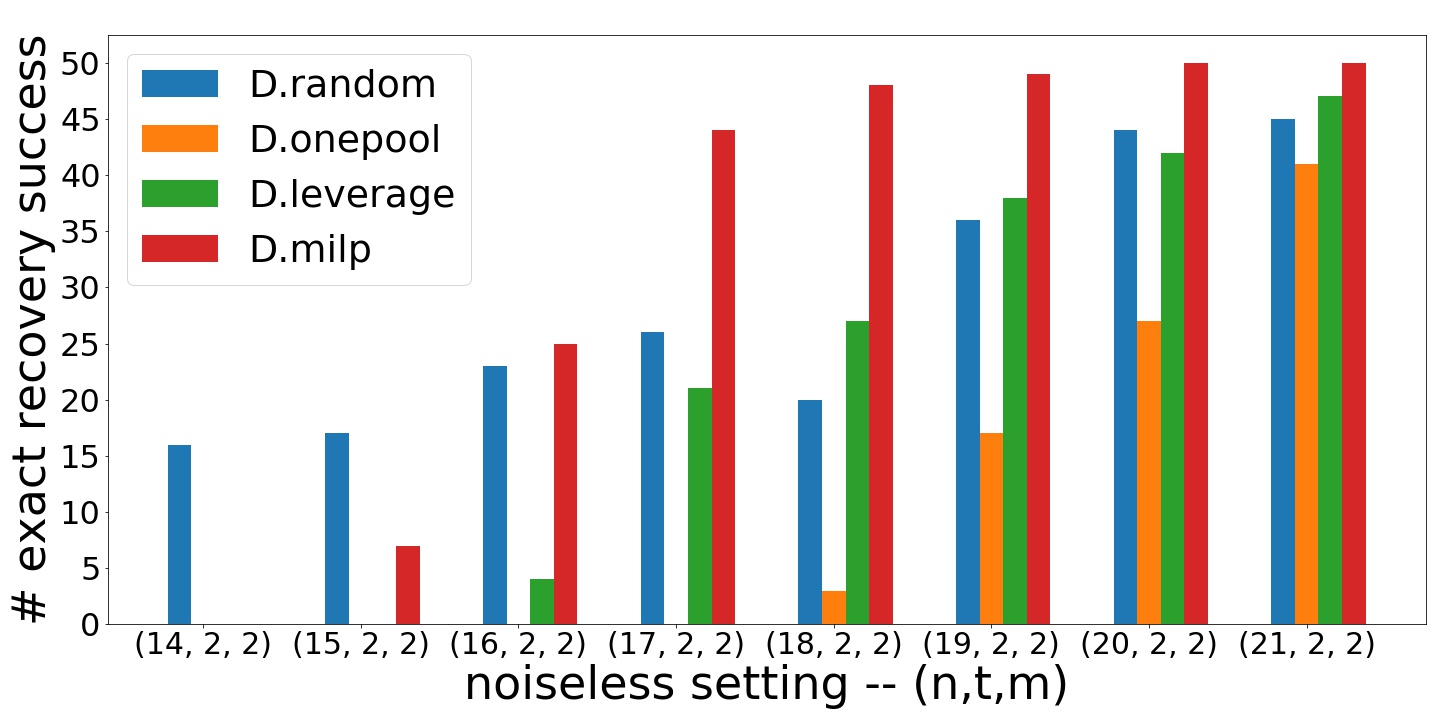}
\end{minipage}
\caption{Comparison between D.milp and other debugging strategies in noiseless settings. Each setting is an average over 50 random trials.}
\label{fig:noiseless-compare-optimal-vs-leverage}
\end{figure}

In the noisy setting, we define $\betastar$ to be the least squares solution computed using the entire data set. We randomly select $n$ data points as the $x_i$'s. For D.milp and D.leverage, since the bug generator knows their strategies or the selected $D$, it generates bugs via the optimization problem~\eqref{eq:bug-generator}: taking $\gammastar_T = u_T$ if the solution $u$ is nonzero for $T$ being the indices of the largest $t$ elements of $|u|$, and otherwise randomly generating a subset $T$ of size $t$ to create $\gammastar_T = \vec{1}$. Thus, $y_i = x_i^\top \betastar + \gammastar + \mathcal{N}(0,0.01)$. Note that having $\gammastar_T = u_T$ if the solution $u$ is nonzero gives incorrect signed support recovery, which is proved in Appendix~\ref{app:sign}. This is related to what we have claimed in Remark~\ref{remark:sign-thm-active} above. For D.onepool, the bug generator follows the above description with $D = \emptyset$. The orange bars indicate whether the bug generator succeeds in exact recovery in the one-pool case. For D.random, since it is not deterministic, the bug generator does not know $D$ and acts in the same way as in the one-pool case. Note that the above bug generating methods are the ``worst'' in the sense of signed support recovery. From Figure~\ref{fig:noisy-compare-optimal-vs-leverage}, there is an obvious advantage of D.milp over D.onepool and D.leverage. Our theory only guarantees the success of D.milp in the \emph{noiseless} setting, so the experimental results for the noisy setting are indeed encouraging.

\begin{figure}[htp!]
\centering
\begin{minipage}[c]{0.48\textwidth}
\centering
\includegraphics[width = \linewidth]{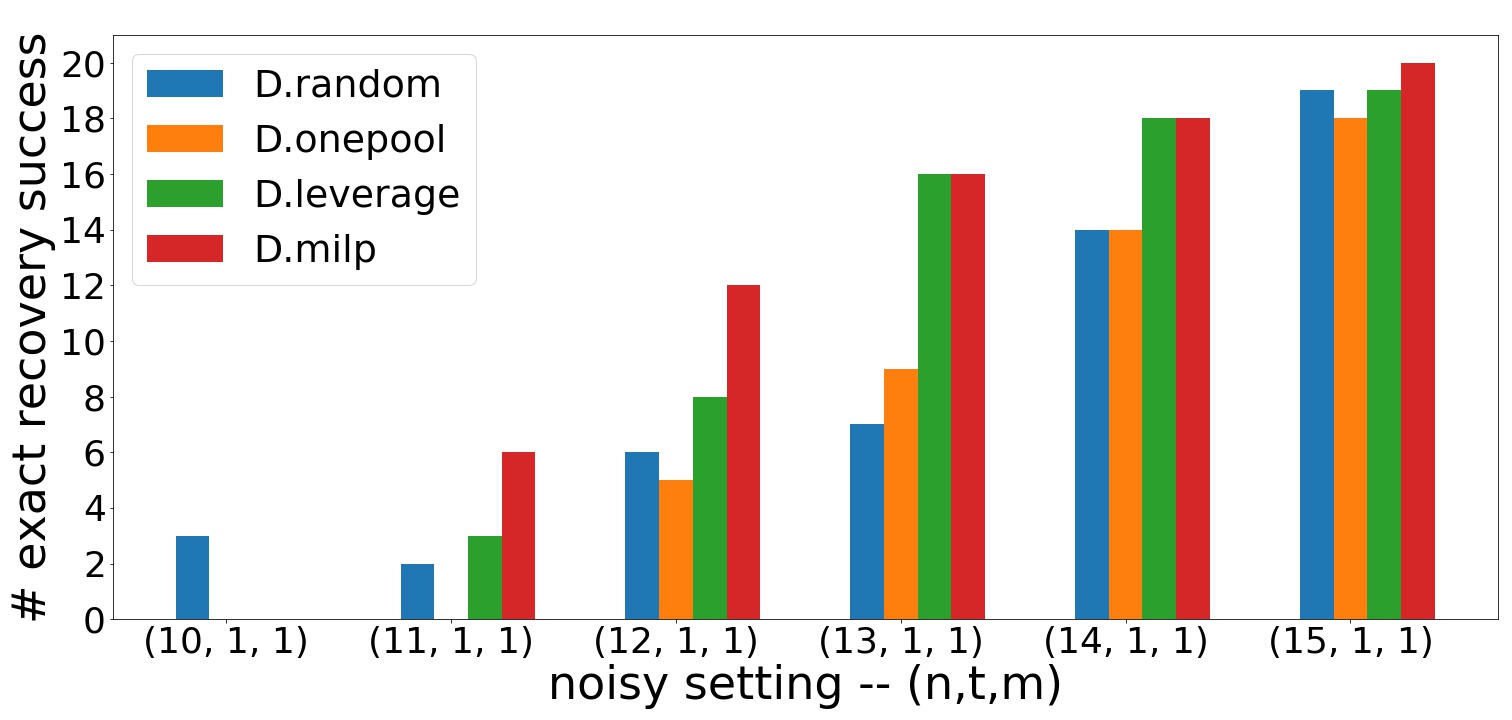}
\end{minipage}
~
\begin{minipage}[c]{0.48\textwidth}
\centering
\includegraphics[width = \linewidth]{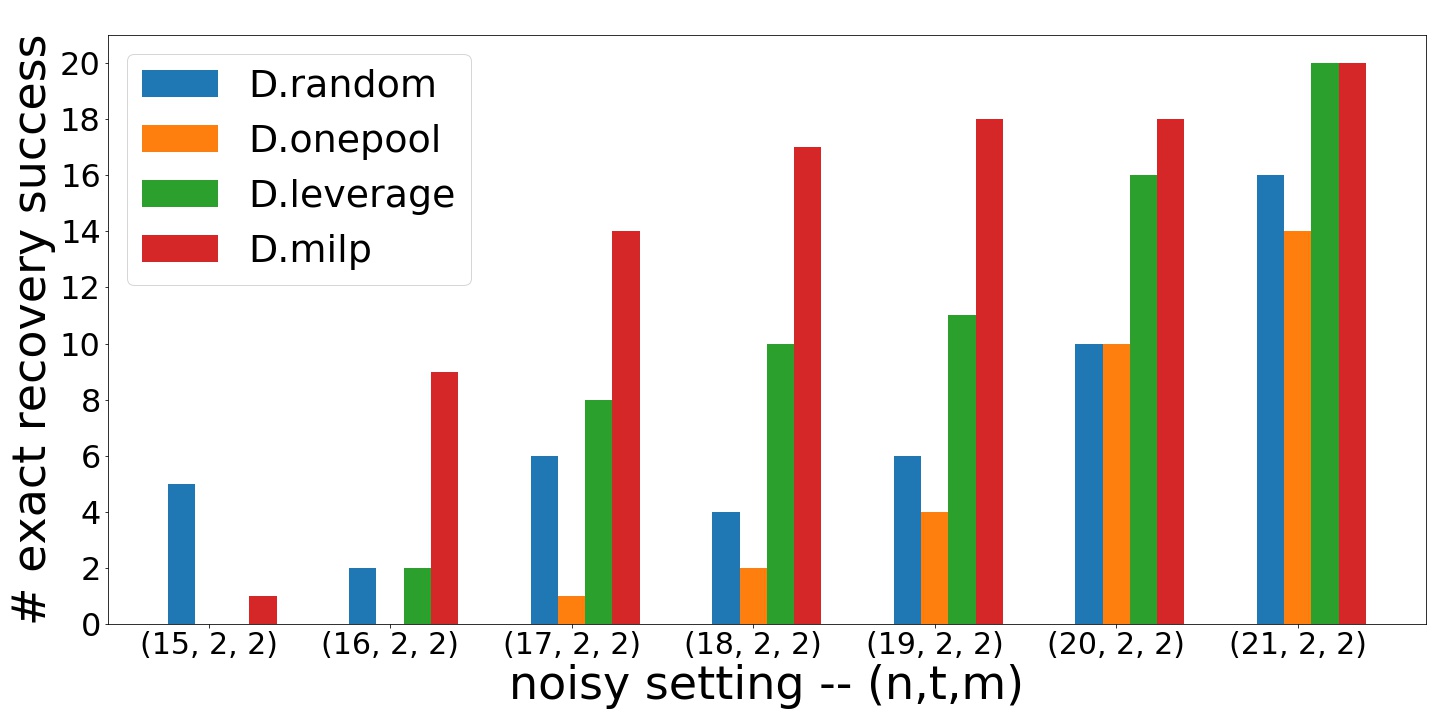}
\end{minipage}
\caption{Comparison between MILP Strategy and Others. In each setting, we run 20 random simulations.}
\label{fig:noisy-compare-optimal-vs-leverage}
\end{figure}

\textbf{Debugging in practice:}
The algorithm for minimax optimization has been executed by running all $n\choose m$ possible choices of clean points for the outer loop; for each outer loop, we then run the inner maximization. For optimal debugging in practice, i.e., $n,t$, and $m$ being large, some recent work provides methods for efficiently solving the minimax MILP~\cite{tang2016class}. 
Note that the MILP debugger can be easily combined to other heuristic methods: one can run the MILP, and if there is a nonzero solution, we can follow it to add clean points. Otherwise, we can switch to other methods, such as choosing random points or high-leverage points.

\section{CONCLUSION}
\label{sec:conclusion}
We have developed theoretical results for machine learning debugging via $M$-estimation and discussed sufficient conditions under which support recovery may be achieved. As shown by our theoretical results and illustrative examples, a clean data pool can assist debugging. We have also designed a tuning parameter algorithm which is guaranteed to obtain exact support recovery when the design matrix satisfies a certain concentration property. Finally, we have analyzed a competitive game between the bug generator and the debugger, and analyzed a mixed integer optimization strategy for the debugger. Empirical results show the success of the tuning parameter algorithm and proposed debugging strategy.

Our work raises many interesting future directions. First, the question of how to optimally choose the weight parameter $\eta$ remains open. Second, although we have mentioned several efficient algorithms for bilevel mixed integer programming, we have not performed a thorough comparison of these algorithms for our specific problem. Third, although our MILP strategy for second pool design has been experimentally found to be effective in a noisy setting, we do not have corresponding theoretical guarantees. Fourth, our proposed debugging strategy is a one-shot method, and designing adaptive methods for choosing the second pool constitutes a fascinating research direction. Finally, the analysis of our tuning parameter algorithm suggests that a geometrically decreasing series might be used as a grid choice for more general tuning parameter selection methods, e.g., cross validation---in practice, one may not need to test candidate parameters on a large grid chosen linearly from an interval. Lastly, it would be very interesting to extend the ideas in this work to regression or classification settings where the underlying data do not follow a simple linear model.

\appendixwithtoc
\newpage

The supplmentary materials is organized as follows: Section~\ref{sec:app-additional-discussion} presents some additional discussions on $\beta$. Section~\ref{AppSecFormu}, Section~\ref{AppSuppRec}, Section~\ref{AppSecTune} and Section~\ref{AppSecActive} mainly provide proofs respectively for problem reformulation and support recovery, tuning parameter selection and strategy for second pool selection. They may also include additional discussions and formal statements as referred in the main text.

\section{Additional Discussions}\label{sec:app-additional-discussion}
We present more miscellaneous discussions here to readers who may care about $\beta$.

\textbf{\emph{Debugging connection to $\beta$.}} Throughout this paper, we have focused on estimating $\gamma$ for the purpose of debugging. A result concerning how the second pool can be used to obtain a better estimate of $\beta$ is as follows: 
\begin{proposition}\label{act1beta}
Let $X = USV^\top$ and $\Xtil = \Stil V_0^\top$. Let $m < p$. It holds that
\begin{align}
\|V_0(\betahat - \betastar)\|_2 \leq \frac{c_1 \sigma  \sqrt{m}}{\sqrt{L}\sigma_{\min}(\tilde{S})} + \lambda n \|\tilde{S}^{-2} V_0 V S U z_{\gammahat}\|_2,
\end{align}
where $z_{\gammahat}$ is the subgradient of $\|\gammahat\|_1$.
\end{proposition}
\begin{proof}[Proof of Proposition \ref{act1beta}]
Recall the objective function \eqref{EqnObj2} is 
\begin{align*}
(\betahat, \gammahat) \in \arg\min_{\substack{\beta \in \real^p,\\ \gamma \in \real^n}} \Bigg\{\frac{1}{2n} \|y - X\beta - \gamma\|_2^2 
+ \frac{\eta}{2m} \|\ytil - \Xtil \beta\|_2^2 + \lambda \|\gamma\|_1\Bigg\}.
\end{align*}
By KKT conditions of the objective function, 
\begin{align}\label{kk0}
\begin{split}
\nabla_{\beta} &= - \frac{1}{n} X^\top (y-X\betahat -\gammahat) - \frac{\eta}{m}\Xtil^\top (\ytil - \Xtil \betahat) = 0;\\
\nabla_{\gamma} &= -\frac{1}{n} (y - X\betahat - \gammahat) + \lambda \partial |\gammahat| = 0.
\end{split}
\end{align}
Plug $y = X\betastar + \gammastar + \epsilon$ and $\ytil = \Xtil \betastar + \epsilontil$ into \eqref{kk0} we obtain
\begin{subequations}
\begin{equation}\label{kk1}
- \left(\frac{1}{n}X^\top X + \frac{\eta}{m}\Xtil^\top \Xtil \right)(\betastar - \betahat) - \frac{1}{n} X^\top (\gammastar - \gammahat) -\frac{1}{n}X^\top \epsilon - \frac{\eta}{m}\Xtil^\top \epsilontil = 0;
\end{equation}
\begin{equation}\label{kk2}
-\frac{1}{n} X(\betastar - \betahat) -\frac{1}{n}(\gammastar - \gammahat) - \frac{1}{n}\epsilon + \lambda \partial |\gammahat| = 0. 
\end{equation}
\end{subequations}
Mutiply $X^\top$ on \eqref{kk2} and plug it into \eqref{kk1} we get 
\begin{equation}
\Xtil^\top \Xtil(\betahat - \betastar) = \lambda \frac{m}{\eta}X^\top \partial |\gammahat| + \Xtil \epsilontil.
\end{equation}
Given that $\Xtil = \Stil V_0^\top$,
$$\Stil^\top \Stil V_0^\top (\betahat - \betastar) = \lambda \frac{m}{\eta}V_0^\top X^\top \partial |\gammahat| + V_0^\top V_0 \Stil \epsilontil.$$
Plugging into the SVD of $X = US V^\top$, we have
\begin{align*}
\begin{split}
\left\|V_0^\top (\betahat - \betastar)\right\|_2 &\leq \lambda \frac{m}{\eta} \left\|(\Stil^\top \Stil)^{-1}V_0^\top X^\top \partial |\gammahat| \right\|_2 + \|(\Stil^\top \Stil)^{-1}\Stil\|\|\epsilontil\|_2\\
&\leq \lambda \frac{m}{\eta} \left\|(\Stil^\top \Stil)^{-1}V_0^\top V S U^\top \partial |\gammahat| \right\|_2 + c_1\frac{\sqrt{m}\sigma}{\sqrt{L}\sigma_{\min}(\Stil)}\\
&\leq \lambda \frac{m}{\eta} \left\|(\Stil^\top \Stil)^{-1}V_0^\top V S U^\top\right\|_2 \sqrt{n} + c_1\frac{\sqrt{m}\sigma}{\sqrt{L}\sigma_{\min}(\Stil)}\\
& \leq c\sigma\sqrt{\frac{\log n}{n}}\frac{m}{\eta} \left\|(\Stil^\top \Stil)^{-1}S_0^{1/2}\right\|_2 + c_1\frac{\sqrt{m}\sigma}{\sqrt{L}\sigma_{\min}(\Stil)},
\end{split}
\end{align*}
with probability at least $1 - \exp(-cm)$. The second step is because $\sigmatil$ has subgaussian parameter $\sigma^2/L$.
\end{proof}
Note that when $\Stil$ is chosen large enough, then $
\|V_0(\betahat - \betastar)\|_2$ is controlled to a small number. 
Besides, if the subspace $V_0$ contains the buggy subspace of $X_T$, then $\|y_T-y_T^*\|_2$ is well controlled and we can spot the contaminated points. This, together with the orthogonal design we will discuss in Section~\ref{app:ortho-design}, suggests that a successful debugging strategy may be obtained by producing a carefully chosen interaction between the non-buggy subspace (augmented using a second pool of clean data points) and the buggy subspace.

\textbf{\emph{Related work \cite{she2011outlier}.}} Without the second pool,~\cite{she2011outlier} demonstrated the equivalence of the solution $\betahat$ to the joint optimization of the objective~\eqref{EqnObj2} over $(\beta, \gamma)$ to the optimum of a regression $M$-estimator in $\beta$ with the Huber loss. This motivates the question of whether the optimizer $\betahat$ of the objective~\eqref{EqnObj2} may similarly be viewed as the optimum of an $M$-estimation problem. 
\begin{proposition}
\label{PropWeightedM}
The solution $\betahat$ of the joint optimization problem~\eqref{EqnObj2} is the unique optimum of the following weighted $M$-estimation problem:
\begin{equation}
\label{EqnWeightedM}
\min_{\beta \in \real^p} \Big\{\frac{1}{n}\sum_{i=1}^n \ell_{n\lambda}\left(y_i - x_i^\top \beta\right) + \frac{\eta}{2m} \|\ytil - \Xtil \beta\|_2^2\Big\}.
\end{equation}
\end{proposition}

\begin{proof}
Recall the definition of the Huber loss function:
\begin{equation*}
\ell_k(u) = \begin{cases}
\lambda |u| - \frac{k^2}{2}, & \text{if } |u| > k, \\
\frac{u^2}{2}, & \text{if } |u| < k.
\end{cases}
\end{equation*}
We will show the desired equivalence via the KKT conditions for both objective functions. Taking gradients with respect to $\beta$ and $\gamma$ for the original objective function~\eqref{EqnObj2}, we obtain the following system of equations:
\begin{align}
0 & = \frac{X^\top X}{n} \beta - \frac{X^\top (y- \gamma)}{n} + \eta \left(\frac{\Xtil^\top \Xtil}{m} \beta - \frac{\Xtil^\top  \ytil}{m}\right), \label{EqnKKT1} \\
0 & = \frac{\gamma}{n} - \frac{y - X\beta}{n} + \lambda \sign(\gamma). \label{EqnKKT2}
\end{align}
The second equation~\eqref{EqnKKT2} has a unique solution, given by the soft-thresholding function:
\begin{equation*}
\gamma = \thresh_{n\lambda}\left(y-X\beta\right),
\end{equation*}
where for scalars $u, k \in \real$, we have
\begin{equation*}
\thresh_k(u) = \begin{cases}
u - \lambda\sign(u), & \text{if } |u| \ge k, \\
0, & \text{if } |u| < k,
\end{cases}
\end{equation*}
and $\thresh_k$ acts on vectors componentwise. Plugging back into equation~\eqref{EqnKKT1}, we obtain
\begin{align}
\label{EqnFirst}
0 = X^\top  \left(\frac{X\beta - y}{n} + \frac{1}{n} \thresh_{n\lambda}\left(y-X\beta\right)\right)
+ \eta \left(\frac{\Xtil^\top  \Xtil}{m} \beta - \frac{\Xtil^\top  \ytil}{m}\right).
\end{align}
We now consider the KKT conditions for the weighted $M$-estimator~\eqref{EqnWeightedM}. Taking a gradient with respect to $\beta$, we obtain
\begin{equation}
\label{EqnSecond}
0 = -\sum_{i=1}^n \ell_{n\lambda}'\left(y_i - x_i^\top  \beta\right) \frac{x_i}{n} + \eta\left(\frac{\Xtil^\top  \Xtil}{m} \beta - \frac{\Xtil^\top  \ytil}{m}\right).
\end{equation}
The key is to note that
\begin{equation*}
u -\ell'_{n\lambda}(u)  = \thresh_{n\lambda}(u),
\end{equation*}
so
\begin{align*}
- \ell_{n\lambda}'\left(y_i - x_i^\top  \beta\right)\frac{1}{n} 
= \frac{x_i^\top  \beta - y_i}{n} + \frac{1}{n} \thresh_{n\lambda} \left(y_i - x_i^\top  \beta\right),
\end{align*}
from which we may infer the equivalence of equations~\eqref{EqnFirst} and~\eqref{EqnSecond}. This concludes the proof.
\end{proof}
The proposition also illustrates that the objective uses Huber loss to get the robust estimation $\betahat$, and then imply the estimation $\gammahat$. Therefore, estimations of $\beta$ and $\gamma$ complement each other. Our reformulation more relies on giving a direct analysis of $\gamma$ and its support.

\section{Appendix for Section~\ref{sec:formu}}\label{AppSecFormu}

We show reformulation of the objective function in this section.

\begin{proof}[Proof of Proposition~\ref{prop:obj-reform}]
Using the notation~\eqref{EqnStack}, we can translate~\eqref{EqnObj2} into 
\begin{equation}\label{eq:reformu-1}
(\betahat, \gammahat) \in \arg\min_{\beta, \gamma} \left\{\frac{1}{2n} \left\|y' - X'\beta - \begin{bmatrix}\gamma \\ \vec{0}_m\end{bmatrix}\right\|_2^2 + \lambda \|\gamma\|_1\right\},
\end{equation}

First note that we can split $y'-X'\beta-\begin{bmatrix}\gamma \\ \vec{0}_m\end{bmatrix}$ into two parts by projecting onto the column space of $X'$ and the perpendicular space:
\begin{align*}
\begin{split}
\left\|y' - X'\beta - \begin{bmatrix}\gamma \\ \vec{0}_m\end{bmatrix}\right\|_2^2 & = \left\|P_{X'}\left(y' -  X'\beta - \begin{bmatrix}\gamma \\ \vec{0}_m\end{bmatrix}\right)\right\|_2^2 + \left\|P_{X'}^\perp \left(y' - X'\beta - \begin{bmatrix}\gamma \\ \vec{0}_m\end{bmatrix}\right)\right\|_2^2 \\ &= \left\|P_{X'}\left(y' - X'\beta - \begin{bmatrix}\gamma \\ \vec{0}_m\end{bmatrix}\right)\right\|_2^2 + \left\|P_{X'}^\perp\left(y' - \begin{bmatrix}\gamma \\ \vec{0}_m\end{bmatrix}\right)\right\|_2^2.
\end{split}
\end{align*}
For any value of $\gammahat$, we can choose $\betahat$ such that $\left\|P_{X'}\left(y' - X'\betahat - \begin{bmatrix}\gamma \\ \vec{0}_m\end{bmatrix}\right)\right\|_2^2 = 0$, simply by taking $\betahat = (X'^\top X')^{-1} X'^\top \left(y' - \begin{bmatrix}\gammahat \\ \vec{0}_m\end{bmatrix}\right)$. Hence, we get 
\begin{align*}
\left\|y' - X'\beta - \begin{bmatrix}\gammahat \\ \vec{0}_m\end{bmatrix}\right\|_2^2 = \left\|P_{X'}^\perp\left(y' - \begin{bmatrix}\gammahat \\ \vec{0}_m\end{bmatrix}\right)\right\|_2^2 = \left\|P_{X'}^\perp y' - \Pbar\gammahat\right\|_2^2,
\end{align*}
and~\eqref{eq:reformu-1} becomes
\begin{align*}
\begin{split}
\gammahat & \in \frac{1}{2n}\left\|P_{X'}^\perp y' - \Pbar\gammahat\right\|_2^2 + \lambda \|\gammahat\|_1, \\
\betahat & = (X'^\top X')^{-1} X'^\top \left(y' - \begin{bmatrix}\gammahat \\ \vec{0}_m\end{bmatrix}\right).
\end{split}
\end{align*}
Therefore, the two optimization problems share the same solution for $\gammahat$.
\end{proof}

\section{Appendix for Section~\ref{sec:supprec}}
\label{AppSuppRec}

\paragraph{Notations in appendix:} We write $P_{X',TT}^\perp$ to represent the submatrix of $P_{X'}^\perp$ with rows and column indexed by $T$. We write $P_{X',T\cdot}^\perp$ to represent the submatrix of $P_{X'}^\perp$ with rows indexed by $T$ and $P_{X',\cdot T}^\perp$ to represent the submatrix of $P_{X'}^\perp$ with columns indexed by $T$. For simplicity, let $\Pbar = P_{X'}^\perp M_{[n]}$. We slightly abuse notation by using $\Pbar_{T}$ and $\Pbar_{T^c}$ to denote $\Pbar_{\cdot T}$ and $\Pbar_{\cdot T^c}$, respectively. 

In this appendix, we provide proofs and additional details for the results in Section~\ref{sec:supprec}. The proofs for fixed design are in Section~\ref{app:proofmainthms}. We discuss orthogonal design in Section~\ref{app:ortho-design} and sub-Gaussian design in Section~\ref{app:subgaussian-design}. In particular, we use the two special designs to better understand the three assumptions and see how having a clean pool helps with the support recovery. We will call one-pool case the setting with only contaminated pool and call two-pool case the setting with both data pools.

\subsection{Proofs of Theorem~\ref{subsetmainthm} and Theorem~\ref{eaxtsetmainthm}}
\label{app:proofmainthms}

\begin{proof}[Proof of Theorem~\ref{subsetmainthm}]
We follow the usual Primal Dual Witness argument for support recovery in linear regression, which contains the following steps~\cite{wainwright2009sharp}:
\begin{itemize}
    \item[1.] Set $\gammahat_{T^c} = 0$.
    \item[2.] Solve the oracle subproblem for $(\gammahat_T, \hat{z}_{T})$:
    \begin{align}\label{oraclesub}
        \gammahat_T \in \arg \min_{\gamma \in \real^t} \left\{\frac{1}{2n}\norm{Ay' - B\gamma}^2 + \lambda \lnorm{\gamma}\right\},
    \end{align}
    and choose $\hat{z}_T \in \partial{\lnorm{\gammahat_T}}$. In the one data pool case, we have $A = P_{X,\cdot T}^{\perp}$ and $B = P_{X,\cdot T}^{\perp}$; in the two data pool case, we have $A = P_{X',\cdot T}^\perp$ and $B = \Pbar_T$.
    \item[3.] Solve $\hat{z}_{T^c}$ via the zero-subgradient equation, and check whether the strict dual feasibility condition holds: $\infnorm{\zhat_{T^c}} < 1$.
\end{itemize}
As in the usual Lasso analysis~\cite{wainwright2009sharp}, under the eigenvalue condition~\eqref{C1-2}, $(\gammahat_T, 0) \in \real^n$ is the unique optimal solution of the Lasso, where $\gammahat_T$ is the solution obtained by solving the oracle subproblem~\eqref{oraclesub}.

The focus of our current analysis is to verify the conditions under which the strict dual feasibility condition holds. The KKT conditions for equation~\eqref{EqnPenGamma} may be rewritten as
\begin{align}\label{PDW1}
    \Pbar_T^\top\Pbar_T(\hat{\gamma}_{T} - \gamma^*_{T}) - \Pbar_T^\top P_{X'}^\perp \epsilon' + n \lambda \hat{z}_T = 0,\\
    \Pbar_{T^c}^\top\Pbar_T(\hat{\gamma}_{T} - \gamma^*_{T}) -  \Pbar_{T^c}^\top P_{X'}^\perp \epsilon' + n \lambda  \hat{z}_{T^c} = 0, \label{PDW2}
\end{align}
where $\hat{z}_T \in \partial\lnorm{\gammahat_T}, \hat{z}_{T^c} \in \partial\lnorm{\gammahat_{T^c}}$.

We will use the following equations to simplify terms later:
\begin{align*}
\Pbar_T^\top\Pbar_T = (P_{X'}^{\perp \top} P_{X'}^\perp)_{TT}, \quad
\left(\begin{array}{c} \Pbar_T^\top P_{X'}^\perp \epsilon' \\ \Pbar_{T^c}^\top P_{X'}^\perp \epsilon' \end{array}\right) = \Pbar^{\top} P_{X'}^\perp \epsilon' = \Pbar^\top \epsilon' = \left(\begin{array}{c} \Pbar_T^\top \epsilon' \\ \Pbar_{T^c}^\top \epsilon' \end{array}\right).
\end{align*}
Since $\Pbar_T^\top\Pbar_T$ is invertible by condition \eqref{C1-2}, we can multiply equation~\eqref{PDW1} by $\left(\Pbar_T^\top\Pbar_T\right)^{-1}$ on the left to obtain
\begin{align}
\label{gammaerr}
    \hat{\gamma}_{T} - \gamma^*_{T} =  (\Pbar_T^\top\Pbar_T)^{-1}\Pbar_T^\top \epsilon' -n  \lambda (\Pbar_T^\top\Pbar_T)^{-1}\hat{z}_T.
\end{align}
Plugging this into equation~\eqref{PDW2}, we then obtain
\begin{align*}
    \hat{z}_{T^c} = 
    -\frac{1}{n \lambda}\Pbar_{T^c}^\top\Pbar_T \left[(\Pbar_T^\top\Pbar_T)^{-1}\Pbar_T^\top\epsilon' - n \lambda(\Pbar_T^\top\Pbar_T)^{-1}\hat{z}_T \right]  + \frac{1}{n \lambda }\Pbar_{T^c}^\top\epsilon',
\end{align*}
or
\begin{align}\label{proof-formu1}
    \hat{z}_{T^c} = \underbrace{\Pbar_{T^c}^\top\Pbar_T(\Pbar_T^\top\Pbar_T)^{-1}\hat{z}_T}_{\mu} 
    + \underbrace{\Pbar_{T^c}^\top\Big(I - \Pbar_T(\Pbar_T^\top\Pbar_T)^{-1}\Pbar_T^\top\Big)\frac{\epsilon'}{n \lambda }}_{V_{T^c}}.
\end{align}
We need to show that $\|\hat{z}_{T^c}\|_\infty < 1$. \\
Note that condition \eqref{C2-2} gives us
\begin{align*}
    \exists \alpha' \in [0,1),\ \| \mu \|_\infty = \max_{j \in T^c} \|\Pbar_j^\top \Pbar_T(\Pbar_T^\top\Pbar_T)^{-1}\|_1 \leq \alpha'.
\end{align*}
Furthermore, since
\begin{align*}
\lambda \geq \frac{1}{1-\alpha'} \left\|\Pbar_{T^c}^\top\Big(I - \Pbar_T(\Pbar_T^\top\Pbar_T)^{-1}\Pbar_T^\top\Big)\frac{\epsilon'}{n}\right\|_\infty,
\end{align*} 
we have
\begin{align*}
\|V_{T^c}\|_\infty \leq \frac{1-\alpha'}{2}.
\end{align*}
Combining these inequalities, we obtain strict dual feasibility:
\begin{align*}
\|\hat{z}_{T^c}\|_\infty \leq \|\mu\|_\infty + \|V_{T^c}\|_\infty < 1.
\end{align*}
In addition, applying the triangle inequality to the RHS of equation~\eqref{gammaerr}, we obtain
\begin{align*}
    G' = \|(\Pbar_T^\top\Pbar_T)^{-1}\Pbar_T^\top \epsilon'\|_\infty + n \lambda \|(\Pbar_T^\top\Pbar_T)^{-1}\hat{z}_T\|_\infty  \geq \|\hat{\gamma}_{T} - \gamma^*_{T}\|_\infty.
\end{align*}
This concludes the proof.
\end{proof}

\begin{proof}[Proof of Theorem~\ref{eaxtsetmainthm}]
Note that
\begin{align*}
    \forall i \in T,\quad |\gamma^*_i| - |\hat{\gamma}_i| \leq \|\hat{\gamma}_{T} - \gamma^*_{T}\|_\infty \le G',
\end{align*}
where the last inequality uses Theorem~\ref{subsetmainthm}. Thus, if condition \eqref{C3-2} also holds, we have
\begin{align*}
\forall i \in T, \quad |\hat{\gamma}_i| \geq \min_{i\in T}|\gamma^*_i| - \|\hat{\gamma}_{T} - \gamma^*_{T}\|_\infty \geq \min_{i\in T}|\gamma^*_i| - G'> 0,
\end{align*}
concluding the proof.
\end{proof}

\subsection{Orthogonal design}\label{app:ortho-design}
\subsubsection{Main results for orthogonal design}
In this section, we focus on a special case, where our data have an orthogonal property. Let $X = \begin{bmatrix} RQ^\top \\ FQ^\top \end{bmatrix} \in \real^{(t+p) \times p}, \Xtil = WQ^\top \in \real^{p \times p}$, where $Q$ is an orthogonal matrix with columns $q_1, q_2, \cdots, q_p$, $F,\, W$ are diagonal matrices with diagonals $f_i$'s and $w_i$'s separately ($i \in [p]$), and $R =
\begin{bmatrix}
  \begin{matrix}
  r_1 & 0 & 0 & 0 \\
  0 & r_2 & 0 & 0\\
  0 & 0 & \cdots & 0\\
  0 & 0 & 0 & r_t
  \end{matrix}
  & \rvline & \bigzero_{t\times (p-t)} \\
\end{bmatrix}.$
We assume for all $i \in [p]$, $r_i \neq 0, f_i \neq 0$. Consider the first $t$ points are buggy and the rest $p$ points are nonbuggy, i.e.,  $X_T = RQ^\top \in \real^{t \times p}, X_{T^c} = FQ^\top \in \real^{p \times p}$.

Applying Theorems~\ref{subsetmainthm} and \ref{eaxtsetmainthm}, we obtain Propositions~\ref{prop:orthogonaldesign_subset_recovery} and \ref{prop:orthogonaldesign_exact_recovery}. 

\begin{proposition}
\label{prop:orthogonaldesign_subset_recovery}
In the one-pool case, suppose we choose 
\begin{align}\label{lamone_orthogonal}
\lambda \geq \frac{2\sigma}{n(1-\alpha)} \left(\sqrt{\log 2(n-t)}+C\right),
\end{align} for some constant $C > 0$, and 
\begin{align}\label{subset_one_pool}
\alpha = \max_{1 \leq i \leq t}\left|\frac{r_i}{f_i}\right| < 1.
\end{align}
Then the contaminated pool is capable of achieving subset support recovery with probability at least $1 - e^{-\frac{C^2}{2}}$.

In the two-pool case, suppose we choose 
{\small\begin{align}
\label{lamtwo_orthogonal}
\lambda \geq \frac{2\sigma}{n(1-\alpha')} \max\left\{1, \sqrt{\frac{\eta n}{mL}}\right\}\left(\sqrt{\log 2(n-t)}+C'\right),
\end{align}}
for some constant $C' > 0$, and 
\begin{eqnarray}
\label{subset_two_pool}
\alpha' = \max_{1 \leq i \leq t}\left|\frac{r_i f_i}{f_i^2 + \eta \frac{n}{m}w_i^2}\right| < 1.
\end{eqnarray}

Then adding clean points will achieve subset support recovery with probability at least $1 - e^{-\frac{C'^2}{2}}$.
\end{proposition}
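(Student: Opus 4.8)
The plan is to reduce both the one-pool and two-pool claims to Theorem~\ref{subsetmainthm} by checking its three ingredients---the minimum-eigenvalue condition~\eqref{C1-2}, the mutual-incoherence condition~\eqref{C2-2}, and the lower bound~\eqref{tdplambdacond1} on $\lambda$---directly from the orthogonal structure. The key simplification is that every Gram matrix in sight is diagonalized by $Q$: since $X_T = RQ^\top$ and $X_{T^c} = FQ^\top$ with $R^\top R = \diag(r_1^2,\dots,r_t^2,0,\dots,0)$ and $F = \diag(f_1,\dots,f_p)$, we have $X^\top X = Q(R^\top R + F^2)Q^\top$, and adding the second pool merely replaces this by $X'^\top X' = Q\big(R^\top R + F^2 + \tfrac{\eta n}{m}W^2\big)Q^\top$. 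Consequently $P_X^\perp = I - X(X^\top X)^{-1}X^\top$ is explicit on the blocks we need: on $T\times T$ it is the diagonal matrix $\diag\big(f_i^2/(r_i^2+f_i^2)\big)_{i=1}^t$ (replace $r_i^2+f_i^2$ by $r_i^2+f_i^2+\tfrac{\eta n}{m}w_i^2$ in the two-pool case), which is positive definite whenever $f_i\ne 0$ and so gives Condition~\eqref{C1-2} (and in particular invertibility of $\Pbar_T^\top\Pbar_T$); the $T^c\times T$ block is supported on the $t$ "aligned" coordinates with entries $-r_if_i/(r_i^2+f_i^2)$ (resp.\ with the extra $\tfrac{\eta n}{m}w_i^2$ term), so right-multiplying by $(P_{X,TT}^\perp)^{-1}$ clears the denominators and leaves a diagonal matrix with entries $-r_i/f_i$ (resp.\ $-r_if_i/(f_i^2+\tfrac{\eta n}{m}w_i^2)$); its $\ell_\infty$-operator norm is exactly $\alpha$ of~\eqref{subset_one_pool} (resp.\ $\alpha'$ of~\eqref{subset_two_pool}), which is assumed to lie in $[0,1)$, so Condition~\eqref{C2-2} holds. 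As noted in Example~\ref{ExaOrth}, the directions $q_{t+1},\dots,q_p$ never enter.

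It remains to verify~\eqref{tdplambdacond1}, i.e.\ to control $\big\|\Pbar_{T^c}^\top\big(I-\Pbar_T(\Pbar_T^\top\Pbar_T)^{-1}\Pbar_T^\top\big)\tfrac{\epsilon'}{n}\big\|_\infty$. Writing $\Pi := I-\Pbar_T(\Pbar_T^\top\Pbar_T)^{-1}\Pbar_T^\top$, an orthogonal projection, the $j$-th coordinate of the vector inside is $a_j^\top\epsilon'/n$ with $a_j = \Pi(\Pbar_{T^c})_{\cdot j}$, and since $(\Pbar_{T^c})_{\cdot j}$ is a column of the projection $P_{X'}^\perp$ we have $\|a_j\|_2\le\|(\Pbar_{T^c})_{\cdot j}\|_2\le 1$. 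The coordinates of $\epsilon'$ are independent and sub-Gaussian with parameter at most $\sigma^2\max\{1,\tfrac{\eta n}{mL}\}$ (the first $n$ have parameter $\sigma^2$; the last $m$ are the clean-pool errors of parameter $\sigma^2/L$ rescaled by $\sqrt{\eta n/m}$), so $a_j^\top\epsilon'/n$ is sub-Gaussian with parameter at most $\tfrac{\sigma^2}{n^2}\max\{1,\tfrac{\eta n}{mL}\}$. A union bound over the $n-t$ coordinates plus the standard sub-Gaussian tail inequality then gives, with probability at least $1-e^{-C'^2/2}$, the bound $\tfrac{\sigma}{n}\max\{1,\sqrt{\eta n/(mL)}\}\big(\sqrt{\log 2(n-t)}+C'\big)$, so~\eqref{lamtwo_orthogonal} implies~\eqref{tdplambdacond1}; the one-pool statement is the case $\eta=0$ (drop the primes, so $\Pbar = P_X^\perp$, and the factor $\max\{1,\sqrt{\eta n/(mL)}\}$), giving~\eqref{lamone_orthogonal} with probability $1-e^{-C^2/2}$. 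On this event all three hypotheses of Theorem~\ref{subsetmainthm} hold, so it yields a unique $\gammahat$ with $\supp(\gammahat)\subseteq\supp(\gammastar)$, completing the proof.

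The eigenvalue and incoherence computations are routine once one commits to diagonalizing everything by $Q$; the genuinely delicate part is the constant bookkeeping in the stochastic step---carrying the heterogeneous sub-Gaussian parameters of $\epsilon'$ (scales $\sigma^2$ and $\sigma^2/L$, the latter rescaled by $\eta n/m$) through the projection to land precisely on the $\max\{1,\sqrt{\eta n/(mL)}\}$ prefactor, and matching the numerical constant in the maximal inequality to the advertised $1-e^{-C'^2/2}$, which depends on the precise sub-Gaussian convention in force.
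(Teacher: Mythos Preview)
Your proposal is correct and follows essentially the same route as the paper: diagonalize everything via $Q$ to read off Conditions~\ref{C1} and~\ref{C2} explicitly (the paper records these computations separately in Appendix~\ref{orthogonalsettingconditionproof}), then bound each coordinate of the stochastic term in~\eqref{tdplambdacond1} as a sub-Gaussian variable with parameter at most $\max\{1,\tfrac{\eta n}{mL}\}\sigma^2/n^2$ and take a union bound over the $n-t$ indices in $T^c$. (One harmless slip in wording: your parenthetical replacement rule for the two-pool $T\times T$ diagonal only updates the denominator, whereas the correct entry is $(f_i^2+\tfrac{\eta n}{m}w_i^2)/(r_i^2+f_i^2+\tfrac{\eta n}{m}w_i^2)$---but positivity and the rest of the argument are unaffected.)
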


As stated in Theorems~\ref{subsetmainthm} and \ref{eaxtsetmainthm}, to ensure exact recovery, we also need to impose a gamma-min condition. This leads to the following proposition:

\begin{proposition}
\label{prop:orthogonaldesign_exact_recovery}
In the one-pool case, suppose inequality~\eqref{subset_one_pool} holds. If also
{\small\begin{align}\label{g3}
\min_{1 \leq i \leq t}|\gammastar_i| > \sigma(\sqrt{2\log t} + c) \max_{1 \leq i \leq t}\sqrt{1+\frac{r_i^2}{f_i^2}} \ + \frac{2\sigma}{1-\alpha} \left(\sqrt{\log 2(n-t)}+C\right)\left(1+\max_{1 \leq i \leq t}\frac{r_i^2}{f_i^2}\right),
\end{align}}
then there exists a $\lambda$ to achieve exact recovery, with probability at least $1 - 2e^{-\frac{c^2}{2}} - e^{-\frac{C^2}{2}}$.

In the two-pool case, suppose $\eta \leq \frac{mL}{n}$, and inequality~\eqref{subset_two_pool} holds. If also
{\small\begin{align}\label{g4}
\begin{split}
\min_{1 \leq i \leq t} |\gammastar_i| & \geq \sigma(\sqrt{2\log t}+c)\sqrt{1+\max_{1\leq i \leq t}\frac{r_i^2(Lf_i^2+\frac{\eta n}{m}w_i^2)}{L(f_i^2 + \frac{\eta n}{m}w_i^2)^2}}  + \frac{2\sigma}{1-\alpha'} \left(\sqrt{\log 2(n-t)} +C\right)\left(1+\max_{1\leq i \leq t}\frac{r_i^2}{f_i^2+\frac{\eta n}{m}w_i^2}\right),
\end{split}
\end{align}}
then there exists a $\lambda$ to achieve exact recovery, with probability at least $1 - 2e^{-\frac{c^2}{2}} - e^{-\frac{C^2}{2}}$.
\end{proposition}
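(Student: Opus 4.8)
The plan is to specialize the abstract Conditions~\ref{C1}--\ref{C3} and the regularizer bound~\eqref{tdplambdacond1} to the orthogonal design and then invoke Theorems~\ref{subsetmainthm} and~\ref{eaxtsetmainthm}. The crucial simplification is that every Gram matrix in sight is diagonalized by $Q$: with $D := \diag\big(\indic{i\le t}\,r_i^2 + f_i^2 + \tfrac{\eta n}{m}w_i^2\big)_{i=1}^p$ we have $X'^\top X' = QDQ^\top$, hence $(X'^\top X')^{-1} = QD^{-1}Q^\top$. Since the $i$-th row of $X_T$ is $r_i q_i^\top$, the $j$-th row of $X_{T^c}$ is $f_j q_j^\top$, and the rows of $\Xtil$ are scalar multiples of the $q_i$'s, substituting these into $P_{X'}^\perp = I - X'(X'^\top X')^{-1}X'^\top$ makes each block of $P_{X'}^\perp$ collapse to a diagonal or zero matrix. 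In particular $P_{X',TT}^\perp = \diag\big(\tfrac{f_i^2 + \frac{\eta n}{m}w_i^2}{r_i^2 + f_i^2 + \frac{\eta n}{m}w_i^2}\big)_{i=1}^t$, so Condition~\ref{C1} holds automatically under the standing nondegeneracy ($f_i^2 + \tfrac{\eta n}{m}w_i^2 > 0$); $(P_{X',TT}^\perp)^{-1} = \diag\big(1 + \tfrac{r_i^2}{f_i^2 + \frac{\eta n}{m}w_i^2}\big)$, whence $\opnorm{(P_{X',TT}^\perp)^{-1}}_\infty = \max_i\big(1 + \tfrac{r_i^2}{f_i^2 + \frac{\eta n}{m}w_i^2}\big)$; and a one-line computation gives $\|P_{X',T^cT}^\perp (P_{X',TT}^\perp)^{-1}\|_\infty = \max_i\big|\tfrac{r_i f_i}{f_i^2 + \frac{\eta n}{m}w_i^2}\big| = \alpha'$, so Condition~\ref{C2} is precisely the hypothesis~\eqref{subset_two_pool}.

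Next I would unfold the noise term in~\eqref{C3-2}. The same diagonalization shows that the $i$-th coordinate of $(P_{X',TT}^\perp)^{-1}P_{X',T\cdot}^\perp\epsilon'$ equals $\epsilon_{T,i} - \tfrac{r_i f_i}{f_i^2 + \frac{\eta n}{m}w_i^2}\,\epsilon_{T^c,i} - \tfrac{r_i\frac{\eta n}{m}w_i}{f_i^2 + \frac{\eta n}{m}w_i^2}\,\epsilontil_i$, a linear combination of three independent sub-Gaussians with parameters $\sigma^2$, $\sigma^2$, $\sigma^2/L$; its sub-Gaussian parameter is therefore a closed-form rational function of $(r_i, f_i, w_i, \tfrac{\eta n}{m}, L)$ which, under the weight restriction $\eta\le\tfrac{mL}{n}$, is controlled by the quantity under the square root in~\eqref{g4}. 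Applying the sub-Gaussian maximal inequality over $i\in T$ then bounds $\infnorm{(P_{X',TT}^\perp)^{-1}P_{X',T\cdot}^\perp\epsilon'}$ by $\sigma(\sqrt{2\log t}+c)\max_i\sqrt{1+\cdots}$ with probability at least $1 - 2e^{-c^2/2}$, which is the first summand in~\eqref{g4}. For the second summand I would take $\lambda$ equal to the right-hand side of~\eqref{lamtwo_orthogonal}; by Proposition~\ref{prop:orthogonaldesign_subset_recovery} this choice satisfies~\eqref{tdplambdacond1} with probability at least $1 - e^{-C^2/2}$, and the restriction $\eta\le\tfrac{mL}{n}$ makes $\max\{1,\sqrt{\eta n/(mL)}\}=1$, so $n\lambda\,\opnorm{(P_{X',TT}^\perp)^{-1}}_\infty$ equals exactly $\tfrac{2\sigma}{1-\alpha'}(\sqrt{\log 2(n-t)}+C)\big(1+\max_i\tfrac{r_i^2}{f_i^2+\frac{\eta n}{m}w_i^2}\big)$. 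Summing the two contributions shows that~\eqref{g4} implies Condition~\ref{C3}; a union bound over the two failure events gives the stated probability $1 - 2e^{-c^2/2} - e^{-C^2/2}$, and Theorem~\ref{eaxtsetmainthm} then delivers exact support recovery. The one-pool claim~\eqref{g3} follows from the identical argument with all primes deleted and $D$ replaced by $\diag(\indic{i\le t}\,r_i^2 + f_i^2)$; note that $q_{t+1},\dots,q_p$ never enter, matching the remark that the non-buggy subspace is irrelevant here.

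The genuinely substantive part is the explicit evaluation of the projection blocks together with careful bookkeeping of the sub-Gaussian parameters of the noise-dependent quantities---in particular, propagating the $1/L$ factor for the lower-noise second pool through $(P_{X',TT}^\perp)^{-1}P_{X',T\cdot}^\perp$ and checking that $\eta\le\tfrac{mL}{n}$ is exactly what prevents the weight on the clean pool from inflating either the gamma-min variance term or the $\lambda$-dependent term. Everything else decouples coordinatewise by the orthogonality of $Q$ and the diagonal structure of $R$, $F$, $W$; indeed the same decoupling makes $\Pbar_T^\top\Pbar_T$ diagonal, which is what renders the verification of~\eqref{tdplambdacond1} inside Proposition~\ref{prop:orthogonaldesign_subset_recovery} tractable as well.
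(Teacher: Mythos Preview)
Your proposal is correct and follows essentially the same route as the paper: explicitly diagonalize the projection blocks via $Q$ (the paper records these in its ``conditions for orthogonal design'' subsection), read off the coordinatewise noise term, apply the sub-Gaussian maximal inequality over $i\in T$ to get the first summand with failure probability $2e^{-c^2/2}$, and then combine with the choice of $\lambda$ from Proposition~\ref{prop:orthogonaldesign_subset_recovery} (failure probability $e^{-C^2/2}$) to produce the second summand. Your observation that $\eta\le mL/n$ is precisely what collapses $\max\{1,\sqrt{\eta n/(mL)}\}$ to $1$ in the $\lambda$-term is exactly the role that hypothesis plays; the variance term in~\eqref{g4} already matches the coordinatewise sub-Gaussian parameter without that restriction.
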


Compare~\eqref{subset_one_pool} and~\eqref{subset_two_pool}. Mutual incoherence is decreased from $\frac{r_i^2}{f_i^2}$ to $\frac{r_i^2}{f_i^2 + \frac{\eta n}{m}w_i^2}$. 
Compare~\eqref{g3} and~\eqref{g4}. The second $\max$ term, $\underset{1 \leq i \leq t}{\max}\frac{r_i^2}{f_i^2} \geq \underset{1 \leq i \leq t}{\max}\frac{r_i^2(Lf_i^2+\frac{\eta n}{m}w_i^2)}{L(f_i^2 + \frac{\eta n}{m}w_i^2)^2}$, because
\begin{equation*}
\underset{1 \leq i \leq t}{\max}\frac{r_i^2}{f_i^2} \geq \underset{1 \leq i \leq t}{\max}\frac{r_i^2(f_i^2+\frac{\eta n}{m}w_i^2)}{(f_i^2 + \frac{\eta n}{m}w_i^2)^2} \geq \underset{1 \leq i \leq t}{\max}\frac{r_i^2(Lf_i^2+\frac{\eta n}{m}w_i^2)}{L(f_i^2 + \frac{\eta n}{m}w_i^2)^2}
\end{equation*}
when $L \geq 1$. Also note that $\frac{1}{1 - \alpha} > \frac{1}{1-\alpha'}$. Altogether, the requirement of $\min_{i \in [t]}|\gammastar_i|$ is weakened by introducing clean points.
Thus, we see that the mutual incoherence improves in two-pool setting. The gamma-min condition imposes a lower bound of $\Omega\left(\sqrt{\log (n-t)}\right)$ on the signal-to-noise ratio, $\frac{\min_{i \in [t]} |\gammastar_i|}{\sigma}$, and including second pool reduces the prefactor.

As can be seen, we want $|w_i|$ to be sufficiently large compared to $|f_i|$. However, if $|w_i|$ is bounded, we may instead ensure support recovery by repeating points. In this section, we discuss the effect of repeating points and determine the number of points needed to guarantee correct support recovery.
Suppose
\begin{equation*}
W = \begin{bmatrix}\vec{w_1} & \vec{0} & \cdots & \vec{0}\\ \vec{0} & \vec{w_2} & \cdots & \vec{0} \\ \vdots & \vdots & \ddots & \vdots\\ \vec{0} & \vec{0} & \cdots & \vec{w_p}\end{bmatrix},
\end{equation*}
where $\vec{w_i} = [w_{i1}, \dots, w_{il_i}]^\top$. For the $i^{\text{th}}$ direction $q_i$, we have $k_i$ repeated points with respective weights $w_{i1}, w_{i2}, \dots, w_{il_i}$.

\begin{proposition}
\label{RepeatCleanPoints}
Suppose the scale of clean data points is bounded by $w_B$.
Using $w_{i1}, \dots, w_{il_i}$, where $l_i = \left\lceil \left({\frac{|w_i|}{w_B}}\right)^2\right\rceil$ and $|w_{ij}| = w_B,\ \forall j \in [l_i]$, achieves the same effect on Conditions \ref{C1}, \ref{C2}, and \ref{C3} as adding a single point with scale $w_i$.
\end{proposition}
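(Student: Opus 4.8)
\textbf{Proof proposal for Proposition~\ref{RepeatCleanPoints}.}

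The plan is to observe that, in the orthogonal design, the entire effect of the second data pool on Conditions~\ref{C1}, \ref{C2}, and \ref{C3} enters only through the scalar quantities $\eta\frac{n}{m}w_i^2$ appearing in the denominators of the mutual incoherence parameter $\alpha'$ and in the gamma-min bound $G'$ (as recorded in Example~\ref{ExaOrth} and Propositions~\ref{prop:orthogonaldesign_subset_recovery} and~\ref{prop:orthogonaldesign_exact_recovery}). So the argument reduces to a bookkeeping identity: replacing a single clean point $w_i q_i$ in direction $q_i$ by $l_i$ repeated points $w_{i1}q_i, \dots, w_{il_i}q_i$ changes the relevant quantity $\tilde X^\top \tilde X$ only through its $(i,i)$ diagonal entry, which goes from $w_i^2$ to $\sum_{j=1}^{l_i} w_{ij}^2$. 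Hence it suffices to show $\sum_{j=1}^{l_i} w_{ij}^2 \ge w_i^2$ for the stated choice.

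The key steps, in order, are as follows. First, I would recompute $\Xtil^\top\Xtil = QW^\top W Q^\top$ under the repeated-point construction, noting that $W^\top W$ is diagonal with $i^{\text{th}}$ diagonal entry $\sum_{j=1}^{l_i} w_{ij}^2$; since $Q$ is orthogonal, all of $P_{X'}^\perp$, $b'_{\min}$, $\alpha'$, and $G'$ depend on the second pool only through these diagonal entries. This is essentially the same computation already used to derive the closed forms in Appendix~\ref{apn:ortho-continue}, now carried out with a block-diagonal $W$. Second, I would plug in $|w_{ij}| = w_B$ for all $j \in [l_i]$ and $l_i = \lceil (|w_i|/w_B)^2\rceil$, giving
\begin{align*}
\sum_{j=1}^{l_i} w_{ij}^2 = l_i\, w_B^2 = \left\lceil \left(\frac{|w_i|}{w_B}\right)^2\right\rceil w_B^2 \ge \left(\frac{|w_i|}{w_B}\right)^2 w_B^2 = w_i^2.
\end{align*}
Third, I would note that since all the relevant conditions are monotone in each diagonal entry $\sum_j w_{ij}^2$ (larger is weakly better: $\alpha'$ decreases, $b'_{\min}$ increases, $G'$ decreases), having $\sum_j w_{ij}^2 \ge w_i^2$ means the repeated-point construction satisfies Conditions~\ref{C1}--\ref{C3} whenever the single-point construction with scale $w_i$ does; and since the ceiling adds at most one extra unit, the effect is essentially the same rather than strictly stronger, which is the sense in which the proposition is stated.

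I do not expect a serious obstacle here; the only mild subtlety is to make precise what ``the same effect'' means — namely that the substitution preserves (indeed weakly improves) each of the three conditions, via the monotonicity just described, and that $l_i$ is the \emph{smallest} integer for which $l_i w_B^2 \ge w_i^2$, so no fewer repeated points would suffice in the worst case (when $|w_{ij}|$ is capped at $w_B$). Making the monotonicity statements precise for $\alpha'$ and $G'$ — reading off from the formulas $\alpha' = \max_i |r_i f_i|/(f_i^2 + \eta\frac{n}{m}(\sum_j w_{ij}^2))$ and the corresponding expression for $G'$ that increasing $\sum_j w_{ij}^2$ only helps — is the one place requiring a line of care, but it is immediate from the closed forms.
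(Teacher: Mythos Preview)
Your proposal is correct and follows essentially the same route as the paper's own proof: recompute the three conditions under the repeated-point construction to see that the single-point formulas are reproduced with $w_i^2$ replaced by $\sum_{j=1}^{l_i} w_{ij}^2$, and then observe that choosing $|w_{ij}|=w_B$ and $l_i=\lceil (|w_i|/w_B)^2\rceil$ makes this sum at least $w_i^2$. If anything, your explicit appeal to the monotonicity of $b'_{\min}$, $\alpha'$, and $G'$ in $\sum_j w_{ij}^2$ is slightly more careful than the paper, which simply records the substitution and concludes.
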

From Proposition~\ref{RepeatCleanPoints}, we see that to correctly identify the bugs, we can also query multiple points in the same direction if the leverage of a single additional point is not large enough.

\subsubsection{Proofs for orthogonal design}
In this section, we first simplify the three conditions, and then provide the proofs of Propositions~\ref{prop:orthogonaldesign_subset_recovery}, \ref{prop:orthogonaldesign_exact_recovery}, and~\ref{RepeatCleanPoints}.

In the one-pool case, we have
\begin{align*}
\begin{split}
P_{X,TT}^\perp &= I_{t \times t} - X_T(X^\top X)^{-1}X_T^\top \\
& = I_{t\times t} - R(R^\top R + F^\top F)^{-1}R^\top \\
&= diag\left(\frac{f_1^2}{r_1^2+f_1^2},\cdots,\frac{f_t^2}{r_t^2+f_t^2} \right).
\end{split}
\end{align*}
Note that $P_{X,TT}^\perp$ is a diagonal matrix. Thus, the eigenvalues are immediately obtained and  
\begin{align*}
\lambda_{\min}(P_{X,TT}^\perp) = \min_{1 \leq i \leq t}{\frac{f_i^2}{r_i^2 + f_i^2}}  = \min_{1 \leq i \leq t}{\frac{1}{\left(\frac{r_i}{f_i}\right)^2 + 1}} = \frac{1}{\max_{1 \leq i \leq t}\left(\frac{r_i}{f_i}\right)^2 + 1}.
\end{align*}
The condition that $P_{X, TT}^\perp$ is invertible is therefore equivalent to the condition that $f_i \neq 0$ for all $i$. Assuming this is true, we have
\begin{align*}
\begin{split}
P_{X,T^cT}^\perp(P_{X, TT}^\perp)^{-1} &= -F(R^\top R + F^\top F)^{-1} R^\top  \cdot (I_{t\times t} - R(R^\top R + F^\top F)^{-1}R^\top)^{-1} \\
& = \begin{bmatrix} diag\left(-\frac{r_1}{f_1},\cdots, -\frac{r_t}{f_t} \right)_{t\times t} \\ \bm{0}_{(p-t)\times t} \end{bmatrix}.
\end{split}
\end{align*}
The mutual incoherence condition can then be written in terms of the quantity
\begin{align*}
  \opnorm{P_{X,T^cT}^\perp(P_{X, TT}^\perp)^{-1}}_{\infty}  = \max_{1 \leq i \leq t} \left|\frac{r_i}{f_i}\right|  = \max_{1 \leq i \leq t} \left|\frac{r_if_i}{f_i^2}\right|. 
\end{align*}
Note that the mutual incoherence condition also implies that $f_i \neq 0,\ \forall i$, since the mutual incoherence parameter will otherwise go to infinity. \\
The remaining condition is the gamma-min condition. Note that the upper bound on the $\ell_\infty$-error of $\gamma$ consists of two parts:
\begin{align*}
  \| \gammahat - \gammastar \|_\infty \leq \|(P_{X,TT}^\perp)^{-1}(P_{X,T\cdot}^\perp) \epsilon \|_{\infty}
   + n\lambda\opnorm{(P_{X,TT}^\perp)^{-1}}_{\infty}.
\end{align*}
Regarding $P_{X, T\cdot}^\perp $ as two blocks, $\left(P_{X, TT}^\perp,\ P_{X, TT^c}^\perp \right)$, we have
\begin{align*}
 \|(P_{X,TT}^\perp)^{-1}(P_{X,T\cdot}^\perp) \epsilon \|_{\infty} = \left\|\begin{pmatrix} I & (P_{X,TT}^\perp)^{-1}P_{X,TT^c}^\perp\end{pmatrix}\epsilon \right\|_\infty.
 \end{align*}
Altogether, we see that
\begin{align*}
G =  \max_{1 \leq i \leq t}\left|\epsilon_i- \frac{r_i}{f_i}\epsilon_{i+t}\right| + n\lambda\left(\max_{1 \leq i \leq t}\left\{\frac{r_i^2}{f_i^2}\right\} +1\right).
\end{align*}
To summarize, the minimum eigenvalue condition becomes 
\begin{subequations}
\begin{equation}\label{eigenvaluecon_orthogonal_one_pool}
\begin{split}
    \lambda_{\min}(P_{X,TT}^\perp) &= \frac{1}{\max_{1 \leq i \leq t}\left(\frac{r_i}{f_i}\right)^2 + 1} > 0;
\end{split}
\end{equation}
the mutual incoherence condition becomes
\begin{equation}\label{mutualincohcon_orthogonal_one_pool}
    \opnorm{P_{X,T^cT}^\perp(P_{X, TT}^\perp)^{-1}}_{\infty}  = \max_{1 \leq i \leq t} \left|\frac{r_i}{f_i}\right| = \alpha \in [0,1);
\end{equation}
and the gamma-min condition becomes
\begin{equation}\label{gammamincon_orthogonal_one_pool}
    \min_{1 \leq i \leq t}|\gammastar_i| \geq G = \max_{1 \leq i \leq t}|\epsilon_i- \frac{r_i}{f_i}\epsilon_{i+t}| + n\lambda\left(\max_{1 \leq i \leq t}\left\{\frac{r_i^2}{f_i^2}\right\} +1\right).
\end{equation}
\end{subequations}

Similar calculations show that in the two-pool case, the minimum eigenvalue condition becomes
\begin{subequations}
\begin{equation}\label{eigenvaluecon_orthogonal_two_pool}
    \lambda_{\min}(P_{X',TT}^\perp) = \min_{1 \leq i \leq t}{\frac{f_i^2 + \frac{\eta n}{m}w_i^2}{r_i^2 + f_i^2 + \frac{\eta n}{m}w_i^2}} = \frac{1}{\max_{i \in [t]}\frac{r_i^2}{f_i^2+\frac{\eta n}{m}w_i^2} + 1}  > 0;
\end{equation}
the mutual incoherence condition becomes
\begin{equation}\label{mutualincohcon_orthogonal_two_pool}
\begin{split}
    \opnorm{P_{X',T^cT}^\perp(P_{X', TT}^\perp)^{-1}}_{\infty} &= \max_{1 \leq i \leq t} \left|\frac{r_if_i}{f_i^2+\frac{\eta n}{m}w_i^2}\right| = \alpha' \in [0,1);
\end{split}
\end{equation}
and the gamma-min condition becomes
\begin{equation}\label{gammamincon_orthogonal_two_pool}
\min_{1 \leq i \leq t}|\gammastar_i| \geq G',
\end{equation}
where
\begin{equation*}
    G' =\max_{1 \leq i \leq t}\left|\epsilon_i - \frac{r_if_i}{f_i^2 + \frac{\eta n}{m}w_i^2}\epsilon_{i+t} - \frac{\sqrt{\frac{\eta n}{m}}r_iw_i}{f_i^2 + \frac{\eta n}{m}w_i^2}\epsilontil_{i}\right| + n\lambda\left(\max_{1 \leq i \leq t}\left\{\frac{r_i^2}{f_i^2 + \frac{\eta n}{m}w_i^2}\right\} +1\right).
\end{equation*}
\end{subequations}

Here is the proof of Proposition \ref{prop:orthogonaldesign_subset_recovery}.
\begin{proof}[Proof of Proposition \ref{prop:orthogonaldesign_subset_recovery}]
According to Theorem~\ref{subsetmainthm}, the subset support recovery result relies on two conditions: the minimum eigenvalue condition and the mutual incoherence condition. In the orthogonal design case, we will argue that both inequalities~\eqref{eigenvaluecon_orthogonal_one_pool} and~\eqref{eigenvaluecon_orthogonal_two_pool} hold in the one-pool case, and inequlaity~\eqref{subset_two_pool} is sufficient for both inequalities~\eqref{eigenvaluecon_orthogonal_two_pool} and~\eqref{mutualincohcon_orthogonal_two_pool} in the two-pool case.

For the one-pool case, the assumption~\eqref{subset_one_pool} implies that $f_i \neq 0,\ , \forall i \in [t]$. Note that the minimum eigenvalue condition~\eqref{eigenvaluecon_orthogonal_one_pool} is equivalent to $f_i \neq 0,\ , \forall i \in [t]$. Hence, the minimum eigenvalue condition holds. Furthermore, the mutual incoherence condition~\eqref{eigenvaluecon_orthogonal_two_pool} clearly holds. 

For the two-pool case, if $f_i = 0$ for some $i \in [t]$, then plugging into~\eqref{subset_two_pool} implies that $w_i^2 > 0$. Thus, $f_i$ and $w_i$ cannot be zero at the same time, implying that the eigenvalue condition~\eqref{eigenvaluecon_orthogonal_two_pool} holds. Note that inequality~\eqref{subset_two_pool} is equivalent to inequlaity~\eqref{mutualincohcon_orthogonal_two_pool}.

The remaining of the argument concerns the choice of $\lambda$. Note that Theorem~\ref{subsetmainthm} requires $\lambda$ to be lower-bounded for subset recovery (see inequality~\eqref{tdplambdacond1}). Taking the two-pool case as an example, we will show that when inequality~\eqref{lamtwo_orthogonal} holds, inequality~\eqref{tdplambdacond1} holds with high probability. Define
\begin{align*}
\begin{split}
Z_j = \Pbar_{\cdot j}^\top \Big(I - \Pbar_T(\Pbar_T^\top \Pbar_T)^{-1}\Pbar_T^\top \Big)\frac{\epsilon'}{n},\quad j \in T^c.
\end{split}
\end{align*}
Note that $\norm{\Pbar_{\cdot j}^\top \Big(I - \Pbar_T(\Pbar_T^\top \Pbar_T)^{-1}\Pbar_T^\top \Big)} \leq 1$ for all $j \in T^c, $ and $\epsilon' = \begin{pmatrix} \epsilon \\ \sqrt{\frac{\eta n}{m}}\epsilontil \end{pmatrix}$ has i.i.d.\ sub-Gaussian entries with parameter at most $\max\{1,\frac{\eta n}{mL}\}\sigma^2$. Thus, $Z_j$ is sub-Gaussian with parameter at most $\max\{1,\frac{\eta n}{mL}\}\frac{\sigma^2}{n^2}$. By a sub-Gaussian tail bound (cf.\ Lemma~\ref{lem:noniidconcen}), we then have
\begin{align*}
\begin{split}
\mathbb{P}\left(\max_{j \in T^c} |Z_j| \geq \delta_0 \right) \leq 2(n-t)\exp{\left(-\frac{n^2\delta_0^2}{2\max\{1,\frac{\eta n}{mL}\}\sigma^2}\right)}.
\end{split}
\end{align*}

Let $C'$ be a constant such that
\begin{align*}
\begin{split}
2(n-t)\exp{\left(-\frac{n^2\delta_0^2}{2\max\{1,\frac{\eta n}{mL}\}\sigma^2}\right)} = \exp{\left(-\frac{C'^2}{2}\right)},
\end{split}
\end{align*}
and define
\begin{align*}
\begin{split}
\delta_0 := \frac{\sigma}{n} \max\{1, \sqrt{\frac{\eta n}{mL}}\}\sqrt{\log 2(n-t)+C'^2}.
\end{split}
\end{align*}
Note that we want
\begin{align*}
\begin{split}
\frac{2\max_{j \in T^c} |Z_j|}{1 - \alpha'} \leq \lambda,
\end{split}
\end{align*}
which therefore occurs with probability at least $1 - e^{-\frac{C'^2}{2}}$ when 
\begin{align*}
\lambda \geq \frac{2\sigma}{n(1-\alpha')} \max\{1, \sqrt{\frac{\eta n}{mL}}\}\left(\sqrt{\log 2(n-t)}+C'\right) \geq \frac{2\delta_0}{1-\alpha'}.
\end{align*}

The proof for the one-pool case is similar, so we omit the details.
\end{proof}

Here is the proof of Proposition \ref{prop:orthogonaldesign_exact_recovery}.
\begin{proof}[Proof of Proposition \ref{prop:orthogonaldesign_exact_recovery}]
To simplify notation, define
\begin{align*}
\begin{split}
u_i & := \epsilon_i - \frac{r_i}{f_i}\epsilon_{i+t}, \\
v_i & := \epsilon_i - \frac{r_if_i}{f_i^2+\frac{\eta n}{m}w_i^2}\epsilon_{i+t} - \frac{\sqrt{\frac{\eta n}{m}}r_iw_i}{f_i^2+\frac{\eta n}{m}w_i^2}\epsilontil_i.
\end{split}
\end{align*}
Note that $u_i$ is $\sigma_{u_i}$-sub-Gaussian and $v_i$ is $\sigma_{v_i}$-sub-Gaussian, with variance parameters
\begin{align*}
\begin{split}
\sigma_{u_i} = \sqrt{1+\frac{r_i^2}{f_i^2}}\sigma,\quad  \sigma_{v_i} = \sqrt{1+\frac{r_i^2(L^2f_i^2+\frac{\eta n}{m}w_i^2)}{L^2(f_i^2 + \frac{\eta n}{m}w_i^2)^2}}\sigma.
\end{split}
\end{align*}

We now prove two technical lemmas:

\begin{lemma}[Concentration for non-identical sub-Gaussian random variables]\label{lem:noniidconcen}
Suppose $\{u_i\}_{i=1}^t$ are $\sigma_{u_i}$-sub-Gaussian random variables and $\{v_i\}_{i=1}^t$ are $\sigma_{v_i}$-sub-Gaussian random variables. Then the following inequalities hold:
\begin{align}\label{usineq}
P\left(\max_{1 \leq i \leq t}|u_i| > \delta_1\right) \leq 2t \exp\left(-\frac{\delta_1^2}{2\max_{1 \leq i \leq t}\sigma^2_{u_i}}\right),
\end{align}
\begin{align}\label{vsineq}
P\left(\max_{1 \leq i \leq t}|v_i| > \delta_1\right)  \leq 2t \exp\left(-\frac{\delta_1^2}{2\max_{1 \leq i \leq t}\sigma^2_{v_i}}\right).
\end{align}
\end{lemma}

\begin{proof}
Note that
\begin{align*}
\max_{1 \leq i \leq t} |u_i| = \max_{1 \leq i \leq 2t} u_i,
\end{align*}
where $u_{t+i} \coloneqq -u_i$, for $1 \leq i \leq t$. By a union bound, we have
\begin{align*}
\begin{split}
P\left(\max_{1 \leq i \leq t}|u_i| > \delta_1\right) &= P\left(\bigcup_{1 \leq i \leq 2t}\{u_i > \delta_1\}\right) \\
& \leq \sum_{1 \leq i \leq 2t} P\left(u_i \geq \delta_1\right) \\
&  = \sum_{1 \leq i \leq t} P\left(u_i \geq \delta_1\right) + \sum_{1 \leq i \leq t} P\left(u_{t+i} \geq \delta_1\right)\\
&  = \sum_{1 \leq i \leq t} P\left(u_i \geq \delta_1\right) + \sum_{1 \leq i \leq t} P\left(u_i \leq -\delta_1\right).
\end{split}
\end{align*}
For each $u_i$, we have the tail bounds
\begin{align*}
P(u_i > \delta_1) \leq \exp{\left(-\frac{\delta_1^2}{2\sigma_{u_i}^2}\right)},\quad
P(u_i < -\delta_1) \leq \exp{\left(-\frac{\delta_1^2}{2\sigma_{u_i}^2}\right)}.
\end{align*}
Altogether, we see that
\begin{align*}
P\left(\max_{1 \leq i \leq t}|u_i| > \delta_1\right) \leq 2\sum_{1 \leq i \leq t} \exp\left(-\frac{\delta_1^2}{2\sigma^2_{u_i}}\right) \leq 2t \exp\left(-\frac{\delta_1^2}{2\max_{1 \leq i \leq t}\sigma^2_{u_i}}\right).
\end{align*}
Similarly, we may obtain the desired concentration inequality for the $v_i$'s:
\begin{align*}
\begin{split}
P\left(\max_{1 \leq i \leq t}|v_i| > \delta_1\right)  \leq 2t \exp\left(-\frac{\delta_1^2}{2\max_{1 \leq i \leq t}\sigma^2_{v_i}}\right).
\end{split}
\end{align*}
\end{proof}

\begin{lemma}
\label{lem:orthogonaldesign_exact_recovery}
In the one-pool case, under the orthogonal design setting, suppose
\begin{align}\label{neqonepoolexactrec}
\min_{1 \leq i \leq t}|\gammastar_i| > (\sqrt{2}\sqrt{\log t} + c_1) \max_{1 \leq i \leq t}\sigma_{u_i} + n\lambda\left(1+\max_{1 \leq i \leq t}\frac{r_i^2}{f_i^2}\right),
\end{align}
where $\sigma_{u_i} = \sqrt{1+\frac{r_i^2}{f_i^2}}\sigma$. Then the gamma-min condition holds with probability at least $1-2e^{-c_1^2/2}$.

In the two-pool case, suppose
\begin{align}\label{neqtwopoolexactrec}
\min_{1 \leq i \leq t}|\gammastar_i| & > (\sqrt{2}\sqrt{\log{t}}+c_2)\max_{1 \leq i \leq t} \sigma_{v_i} + n\lambda\left(1+\max_{i \in [t]}\frac{r_i^2}{f_i^2+\frac{\eta n}{m}w_i^2}\right),
\end{align}
where $\sigma_{v_i} = \sqrt{1+\frac{r_i^2(L^2f_i^2+\frac{\eta n}{m}w_i^2)}{L^2(f_i^2 + \frac{\eta n}{m}w_i^2)^2}}\sigma$. Then the gamma-min condition holds with probability at least $1-2e^{-c_2^2/2}$.
\end{lemma}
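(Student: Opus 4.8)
The plan is to strip the single random quantity out of the orthogonal-design gamma-min conditions and control it with Lemma~\ref{lem:noniidconcen}, which we have already proved. Recall from Appendix~\ref{orthogonalsettingconditionproof} that, in the orthogonal design, the one-pool gamma-min condition \eqref{gammamincon_orthogonal_one_pool} is the requirement $\min_{1\le i\le t}|\gammastar_i| \ge \max_{1\le i\le t}|u_i| + n\lambda\bigl(1+\max_{1\le i\le t} r_i^2/f_i^2\bigr)$ with $u_i = \epsilon_i - (r_i/f_i)\epsilon_{i+t}$, while the two-pool version \eqref{gammamincon_orthogonal_two_pool} is $\min_{1\le i\le t}|\gammastar_i| \ge \max_{1\le i\le t}|v_i| + n\lambda\bigl(1+\max_{i} r_i^2/(f_i^2 + \tfrac{\eta n}{m}w_i^2)\bigr)$ with $v_i$ the linear combination defined just above the lemma. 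So the proof reduces to giving a high-probability upper bound on $\max_i|u_i|$ (resp.\ $\max_i|v_i|$) and then comparing with the hypothesis \eqref{neqonepoolexactrec} (resp.\ \eqref{neqtwopoolexactrec}).

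First I would note that $u_i$ is $\sigma_{u_i}$-sub-Gaussian and $v_i$ is $\sigma_{v_i}$-sub-Gaussian with the parameters displayed immediately before the lemma statement: each of $u_i, v_i$ is a fixed linear combination of the mutually independent sub-Gaussian noise variables ($\epsilon_i, \epsilon_{i+t}$, plus, in the case of $v_i$, the clean-pool noise $\epsilontil_i$ with parameter $\sigma^2/L$), so its sub-Gaussian parameter is the square root of the weighted sum of the individual parameters; this is the only place where independence of the noise across the two pools is used.

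Next I would apply inequality \eqref{usineq} of Lemma~\ref{lem:noniidconcen} with $\delta_1 = (\sqrt{2}\sqrt{\log t}+c_1)\max_{1\le i\le t}\sigma_{u_i}$. Since $\delta_1^2/(2\max_i\sigma_{u_i}^2) = \tfrac{1}{2}(\sqrt{2\log t}+c_1)^2 = \log t + \sqrt{2\log t}\,c_1 + \tfrac{1}{2}c_1^2$, the factor $t$ in the union bound cancels and $\mprob(\max_i|u_i|>\delta_1) \le 2e^{-\sqrt{2\log t}\,c_1}e^{-c_1^2/2} \le 2e^{-c_1^2/2}$ (using $t\ge1$ and $c_1\ge0$). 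On the complementary event, adding the deterministic term $n\lambda(1+\max_i r_i^2/f_i^2)$ to $\max_i|u_i|\le\delta_1$ and invoking \eqref{neqonepoolexactrec} yields exactly \eqref{gammamincon_orthogonal_one_pool}. The two-pool statement is proved by the identical argument after the substitutions $u_i\mapsto v_i$, $c_1\mapsto c_2$, $\sigma_{u_i}\mapsto\sigma_{v_i}$, using \eqref{vsineq} in place of \eqref{usineq} and hypothesis \eqref{neqtwopoolexactrec} in place of \eqref{neqonepoolexactrec}, giving \eqref{gammamincon_orthogonal_two_pool} with probability at least $1-2e^{-c_2^2/2}$.

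There is no real obstacle here, since the lemma is essentially a repackaging of Lemma~\ref{lem:noniidconcen}. The only items requiring care are the elementary algebra showing that the tail-bound exponent reproduces the advertised probability $2e^{-c_i^2/2}$ --- in particular the cancellation of the $t$ factor, which is precisely why the coefficient $\sqrt{2}\sqrt{\log t}$ appears in the hypotheses --- and being precise that the $\sigma_{u_i}, \sigma_{v_i}$ used to rescale $\delta_1$ are exactly the sub-Gaussian parameters of $u_i, v_i$, so that the union bound over $2t$ events is absorbed cleanly.
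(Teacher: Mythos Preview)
Your proposal is correct and takes essentially the same approach as the paper: isolate the random term $\max_i|u_i|$ (resp.\ $\max_i|v_i|$) in the orthogonal-design gamma-min condition and control it via Lemma~\ref{lem:noniidconcen}. The only cosmetic difference is that the paper sets $\delta_1=\sqrt{2\log t+c_1^2}\,\max_i\sigma_{u_i}$ (so the $t$ cancels exactly) and then invokes $\sqrt{a+b}\le\sqrt a+\sqrt b$, whereas you set $\delta_1=(\sqrt{2\log t}+c_1)\max_i\sigma_{u_i}$ directly and discard the extra $e^{-\sqrt{2\log t}\,c_1}$ factor; both routes yield the stated bound.
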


We use inequality~\eqref{usineq} in Lemma~\ref{lem:noniidconcen}. Let $\delta_1 = \sqrt{2\log t + c_1^2} \max_{1 \leq i \leq t}\sigma_{u_i}$ where $c_1 \in (0,+\infty)$. Then with probability $1 - 2e^{-\frac{c_1^2}{2}}$, the following holds:
\begin{align*}
\max_{1 \leq i \leq t} |u_i| \leq \sqrt{2\log t + c_1^2} \max_{1 \leq i \leq t}\sigma_{u_i} \leq (\sqrt{2\log t} + c_1) \max_{1 \leq i \leq t}\sigma_{u_i}.
\end{align*}
In inequality~\eqref{vsineq}, take $\delta_2 = \sqrt{2\log t + c_2^2} \max_{1 \leq i \leq t}\sigma_{u_i}$ where $c_2 \in (0, +\infty)$. Then with probability $1 - 2e^{-\frac{c_2^2}{2}}$, the following holds:
\begin{align*}
\max_{1 \leq i \leq t} |v_i| \leq \sqrt{2\log t + c_2^2} \max_{1 \leq i \leq t}\sigma_{v_i} \leq (\sqrt{2\log t} + c_2) \max_{1 \leq i \leq t}\sigma_{v_i}.
\end{align*}
Combining these inequalities with conditions~\eqref{gammamincon_orthogonal_one_pool} and~\eqref{gammamincon_orthogonal_two_pool}, we obtain $G \leq \min_{i \in [t]}|\gammastar_i|$ with probability at least $1 - 2e^{-\frac{c_1^2}{2}}$ or at least $1 - 2e^{-\frac{c_2^2}{2}}$. Specifically, when we choose $c_1 = c_2 = 2.72$, we can achieve a probability guarantee of at least $95\%$ for the two statements.

Therefore, Proposition~\ref{prop:orthogonaldesign_exact_recovery} is proved by plugging the results from Lemma~\ref{lem:noniidconcen} into Lemma~\ref{lem:orthogonaldesign_exact_recovery}.
\end{proof}

Here is the proof of Proposition~\ref{RepeatCleanPoints}.
\begin{proof}[Proof of Proposition~\ref{RepeatCleanPoints}]
We will prove the proposition by comparing the three conditions in the two situations: adding one clean point and repeating multiple clean points. The conditions for adding one clean point are already provided in inequalities~\eqref{eigenvaluecon_orthogonal_two_pool}, \eqref{mutualincohcon_orthogonal_two_pool} and \eqref{gammamincon_orthogonal_two_pool} above.

We now provide the conditions for repeating multiple clean points. 
\begin{subequations}
The minimum eigenvalue condition becomes
\begin{equation}\label{e2rep}
    \lambda_{\min}(P_{X',TT}^\perp) = \min_{1 \leq i \leq t}{\frac{f_i^2 + \sum_{j=1}^{l_i}w_{ij}^2}{r_i^2 + f_i^2 + \frac{\eta n}{m}\sum_{j=1}^{l_i}w_{ij}^2}} = \frac{1}{\max_{1 \leq i \leq t}\frac{r_i^2}{f_i^2+\sum_{j=1}^{l_i}w_{ij}^2} + 1};
\end{equation}
the mutual incoherence condition becomes
\begin{equation}\label{mu2rep}
    \opnorm{P_{X',T^cT}^\perp(P_{X', TT}^\perp)^{-1}}_{\infty} = \max_{1 \leq i \leq t} \left|\frac{r_if_i}{f_i^2+\frac{\eta n}{m}\sum_{j=1}^{l_i}w_{ij}^2}\right|;
\end{equation}
and the gamma-min condition becomes
\begin{equation}\label{u2rep}
\begin{split}
    \left\| \gammahat - \gammastar \right\|_\infty & \leq \max_{1 \leq i \leq t}\left|\epsilon_i+ \frac{r_if_i}{f_i^2 +\frac{\eta n}{m} \sum_{j=1}^{l_i}w_{ij}^2}\epsilon_{i+t}  + \sum_{j=1}^{k_i}\frac{r_iw_{ij}}{f_i^2 + \frac{\eta n}{m}\sum_{j=1}^{l_i}w_{ij}^2}\frac{\epsilon_{i+t+p+j}}{L}\right| \\
    & + n\lambda\left(\max_{1 \leq i \leq t}\{\frac{r_i^2}{f_i^2 + \frac{\eta n}{m}\sum_{j=1}^{l_i}w_{ij}^2}\} +1\right).
\end{split}
\end{equation}
\end{subequations}
Compared with inequlaities~\eqref{eigenvaluecon_orthogonal_two_pool}, \eqref{mutualincohcon_orthogonal_two_pool} and \eqref{gammamincon_orthogonal_two_pool}, conditions~\eqref{e2rep},~\eqref{mu2rep} and ~\eqref{u2rep} replace $w_i^2$ by $\sum_{j = 1}^{l_i}w_{ij}^2$. Suppose the scale of the clean data points is bounded by $w_B$. Then adding one data point may not be enough to satisfy the three conditions. Thus, to achieve the same effect of a large scaled $|w_i|$ in inequalities~\eqref{eigenvaluecon_orthogonal_two_pool}, \eqref{mutualincohcon_orthogonal_two_pool} and \eqref{gammamincon_orthogonal_two_pool}, we need the number of repeated clean points to be at least $\left(\frac{|w_i|}{w_B}\right)^2$.
\end{proof}

\subsection{Sub-Gaussian design}\label{app:subgaussian-design}
In this section, we will present the support recovery results for sub-Gaussian design in Proposition~\ref{randomdesign_subset_recovery_prop} and Proposition~\ref{randomdesign_exact_recovery_prop}, and the comparisons of the three conditions in the one- and two-pool cases in Table~\ref{tab:random-comparison}. Later, we will provide the proofs of the propositions.
\subsubsection{Main results for sub-Gausian design}
\begin{proposition}
\label{randomdesign_subset_recovery_prop}
Suppose $\{x_j\}_{j \in T^c}$ and $\{\xtil_i\}_{i \in [m]}$, are i.i.d.\ sub-Gaussian with parameter $\sigma^2_x$ and covariance matrix $\Sigma \succ 0$. Further assume that $\|X_T\|_2 \le B_T$. For the one-pool case, suppose we choose $\lambda$ to satisfy inequality~\eqref{lamone_orthogonal} and the sample size satisfies
\begin{align}\label{random_subset_one_pool_assump_n}
\begin{split}
n & > t+ \max\Big\{ p+ C_1, \frac{4 c_1^2 \sigma_x^4 (p + C_1) \|\Sigma\|_2^2}{\lambda^2_{\min}(\Sigma)}, \\
& \sqrt{t}\left(\sqrt{p\|\Sigma\|_2} + c_2\sigma_2^2(\log n+ \sqrt{p\log n})\right)\left(1+\frac{2c_1\sigma_x^2 \|\Sigma\|_2}{\lambda_{\min}(\Sigma)}\right)\frac{B_T}{\lambda_{\min}(\Sigma)}  \Big\},
\end{split}
\end{align}
then the contaminated pool achieves subset support recovery with probability at least $1 - e^{-\frac{C^2}{2}} - 2e^{-C_1} - n^{-(c_2-1 )}$.

For the two-pool case, assume we choose $\lambda$ to satisfy~\eqref{lamtwo_orthogonal} and the sample sizes satisfy
\begin{align}
\label{random_subset_two_pool_assump_n}
\begin{split}
n &> \max\Big\{ t+m, \frac{t}{1+\eta} + \frac{\sqrt{t}}{1+\eta}\left(\sqrt{p\|\Sigma\|_2} + c_2\sigma_2^2(\log n+ \sqrt{p\log n})\right)\left(1+\frac{2c_1\sigma_x^2 \|\Sigma\|_2}{\lambda_{\min}(\Sigma)}\right)\frac{B_T}{\lambda_{\min}(\Sigma)}  \Big\}
\end{split}
\end{align}
and 
\[
m \geq \max\{1, 4c_1^2 \sigma_x^4\|\Sigma\|_2^2\}(p+C_1').
\]
Then adding clean points achieves subset support recovery with probability at least $1 - e^{-\frac{C'^2}{2}} - 2e^{-C_1'} - n^{-(c_2-1)}$.
\end{proposition}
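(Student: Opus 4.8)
The plan is to derive the proposition from the deterministic support-recovery guarantee of Theorem~\ref{subsetmainthm}: it suffices to show that, with the claimed probabilities and under the stated sample-size conditions, (i) the minimum-eigenvalue condition~\eqref{C1-2} holds, (ii) the mutual-incoherence condition~\eqref{C2-2} holds, and (iii) the regularization parameter chosen via~\eqref{lamone_orthogonal} (resp.~\eqref{lamtwo_orthogonal}) satisfies the lower bound~\eqref{tdplambdacond1}; a union bound over the corresponding failure events then gives the result. I would handle the one-pool case first and indicate the (essentially cosmetic) modifications for two pools.

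For step (iii), I would reuse verbatim the argument of the orthogonal case (cf.\ Appendix~\ref{subsetrecproof}), which never used orthogonality: for each $j \in T^c$, the quantity $Z_j := \Pbar_{\cdot j}^\top\big(I - \Pbar_{\cdot T}(\Pbar_{\cdot T}^\top\Pbar_{\cdot T})^{-1}\Pbar_{\cdot T}^\top\big)\epsilon'/n$ is a linear functional of $\epsilon'$ whose coefficient vector has Euclidean norm at most $1/n$ (because $\Pbar$ is a submatrix of a projector, so $\norm{\Pbar_{\cdot j}} \le 1$, and the middle matrix is an orthogonal projector); since $\epsilon'$ has independent sub-Gaussian coordinates with parameter at most $\max\{1,\tfrac{\eta n}{mL}\}\sigma^2$ (just $\sigma^2$ in the one-pool case), $Z_j$ is sub-Gaussian with parameter at most $\max\{1,\tfrac{\eta n}{mL}\}\sigma^2/n^2$. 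A union bound over the $n-t$ indices of $T^c$, together with the choice~\eqref{lamone_orthogonal} (resp.~\eqref{lamtwo_orthogonal}), then makes $\lambda \ge \frac{2}{1-\alpha}\max_{j \in T^c}|Z_j|$ hold with probability at least $1 - e^{-C^2/2}$ (resp.\ $1-e^{-C'^2/2}$), i.e.\ establishes~\eqref{tdplambdacond1}.

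Steps (i) and (ii) are where the random design is used. I would start from $P^\perp_{X,TT} = I_t - X_T(X^\top X)^{-1}X_T^\top$ with $X^\top X = X_T^\top X_T + X_{T^c}^\top X_{T^c}$ (and $X'^\top X' = X^\top X + \tfrac{\eta n}{m}\Xtil^\top\Xtil$ in the two-pool case), $P^\perp_{X,T^cT} = -X_{T^c}(X^\top X)^{-1}X_T^\top$, and invoke a standard sub-Gaussian sample-covariance bound: for i.i.d.\ rows with parameter $\sigma_x^2$ and covariance $\Sigma$, with probability at least $1-2e^{-C_1}$ one has $\opnorm{\tfrac{1}{n-t}X_{T^c}^\top X_{T^c} - \Sigma} \lesssim \sigma_x^2\opnorm{\Sigma}\sqrt{(p+C_1)/(n-t)}$ once $n-t \gtrsim p+C_1$ (and likewise for $\Xtil$, which contributes the $e^{-C_1'}$ term and the requirement $m \gtrsim p+C_1'$). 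Combined with $\opnorm{X_T} \le B_T$, this yields $\lambda_{\min}(X^\top X) \gtrsim (n-t)\lambda_{\min}(\Sigma)$ (resp.\ $\gtrsim (1+\eta)n\lambda_{\min}(\Sigma)$), hence $\opnorm{X_T(X^\top X)^{-1}X_T^\top} \le B_T^2/\lambda_{\min}(X^\top X)$; the sample-size lower bounds in~\eqref{random_subset_one_pool_assump_n}/\eqref{random_subset_two_pool_assump_n} are exactly what force this to be at most $\tfrac12$, so $\lambda_{\min}(P^\perp_{X,TT}) \ge \tfrac12 > 0$, giving~\eqref{C1-2}. For (ii), the same approximations give $P^\perp_{X,T^cT}(P^\perp_{X,TT})^{-1} \approx -X_{T^c}[(n-t)\Sigma]^{-1}X_T^\top$ (resp.\ with an extra factor $(1+\tfrac{\eta n}{m})^{-1}$), and I would pass to the $\ell_\infty$-operator norm via $\opnorm{A}_\infty \le \sqrt{t}\,\opnorm{A}$ (the matrix has $t$ columns); after the $\sqrt{t}$ blow-up, and after a row-wise ($\ell_\infty$-type) concentration on $X_{T^c}[(n-t)\Sigma]^{-1}X_T^\top$ that accounts for the $n^{-(c_2-1)}$ term, the errors from the approximations are absorbed into the $\sqrt{t}\,(\,\cdot\,)$ summand of~\eqref{random_subset_one_pool_assump_n}/\eqref{random_subset_two_pool_assump_n}, yielding $\alpha<1$ (resp.\ $\alpha'<1$). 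Finally, applying Theorem~\ref{subsetmainthm} on the intersection of the three events and taking a union bound produces the stated probabilities.

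The hard part will be step (ii): the mutual-incoherence parameter is an $\ell_\infty\to\ell_\infty$ quantity, whereas covariance concentration is naturally spectral, and the conversion costs a factor $\sqrt{t}$ --- this is the origin of the nonlinear-in-$t$ sample-size requirements, and since the parameter must end up \emph{strictly} below $1$ the error propagation must be tracked carefully (in particular one wants the row-wise concentration of $X_{T^c}[(n-t)\Sigma]^{-1}X_T^\top$ rather than its cruder spectral bound, whence the $\sqrt{p\log n}$-type terms in the sample-size conditions). A secondary, mostly bookkeeping difficulty is the two-pool case, where the Gram matrix becomes $X^\top X + \tfrac{\eta n}{m}\Xtil^\top\Xtil$: the weight $\tfrac{\eta n}{m}$ must be carried through every estimate, and a separate concentration event for $\Xtil$ is needed, producing the extra condition $m \gtrsim p+C_1'$ and the additional $e^{-C_1'}$ in the probability bound.
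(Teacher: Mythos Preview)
Your proposal is correct and follows essentially the same route as the paper. Two small points worth knowing: the paper establishes the \emph{exact} identity $P^\perp_{X,T^cT}(P^\perp_{X,TT})^{-1} = -X_{T^c}(X_{T^c}^\top X_{T^c})^{-1}X_T^\top$ (a Woodbury computation, equation~\eqref{mutualsimplification1}) rather than an approximation, and only then concentrates $(X_{T^c}^\top X_{T^c})^{-1}$ toward $((n-t)\Sigma)^{-1}$; and the $n^{-(c_2-1)}$ probability arises specifically from a Hanson--Wright/union-bound control of $\max_{j\in T^c}\|x_j\|_2$, which is then fed into the $\ell_\infty$-operator norm via $\max_{j}\|x_j^\top \Theta X_T^\top\|_1 \le \sqrt{t}\,\max_j\|x_j\|_2\,\|\Theta\|_2\,B_T$ --- this is the ``row-wise'' step you anticipated, and it is what produces the $\sqrt{p\|\Sigma\|_2}+c_2\sigma_x^2(\log n+\sqrt{p\log n})$ factor. (For the eigenvalue condition the paper only shows $b_{\min}>0$ once $X_{T^c}^\top X_{T^c}$ is invertible, not $\ge 1/2$; the sample-size conditions as written do not force the stronger conclusion, but it is not needed.)
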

As seen in Proposition~\ref{randomdesign_subset_recovery_prop}, the number of data points $n$ may be reduced by $1+\eta$ with the introduction of a second data pool.
 Note that when $T$ is randomly chosen from $[n]$, we have $B_T = O(\sqrt{t}\|\Sigma\|_2)$, so inequalities~\eqref{random_subset_one_pool_assump_n} and ~\eqref{random_subset_two_pool_assump_n} require $\frac{t}{n}$ to be upper-bounded, and adding a second pool may weaker the upper bound to be $(1+\eta)$ than the upper bound for one-pool case.

We now present a result concerning exact support recovery:

\begin{proposition}
\label{randomdesign_exact_recovery_prop}
In the one-pool case, suppose inequality~\eqref{random_subset_one_pool_assump_n} holds. If
\begin{equation}\label{g3-random}
\min_{i \in T} |\gammastar_i| \geq  \frac{1}{b_{\min}}\left(2\sigma \sqrt{\log t+c}  + \frac{2\sigma\sqrt{t}}{(1-\alpha)} \left(\sqrt{\log 2(n-t)}+C\right) \right),
\end{equation}
then there exists a $\lambda$ to achieve exact recovery with probability at least $1 - 2e^{-c} - e^{-\frac{C^2}{2}} - 2e^{-C_1} - n^{-C_2}$.

For the two-pool case, suppose the assumptions in Proposition~\ref{randomdesign_subset_recovery_prop} hold, and
\begin{equation}\label{g4-random}
\min_{i \in T} |\gammastar_i| \geq \frac{1}{b'_{\min}}\left(2\sigma \sqrt{\log t+c}  + \frac{2\sigma\sqrt{t}}{(1-\alpha')} \max\{1, \sqrt{\frac{\eta n}{mL}}\}\left(\sqrt{\log 2(n-t)}+C'\right) \right).
\end{equation}
Then there exists a $\lambda$ to achieve exact recovery with probability at least $1 - 2e^{-c} - e^{\frac{-C'^2}{2}} - 2e^{-C_1'} - n^{-C_2}$.
\end{proposition}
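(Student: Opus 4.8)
The plan is to obtain Proposition~\ref{randomdesign_exact_recovery_prop} by upgrading the subset-recovery guarantee of Proposition~\ref{randomdesign_subset_recovery_prop} to exact recovery via Theorem~\ref{eaxtsetmainthm}, so the only genuinely new work is to produce a high-probability upper bound on the quantity $G'$ (resp.\ $G$) appearing in the gamma-min condition~\eqref{C3-2}. First I would invoke Proposition~\ref{randomdesign_subset_recovery_prop}: under its sample-size hypotheses and the choice of $\lambda$ in~\eqref{lamtwo_orthogonal}, there is an event of probability at least $1 - e^{-C'^2/2} - 2e^{-C_1'} - n^{-(c_2-1)}$ on which the minimum-eigenvalue condition~\eqref{C1-2}, the mutual-incoherence condition~\eqref{C2-2}, and the lower bound~\eqref{tdplambdacond1} on $\lambda$ all hold simultaneously; on this event Theorem~\ref{subsetmainthm} already supplies a unique $\gammahat$ with $\supp(\gammahat)\subseteq\supp(\gammastar)$. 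By Theorem~\ref{eaxtsetmainthm}, exact recovery follows as soon as $\min_{i\in T}|\gammastar_i| > G'$, so it remains only to show that the right-hand side of~\eqref{g4-random} dominates $G' = \|(P_{X',TT}^\perp)^{-1}(P_{X'}^\perp\epsilon')_T\|_\infty + n\lambda\,\opnorm{(P_{X',TT}^\perp)^{-1}}_\infty$ with high probability (recall $\epsilon'\condind X'$ in the random design setting).

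Second, I would bound the two summands of $G'$ separately, working conditionally on the good design event above (on which $\opnorm{(P_{X',TT}^\perp)^{-1}}_2 \le 1/b'_{\min}$ by Condition~\ref{C1}). For the deterministic factor, use $\opnorm{(P_{X',TT}^\perp)^{-1}}_\infty \le \sqrt{t}\,\opnorm{(P_{X',TT}^\perp)^{-1}}_2 \le \sqrt{t}/b'_{\min}$; substituting the value of $\lambda$ from~\eqref{lamtwo_orthogonal} yields exactly the second term $\frac{2\sigma\sqrt{t}}{b'_{\min}(1-\alpha')}\max\{1,\sqrt{\eta n/(mL)}\}\big(\sqrt{\log 2(n-t)}+C'\big)$ of~\eqref{g4-random}. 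For the stochastic term, recall $(P_{X'}^\perp\epsilon')_T = \Pbar_T^\top\epsilon'$ and $\Pbar_T^\top\Pbar_T = P_{X',TT}^\perp$, so each coordinate $i\in T$ of $(P_{X',TT}^\perp)^{-1}\Pbar_T^\top\epsilon'$ is a fixed (conditionally on $X'$) linear functional of the independent sub-Gaussian vector $\epsilon'=(\epsilon,\sqrt{\eta n/m}\,\epsilontil)$, hence sub-Gaussian with parameter at most the $i$-th diagonal entry of $(P_{X',TT}^\perp)^{-1}\Pbar_T^\top D\,\Pbar_T(P_{X',TT}^\perp)^{-1}$, where $D=\diag(\sigma^2 I_n,\tfrac{\eta n\sigma^2}{mL}I_m)$. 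Splitting $\Pbar_T$ into its first-$n$ and last-$m$ row blocks $\Pbar_T^{(1)},\Pbar_T^{(2)}$ and using $\Pbar_T^{(1)\top}\Pbar_T^{(1)}+\Pbar_T^{(2)\top}\Pbar_T^{(2)}=\Pbar_T^\top\Pbar_T=P_{X',TT}^\perp$, we get (for $\eta\le mL/n$, exactly as assumed in the orthogonal case, cf.\ Proposition~\ref{prop:orthogonaldesign_exact_recovery}) $\Pbar_T^\top D\,\Pbar_T\preceq\sigma^2 P_{X',TT}^\perp$, whence each coordinate has sub-Gaussian parameter at most $\sigma^2/b'_{\min}$; a maximal inequality over the $t$ coordinates (exactly as in Lemma~\ref{lem:noniidconcen}) then gives $\|(P_{X',TT}^\perp)^{-1}(P_{X'}^\perp\epsilon')_T\|_\infty \le \frac{2\sigma\sqrt{\log t + c}}{b'_{\min}}$ with conditional probability at least $1-2e^{-c}$ (using $b'_{\min}\le 1$ to absorb constants), matching the first term of~\eqref{g4-random}.

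Finally, I would take a union bound over the failure event of Proposition~\ref{randomdesign_subset_recovery_prop} and the $2e^{-c}$ concentration event to obtain the stated probability $1-2e^{-c}-e^{-C'^2/2}-2e^{-C_1'}-n^{-C_2}$; the one-pool bound~\eqref{g3-random} follows from the identical argument with all primes deleted, using $\epsilon$ in place of $\epsilon'$ (so $D=\sigma^2 I_n$ exactly and the variance-proxy matrix collapses to $\sigma^2(P_{X,TT}^\perp)^{-1}$ with no $\max\{1,\sqrt{\eta n/(mL)}\}$ factor), Condition~\ref{C1} for $P_{X,TT}^\perp$, and the $\lambda$ from~\eqref{lamone_orthogonal}. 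The main obstacle I anticipate is the stochastic term: obtaining the clean prefactor $\sigma$ rather than the inflated $\max\{1,\sqrt{\eta n/(mL)}\}\,\sigma$ in the first summand of~\eqref{g4-random} hinges on the covariance-domination step $\Pbar_T^\top D\,\Pbar_T\preceq\sigma^2 P_{X',TT}^\perp$, and care is needed because $(P_{X',TT}^\perp)^{-1}$ is itself a random matrix, which forces the concentration to be carried out conditionally on the high-probability design event supplied by Proposition~\ref{randomdesign_subset_recovery_prop}.
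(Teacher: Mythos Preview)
Your proposal is correct and follows essentially the same route as the paper: invoke Proposition~\ref{randomdesign_subset_recovery_prop} for Conditions~\ref{C1}--\ref{C2} and the lower bound on $\lambda$, then bound the two summands of $G'$ separately via $\opnorm{(P_{X',TT}^\perp)^{-1}}_\infty\le\sqrt{t}/b'_{\min}$ and a sub-Gaussian maximal inequality for the stochastic term, and finish with a union bound. The only difference is in how the sub-Gaussian parameter of $Z_i=e_i^\top(P_{X',TT}^\perp)^{-1}P_{X',T\cdot}^\perp\epsilon'$ is controlled: the paper simply bounds $\opnorm{(P_{X',TT}^\perp)^{-1}P_{X',T\cdot}^\perp P_{X'}^\perp}_2\le 1/b'_{\min}$ and asserts $Z_i$ is sub-Gaussian with parameter $\sigma/b'_{\min}$, without explicitly addressing the heterogeneous variances in $\epsilon'$; your covariance-domination step $\Pbar_T^\top D\Pbar_T\preceq\sigma^2 P_{X',TT}^\perp$ (under $\eta\le mL/n$) makes this justification explicit and in fact yields the slightly sharper variance proxy $\sigma^2/b'_{\min}$, which you then relax using $b'_{\min}\le 1$ to match the stated bound.
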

Compared to Proposition~\ref{randomdesign_subset_recovery_prop}, Proposition~\ref{randomdesign_exact_recovery_prop} additionally requires the ``signal-to-noise" ratio to be large enough. We can show that  $b_{\min} \leq b'_{\min}$; thus, for an appropriate choice of $\eta$, the lower bound~\eqref{g3-random} is smaller than the bound~\eqref{g4-random}, so the gamma-min condition is improved.

We now briefly compare the three conditions for the one- and two-pool cases in the random design setting.

\begin{table}[htp]
\caption{Comparison between the two cases}
\begin{center}
\begin{tabular}{ cccc } 
\toprule
Condition & One-pool case & Two-pool case\\[1ex]
\midrule
Eigenvalue & $\lambda_{\min}\left(P^\perp_{X, TT}\right) = b_{\min}$ & $\lambda_{\min}\left(P^\perp_{X', TT}\right) = b'_{\min} \geq b_{\min}$ \\[1.5ex]
\midrule
Mutual incoherence & $\|-X_{T^c}((n-t)\Sigma)^{-1}X_T^\top\|_\infty$  & $\frac{\|-X_{T^c}((n-t)\Sigma)^{-1}X_T^\top\|_\infty}{1+\eta \frac{n}{n-t}}$ \\
[1.5ex]
\midrule
Gamma-min & $\min_{i}|\gamma_i^*| \geq \frac{2 \sigma \sqrt{\log t} + n\lambda\sqrt{t}}{b_{\min}}$ &$\min_{i}|\gamma_i^*| \geq \frac{2 \sigma \sqrt{\log t} + n\lambda\sqrt{t}}{b_{\min}'}$ \\[1.5ex]
\bottomrule
\end{tabular}
\label{tab:random-comparison}
\end{center}
\end{table}
In general, the eigenvalue condition is improved by adding a second pool. The mutual incoherence condition is improved in the two-pool case with large $m$ by a constant multiplier $\frac{1}{1+ \eta \frac{n}{m}}\ (\leq 1)$, and the gamma-min condition lower bound is improved by a constant $\frac{b_{\min}}{b_{\min}'}\ (\leq 1)$.

For the \textbf{eigenvalue condition}, the key result is that adding clean data points will not hurt, i.e., it makes the minimum eigenvalue smaller. A formal statement is provided in Proposition~\ref{EigIncrease}. Recall that
\begin{align*}
P_{X',TT}^\perp & = I - X'_T (X'^\top X')^{-1} X'_T, \\
P_{X,TT}^\perp & = I - X_T (X^\top X)^{-1} X_T^\top,
\end{align*}
where $X' = \left(\begin{array}{c} X \\ \sqrt{\frac{\eta n}{m}} \Xtil \end{array}\right)$, and we assume that $X^\top X$ is invertible.

\begin{proposition}[Comparison of minimum eigenvalue conditions]
\label{EigIncrease}
We have
\begin{equation*}
\lambda_{\min}(P_{X',TT}^\perp) \geq  \lambda_{\min}(P_{X,TT}^\perp).
\end{equation*}
\end{proposition}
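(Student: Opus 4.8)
The plan is to show that adding rows to the design matrix can only increase each entry of the quadratic form $v^\top P_{X,TT}^\perp v$ for every $v \in \real^t$, and then conclude via the variational characterization of the minimum eigenvalue. The key algebraic fact is that the residual projection $P_A^\perp y = \min_{\beta} \|y - A\beta\|_2^2$ is a minimum of a sum of squares, and appending rows to $A$ (here, going from $X$ to $X'$) simply adds nonnegative terms to that sum. Concretely, for any $u \in \real^n$ supported on $T$ (i.e., $u = [v; \vec 0]$ with $v\in\real^t$ placed on the coordinates of $T$), I would write
\begin{align*}
v^\top P_{X,TT}^\perp v &= u^\top P_X^\perp u = \min_{\beta \in \real^p} \|u - X\beta\|_2^2,
\end{align*}
and similarly, letting $u' = [v; \vec 0_m] \in \real^{n+m}$ be the zero-padded version relative to the stacked matrix $X'$,
\begin{align*}
v^\top P_{X',TT}^\perp v &= (u')^\top P_{X'}^\perp u' = \min_{\beta \in \real^p} \left\|u' - X'\beta\right\|_2^2 = \min_{\beta \in \real^p}\left\{ \|u - X\beta\|_2^2 + \tfrac{\eta n}{m}\|\Xtil \beta\|_2^2\right\}.
\end{align*}
Here I use that the last $m$ coordinates of $u'$ are zero and that the last $m$ rows of $X'$ are $\sqrt{\eta n/m}\,\Xtil$. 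Since the extra term $\tfrac{\eta n}{m}\|\Xtil\beta\|_2^2 \ge 0$ for every $\beta$, the second minimum is at least the first, giving $v^\top P_{X',TT}^\perp v \ge v^\top P_{X,TT}^\perp v$ for all $v \in \real^t$.

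From here the conclusion is immediate: by the Courant–Fischer (Rayleigh quotient) characterization,
\begin{align*}
\lambda_{\min}(P_{X',TT}^\perp) = \min_{\|v\|_2 = 1} v^\top P_{X',TT}^\perp v \ge \min_{\|v\|_2=1} v^\top P_{X,TT}^\perp v = \lambda_{\min}(P_{X,TT}^\perp),
\end{align*}
which is exactly the claim. One should note that both matrices are symmetric positive semidefinite (they are principal submatrices of orthogonal projection matrices), so the variational formula applies without issue, and the hypothesis that $X^\top X$ is invertible ensures $P_X^\perp$ is well-defined.

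I do not anticipate a serious obstacle here; the only point requiring a little care is bookkeeping the index sets correctly — making sure that restricting to rows/columns indexed by $T$ corresponds exactly to testing the projection against vectors supported on $T$, and that the zero-padding to dimension $n$ (for $X$) versus $n+m$ (for $X'$) is done consistently so that the quadratic forms line up. Once that identification is in place, the inequality between the two minima of sum-of-squares objectives is the whole content of the argument. An alternative, essentially equivalent route would be to invoke the known fact that $P_{X'}^\perp \preceq P_X^\perp$ as operators on $\real^{n+m}$ after suitable embedding — but the direct sum-of-squares comparison above is cleaner and self-contained.
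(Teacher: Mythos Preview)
Your proof is correct, and it takes a genuinely different route from the paper's. The paper's argument proceeds via the Sherman--Morrison--Woodbury identity: it writes
\[
X_T\bigl(X^\top X + \tfrac{\eta n}{m}\Xtil^\top \Xtil\bigr)^{-1}X_T^\top = X_T(X^\top X)^{-1}X_T^\top - A,
\]
where $A$ is an explicit PSD matrix (their Lemma~\ref{RestPSD}), deduces $\lambda_{\max}\bigl(X_T(X'^\top X')^{-1}X_T^\top\bigr) \le \lambda_{\max}\bigl(X_T(X^\top X)^{-1}X_T^\top\bigr)$, and then invokes a second lemma (Lemma~\ref{EigOneMinus}) relating these maxima to $\lambda_{\min}(P_{\cdot,TT}^\perp)$. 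By contrast, your argument is a direct variational comparison: you identify $v^\top P_{X,TT}^\perp v$ with the least-squares residual $\min_\beta\|u-X\beta\|_2^2$ and observe that stacking $\Xtil$ only adds a nonnegative penalty to the objective. This yields the full Loewner ordering $P_{X',TT}^\perp \succeq P_{X,TT}^\perp$ in one stroke, and the eigenvalue inequality follows immediately from Courant--Fischer. Your approach is shorter and more conceptual; the paper's has the compensating benefit of making the gap explicit (they remark that the improvement is at least $\tfrac{\eta n}{m}\lambda_{\min}(A)$), which is useful when one wants to quantify how much the second pool helps rather than merely assert that it does not hurt.
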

Note that the result of Proposition~\ref{EigIncrease} does not require any assumptions on $\Xtil$ or $\eta$.
However, the degree of improvement depends on $\eta$, as seen in the proof. Usually when $n$ is small, increasing $\eta$ leads to a big jump of the minimum eigenvalue; when $n$ is large, increasing $\eta$ does not change the minimum eigenvalue much. A typical relationship between $\eta$ and $\lambda_{\min}\left(P^\perp_{X', TT}\right)$ can be seen in Figure \ref{fig:etabetamin}.
\begin{figure}[htp!]
  \centering
  \includegraphics[width=.4\columnwidth]{./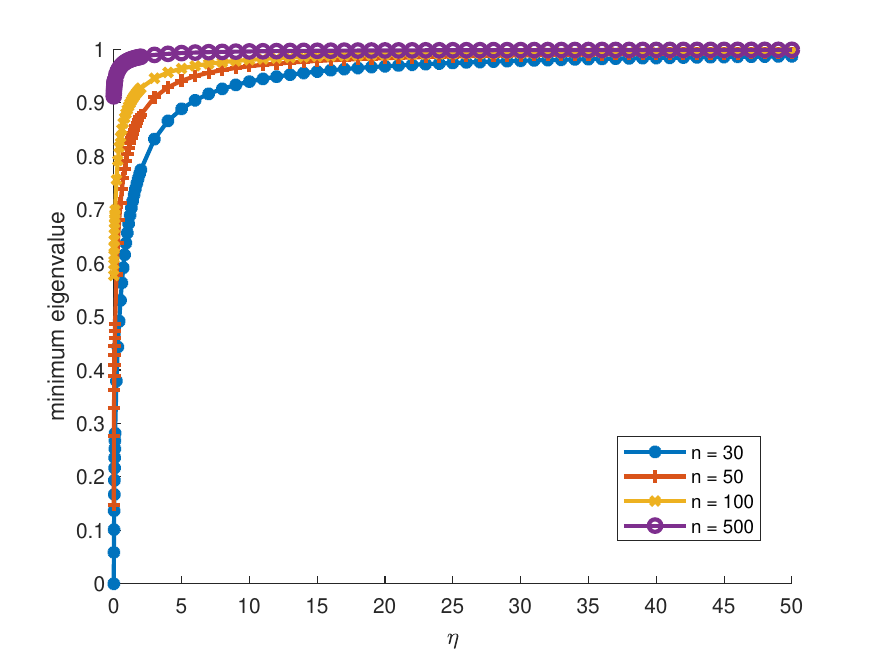}
  \caption{How does $\eta$ influence the minimum eigenvalue condition? The $x$-axis is the weight parameter $\eta$ and the $y$-axis is $\lambda_{\min}(P_{X',TT}^\perp)$. We take $t = 15, p = 20$, and $m = 5$, and vary $n$ from 30 to 500. Both pools are drawn randomly from $\mathcal{N}(\bm{0}, \rm{I}_p)$.}
  \label{fig:etabetamin}
\end{figure}

For \textbf{mutual incoherence} condition, it is possible to find settings for small $m$ that make the mutual incoherence condition worse. Consider the following example:

\begin{example}[Example where the mutual incoherence condition worsens]\label{counterexamplemutual}
Suppose
\begin{equation*}
X_T = \begin{bmatrix} -1.8271 &  -1.6954 & -1.1000 \\
0.3020 &  -1.4817 &  -0.2284 
\end{bmatrix},
\end{equation*}
\begin{equation*}
X_{T^c} = \begin{bmatrix}
-1.7680 & -0.0863 & 1.6822 \\
-0.5750 &  -1.1013 &  0.4749 \\
-0.6693 &  -0.6413 &  0.6126 \\
-0.3271 &  0.3060 & -1.0068 \\
0.6177 & 0.3941 & -2.6407 \\
-0.7001 &  2.3465 &  0.4309 \\
\end{bmatrix},
\end{equation*}
\begin{equation*}
\Xtil = \begin{bmatrix}
-1.8722 & 0.5154 & 0.1560 \\
-0.9036 & 0.6064 & -0.2540 
\end{bmatrix}.
\end{equation*}
Then
\begin{align*}
\| P_{X, T^cT}^\perp (P_{X, TT}^\perp)^{-1} \|_\infty & = 0.96 < 1 < \| P_{X', T^cT}^\perp (P_{X', TT}^\perp)^{-1} \|_\infty = 1.28.
\end{align*}
\end{example}

Despite this negative example, we can show that including a second pool helps when $m$ is large compared to $p$.
Recalling the assumption that $X_{T^c}^\top X_{T^c}$ is invertible, we can write
\begin{equation}\label{mutualsimplification1}
\begin{split}
    & P_{X,T^cT}^\perp(P_{X,TT}^\perp)^{-1}
    = - X_{T^c}\left(X_T^\top X_T + X_{T^c}^\top X_{T^c} \right)^{-1} X_T^\top \left( I - X_T \left(X_T^\top X_T + X_{T^c}^\top X_{T^c}\right)^{-1}X_T^\top \right)^{-1} \\
    & = - X_{T^c}\left(X_T^\top X_T + X_{T^c}^\top X_{T^c} \right)^{-1} X_T^\top \left( I + X_T\left(  X_{T^c}^\top X_{T^c} \right)^{-1} X_T^\top\right) \\
    & = - X_{T^c} \left(X_{T^c}^\top X_{T^c} \right)^{-1}\left(X_T^\top X_T(X_{T^c}^\top X_{T^c})^{-1} + I \right)^{-1} \left( I + X_T^\top X_T \left(  X_{T^c}^\top X_{T^c} \right)^{-1}\right)X_T^\top \\ 
    & = -X_{T^c}(X_{T^c}^\top X_{T^c})^{-1}X_T^\top.
\end{split}
\end{equation}
The first equality uses the definitions of $P_{X, T^cT}^\perp$ and $P_{X,TT}^\perp$, the second equality uses the Woodbury matrix identity~\cite{henderson1981deriving}, and the third equality follows from simple linear algebraic manipulations.

Similarly, we can simplify the mutual incoherence condition for the two-pool case, by replacing $X_{T^c}^\top X_{T^c}$ with $X_{T^c}^\top X_{T^c} + \eta \frac{n}{m} \Xtil^\top \Xtil$ in the inverse:
\begin{align}\label{mutualsimplification2}
P_{X',T^cT}^\perp(P_{X',TT}^\perp)^{-1} = -X_{T^c}\left(X_{T^c}^\top X_{T^c} + \eta \frac{n}{m} \Xtil^\top \Xtil\right)^{-1}X_T^\top,
\end{align}
where we know that $X_{T^c}^\top X_{T^c} + \eta \frac{n}{m} \Xtil^\top \Xtil$ must be invertible since $X_{T^c}^\top X_{T^c}$ is invertible.

Given these simplifications, it is easy to see that the difference between these two terms lies in the middle inverses. When $m$ is large, we have $(X_{T^c}^\top X_{T^c})^{-1} \approx ((n-t)\Sigma)^{-1}$ and $\left(X_{T^c}^\top X_{T^c} + \eta \frac{n}{m} \Xtil^\top \Xtil\right)^{-1} \approx \left(\left(n-t+\eta n\right)\Sigma\right)^{-1}$, where $\Sigma$ is the covariance matrix for the common distribution of $X_{T^c}$ and $\Xtil$. Therefore, the mutual incoherence parameter in the one-pool case is approximately equal to the mutual incoherence in the two-pool case scaled by $\left(1 + \eta\frac{n}{n-t}\right)^{-1}$, which immediately implies that adding a second data pool improves the mutual incoherence condition. This is stated formally in the following proposition:

\begin{proposition}[Comparison of mutual incoherence conditions]
\label{propasymmutual}
Let $B_T = O(\sqrt{t})$. In the one-pool case, if $n \geq t + \frac{c_1^2\sigma_x^4 (p+C_1)\|\Sigma\|^2}{\lambda_{\min}^2(\Sigma)}$, then
\begin{align*}
\left|\opnorm{X_{T^c}\frac{\Theta}{n-t} X_T^\top}_\infty - \opnorm{X_{T^c}\left(X_{T^c}^\top X_{T^c}\right)^{-1}X_T^\top}_\infty\right| 
 = O\left(t(n-t)^{-1}(\sqrt{p} + \sqrt{\log n})\right),
\end{align*}
with high probability.

In the two-pool case, if $n \geq t + \max\Big\{\frac{c_1^2\sigma_x^4 \|\Sigma\|^2}{\lambda_{\min}^2(\Sigma)},1\Big\}m$ and $m \geq \max\{1,c_1^2 \sigma^4_x (p+C_1')\|\Sigma\|_2^2\}$, then
\begin{align*}
\left|\opnorm{X_{T^c}\frac{\Theta}{n-t+\eta n} X_T^\top}_\infty - \opnorm{X_{T^c}\left(X_{T^c}^\top X_{T^c}+ \frac{\eta n}{m} \Xtil^\top \Xtil\right)^{-1}X_T^\top }_\infty\right| 
= O\left(t(n-t+\eta n)^{-1}(\sqrt{p} + \sqrt{\log n})\right),
\end{align*}
with high probability.
\end{proposition}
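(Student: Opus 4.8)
The plan is to reduce both claims to a single perturbation bound for an inverse sample covariance matrix, which is then controlled by standard sub-Gaussian concentration. I will describe the one-pool case; the two-pool case is identical after replacing $X_{T^c}^\top X_{T^c}$ by $X_{T^c}^\top X_{T^c}+\tfrac{\eta n}{m}\Xtil^\top\Xtil$ and $(n-t)\Sigma$ by $(n-t+\eta n)\Sigma$. First, by the reverse triangle inequality for the $\ell_\infty$-operator norm,
\[
\Big|\,\|X_{T^c}\tfrac{\Theta}{n-t}X_T^\top\|_\infty - \|X_{T^c}(X_{T^c}^\top X_{T^c})^{-1}X_T^\top\|_\infty\,\Big| \;\le\; \|X_{T^c}\Delta X_T^\top\|_\infty, \qquad \Delta := \tfrac{\Theta}{n-t} - (X_{T^c}^\top X_{T^c})^{-1},
\]
so it suffices to bound $\|X_{T^c}\Delta X_T^\top\|_\infty$. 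Applying the resolvent identity $A^{-1}-B^{-1}=B^{-1}(A-B)A^{-1}$ with $A=X_{T^c}^\top X_{T^c}$ and $B=(n-t)\Sigma$ gives $\Delta = \tfrac{1}{n-t}\,\Theta\big(X_{T^c}^\top X_{T^c} - (n-t)\Sigma\big)(X_{T^c}^\top X_{T^c})^{-1}$, where $\Theta=\Sigma^{-1}$.

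Second, I invoke two standard concentration facts for the i.i.d.\ sub-Gaussian rows $\{x_j\}_{j\in T^c}$. (i) A covering-number / matrix-Bernstein bound gives $\big\|\tfrac{1}{n-t}X_{T^c}^\top X_{T^c} - \Sigma\big\|_2 \lesssim \sigma_x^2\|\Sigma\|_2\sqrt{p/(n-t)}$ with high probability once $n-t\gtrsim p$; the stated sample-size condition is exactly what forces this deviation to be at most $\lambda_{\min}(\Sigma)/2$, which yields both $\|(X_{T^c}^\top X_{T^c})^{-1}\|_2 \le \tfrac{2}{(n-t)\lambda_{\min}(\Sigma)}$ and $\|X_{T^c}^\top X_{T^c} - (n-t)\Sigma\|_2 \lesssim \|\Sigma\|_2\sqrt{p(n-t)}$. (ii) A sub-Gaussian norm bound together with a union bound over the at most $n$ rows gives $\max_{j\in T^c}\|x_j\|_2 \lesssim \sigma_x(\sqrt{p}+\sqrt{\log n})$ with high probability. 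Combining (i) with $\|\Theta\|_2 = 1/\lambda_{\min}(\Sigma)$ produces $\|\Delta\|_2 \lesssim \tfrac{\|\Sigma\|_2\sqrt{p}}{(n-t)^{3/2}\lambda_{\min}^2(\Sigma)}$.

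Third, I convert the spectral bound on $\Delta$ into the $\ell_\infty$-operator bound. Writing $x_j^\top$ for the $j$-th row of $X_{T^c}$, we have $\|X_{T^c}\Delta X_T^\top\|_\infty = \max_{j\in T^c}\|x_j^\top\Delta X_T^\top\|_1 \le \sqrt{t}\,\max_{j\in T^c}\|x_j\|_2\,\|\Delta\|_2\,\|X_T\|_2$; plugging in $\|X_T\|_2=B_T=O(\sqrt t)$, the row-norm bound from (ii), and the bound on $\|\Delta\|_2$ gives, up to constants depending on $\sigma_x$ and $\Sigma$, a bound of order $t(\sqrt p+\sqrt{\log n})\sqrt p\,(n-t)^{-3/2}$, which is $O\big(t(n-t)^{-1}(\sqrt p+\sqrt{\log n})\big)$ since the sample-size hypothesis forces $p\lesssim n-t$. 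For the two-pool case the only new ingredient is that $\tfrac{1}{m}\Xtil^\top\Xtil$ must also concentrate around $\Sigma$, which requires $m\gtrsim p$ — precisely the extra hypothesis — so that $X_{T^c}^\top X_{T^c}+\tfrac{\eta n}{m}\Xtil^\top\Xtil$ concentrates around $(n-t+\eta n)\Sigma$ at the analogous rate, after which the same three steps apply verbatim with $n-t$ replaced by $n-t+\eta n$.

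I expect the main obstacle to be the bookkeeping around the $\ell_\infty$-operator norm rather than the spectral norm: because $X_{T^c}$ has $\Theta(n)$ rows, one cannot pass through the spectral norm freely without either a union bound over rows (which is where the $\sqrt{\log n}$ term originates) or an unwanted loss of $\sqrt{n-t}$ factors, and matching the precise sample-size thresholds appearing in the statement with the constants emerging from the sub-Gaussian covariance concentration inequality requires some care. Everything else is routine linear algebra and standard tail bounds.
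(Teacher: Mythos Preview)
Your proposal is correct and follows essentially the same route as the paper: reverse triangle inequality, then the row-wise bound $\|X_{T^c}\Delta X_T^\top\|_\infty \le \sqrt{t}\,\max_{j\in T^c}\|x_j\|_2\,\|\Delta\|_2\,\|X_T\|_2$, with $\|\Delta\|_2$ controlled via the inverse-perturbation inequality $\|A^{-1}-B^{-1}\|_2\le \|A-B\|_2/(\lambda_{\min}(A)\lambda_{\min}(B))$ and sub-Gaussian covariance concentration, and $\max_j\|x_j\|_2$ controlled by Hanson--Wright plus a union bound over rows. The paper simply invokes these same estimates, already derived in the proof of its Proposition~\ref{randomdesign_subset_recovery_prop}, and then absorbs the extra $\sqrt{p/(n-t)}$ factor using the sample-size hypothesis, exactly as you do.
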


Proposition~\ref{propasymmutual} states that when $m$ and $n$ are sufficiently large, the one-pool mutual incoherence parameter is close to $\frac{\opnorm{X_{T^c}\Theta X_T^\top}_\infty}{n-t}$ and the two-pool mutual incoherence parameter is close to $\frac{\opnorm{X_{T^c}\Theta X_T^\top}_\infty}{n-t+\eta n}$. Since the second expression has a larger denominator, the mutual incoherence condition improves with the introduction of a second data pool with parameter $\eta > 0$.

For \textbf{gamma-min} condition, we need to compare the terms $G$ and $G'$. Note that inequalities~\eqref{g3-random} and~\eqref{g4-random} are equivalent to lower-bounding the ``signal-to-noise" ratio. The order of the lower bound for two-pool case is as same as the one-pool case, i.e., $\frac{\min_{i}|\gammastar_i|}{\sigma} \geq O(\sqrt{t \log{n}})$. However, adding a second pool improves the constant by having a factor of $\frac{1}{b_{\min}'}$ instead of $\frac{1}{b_{\min}}$. As established in Proposition~\ref{EigIncrease}, we have $b_{\min} \leq b_{\min}'$. Therefore, the lower bound in the two-pool case is smaller than the lower bound in the one-pool case.

Note that the \textbf{weight parameter $\eta$} shows up in all the three conditions.
However, recall that the mutual incoherence condition is not always improved by adding a second pool, unless $m$ is sufficiently large. Therefore, an appropriate conclusion is that once we have a large clean data pool, it is reasonable to place arbitrarily large weight on the second pool. On the other hand, if we have fewer clean data points, we cannot be as confident about the estimator obtained using the second pool alone. For example, in the orthogonal design, if we obtain clean points in the non-buggy subspace, the mutual incoherence condition is not improved no matter how large we make $\eta$. In addition, the gamma-min condition involves the randomness from noise, and in order to control the sparsity of $\gamma$, we need the regularizer $\lambda$ to match large $\eta$ (cf.\ inequality~\eqref{lamtwo_orthogonal}). Based on inequality~\eqref{g4-random}, we need the ``signal-to-noise" ratio, i.e., $\frac{n\lambda \sqrt{t}}{\sigma}$, to be sufficient large. If $\eta$ is too large, we cannot estimate relatively small components of $\gamma^*$. In summary, selecting $\eta$ too large or too small is not wise: If $\eta$ is too small, we do not improve the three conditions, whereas if $\eta$ is too large, the range of controllable ``signal-to-noise" ratios decays.

\subsubsection{Proofs for sub-Gaussian design}
In this section, we provide proofs of sub-Gaussian design. 
Here is the proof of Proposition~\ref{randomdesign_subset_recovery_prop}.
\begin{proof}[Proof of Proposition~\ref{randomdesign_subset_recovery_prop}]

We prove the results for the one- and two-pool cases sequentially. In each case, we begin with background calculations, and then analyze the eigenvalue condition followed by the mutual incoherence condition.

\textbf{For the one-pool case}, we know that $\lambda$ satisfies inequality~\eqref{lamone_orthogonal} with probability at least $1 - e^{-\frac{C^2}{2}}$.

Note that $x_j,\ j \in T^c$ are sub-Gaussian random vectors with parameter $\sigma_x$.
By Theorem 4.7.1 and Exercise 4.7.3 in Vershynin~\cite{vershynin2018high} and our assumption of $n$, we have
\begin{equation}\label{boundSampleCovarinace}
    \opnorm{\Sigma - \frac{X_{T^c}^\top X_{T^c}}{n-t}}_2 \leq c_1 \sigma^2_x \sqrt{\frac{p+C_1}{n-t}} \|\Sigma\|_2,
\end{equation}
with probability at least $1 - e^{-C_1}$. We will later use this bound multiple times to establish the eigenvalue condition and the mutual incoherence condition.

We first consider the eigenvalue condition. By the dual Weyl's inequality~\cite{horn1994topics}, we have $\lambda_{\min}(A+B) \geq \lambda_{\min}(A) + \lambda_{\min}(B)$ for any square matrices $A$ and $B$. Then 
\begin{align*}
\begin{split}
\lambda_{\min}\left(\frac{X_{T^c}^\top X_{T^c}}{n-t}\right) & = \lambda_{\min} \left(\frac{X_{T^c}^\top X_{T^c}}{n-t} - \Sigma + \Sigma\right) \\
& \geq \lambda_{\min}(\Sigma) + \lambda_{\min} \left(\frac{X_{T^c}^\top X_{T^c}}{n-t} - \Sigma\right) \\
& \geq \lambda_{\min}(\Sigma) - \left\|\frac{X_{T^c}^\top X_{T^c}}{n-t} - \Sigma\right\|_2,
\end{split}
\end{align*}
where the second inequality follows from the fact that $\lambda_{\min}(A) \leq \lambda_{\max}(A)$ for any square matrix $A$. Combining this with inequality~\eqref{boundSampleCovarinace} and taking $n \geq t + 4\frac{c_1^2 \sigma_x^4 (p + C_1)\|\Sigma\|_2^2}{\lambda^2_{\min}(\Sigma)}$ by assumption~\eqref{random_subset_one_pool_assump_n}, we have that 
\begin{align}
\label{lowerboundXTC}
\lambda_{\min}\left(\frac{X_{T^c}^\top X_{T^c}}{n-t}\right) \geq \lambda_{\min}(\Sigma) - c_1 \sigma^2_x \sqrt{\frac{p+C_1}{n-t}} \|\Sigma\|_2 \geq \frac{1}{2} \lambda_{\min}(\Sigma) > 0,
\end{align}
with probability $1 - e^{-C_1}$. We now derive the following result:
\begin{lemma}
\label{eigenClaim}
Suppose $X_{T^c}^\top X_{T^c}$ is invertible, where $X_T \in \real^{t \times p}$ and $X_{T^c} \in \real^{(n-t) \times p}$. Then 
\begin{align*}
\begin{split}
\lambda_{\min}\left(P_{X, TT}^\perp\right) & \geq 1 - \frac{\lambda_{\max}(X_T^\top X_T)}{\lambda_{\max}(X_T^\top X_T) + \lambda_{\min}(X_{T^c}^\top X_{T^c})} > 0,
\end{split}
\end{align*}
implying that the eigenvalue condition for the one-pool case holds.
\end{lemma}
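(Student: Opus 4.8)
The plan is to reduce the lemma to a bound on $\lambda_{\max}\bigl(X_T(X^\top X)^{-1}X_T^\top\bigr)$, where $X^\top X = X_T^\top X_T + X_{T^c}^\top X_{T^c}$. First I would note that $X^\top X$ is positive definite (hence invertible): $X_{T^c}^\top X_{T^c} \succ 0$ by hypothesis and $X_T^\top X_T \succeq 0$, so the sum is $\succ 0$. Therefore $P_{X,TT}^\perp = I_t - X_T(X^\top X)^{-1}X_T^\top$ is well defined, $X_T(X^\top X)^{-1}X_T^\top$ is symmetric positive semidefinite, and
\[
\lambda_{\min}(P_{X,TT}^\perp) = 1 - \lambda_{\max}\bigl(X_T(X^\top X)^{-1}X_T^\top\bigr).
\]

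Next I would set $c := \lambda_{\min}(X_{T^c}^\top X_{T^c}) > 0$ and use operator monotonicity. Since $X_{T^c}^\top X_{T^c} \succeq cI$, we get $X^\top X \succeq X_T^\top X_T + cI$, hence $(X^\top X)^{-1} \preceq (X_T^\top X_T + cI)^{-1}$; congruence by $X_T$ preserves the Loewner order, so
\[
X_T(X^\top X)^{-1}X_T^\top \preceq X_T(X_T^\top X_T + cI)^{-1}X_T^\top .
\]
Then I would evaluate the right-hand side via the SVD $X_T = U\Sigma V^\top$ with singular values $\sigma_1 \ge \sigma_2 \ge \cdots$: a direct computation gives $X_T(X_T^\top X_T + cI)^{-1}X_T^\top = U\,\diag\bigl(\tfrac{\sigma_i^2}{\sigma_i^2+c}\bigr)\,U^\top$ (with zero entries appended if $t > p$), so its largest eigenvalue is $\max_i \tfrac{\sigma_i^2}{\sigma_i^2+c}$. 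Because $x \mapsto x/(x+c)$ is increasing on $[0,\infty)$, this equals $\tfrac{\sigma_{\max}^2}{\sigma_{\max}^2+c} = \tfrac{\lambda_{\max}(X_T^\top X_T)}{\lambda_{\max}(X_T^\top X_T)+c}$.

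Combining the three displays yields
\[
\lambda_{\min}(P_{X,TT}^\perp) \ge 1 - \frac{\lambda_{\max}(X_T^\top X_T)}{\lambda_{\max}(X_T^\top X_T) + \lambda_{\min}(X_{T^c}^\top X_{T^c})} = \frac{\lambda_{\min}(X_{T^c}^\top X_{T^c})}{\lambda_{\max}(X_T^\top X_T) + \lambda_{\min}(X_{T^c}^\top X_{T^c})},
\]
which is strictly positive since the numerator is positive by invertibility of $X_{T^c}^\top X_{T^c}$. This is exactly the asserted bound. I do not expect a genuine obstacle here; the only points requiring care are the bookkeeping in the SVD step when $t \ne p$ (so that the $t\times t$ matrix may have extra zero eigenvalues) and the elementary observation that congruence by a possibly non-square $X_T$ preserves $\preceq$, both of which are routine.
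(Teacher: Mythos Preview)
Your proof is correct and takes a genuinely different route from the paper's. The paper first establishes an \emph{exact} identity: using simultaneous diagonalization by congruence (choose nonsingular $A$ with $AX_{T^c}^\top X_{T^c}A^\top = I$ and $AX_T^\top X_T A^\top$ diagonal), it writes $X_T(X^\top X)^{-1}X_T^\top = Q(I+Q^\top Q)^{-1}Q^\top$ with $Q = X_T A^\top$, computes $\lambda_{\max}$ of the latter via SVD, and then bounds the resulting quantity $\lambda_{\max}\bigl(X_T^\top X_T(X_{T^c}^\top X_{T^c})^{-1}\bigr)$ by $\lambda_{\max}(X_T^\top X_T)/\lambda_{\min}(X_{T^c}^\top X_{T^c})$ at the very end. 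You instead go straight to the bound via operator monotonicity of the inverse: from $X_{T^c}^\top X_{T^c}\succeq cI$ you get $(X^\top X)^{-1}\preceq (X_T^\top X_T + cI)^{-1}$, conjugate by $X_T$, and read off the eigenvalues of the right-hand side by SVD. Your argument is shorter and avoids the simultaneous-diagonalization and similarity steps; the paper's route has the minor byproduct of an exact intermediate formula in terms of $\lambda_{\max}\bigl(X_T^\top X_T(X_{T^c}^\top X_{T^c})^{-1}\bigr)$, though that formula is not exploited beyond the same final bound.
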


\begin{proof}
Define $C = Q(I + Q^\top Q)^{-1}Q^\top $ and $Q \in \real^{s \times p}$, and suppose rank($Q$) = $r$. Let $Q = U S V^\top$ be the SVD, where $U\in \real^{t \times p}, V \in \real^{p \times p}$, and $S = \begin{bmatrix} J_{r \times r} & 0_{r \times (p-r)}\\ 0_{(t-r) \times r} & 0_{(t-r) \times (p-r)}\end{bmatrix}$. Here, $J$ is a diagonal matrix of positive singular values. Then 
\begin{align}
\begin{split}
    C &= USV^\top (I + VS^\top SV^\top )^{-1}VS^\top U^\top \\
    & = US(I + S^\top S)^{-1}S^\top U^\top \\
    & = U \begin{bmatrix}J_{r \times r} & 0_{r\times(p-r)} \\ 0_{(t-r)\times r} & 0_{(t-r) \times (p-r)}\end{bmatrix} \cdot \begin{bmatrix} (I + J^2)^{-1}_{r\times r} & 0_{r\times (p-r)} \\0_{(t-r)\times r} & I_{(p-r) \times (p-r)}\end{bmatrix} \cdot \begin{bmatrix}J_{r \times r} & 0_{r\times(p-r)} \\ 0_{(t-r)\times r} & 0_{(t-r) \times (p-r)}\end{bmatrix}U^\top  \\
    & = U\begin{bmatrix} (J(I + J^2)^{-1}J)_{r\times r} & 0_{r\times (p-r)} \\0_{(t-r)\times r} & 0_{(p-r) \times (p-r)}\end{bmatrix}U^\top .
\end{split}
\end{align}
Therefore, $\lambda_{\max}(C) = \frac{a^2_{\max}}{1 + a^2_{\max}}$, where $a_{\max}$ is the maximum singular value appearing in $J$. Also note that $a_{\max}^2$ is the maximum eigenvalue of $Q^\top Q$.

Following (16.51) in Seber~\cite{seber2008matrix}, given $X_{T^c}^\top X_{T^c}$ is invertible, there exists a non-singular matrix $A$ such that $AX_{T^c}^\top X_{T^c}A^\top  = I$ and $AX_T^\top X_TA^\top  = D$, where $D$ is diagonal matrix.

Note that 
\begin{align*}
X_T(X_T^\top X_T + X_{T^c}X_{T^c})^{-1}X_T^\top & =X_TA^\top (A(X_T^\top X_T + X_{T^c}X_{T^c})A^\top )^{-1}AX_T^\top  \\
& = X_TA^\top (AX_T^\top X_TA^\top  + I)AX_T^\top \\
& = Q(Q^\top Q + I)^{-1}Q^\top,
\end{align*}
where $Q:=X_TA^\top$.

Based on our earlier arguments, we know that the matrix under consideration has maximum eigenvalue $\frac{\lambda_{\max}(AX_T^\top X_TA^\top )}{1 + \lambda_{\max}(AX_T^\top X_TA^\top )}$. Since $AX_T^\top X_TA^\top $ is similar to $X_T^\top X_TA^\top A$, we have$\lambda_{\max}(AX_T^\top X_TA^\top ) = \lambda_{\max}(X_T^\top X_TA^\top A)$. Furthermore, we have $A^\top A = (X_{T^c}^\top X_{T^c})^{-1}$, implying that
\begin{align*}
\begin{split}
    \lambda_{\max}(AX_T^\top X_TA^\top )  & = \lambda_{\max}(X_T^\top X_T(X_{T^c}^\top X_{T^c})^{-1}) \\
    & \leq \max_v \frac{\norm{X_T^\top X_T(X_{T^c}^\top X_{T^c})^{-1}v}^2}{\norm{(X_{T^c}^\top X_{T^c})^{-1}v}^2} \cdot \max_v \frac{\norm{(X_{T^c}^\top X_{T^c})^{-1}v}^2}{\norm{v}^2}\\
    & \leq \frac{\lambda_{\max}(X_T^\top X_T)}{\lambda_{\min}(X_{T^c}^\top X_{T^c})}.
\end{split}
\end{align*}
Altogether, we have
\begin{align}\label{eigcompares}
\begin{split}
     \lambda_{\max}\left( X_T \left( X_T^\top X_T + X_{T^c}^\top X_{T^c}\right)^{-1} X_T^\top  \right)
     & \leq \frac{1}{1 + \lambda^{-1}_{\max}\left( X_T^\top X_T(X_{T^c}^\top X_{T^c})^{-1} \right)}\\
     & \leq \frac{1}{1 + \frac{\lambda_{\min}(X_{T^c}^\top X_{T^c})}{\lambda_{\max}(X_T^\top X_T)}}.
\end{split}
\end{align}
Finally, we may conclude that
\begin{align*}
\begin{split}
\lambda_{\min}\left(P_{X, TT}^\perp\right) 
&= \lambda_{\min}\left(I - X_T \left( X_T^\top X_T + X_{T^c}^\top X_{T^c}\right)^{-1}X_T^\top\right) \\
& = 1 - \lambda_{\max}\left( X_T \left( X_T^\top X_T + X_{T^c}^\top X_{T^c}\right)^{-1} X_T^\top  \right) \\
& \geq 1 -  \frac{1}{1 + \frac{\lambda_{\min}(X_{T^c}^\top X_{T^c})}{\lambda_{\max}(X_T^\top X_T)}} \\
& = 1 - \frac{\lambda_{\max}(X_T^\top X_T)}{\lambda_{\max}(X_T^\top X_T) + \lambda_{\min}(X_{T^c}^\top X_{T^c})}.
\end{split}
\end{align*}
Since $\lambda_{\min}(X_{T^c}^\top X_{T^c}) > 0$, we have $\lambda_{\min}\left(P_{X, TT}^\perp\right) < 1$, implying the desired result.  
\end{proof}

We now consider the mutual incoherence condition. By the triangle inequality, we have
\begin{equation*}
\begin{split}
\frac{1}{n-t}\infnorm{X_{T^c}\left(\frac{X_{T^c}^\top X_{T^c}}{n-t}\right)^{-1}X_T} \leq 
& \underbrace{\frac{1}{n-t}\opnorm{X_{T^c}\Theta X_T^\top - X_{T^c}\left(\frac{X_{T^c}^\top X_{T^c}}{n-t}\right)^{-1}X_T^\top }_\infty}_{\circled{1}} \\
& \quad + \underbrace{\frac{1}{n-t}\opnorm{X_{T^c}\Theta X_T^\top}_\infty}_{\circled{2}}.
\end{split}
\end{equation*}
We bound $\circled{1}$ and $\circled{2}$ separately. Note that
\begin{align*}
\circled{1} & = \frac{ \max_{j \in T^c} \left\|x_j^\top\left( \Theta - \left(\frac{X_{T^c}^\top X_{T^c}}{n-t}\right)^{-1} \right) X_T^\top \right\|_1}{n-t} \\
& \leq \frac{\sqrt{t}}{n-t} \max_{j \in T^c} \|x_j\|_2\opnorm{ \Theta - \left(\frac{X_{T^c}^\top X_{T^c}}{n-t}\right)^{-1} }_2\opnorm{X_T^\top}_2.
\end{align*}
In order to bound $\circled{1}$, we bound three parts separately. By assumption, we have $\opnorm{X_T^\top}_2 \leq B_T$. For $\max_{j \in T^c} \|x_j\|_2$, we leverage the Hanson-Wright inequality (Theorem 6.2.1 in~\cite{vershynin2018high}) and a union bound. By the Hanson-Wright inequality, we see that for $t > 0$, 
$$P\left( \|x_j\|_2^2 - \mathbb{E}[\|x_j\|_2^2] \geq t\right) \leq \exp\left\{-c \min \left(\frac{t^2}{\sigma_x^4 p}, \frac{t}{\sigma_x^2}\right)\right\},$$
where $c$ is an absolute constant. \\
By a union bound, we then have
\begin{equation*}
\begin{split}
    P\left(\max_{j \in T^c} \|x_j\|_2 \geq \sqrt{\mathbb{E}[\|x_j\|_2^2] + \Delta}\right)
    & = P\left(\max_{j \in T^c} \|x_j\|_2^2 \geq \mathbb{E}[\|x_j\|_2^2] + \Delta \right)\\
    & \leq \sum_{j \in T^c} P\left(\|x_j\|_2^2 \geq \mathbb{E}[\|x_j\|_2^2] + \Delta\right) \\
    & \leq (n-t)\exp\left\{-c \min \left(\frac{\Delta^2}{\sigma_x^4 p}, \frac{\Delta}{\sigma_x^2}\right)\right\}.
\end{split}
\end{equation*}
Setting $\Delta = c_2 \sigma_x^2\max\{\sqrt{p\log n}, \log n\}$ with $c_2 \geq 1$ so that we have $\min\left\{\frac{\Delta^2}{\sigma_x^4 p}, \frac{\Delta}{\sigma_x^2}\right\} \geq c_2 \log n$, we conclude that 
\begin{equation}\label{boundmaxx}
\begin{split}
\max_{j \in T^c} \|x_j\|_2 & \leq \sqrt{\mathbb{E}[\|x_j\|_2^2] + \Delta}\\
& \leq \sqrt{trace(\Sigma)} + \Delta \\
& \leq \sqrt{p\|\Sigma\|_2} + c_2 \sigma_x^2(\log n + \sqrt{p \log n}),
\end{split}
\end{equation}
with probability at least $1 - n^{-(c_2-1)}$, where $c_2 \geq \max\{2,2/c\}$.\\
To bound $\opnorm{ \Theta - \left(\frac{X_{T^c}^\top X_{T^c}}{n-t}\right)^{-1} }_2$, note that for two matrices $A$ and $B$, we have
\begin{equation*}
\opnorm{A^{-1} - B^{-1}}_2 \leq \frac{\opnorm{A-B}_2}{\lambda_{\min}(A)\lambda_{\min}(B)}.
\end{equation*}
Combining this fact with inequalities~\eqref{boundSampleCovarinace} and~\eqref{lowerboundXTC}, we obtain
\begin{align}
\label{Thetabound}
   \opnorm{ \Theta - \left(\frac{X_{T^c}^\top X_{T^c}}{n-t}\right)^{-1} }_2 & \leq  \frac{\left\|\Sigma - \frac{X_{T^c}^\top X_{T^c} }{n-t}\right\|_2}{\lambda_{\min}\left(\Sigma\right)\lambda_{\min}\left(\frac{X_{T^c}^\top X_{T^c}}{n-t}\right)} \leq \frac{2\left\|\Sigma - \frac{X_{T^c}^\top X_{T^c}}{n-t}\right\|_2}{\lambda^2_{\min}\left(\Sigma\right)} \notag \\
   & \leq \frac{2c_1 \sigma^2_x \sqrt{\frac{p+C_1}{n-t}} \|\Sigma\|_2}{\lambda^2_{\min}\left(\Sigma\right)}.
\end{align}
Altogether, we obtain the bound
\begin{equation}\label{onepool_closetopopulation}
\circled{1} \leq \frac{\sqrt{t}}{n-t} \left(\sqrt{p\|\Sigma\|_2} + c_2 \sigma_x^2(\log n + \sqrt{p \log n})\right)  \cdot \frac{2c_1 \sigma^2_x \sqrt{\frac{p+C_1}{n-t}} \|\Sigma\|}{\lambda^2_{\min}\left(\Sigma\right)} B_T.
\end{equation}

We now consider $\circled{2}$. Note that
\begin{align}\label{populationmutualinch}
\begin{split}
\frac{\opnorm{X_{T^c}\Theta X_T^\top}_\infty}{n-t} & = \frac{1}{n-t} \max_{j \in T^c}\|x_j^\top \Theta X_T^\top\|_1 \\
    & \leq \frac{\sqrt{t}}{n-t} \max_{j \in T^c}\|x_j^\top\|_2 \|\Theta\|_2 \|X_T^\top\|_2 \\
    & = \frac{\sqrt{t}}{n-t} \left(\sqrt{p\|\Sigma\|_2} + c_2 \sigma_x^2(\log n + \sqrt{p \log n})\right) \cdot \frac{1}{\lambda_{\min}(\Sigma)}B_T.
\end{split}
\end{align}
Therefore,
\begin{align*}
\circled{1} + \circled{2} &\leq \frac{\sqrt{t}}{n-t} \left(\sqrt{p\|\Sigma\|_2} + c_2 \sigma_x^2(\log n + \sqrt{p \log n})\right) \cdot \left(1 + \frac{2c_1 \sigma^2_x \sqrt{\frac{p+C_1}{n-t}} \|\Sigma\|_2}{\lambda_{\min}\left(\Sigma\right)}\right)\frac{B_T}{\lambda_{\min}(\Sigma)}.
\end{align*}
Finally, assuming $n$ satisfies the bound~\eqref{random_subset_one_pool_assump_n}, and taking a union bound over all the probabilistic statements appearing above, we conclude that the mutual incoherence condition holds with probability at least
$1 - e^{-\frac{C^2}{2}} - 2e^{-C_1} - n^{-(c_2-1)}$. This concludes the proof.

\textbf{For the two-pool case}, we will use the following inequalities:
\begin{equation*}
\begin{split}
    \opnorm{\Sigma - \frac{X_{T^c}^\top X_{T^c}}{n-t}}_2 & \leq c_1 \sigma^2_x \sqrt{\frac{p+C'_1}{n-t}} \|\Sigma\|_2, \\
    \opnorm{\Sigma - \frac{\Xtil^\top \Xtil}{m}}_2 & \leq c_1 \sigma^2_x \sqrt{\frac{p+C'_1}{m}}\|\Sigma\|_2,
\end{split}
\end{equation*}
with probablity at least $1 - 2e^{-C_1'}$. Combining these inequalities and using the triangle inequality, we obtain
\begin{equation}
\label{tradeoff}
\begin{split}
   \opnorm{\Sigma - \frac{X_{T^c}^\top X_{T^c} + \frac{\eta n}{m} \Xtil^\top \Xtil}{n-t + \eta n}}_2 &\leq \frac{n-t}{n-t + \eta n}\opnorm{\Sigma - \frac{X_{T^c}^\top X_{T^c}}{n-t}}_2 + \frac{\eta n}{n-t + \eta n}\opnorm{\Sigma - \frac{\Xtil^\top \Xtil}{m}}_2 \\
   \leq & c_1 \sigma^2_x\|\Sigma\|_2 \frac{n-t}{n-t + \eta n}\sqrt{\frac{p+ C_1'}{n-t}}  + c_1 \sigma^2_x \|\Sigma\|_2 \frac{\eta n}{n-t + \eta n}  \sqrt{\frac{p+C_1'}{m}}\\
   & \overset{n\geq t+m}{\leq} 2c_1 \sigma_x^2 \|\Sigma\|_2\sqrt{\frac{p+C_1'}{m}},
\end{split}
\end{equation}
with probability at least $1 - 2e^{-C_1'}$.

Analogous to Lemma~\ref{eigenClaim}, we can conclude that if $X_{T^c}^\top X_{T^c} + \frac{\eta n}{m}\Xtil^\top \Xtil$ is invertible, the eigenvalue condition satisfies
\begin{align*}
\lambda_{\min}(P_{X',TT}^\perp) \geq  1 - \frac{\lambda_{\max}(X_T^\top X_T)}{\lambda_{\max}(X_T^\top X_T) + \lambda_{\min}\left(X_{T^c}^\top X_{T^c} + \frac{\eta n}{m}\Xtil^\top \Xtil\right)} > 0.
\end{align*}
(This can be proved just by replacing $X_{T^c}^\top X_{T^c}$ with $X_{T^c}^\top X_{T^c} + \frac{\eta n}{m} \Xtil^\top \Xtil$ in the proof of Lemma~\ref{eigenClaim}.) However, since we further wish to bound the minimum eigenvalue from below by $\lambda_{\min}(\Sigma)/2$, to match the one-pool case and to be used in the proof for the mutual incoherence condition later, we will consider $X_{T^c}^\top X_{T^c} + \frac{\eta n}{m} \Xtil^\top \Xtil$ directly.

Note that 
\begin{align*}
\begin{split}
  \lambda_{\min}\left(\frac{X_{T^c}^\top X_{T^c} + \frac{\eta n}{m}\Xtil^\top \Xtil}{n-t+\eta n}\right) & = \lambda_{\min}\left(\frac{X_{T^c}^\top X_{T^c} + \frac{\eta n}{m}\Xtil^\top \Xtil}{n-t+\eta n} - \Sigma + \Sigma \right) \\
  & \geq \lambda_{\min}\left(\frac{X_{T^c}^\top X_{T^c} + \frac{\eta n}{m}\Xtil^\top \Xtil}{n-t+\eta n} - \Sigma \right) + \lambda_{\min}(\Sigma) \\
  & \geq \lambda_{\min}(\Sigma) - \left\|\frac{X_{T^c}^\top X_{T^c} + \frac{\eta n}{m}\Xtil^\top \Xtil}{n-t+\eta n} - \Sigma \right\|_2.
\end{split} 
\end{align*}
Thus, if we choose $m \geq 4c_1^2 \sigma^4_x (p+C_1')\|\Sigma\|_2^2$, we have 
\begin{align*}
\begin{split}
\lambda_{\min}\left(\frac{X_{T^c}^\top X_{T^c} + \frac{\eta n}{m}\Xtil^\top \Xtil}{n-t+\eta n}\right) & \geq \frac{1}{2} \lambda_{\min}(\Sigma) > 0,
\end{split} 
\end{align*}
with probability at least $1 - 2e^{-C_1'}$.

We now consider the mutual incoherence condition. Similar to the derivation of inequality~\eqref{Thetabound}, we have that 
\begin{equation*}
\begin{split}
   \opnorm{\Theta- \left(\frac{X_{T^c}^\top X_{T^c} + \frac{\eta n}{m} \Xtil^\top \Xtil}{n-t + \eta n}\right)^{-1}}_2
   &\leq  \frac{\left\|\Sigma - \frac{X_{T^c}^\top X_{T^c} + \eta \frac{n}{m}\Xtil^\top \Xtil}{(1+\eta)n-t}\right\|_2}{\lambda_{\min}\left(\Sigma\right)\lambda_{\min}\left(\frac{X_{T^c}^\top X_{T^c} + \eta \frac{n}{m}\Xtil^\top \Xtil}{(1+\eta)n-t}\right)}\\
   & \leq 2c_1 \sigma^2_x\frac{\|\Sigma\|_2}{\lambda^2_{\min}(\Sigma)} \sqrt{\frac{p+ C_1'}{m}}.
\end{split}
\end{equation*}
Combining this with inequality~\eqref{boundmaxx}, we obtain
\begin{equation*}
\begin{split}
        & \frac{\opnorm{X_{T^c}\Theta X_T^\top - X_{T^c}\left(\frac{X_{T^c}^\top X_{T^c} + \frac{\eta n}{m} \Xtil^\top \Xtil}{n-t + \eta n}\right)^{-1}X_T^\top }_\infty}{n-t+\eta n} \\
        & \qquad = \frac{ \underset{j \in T^c}{\max} \left\|x_j^\top \left( \Theta - \left(\frac{X_{T^c}^\top X_{T^c} + \frac{\eta n}{m} \Xtil^\top \Xtil}{n-t + \eta n}\right)^{-1} \right) X_T^\top \right\|_1}{n-t+\eta n} \\
        & \qquad \leq \frac{\sqrt{t}}{n-t+\eta n} \max_{j \in T^c} \|x_j\|_2  \cdot \opnorm{\Theta - \left(\frac{X_{T^c}^\top X_{T^c} + \frac{\eta n}{m} \Xtil^\top \Xtil}{n-t + \eta n}\right)^{-1}}_2 \opnorm{X_T^\top}_2 \\
        & \qquad \leq \frac{\sqrt{t}}{n-t+\eta n} \left( \sqrt{p\|\Sigma\|_2} + c_2 \sigma_x^2(\log n + \sqrt{p \log n})\right) \cdot 2c_1  \sigma^2_x\frac{\|\Sigma\|_2}{\lambda^2_{\min}(\Sigma)} \sqrt{\frac{p+ C_1'}{m}} B_T.\\
\end{split}
\end{equation*}
Therefore, together with the triangle inequality and inequality~\eqref{populationmutualinch}, we can bound the mutual incoherence parameter as follows:
\begin{align*}
& \frac{\opnorm{X_{T^c}\left(\frac{X_{T^c}^\top X_{T^c} + \frac{\eta n}{m} \Xtil^\top \Xtil}{n-t + \eta n}\right)^{-1}X_T^\top }_\infty}{{n-t+\eta n}} \\
&  \leq \frac{\opnorm{X_{T^c}\Theta X_T^\top - X_{T^c}\left(\frac{X_{T^c}^\top X_{T^c} + \frac{\eta n}{m} \Xtil^\top \Xtil}{n-t + \eta n}\right)^{-1}X_T^\top }_\infty}{{n-t+\eta n}}  + \frac{\opnorm{X_{T^c}\Theta X_T^\top}_\infty}{n-t+\eta n} \\
& \leq \frac{\sqrt{t}}{n-t+\eta n} \left( \sqrt{p\|\Sigma\|_2} + c_2 \sigma_x^2(\log n + \sqrt{p \log n})\right) \left(1 + 2c_1  \sigma^2_x\frac{\|\Sigma\|_2}{\lambda_{\min}(\Sigma)} \sqrt{\frac{p+ C_1'}{m}}\right) \frac{B_T}{\lambda_{\min}(\Sigma)}.
\end{align*}
By the assumption on $n$ in inequality~\eqref{random_subset_two_pool_assump_n}, the mutual incoherence condition therefore holds with probability $1 - e^{-\frac{C'^2}{2}} - 2e^{-C_1'} - n^{-(c_2-1)}$.
\end{proof}

Here is the proof of Proposition~\ref{randomdesign_exact_recovery_prop}.
\begin{proof}[Proof of Proposition~\ref{randomdesign_exact_recovery_prop}]
To achieve exact support recovery, we need all the three conditions to hold. The eigenvalue condition and the mutual incoherence condition have already been discussed in the analysis of subset support recovery in Appendix~\ref{randomdesign_subset_recovery_prop}, so it remains to analyze the gamma-min condition.

Recall that
\begin{align*}
G' = \| (P_{X', TT}^\perp)^{-1}P_{X', T\cdot}^\perp \epsilon'\|_\infty + n\lambda\opnorm{(P_{X', TT}^\perp)^{-1}}_{\infty}.
\end{align*}
To simplify notation, we define 
\begin{align*}
A := \|(P_{X', TT}^\perp)^{-1}P_{X', T\cdot}^\perp P_{X'}^\perp \epsilon'\|_\infty, \quad B := n\lambda\opnorm{(P_{X', TT}^\perp)^{-1}}_{\infty}.
\end{align*}
We also define the random variables $$Z_i \coloneqq e_i^\top( P_{X', TT}^\perp)^{-1}P_{X', T\cdot}^\perp P_{X'}^\perp \epsilon'.$$
Since $P_{X'}^\perp$ is a projection matrix and the maximum singular value of $P_{X', T\cdot}^\perp$ is smaller than the maximum singular value of $P_{X'}^\perp$'s, we have 
\begin{equation*}
\begin{split}
\opnorm{ (P_{X', TT}^\perp)^{-1}P_{X', T\cdot}^\perp P_{X'}^\perp}_2 & \leq \opnorm{(P_{X', TT}^\perp)^{-1}}_2 \leq \opnorm{(P_{X', TT}^\perp)^{-1}}_2 \leq \frac{1}{b'_{\min}}, 
\end{split}
\end{equation*}
for all $i \in T$. Note that $Z_i$ is a zero-mean sub-Gaussian random variable with parameter at most $\frac{\sigma}{b'_{\min}}$. By a sub-Gaussian tail bound, we then have 
\begin{align*}
P\left(\max_{1\leq i\leq t}|Z_i| > \frac{\sigma}{b'_{\min}}\left(\sqrt{2\log t} + \Delta\right)\right) \leq 2 e^{-\frac{\Delta^2}{2}}.
\end{align*}
Therefore, with probability at least $1 - 2e^{-c}$, we have $A \leq \frac{2\sigma \sqrt{\log t+c}}{b'_{\min}}.$ Note that $\|(P_{X', TT}^\perp)^{-1}\|_\infty \leq \sqrt{t} \|(P_{X', TT}^\perp)^{-1}\|_2 = \frac{\sqrt{t}}{b_{\min}'}.$
We can then immediately obtain the bound $B \leq \frac{2n\lambda\sqrt{t}}{b_{\min}'}$.

Combined with the fact that $\lambda \geq \frac{2\sigma}{n(1-\alpha')} \max\left\{1, \sqrt{\frac{\eta n}{mL}}\right\}\left(\sqrt{\log 2(n-t)}+C'\right)$, we then obtain 
\begin{align*}
G' \leq \frac{1}{b'_{\min}}\left(2\sigma \sqrt{\log t+c} + \frac{2\sigma\sqrt{t}}{(1-\alpha')} \max\left\{1, \sqrt{\frac{\eta n}{mL}}\right\}\left(\sqrt{\log 2(n-t)}+C'\right) \right).
\end{align*}
Thus, as long as $\min_{i \in T} |\gammastar_i|$ is greater than or equal to the RHS of the inequality above, the gamma-min condition holds with probability at least $1 - 2e^{-c} - e^{-\frac{C'^2}{2}}$. Consequently, the exact support recovery is achieved.

The proof of the one-pool case is similar as the proof of the two-pool case provided above, so we omit the details here. 
\end{proof}

Here is the proof of Proposition~\ref{EigIncrease}
\begin{proof}Proof of Proposition~\ref{EigIncrease}

By the Sherman-Morrison-Woodbury formula~\cite{henderson1981deriving}, we have
\begin{equation}\label{bmin}
\begin{split}
     & X_T\left( X^\top X + \frac{\eta n}{m} \Xtil^\top \Xtil \right)^{-1}X_T^\top\\
     &  = X_T\left( X^\top X \right)^{-1} X_T^\top  - \frac{\eta n}{m} X_T\left( X^\top X \right)^{-1}\Xtil^\top(I+\frac{\eta n}{m} \Xtil(X^\top X)^{-1}\Xtil^\top)^{-1} \Xtil\left( X^\top X \right)^{-1}X_T^\top.
\end{split}
\end{equation}

We now state and prove two useful lemmas:

\begin{lemma}\label{RestPSD}
Assume $X^\top X$ is invertible. Define 
\begin{align*}
A \coloneqq X_T\left( X^\top X \right)^{-1}\Xtil^\top(I+\frac{\eta n}{m} \Xtil(X^\top X)^{-1}\Xtil^\top)^{-1}\Xtil\left( X^\top X \right)^{-1}X_T^\top.
\end{align*}
Then $\lambda_{\min}\left(A\right) \geq 0$. Equality holds when $\Xtil\left( X^\top X \right)^{-1}X_T^\top$ is not full-rank.
\end{lemma}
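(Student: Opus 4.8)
The plan is to recognize $A$ as a congruence transform of a positive definite matrix. Write $N := \frac{\eta n}{m} > 0$ and $M := (X^\top X)^{-1}$; since $X^\top X$ is invertible it is positive definite, so $M$ is symmetric positive definite. The matrix $\Xtil M \Xtil^\top$ is positive semidefinite, hence $I + N\Xtil M \Xtil^\top \succeq I \succ 0$; in particular it is invertible, and its inverse $C := (I + N\Xtil M \Xtil^\top)^{-1}$ is symmetric positive definite.

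Next I would introduce the shorthand $W := \Xtil M X_T^\top \in \real^{m \times t}$. Using $M = M^\top$, a direct computation gives
\begin{equation*}
W^\top C W = X_T M \Xtil^\top C \Xtil M X_T^\top = A,
\end{equation*}
so that $A$ arises from the positive definite matrix $C$ via the map $v \mapsto Wv$. Hence for every $v \in \real^t$ we have $v^\top A v = (Wv)^\top C (Wv) \ge 0$, which yields $A \succeq 0$ and therefore $\lambda_{\min}(A) \ge 0$.

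For the equality case, since $C$ is positive definite, $v^\top A v = 0$ forces $Wv = 0$, so $Null(A) = Null(W)$. Consequently $\lambda_{\min}(A) = 0$ exactly when $W$ has a nontrivial kernel, i.e.\ when $W = \Xtil (X^\top X)^{-1} X_T^\top$ fails to have full column rank (equivalently $\operatorname{rank}(W) < t$); in particular this happens automatically whenever $m < t$. I do not expect a genuine obstacle here: the argument is just bookkeeping of the symmetry of $M$ and the order of factors, the one point warranting care being the verification that $I + N\Xtil M \Xtil^\top$ is invertible, which follows from its lower bound $I$ in the Loewner order. This lemma then feeds directly into Proposition~\ref{EigIncrease} through identity~\eqref{bmin}, which expresses $P_{X',TT}^\perp$ as $P_{X,TT}^\perp + N A$.
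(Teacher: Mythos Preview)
Your proof is correct and takes essentially the same route as the paper: both recognize $A$ as the congruence $W^\top C W$ with $W = \Xtil (X^\top X)^{-1} X_T^\top$ and $C = (I + \frac{\eta n}{m}\Xtil(X^\top X)^{-1}\Xtil^\top)^{-1} \succ 0$, then read off $\lambda_{\min}(A)\ge 0$ and identify the equality case with $Null(W)\neq\{0\}$. Your write-up is slightly cleaner in making the factorization $A = W^\top C W$ explicit and in noting that ``not full-rank'' here must mean not full \emph{column} rank (so equality is automatic when $m<t$), but the argument is otherwise identical.
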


\begin{proof}
First note that since $X^\top X$ is invertible and $\Xtil(X^\top X)^{-1}\Xtil^\top \succ 0$, the matrix $I+\frac{\eta n}{m} \Xtil(X^\top X)^{-1}\Xtil^\top$ is invertible. Note that
\begin{align*}
\forall y \in \real^t \neq 0,\quad  y^\top A y \geq 0,
\end{align*}
so the minimum eigenvalue of $A$ is nonnegative.

In order to study when the $\lambda_{\min} = 0$, let $z = \Xtil\left( X^\top X \right)^{-1}X_T^\top y$. When $y \neq 0$ and $\Xtil\left( X^\top X \right)^{-1}X_T^\top$ is full-rank, we have $z \neq 0$. Thus, if $\Xtil\left( X^\top X \right)^{-1}X_T^\top$ is full-rank, we have $\lambda_{\min}(A) > 0$. When $y \neq 0$ and $\Xtil\left( X^\top X \right)^{-1}X_T^\top$ is not full-rank, there exists $y \neq 0$ such that $z = 0$, which causes $y^\top A y = 0$ and $\lambda_{\min}(A) = 0$.
\end{proof}

\begin{lemma}\label{EigOneMinus}
The following equations holds:
\begin{align*}
\lambda_{\min}(P^\perp_{X, TT}) = 1 - \lambda_{\max}(X_T\left( X^\top X\right)^{-1}X_T^\top),
\end{align*}
\begin{align*}
\lambda_{\min}(P^\perp_{X', TT}) = 1 - \lambda_{\max}(X_T\left( X^\top X + \frac{\eta n}{m} \Xtil \Xtil^\top \right)^{-1}X_T^\top).
\end{align*}
\end{lemma}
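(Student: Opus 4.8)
The plan is to reduce both identities to the elementary spectral fact that for a symmetric matrix $M$, the eigenvalues of $I - M$ are exactly $\{1 - \lambda_i(M)\}$, so that $\lambda_{\min}(I - M) = 1 - \lambda_{\max}(M)$. Everything else is bookkeeping about which principal submatrix we are looking at.

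First I would pin down the relevant submatrix. By definition $P_X^\perp = I_n - X(X^\top X)^{-1}X^\top \in \real^{n \times n}$, and $P_{X,TT}^\perp$ keeps the rows and columns indexed by $T$. Restricting the identity to those rows and columns gives $I_t$, while restricting $X(X^\top X)^{-1}X^\top$ gives $X_T (X^\top X)^{-1} X_T^\top$, since $X_T$ is precisely the submatrix of $X$ with rows indexed by $T$. Hence
\begin{equation*}
P_{X,TT}^\perp = I_t - X_T (X^\top X)^{-1} X_T^\top.
\end{equation*}
The matrix $M := X_T (X^\top X)^{-1} X_T^\top$ is symmetric and positive semidefinite because $(X^\top X)^{-1} \succ 0$. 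Writing an eigendecomposition $M = U \Lambda U^\top$ gives $I_t - M = U(I_t - \Lambda)U^\top$, so the spectrum of $I_t - M$ is $\{1 - \lambda_i(M)\}$; in particular $\lambda_{\min}(P_{X,TT}^\perp) = 1 - \lambda_{\max}(M)$, which is the first claim.

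For the second identity I would follow the identical route applied to the stacked matrix $X'$ from~\eqref{EqnStack}. Here $P_{X'}^\perp = I_{m+n} - X'(X'^\top X')^{-1}X'^\top$, and since $T \subseteq [n]$ indexes coordinates among the first $n$, the rows of $X'$ indexed by $T$ are exactly the rows of $X$ indexed by $T$; combined with $X'^\top X' = X^\top X + \frac{\eta n}{m}\Xtil^\top \Xtil$, this yields
\begin{equation*}
P_{X',TT}^\perp = I_t - X_T \Big( X^\top X + \frac{\eta n}{m}\Xtil^\top \Xtil \Big)^{-1} X_T^\top,
\end{equation*}
and applying the same spectral-mapping observation to this symmetric positive semidefinite matrix gives the second claim. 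There is no substantive obstacle; the only point requiring a little care is the indexing in the first step — verifying that restricting $P_{X'}^\perp$ to $T$ genuinely extracts rows of $X$ rather than of the appended $\sqrt{\eta n/m}\,\Xtil$ block, and produces the inverse of $X'^\top X'$ in place of $X^\top X$.
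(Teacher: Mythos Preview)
Your proof is correct and follows essentially the same approach as the paper: both identify $P_{X,TT}^\perp = I_t - X_T(X^\top X)^{-1}X_T^\top$ and then use the eigendecomposition of the symmetric matrix $M = X_T(X^\top X)^{-1}X_T^\top$ to conclude $\lambda_{\min}(I_t - M) = 1 - \lambda_{\max}(M)$. Your version is slightly leaner in that you omit the paper's extra step of verifying $\lambda_{\max}(M) \le 1$, which is indeed unnecessary for the bare identity.
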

\begin{proof}
    Since $X_T\left( X^\top X\right)^{-1}X_T^\top$ is symmetric positive semidefinite, we can write $X_T\left( X^\top X\right)^{-1}X_T^\top = Q \Lambda Q^\top$, where $Q$ is an orthogonal matrix and $\Lambda$ is a diagonal matrix with nonnegative diagonals. Then    \begin{equation*}
        I - X_T\left( X^\top X\right)^{-1}X_T^\top = Q( I - \Lambda) Q^\top.
    \end{equation*}
Furthermore, we have shown in inequality~\eqref{eigcompares} that 
    \begin{equation*}
        \lambda_{\max}\left(X_T(X^\top X)^{-1}X_T^\top\right) \leq \frac{1}{1 + \frac{\lambda_{\min}(X_{T^c}^\top X_{T^c})}{\lambda_{\max}(X_T^\top X_T)}}.
    \end{equation*}
Hence, the maximum diagonal in $\Lambda$ is upper-bounded by 1, and $I -\Lambda$ has all diagonal entries in the range $[0,1]$. Thus, we have shown that $\min{\diag(I- \Lambda)} = \max(\diag(\Lambda))$, implying the conclusion of the lemma.
\end{proof}

Returning to the proof of the proposition, we have
\begin{equation*}
\begin{split}
     & \lambda_{\max}\left(X_T\left( X^\top X + \frac{\eta n}{m} \Xtil^\top \Xtil \right)^{-1}X_T^\top\right) \\
     & \qquad \leq \lambda_{\max}\left(X_T\left( X^\top X \right)^{-1}X_T^\top\right) \\
     & \qquad \qquad -\frac{\eta n}{m}\lambda_{\min}\left((X_T\left( X^\top X \right)^{-1}\Xtil^\top (I+\frac{\eta n}{m} \Xtil(X^\top X)^{-1}\Xtil^\top)^{-1}\Xtil\left( X^\top X \right)^{-1}X_T^\top\right) \\
     & \qquad \stackrel{(i)}{\leq} \lambda_{\max}\left(X_T\left( X^\top X \right)^{-1}X_T^\top\right),
\end{split}
\end{equation*}
Here, $(i)$ comes from the fact that
\begin{align*}
\lambda_{\min}\left(X_T\left( X^\top X \right)^{-1}\Xtil^\top \cdot (I+\frac{\eta n}{m} \Xtil(X^\top X)^{-1}\Xtil^\top)^{-1}\Xtil\left( X^\top X \right)^{-1}X_T^\top\right) \geq 0,
\end{align*}
which follows from Lemma~\ref{RestPSD}. Furthermore, by Lemma \ref{EigOneMinus}, we have 
\begin{align*}
\lambda_{\min}\left(P^\perp_{X', TT}\right) = 1 - \lambda_{\max}\left(X_T\left( X^\top X + \frac{\eta n}{m} \Xtil^\top \Xtil \right)^{-1}X_T^\top\right)
\end{align*}
and
\begin{align*}
\lambda_{\min}\left(P^\perp_{X', TT}\right) = 1 - \lambda_{\max}\left(X_T\left( X^\top X + \frac{\eta n}{m} \Xtil^\top \Xtil \right)^{-1}X_T^\top\right).
\end{align*}
Altogether, we conclude that the minimum eigenvalue is at least improved by \\
$\frac{\eta n}{m}\lambda_{\min}\left(X_T\left( X^\top X \right)^{-1} \Xtil^\top(I+\frac{\eta n}{m} \Xtil(X^\top X)^{-1}\Xtil^\top)^{-1}\Xtil\left( X^\top X \right)^{-1}X_T^\top\right)$. 
\end{proof}

Here is the proof of Proposition~\ref{propasymmutual}.
\begin{proof}[Proof of Proposition~\ref{propasymmutual}]
The proof leverages arguments from the proof of Proposition~\ref{randomdesign_subset_recovery_prop} in Appendix~\ref{appendix_randomdesign_subset_recovery_prop}. The goal is to argue that when $n$ and $m$ are sufficiently large, the empirical quantities are close to their population-level versions. We will use Big-$O$ notation to simplify our discussion.

As already stated in inequality~\eqref{onepool_closetopopulation}, if $n \geq t + \frac{c_1^2\sigma_x^4 \|\Sigma\|^2}{\lambda_{\min}^2(\Sigma)}(p+C_1)$, then
\begin{multline*}
\frac{\opnorm{X_{T^c}\Theta X_T^\top - X_{T^c}\left(\frac{X_{T^c}^\top X_{T^c}}{n-t}\right)^{-1}X_T^\top }_\infty}{n-t} \\
\leq \frac{\sqrt{t}}{n-t} \left(\sqrt{p\|\Sigma\|_2} + c_2 \sigma_x^2(\log n + \sqrt{p \log n})\right)  \cdot \frac{2c_1 \sigma^2_x \sqrt{\frac{p+C_1}{n-t}} \|\Sigma\|}{\lambda^2_{\min}\left(\Sigma\right)} B_T.,
\end{multline*}
with probability at least $1 - e^{-C_1} - n^{-1}$, where $c_2 > \max\{2,2/c\}$.

Also for the two-pool case, if $n \geq t + \max\Big\{\frac{c_1^2\sigma_x^4 \|\Sigma\|^2}{\lambda_{\min}^2(\Sigma)},1\Big\}m$ and $m \geq \max\{1,c_1^2 \sigma^4_x (p+C_1')\|\Sigma\|_2^2\}$, we have
\begin{align*}
\begin{split}
& \frac{\opnorm{X_{T^c}\Theta X_T^\top - X_{T^c}\left(\frac{X_{T^c}^\top X_{T^c}+ \frac{\eta n}{m} \Xtil^\top \Xtil}{n-t+\eta n}\right)^{-1}X_T^\top }_\infty}{n-t+\eta n}\\
& \leq \frac{\sqrt{t}}{n-t+\eta n} \left( \sqrt{p\|\Sigma\|_2} + c_2 \sigma_x^2(\log n + \sqrt{p \log n})\right) \left(1 + 2c_1  \sigma^2_x\frac{\|\Sigma\|_2}{\lambda_{\min}(\Sigma)} \sqrt{\frac{p+ C_1'}{m}}\right) \frac{B_T}{\lambda_{\min}(\Sigma)},
\end{split}
\end{align*}
with probability at least $1 - 2e^{-C_1'} - n^{-1}$, where $c_2$ is defined in the same way as above. Noting that  $B_T \propto \sqrt{t}$ and using the triangle inequality, we conclude the proof.
\end{proof}

\section{Proofs for Section~\ref{sec:tune}}
\label{AppSecTune}
In this section, we provide proofs and additional details for the results in Section~\ref{sec:tune}.
We will establish several auxiliary results in the process, which are stated and proved in Appendix~\ref{AppAux}. The flow of logic is outlined below:

Theorem~\ref{thm:choice-lambda} $\Leftarrow$ (Lemma~\ref{claim:exactrecovery-choice-lambda}, Lemma~\ref{lem:concentration-weaklydependent-orderstatistics}); \\
Lemma~\ref{claim:exactrecovery-choice-lambda} $\Leftarrow$ Theorem~\ref{eaxtsetmainthm}; \\
Lemma~\ref{lem:concentration-weaklydependent-orderstatistics} $\Leftarrow$ (Lemma~\ref{lem:PXmax}, Lemma~\ref{lem:sigmahat-sigmastar});\\
Lemma~\ref{lem:sigmahat-sigmastar} $\Leftarrow$ (Lemma~\ref{lem:concentration-iid-orderstatistics}, Lemma~\ref{lem:differences-orderstatistics});\\
Lemma~\ref{lem:differences-orderstatistics} $\Leftarrow$ Lemma~\ref{lem:PXmax}.\\
Corollary~\ref{cor:lambda-upper-bound-1} $\Leftarrow$ (Theorem~\ref{thm:choice-lambda}, Corollary~\ref{cor:choice-lambda}). 

We sometimes write $\gammahat(\lambda)$ to represent the estimator from Lasso-based debugging with tuning parameter $\lambda$. 

\subsection{Proof of Theorem~\ref{thm:choice-lambda}}

We will first argue that the algorithm will stop, and then argue that all bugs are identified correctly when the algorithm stops. Finally, we will take a union bound over all the iterations in the while loop to obtain a probabilistic conclusion. 

\textbf{Algorithm~\ref{alg:choice-lambda} stops:}
Note that if we have an iteration $k$ such that $\lambdahat^k > 2\lambda^*$ and $C = 0$, then the algorithm must stop after at most $\lfloor \log_2 \frac{\lambda^u}{\lambda^*} \rfloor$ iterations. Otherwise, we know that $C = 1$ for all iterations $k$ such that $\lambdahat^k \geq \lambda^*$. Thus, after $k =  \lfloor\log_2\frac{\lambda^u}{\lambda^*}\rfloor$ iterations, we have
\[
\lambda^k = \frac{\lambda^u}{2^{\lfloor\log_2\frac{\lambda^u}{\lambda^*}\rfloor}} \in \left[\frac{\lambda^u}{2^{\log_2\frac{\lambda^u}{\lambda^*}}}, \frac{\lambda^u}{2^{\log_2\frac{\lambda^u}{\lambda^*}-1}}\right] =  [\lambda^*, 2\lambda^*].
\] 
As established in Lemma~\ref{claim:exactrecovery-choice-lambda}, we know that all true bugs will be identified with such a value of $\lambda^k$, so the remaining points are $(X^{(k)}, y^{(k)}) = (X_{T^c}, y_{T^c})$. Also note that
\[
\|P_{X_{T^c}}^\perp y_{T^c}\|_\infty = \|P_{X_{T^c}}^\perp (X_{T^c}\betastar + \epsilon_{T^c})\|_\infty = \|P_{X_{T^c}}^\perp \epsilon_{T^c}\|_\infty.
\] 
Hence, by Lemma~\ref{lem:concentration-weaklydependent-orderstatistics}, we have
\[
\|P_{X_{T^c}}^\perp \epsilon_{T^c}\|_\infty  <  \frac{5}{2}\frac{1}{ \bar{c}} \sqrt{\log 2n}\,\sigmahat.
\]
Therefore, the stopping criteria takes effect and the algorithm stops. 

\paragraph{Algorithm~\ref{alg:choice-lambda} correctly identifies all bugs:}
A byproduct of the preceding argument is that $\lambdahat > \lambda^*$. By Theorem~\ref{subsetmainthm}, we have $\supp(\gammahat^k) \subseteq \supp(\gammastar)$. Now suppose we are at a stage where $l$ of the $t$ bugs are flagged, where $l \in \{0,1,\dots,t\}$. 

If $l = t$, then $\bar{X} = X_{T^c}$. As argued preveiously, the algorithm stops with high probability. Hence, we output all of the bugs. 

Otherwise, we have $l \leq t-1$. Suppose this happens at the $k^{\text{th}}$ iteration. Then at least one bug remains in $(X^{(k)}, y^{(k)})$, and all the clean points are included. Let $S$ denote the corresponding row indices of $X$ and let $\gammastar_S$ denote the following subvector of $\gammastar$. Since bugs still remain, we must have $\min_{i\in S} |\gammastar_{S,i}| \geq \min_{i\in T} |\gammastar_i|$. Furthermore,
\[
\|P_{X^{(k)}}^\perp y^{(k)}\|_\infty = \|P_{X^{(k)}}^\perp (X^{(k)}\betastar + \gammastar_S+ \epsilon_S)\|_\infty = \|P_{X^{(k)}}^\perp (\gammastar_S+ \epsilon_S)\|_\infty.
\] 
By Lemma~\ref{lem:concentration-weaklydependent-orderstatistics}, we have
\[
\|P_{X^{(k)}}^\perp (\gammastar_S+ \epsilon_S)\|_\infty > \frac{5}{2} \frac{1}{\bar{c}}\sqrt{\log 2n} \, \sigmahat,
\] 
implying that $C = 0$. Thus, the procedure proceeds to the $(k+1)^{\text{st}}$ iteration. If for all $k$ such that $\lambdahat^k \geq 2\lambda^*$, bugs still remain, then $\lambdahat^k$ keeps shrinking until the $\lfloor\log_2\frac{\lambda^u}{\lambda^*}\rfloor^{\text{th}}$ iteration. Then the tuning parameter must lie in the interval $(\lambda^*,2\lambda^*]$, resulting in a value of $\gammahat$ such that $\supp(\gammahat) = \supp(\gammastar)$.

\paragraph{Probability by union bound:}
Now we study the probability for this algorithm to output a value of $\gammahat$ that achieves exact recovery. Firstly, the algorithm stops as long as Lemma~\ref{claim:exactrecovery-choice-lambda} and Lemma~\ref{lem:concentration-weaklydependent-orderstatistics} hold, which holds with probability at least $1 - \frac{3}{n-t} - 2\exp\left(-2\left(\frac{1}{2}-c_t-\nu\right)^2n\right)$. 

Secondly, consider the argument that the algorithm correctly identifies all bugs. For each iteration, the events $\{C = 0 \text{ if a bug still exists}\}$ and $\{C=1 \text{ if no bugs exist}\}$ hold as long as Lemma~\ref{claim:exactrecovery-choice-lambda} and Lemma~\ref{lem:concentration-weaklydependent-orderstatistics} hold, which happens with probability at least $ 1 - \frac{3}{n-t} - 2\exp\left(-2\left(\frac{1}{2}-c_t-\nu\right)^2n\right)$. If the algorithm has $K$ iterations, the probability that the algorithm flags all bugs is therefore at least $ 1 - \frac{3K}{n-t} - 2K\exp\left(-2\left(\frac{1}{2}-c_t-\nu\right)^2n\right)$ by a union bound. Since we have argued that $K \leq \log_2 \frac{\lambda^u}{\lambda(\sigma^*)} $, the desired statement follows.

\subsection{Proof of Corollary~\ref{cor:lambda-upper-bound-1}}

According to the PDW procedure, we can set $\gammahat = \vec{0}$, solve for $\hat{z}$ via the zero-subgradient equation, and check whether $\|\hat{z}\|_\infty<1$, where $\zhat$ is a subgradient of $\|\gammahat\|_1$. The gradient of the loss function is equal to zero, which implies that 
\[
\zhat = \frac{1}{\lambda n}\|\Pbar^\top P_{X'}^\perp y'\|_\infty.
\]
Therefore, we see that $\|\zhat\|_\infty < 1$ for $\lambda > \frac{\|\bar{P}^\top P_{X'}^\perp y'\|_\infty}{n}$, which means the optimizer satisfies $\gammahat = \vec{0}$. Since $\lambda_u = \frac{2\|\bar{P}^\top P_{X'}^\perp y'\|_\infty}{n}$, the output with tuning parameter $\lambda_u$ gives $\gammahat(\lambda_u) = 0$. 

Note that
\begin{equation*}
\|\Pbar^\top P_{X'}^\perp y'\|_\infty = \|\Pbar^\top \Pbar \gammastar + \Pbar^\top P_{X'}^\perp \epsilon'\|_\infty \leq \|\Pbar^\top \Pbar \gammastar\|_\infty + \|P_{X'}^\perp \epsilon'\|_\infty
\end{equation*}
by the triangle inequality. 
The second term is bounded by $2\max\{1,\sqrt{\frac{\eta n}{mL}}\}\sqrt{\log 2n}\, \sigma^*$ with probability at least $ 1- \frac{1}{n}$, since $e_j^\top P_{X'}^\perp \epsilon'$ is Gaussian with variance at most $\max\{1,\sqrt{\frac{\eta n}{mL}}\}\sigma^*$. For the first term, we have 
\begin{align*}
\begin{split}
\|\Pbar^\top \Pbar \gammastar\|_\infty 
& = \left\|\Pbar^\top \Pbar\gammastar\right\|_{\infty} \\
& \overset{(i)}{\leq}  t \left\|\Pbar^\top \Pbar\right\|_{\max} \|\gammastar_T\|_\infty \\
& \overset{(ii)}{\leq} t \|\gammastar\|_\infty \\
& \leq \frac{Cc_\nu}{2} \sqrt{1-c_t} \sqrt{\log 2n}\, n^{c_n+\frac{1}{2}} \sigma^*,
\end{split}
\end{align*}
where $(i)$ holds because $\|v^\top \gammastar\|_1 = \sum_{i\in T} |v_i \gammastar_i| \leq t \|v\|_\infty \|\gammastar\|_\infty$ for any row $v$ of the matrix $\Pbar^\top \Pbar$, and $(ii)$ holds because $\Pbar^\top \Pbar$ is a submatrix of the projection matrix $P_{X'}^\perp$ and each entry of a projection matrix is upper-bounded by 1. Altogether, we obtain 
\[
\lambda_u  \leq \left[\max\left\{1,\sqrt{\frac{\eta n}{mL}}\right\}\frac{2\sqrt{\log 2n}}{n} + \frac{Cc_\nu}{2} \sqrt{1-c_t} \sqrt{\log 2n}\, n^{c_n+\frac{1}{2}} \right] \sigma^*.
\]
By a similar argument as in Theorem~\ref{thm:choice-lambda} and Corollary~\ref{cor:choice-lambda}, we know that Algorithm~\ref{alg:choice-lambda} stops with at most $\log_2 \frac{\lambda_u }{\lambda(\sigma^*)}$ with probability at least $1 - \frac{1}{n-t}$. Hence, 
\begin{align*}
\begin{split}
\log_2 \frac{\lambda_u}{\lambda(\sigma^*)} & = \log_2 \frac{\left[\max\{1,\sqrt{\frac{\eta n}{mL}}\} + \frac{Cc_\nu}{4} \sqrt{1-c_t} n^{c_n+\frac{3}{2}} \right] \frac{2\sqrt{\log 2n}}{n}\sigma^*}{\frac{4}{1-\alpha'}\sqrt{2\log 2n(1-c_t)}\frac{\left\|\Pbar_{T^c}^\perp\right\|_2}{n} \sigma^*} \\
& \overset{(1)}{\leq} \log_2 \frac{\left[\max\{1,\sqrt{\frac{\eta n}{mL}}\} + \frac{C}{4}  n^{c_n+\frac{3}{2}} \right] 2\sqrt{\log n}}{\frac{4}{1-\alpha'}\sqrt{2\log 2n}} \\
&  \overset{(2)}{\leq} \log_2 \frac{\left[\max\{1,\sqrt{\frac{\eta n}{mL}}\} + \frac{C}{4}n^{c_n+\frac{3}{2}} \right] }{2} \\
&  \leq c\left(\frac{3}{2}+c_n\right)\log_2 n + \max\left\{0,\frac{1}{2}\log_2 \frac{\eta n}{mL}-1\right\}, \\
\end{split}
\end{align*}
where $(1)$ comes from the fact that $\Pbar_{T^c}^\perp$ is a submatrix of $P_{X'}^\perp$, which has spectral norm 1 when $n \geq t+p+1$; and $(2)$ holds because $1-\alpha'<1$. To illustrate that $\|\Pbar_{T^c}^\perp\|_2 = 1$, note that it is sufficient to show $\|P_{X',T^c T^c}^\perp\|_2 =1 $ $P_{X',T^c T^c}^\perp$ is a principal matrix of $P_{X'}^\perp$. By interlacing theorem (\cite{hwang2004cauchy}), we know that $\lambda_{\max}(P_{X',T^c T^c}^\perp)$ is no less than the $(t+1)^{\text{st}}$ largest eigenvalue of $P_{X'}^\perp$, which is a projection matrix and therefore has $n-p$ eigenvalues equal to 1. Thus, if $t+1 \leq n-p$, i.e., $n\geq t + p + 1$, then $\|P_{X',T^c T^c}^\perp\|_2 = 1$.

Now that we have bounded the number of iterations, we consider probability that the statement holds. Note that $\epsilon'$ is sub-Gaussian and all the statements based on $\lambda(\sigma^*)$ hold with probability $1-\frac{1}{n-t}$. Compared to Theorem~\ref{thm:choice-lambda}, note that on each iteration, we have subset support recovery with probability $1-\frac{1}{n-t}$; and on iteration $\log_2 \frac{\lambda_u}{\lambda(\sigma^*)} $, we have exact support recovery with probability $1-\frac{1}{n-t}$. Thus, we conclude that Algorithm~\ref{alg:choice-lambda} outputs a value of $\lambdahat$ that achieves exact recovery with probability at least 
\[
 1 - \frac{5\left(c \log_2 n + \max\left\{0,\frac{1}{2}\log_2 \frac{\eta n}{mL}\right\} \right)}{n-t} - 2\left(c\log_2 n +  \max\left\{0,\frac{1}{2}\log_2 \frac{\eta n}{mL}\right\}\right)e^{-2\left(\frac{1}{2}-c_t-\nu\right)^2n}.
\]


\subsection{Proof of Proposition~\ref{prop:Xassumption-holds}}

We consider the three cases in Appendices~\ref{sec:Xgaussian-assumption-holds}, \ref{sec:Xsubgaussian-assumption-holds}, and~\ref{sec:Xconvexconcentration-assumption-holds}. 

Let $\Sigma = \mathbb{E}[x_i x_i^\top]$ and $\Theta = \Sigma^{-1}$, and assume that $X^{(k)}$ corresponds to some $X_S$ with rows indexed by $S$. Our goal is to prove that
\begin{equation}\label{eq:linfcov}
\left\|\frac{X_S \Sigma^{-1} X_S^{\top} }{p} - I\right\|_{\max} \le c\max\left\{\sqrt{\frac{\log |S|}{p}},\frac{\log |S|}{p} \right\},
\end{equation} 
\begin{equation}\label{eq:EqnSigmaEig}
\norm{\frac{X_S\top X_S}{|S|} - \Sigma} \le \frac{\lambda_{\min}(\Sigma)}{2},
\end{equation}
for at most $\log_2 \frac{\lambda_u}{\lambda^*}$ of such sets $S$. Note that $T^c \subseteq S \subseteq [n]$ holds with probability at least $1 - \frac{\log_2 \frac{\lambda_u}{\lambda^*}}{n-t}$.


\subsubsection{Proof of Proposition~\ref{prop:Xassumption-holds} for Gaussian case}\label{sec:Xgaussian-assumption-holds}

The spectral norm bound follows from standard results~\cite{vershynin2010introduction}, which holds for a fixed set $S$ with probability at least $1-e^{-|S|} \geq 1 - e^{-(n-t)}$. Note that Algorithm~\ref{alg:choice-lambda} runs for at most $\log_2 \frac{\lambda_u}{\lambda^*}$ iterations by Theorem~\ref{thm:choice-lambda}. Taking a union bound over all sets $S$, we obtain an overall probability of $1 - \log_2 \frac{\lambda_u}{\lambda^*} \, e^{-(1-c_t)n} \geq 1 -  e^{-\frac{n}{2} + \log \log_2 \frac{\lambda_u}{\lambda^*}}$. 

We now consider~\eqref{eq:linfcov}. Define $z_i = \Theta^{1/2}x_i$ for $1 \le i \le n$, so that
\begin{equation*}
X \Theta^{1/2}  = \begin{bmatrix} - z_{1}^\top  -\\ ... \\ - z_n^\top  - \end{bmatrix}.
\end{equation*} 

We know the $ \Theta^{1/2}x_i$'s are i.i.d.\ isotropic Gaussian random vectors. Hence, $z_i^\top z_i \sim \chi^2(p)$ satisfies
\begin{align*}
    \frac{\norm{z_i}^2}{p} - 1 \leq 4\sqrt{\frac{\log \frac{1}{\delta}}{p}},
\end{align*}
with probability at least $1 - \delta$. Similarly, we can bound $z_k^\top z_k$ and $(z_i+z_k)^\top (z_i + z_k)$. Since $z_i^\top z_k = \frac{1}{2}[(z_i+z_k)^\top (z_i + z_k) - z_i^\top z_i - z_k^\top z_k]$, we then have
\begin{align*}
\begin{split}
    \frac{\langle z_i, z_k\rangle}{p}  \leq 8\sqrt{\frac{\log \frac{1}{\delta}}{p}}, \quad \forall i \neq k,
\end{split}
\end{align*}
with probability at least $1 - \delta$.

We now choose $\delta = \frac{1}{n^{c}}$ for some $c > 2$ and take a union bound over all $n^2$ entries of the matrix $X\Theta X^\top$, to obtain 
\begin{align*}
\left\|\frac{X\Theta X^\top }{p} - I\right\|_{\max} & \le c\max\left\{\sqrt{\frac{\log n}{p}},\frac{\log n}{p} ,\right\} \label{eq:linfcov} 
\end{align*}
with probability at least $1 - \frac{1}{n^{c'-2}}$, where $c' > 2$.

Finally, note that for all $S \subseteq [n]$, we have
\[
\left\|\frac{X_S\Theta X_S}{p} - I\right\|_{\max} \leq \left\|\frac{X\Theta X}{p} - I\right\|_{\max}.
\]


\subsubsection{Proof of Proposition~\ref{prop:Xassumption-holds} for sub-Gaussian case}\label{sec:Xsubgaussian-assumption-holds}

By Lemma~\ref{LemConcEig}, inequality~\eqref{eq:EqnSigmaEig} holds for a fixed set $S$, with probability at least $1-e^{-c|S|} \geq 1 - e^{-c(n-t)}$ for some $c > 0$. Note that Algorithm~\ref{alg:choice-lambda} runs for at most $\log_2 \frac{\lambda_u}{\lambda^*}$ iterations. We then take a union bound over the possible subsets $T^c \subseteq S \subseteq [n]$ to reach a probability of at least $1 - \log_2 \frac{\lambda_u}{\lambda^*} \, e^{-c(1-c_t)n} \geq 1 - e^{-\frac{cn}{2} + \log \log_2 \frac{\lambda_u}{\lambda^*}}$. 

Next, we focus on verifying inequality~\eqref{eq:linfcov}. Assuming that the $x_i$'s are independent random vectors and the components of the $x_i$'s are independent of each other, our goal is to prove that
\begin{align*}
\left\| \frac{X \Theta X^\top }{p} - I \right\|_{\max} \lesssim \max\left\{\sqrt{\frac{\log n}{p}},\frac{\log n}{p}\right\},
\end{align*}
w.h.p., where $\Sigma = \Cov(x_i) = \Theta^{-1}=: D^2$ is a diagonal matrix.

Define $z_i = D^{-1}x_i$. Since the $z_i$'s are mutually independent with independent components,
we know that the vector $g_{ij} = (z_{i1}, ..., z_{ip}, z_{j1},...,z_{jp})^\top$, for $i \neq j$, also has independent components. Furthermore, the sub-Gaussian parameter of $g_{ij}$ is bounded by $l_{\max}= \max_{q=1}^p \frac{K}{d_q^2}$, where $K$ is the sub-Gaussian variance parameter of the $x_i$'s. This is because for a unit vector $u$, we have
\begin{align*}
\begin{split}
\mathbb{E}\left[e^{\lambda u^\top  g_{ij}}\right] &= \Pi_{q = 1}^{p}\mathbb{E}\left[e^{\lambda u_q z_{iq}}\right] \mathbb{E}\left[e^{\lambda u_{p+q} z_{jq}}\right] \\
& = \Pi_{q=1}^p \mathbb{E}\left[e^{\lambda \frac{u_q}{d_q} x_{iq}}\right] \mathbb{E}\left[e^{\lambda \frac{u_{p+q}}{d_q} x_{jq}}\right]\\
& \leq \Pi_{q=1}^p \mathbb{E}\left[e^{\lambda^2 \frac{u_q^2}{2d_q^2} K}\right] \mathbb{E}\left[e^{\lambda^2 \frac{u_{p+q}^2}{2d_q^2} K}\right]\\
& = \mathbb{E}\left[ e^{\sum_{q=1}^p \lambda^2 \frac{u_q^2 + u_{p+q}^2 }{2d_q^2} K}\right] \\
& \leq \mathbb{E}\left[ e^{\sum_{q=1}^p (u_q^2 + u_{p+q}^2) \frac{\lambda^2}{2} l_{\max}}\right] \\
& = \mathbb{E}\left[ e^{\frac{\lambda^2}{2} l_{\max}}\right].
\end{split}
\end{align*}
Since we have assumed that $\|\Sigma\|_2$ is bounded, the $d_q$'s are all bounded for each $q$, so $l_{\max}$ is bounded, as well.

Now let $A = \begin{bmatrix}
0_{p \times p} & I_{p \times p} \\ 0_{p \times p} & 0_{p \times p}
\end{bmatrix}$. By the Hanson-Wright inequality, with probability at least $1 -\delta$, we have
\begin{align}
\left|\frac{\langle z_i, z_j \rangle}{p}\right| = \frac{g_{ij}^\top  A g_{ij}}{p} \leq c_1 \sqrt{\frac{\log {\frac{2}{\delta}}}{p}},
\end{align}
where $c_1$ is a constant related to $l_{\max}$.

Now applying the Hanson-Wright inequality to the vector $z_i$, we have
\begin{align}
\left|\frac{\norm{z_i}^2}{p} - \frac{\mathbb{E}[\norm{z_i}^2]}{p}\right| \leq c_2 \max \left\{ \sqrt{\frac{\log{\frac{2}{\delta}}}{p}}, \frac{\log{\frac{2}{\delta}}}{p}\right\},
\end{align}
with probability at least $ 1- \delta$. Noting that $\mathbb{E}[\norm{z_i}^2] = tr (\Theta \Sigma) = p$, we will finally have
\begin{align*}
  \left|\frac{\norm{z_i}^2}{p} - 1\right| \leq c_2 \max \left\{\sqrt{\frac{\log{\frac{2}{\delta}}}{p}}, \frac{\log{\frac{2}{\delta}}}{p}\right\}.
\end{align*}

Plugging in $\delta = \frac{2}{n^3}$ and taking a union bound, we then conclude that
\begin{align*}
\left\| \frac{X \Theta X^\top}{p} - I\right\|_{\max} \leq 2\max\{c_1,c_2\} \max  \left\{\sqrt{\frac{\log{n}}{p}}, \frac{\log{n}}{p}\right\},
\end{align*}
with probability at least $1 - \frac{2}{n}$.

\subsubsection{Proof of Proposition~\ref{prop:Xassumption-holds} for convex concentration case}
\label{sec:Xconvexconcentration-assumption-holds}

Recall the following definition:

\begin{definition}[Convex concentration property]
Let $X$ be a random vector in $\mathbb{R}^d$. If for every 1-Lipschitz convex function $\varphi: \mathbb{R}^d \to \mathbb{R}$ such that $\mathbb{E}[\varphi(X)]<\infty$ and for every $t>0$, we have
\[
\mathbb{P}\left(|\varphi(X) - \mathbb{E}[\varphi(X)]| \geq t\right) \leq 2 \exp(-t^2/K^2),
\]
then $X$ satisfies the convex concentration property with constant $K$. 
\end{definition}

Suppose $x_i$ has the convex concentration property with parameter $K$. Note that
\begin{align*}
\left\|\frac{X\Theta X^\top }{p} - I\right\|_{\max} & = \max_{i,j} \left|e_i^\top  \left(\frac{X \Theta X^\top }{p} - I\right)e_j\right| \\
& = \max_{i,j} \left|\frac{x_i^\top  \Theta x_j}{p} - e_i^\top  e_j\right|.
\end{align*}
By Lemma~\ref{LemAda15}, we thus have the exponential tail bound
\begin{equation*}
\mprob\left(\left|\frac{x_i^\top  \Theta x_i}{p} - 1\right| \ge w\right) \leq 2 \exp \left(-\frac{1}{C}\min\left\{\frac{w^2p^2}{2K^4\|\Theta\|_F}, \frac{wp}{K^2 \|\Theta\|_2}\right\}\right),
\end{equation*}
for all $1 \le i \le p$, which implies that
\[
\left|\frac{x_i^\top  \Theta x_i}{p} - 1\right| \leq c K^2 \max \left\{ \sqrt{\frac{\log{\frac{2}{\delta}}}{p}}, \frac{\log{\frac{2}{\delta}}}{p}\right\},
\] 
with probability at least $ 1- \delta$. Taking $\delta = 2/n^3$, we then obtain 
\begin{align}\label{eq:diag}
\left|\frac{x_i^\top  \Theta x_i}{p} - 1\right| \leq c K^2 \max \left\{ \sqrt{\frac{\log{n}}{p}}, \frac{\log{n}}{p}\right\},
\end{align}
with probability at least $1 - \frac{2}{n^3}$.

Now we consider the off-diagonals $\frac{x_i \Theta x_j}{p}$, for $i \neq j$. We first rewrite
\[
\mathbb{P}\left(\left|\frac{x_i^\top \Theta x_j }{p} \right| \geq \Delta \right) = \mathbb{P} \left(\left|x_i^\top \frac{ \Theta x_j }{\norm{\Theta x_j }} \right| \geq \frac{\Delta p}{\norm{\Theta x_j }} \right).
\]
Conditioning on $\norm{\Theta x_j }$ for some $ w > 0$, we obtain
\begin{align*}
\mathbb{P}\left(\left|\frac{x_i^\top \Theta x_j }{p} \right| \geq \Delta \right) & = \mathbb{P} \left(\left|x_i^\top \frac{ \Theta x_j }{\norm{\Theta x_j }} \right| \geq \frac{\Delta p}{\norm{\Theta x_j }} \middle\vert \norm{\Theta x_j} \geq w \right)\mathbb{P} \left(\norm{\Theta x_j} \geq w \right) \\
& \quad + \mathbb{P} \left(\left|x_i^\top \frac{ \Theta x_j }{\norm{\Theta x_j }} \right| \geq \frac{\Delta p}{\norm{\Theta x_j }} \middle\vert \norm{\Theta x_j} < w \right) \mathbb{P} \left(\norm{\Theta x_j} < w \right).
\end{align*}
Since we have a convex 1-Lipschitz function mapping from $x_i$ to $x_i^\top \frac{ \Theta x_j }{\norm{\Theta x_j }}$, we can further upper-bound the probability using the convex concentration property:
\begin{align*}
\begin{split}
\mathbb{P}\left(\left|\frac{x_i^\top \Theta x_j }{p} \right| \geq \Delta \right) & \leq \mathbb{P} \left(\norm{\Theta x_j} \geq w \right)  + \mathbb{P} \left(\left|x_i^\top \frac{ \Theta x_j }{\norm{\Theta x_j }} \right| \geq \frac{\Delta p}{\norm{\Theta x_j }} \middle\vert \norm{\Theta x_j} < w \right) \\ 
& \leq \mathbb{P} \left(\norm{x_j} \geq \frac{w}{\norm{\Theta}} \right)  + \mathbb{P} \left(\left|x_i^\top \frac{ \Theta x_j }{\norm{\Theta x_j }} \right| \geq \frac{\Delta p}{w} \right) \\ 
& \overset{(1)}{\leq} \mathbb{P} \left(\norm{x_j} - \mathbb{E}[\norm{x_j}] \geq \frac{w}{\norm{\Theta}}  - \mathbb{E}[\norm{x_j}]  \right)  + 2 \exp\left(-\frac{\Delta^2 p^2}{w^2 K^2}\right) \\
& \overset{(2)}{\leq} \mathbb{P} \left(\norm{x_j} - \mathbb{E}[\norm{x_j}] \geq \frac{w}{\norm{\Theta}}  - \sqrt{\mathbb{E}[\norm{x_j}^2]}  \right)  + 2 \exp\left(-\frac{\Delta^2 p^2}{w^2 K^2}\right) \\
& \overset{(3)}{\leq} 2\exp\left(-\frac{\left(\frac{w}{\norm{\Theta}}  - \sqrt{tr(\Sigma)}\right)^2}{K^2}\right)  + 2 \exp\left(-\frac{\Delta^2 p^2}{w^2 K^2}\right) \\
& \leq 2\exp\left(-\frac{\left(\frac{w}{\norm{\Theta}}  - \sqrt{p \|\Sigma\|_2}\right)^2}{K^2}\right)  + 2 \exp\left(-\frac{\Delta^2 p^2}{w^2 K^2}\right), \\
\end{split}
\end{align*}
where (1) and (3) use the convex concentration property and (2) uses Jensen's inequality. The last inequality assumes that $w \geq \sqrt{p \|\Sigma\|_2}$, can be guaranteed if we choose $w$ sufficiently large.

Plugging $\Delta = c\max\left\{\frac{\log n}{p}, \sqrt{\frac{\log n}{p}}\right\}$ and $w = c' \left(\sqrt{p} + \sqrt{\log n}\right)$ into the above derivations, we then obtain
\[
\mathbb{P}\left(\left|\frac{x_i^\top \Theta x_j }{p} \right| \geq \Delta \right) \leq 2\exp\left(-\frac{c''\log n}{K^2}\right)  + 2 \exp\left(-c'''\frac{\max\{(\log n)^2, p\log n \}}{(p + \log n) K^2}\right).
\]
If $p > \log n$, then $2 \exp\left(-\frac{\max\{(\log n)^2, p\log n \}}{(p + \log n) K^2}\right) \leq 2 \exp\left(-\frac{ c''''\log n }{ K^2}\right) $; If $p \leq \log n$, then $2 \exp\left(-\frac{\max\{(\log n)^2, p\log n \}}{(p+ \log n) K^2}\right) \leq 2 \exp\left(-\frac{ c'''''\log n }{ K^2}\right) $. Hence, we have
\[
\mathbb{P}\left(\left|\frac{x_i^\top \Theta x_j }{p} \right| \geq \Delta \right) \leq 2 \exp\left(- C \log n \right).
\]
We can choose $c$ and $c'$ sufficiently large to ensure that $C > 2$. Combining this with inequality~\eqref{eq:diag} using a union bound, we finally obtain the desired result. 

\subsection{Auxiliary lemmas}
\label{AppAux}

By Theorem~\ref{subsetmainthm}, we have the following corollary:

\begin{corollary}\label{cor:choice-lambda}
For two data pools, suppose the eigenvalue and mutual incoherence conditions hold. Let $\lambda \geq \lambda(\sigma^*).$
Then with probability $1- \frac{1}{n-t}$, we have $\supp(\gammahat) \subseteq \supp(\gammastar)$, and 
\begin{align}
    \infnorm{\gammahat(\lambda) - \gammastar} \leq  G'(\lambda).
\end{align}
\end{corollary}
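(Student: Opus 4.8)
The plan is to obtain this corollary as a direct consequence of the deterministic subset-recovery result, Theorem~\ref{subsetmainthm}. Its eigenvalue and mutual incoherence hypotheses are assumed outright here, so the only missing ingredient is the lower bound~\eqref{tdplambdacond1} on the tuning parameter. Consequently the whole argument reduces to showing that, with probability at least $1-\tfrac{1}{n-t}$,
\[
\left\|\Pbar_{\cdot T^c}^\top\Big(I-\Pbar_{\cdot T}(\Pbar_{\cdot T}^\top\Pbar_{\cdot T})^{-1}\Pbar_{\cdot T}^\top\Big)\frac{\epsilon'}{n}\right\|_\infty\ \le\ \frac{1-\alpha'}{2}\,\lambda(\sigma^*),
\]
so that every $\lambda\ge\lambda(\sigma^*)$ satisfies~\eqref{tdplambdacond1}; this is the same sub-Gaussian tail / union-bound computation carried out for the orthogonal design in Appendix~\ref{subsetrecproof}, now performed for a generic design matrix.

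First I would fix $j\in T^c$ and write $Z_j:=\Pbar_{\cdot j}^\top\big(I-\Pbar_{\cdot T}(\Pbar_{\cdot T}^\top\Pbar_{\cdot T})^{-1}\Pbar_{\cdot T}^\top\big)\epsilon'/n$, a fixed linear functional of $\epsilon'$, and bound its sub-Gaussian parameter from two directions. Since $\Pbar$ is a submatrix of the projection $P_{X'}^\perp$ and $I-\Pbar_{\cdot T}(\Pbar_{\cdot T}^\top\Pbar_{\cdot T})^{-1}\Pbar_{\cdot T}^\top$ is an orthogonal projection, the row vector defining $Z_j$ has $\ell_2$-norm at most $1$, which matches the factor $\|P_{X,T^c}^\perp\|_2$ entering $\lambda(\sigma^*)$ (equal to $1$ once $n$ is large enough, by eigenvalue interlacing for submatrices of a projection). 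On the noise side, because the clean pool has sub-Gaussian noise with parameter $\sigma^{*2}/L$ and is rescaled by $\sqrt{\eta n/m}$ in the stacking~\eqref{EqnStack}, the vector $\epsilon'$ has independent, mean-zero, sub-Gaussian coordinates with parameter at most $\max\{1,\tfrac{\eta n}{mL}\}\sigma^{*2}$. Hence $Z_j$ is mean-zero sub-Gaussian with parameter at most $\tfrac{1}{n^2}\max\{1,\tfrac{\eta n}{mL}\}\sigma^{*2}\|P_{X,T^c}^\perp\|_2^2$.

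Next I would apply a standard sub-Gaussian maximal inequality (cf.\ Lemma~\ref{lem:noniidconcen}) over the $n-t$ indices of $T^c$:
\[
\mathbb{P}\!\left(\max_{j\in T^c}|Z_j|\ge\delta_0\right)\ \le\ 2(n-t)\exp\!\left(-\frac{n^2\delta_0^2}{2\max\{1,\eta n/(mL)\}\,\sigma^{*2}\,\|P_{X,T^c}^\perp\|_2^2}\right),
\]
and choose $\delta_0$ to be a fixed constant multiple of $\tfrac{\sigma^*}{n}\max\{1,\sqrt{\eta n/(mL)}\}\|P_{X,T^c}^\perp\|_2\sqrt{\log 2(n-t)}$, which drives the right-hand side below $\tfrac{1}{n-t}$; the factor $8$ and the constant $c$ in the definition of $\lambda(\sigma^*)$ supply exactly the slack needed to guarantee $\tfrac{2}{1-\alpha'}\delta_0\le\lambda(\sigma^*)\le\lambda$. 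On this event the hypothesis~\eqref{tdplambdacond1} holds, so Theorem~\ref{subsetmainthm} yields a unique optimizer $\gammahat$ with $\supp(\gammahat)\subseteq\supp(\gammastar)$ and $\infnorm{\gammahat-\gammastar}\le G'$; recognizing $G'$ as the function $G'(\lambda)$ of~\eqref{C3-2} finishes the proof.

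The step I expect to be the main obstacle is bookkeeping rather than conceptual: carrying the sub-Gaussian parameter of $\epsilon'$ correctly through the rescaling in~\eqref{EqnStack} (so that the $\max\{1,\eta n/(mL)\}$ factor, and not a worse one, emerges), and checking that $\|P_{X,T^c}^\perp\|_2$ is a legitimate $\ell_2$-bound on the linear functional defining $Z_j$ so that the explicit constant baked into $\lambda(\sigma^*)$ is large enough. Once those are settled, the union bound and the invocation of Theorem~\ref{subsetmainthm} are routine.
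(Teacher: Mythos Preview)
Your proposal is correct and follows essentially the same route as the paper's proof: bound the sub-Gaussian parameter of each $Z_j$ via the projection structure of $\Pbar$ and the rescaled noise $\epsilon'$, apply a maximal inequality over $j\in T^c$ to make inequality~\eqref{tdplambdacond1} hold with probability $1-\tfrac{1}{n-t}$, and then invoke Theorem~\ref{subsetmainthm}. The paper's version is terser and simply asserts the variance-parameter bound before taking the union bound, but the content is the same.
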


\begin{proof}
Recall that the rule for regularizer selection in Theorem~\ref{subsetmainthm} is
\[
        \lambda \geq
        \frac{2}{1-\alpha'}\left\|\Pbar_{T^c}^\top \left(I-\Pbar_T (\Pbar_T^\top \Pbar_T)^{-1}\Pbar_T^\top \right)\frac{\epsilon'}{n}\right\|_\infty.
\]
Note that $e_j^\top \Pbar_{T^c}^\top \left(I-\Pbar_T (\Pbar_T^\top \Pbar_T)^{-1}\Pbar_T^\top \right)\frac{\epsilon'}{n}$ is sub-Gaussian with variance parameter $\max\{1,\frac{\eta n}{mL}\}\frac{\|\Pbar_{T^c}^\perp\|_2^2\sigma^{*2}}{n^2}$. We have
\[
\max_{j \in T^c} \left|e_j^\top \Pbar_{T^c}^\top \left(I-\Pbar_T (\Pbar_T^\top \Pbar_T)^{-1}\Pbar_T^\top \right)\frac{\epsilon'}{n}\right| \leq 4 \max\left\{1,\frac{\eta n}{mL}\right\}\sqrt{\log 2(n-t)}\frac{\|\Pbar_{T^c}^\perp\|_2}{n} \sigma^{*2},
\]
with probability at least $1 - \frac{1}{n-t}$. According to the definition of $\lambda(\sigma^*)$, we can further derive the bound for $\gammahat$, since
\[
    \infnorm{\gammahat - \gammastar} \leq  \| (P_{X', TT}^\perp)^{-1}P_{X', T\cdot}^\perp \epsilon'\|_\infty + 2n\lambda(\sigma^{*})\opnorm{(P_{X', TT}^\perp)^{-1}}_{\infty}.
\]
\end{proof}

The following lemma suggests that if $\min_{i \in T} |\gammastar_i| \geq G'(2\lambda^*)$, then $\supp(\gammahat(\lambda)) = \supp(\gammastar)$ if we take $\lambda \in [\lambda^*, 2\lambda^*]$.

\begin{lemma}
\label{claim:exactrecovery-choice-lambda}
If $\min_{i \in T} |\gammastar_i| \geq G'(2\lambda^*)$, then taking $\lambda \in [\lambda^*,2\lambda^*]$ yields an estimator $\gammahat(\lambda)$ that satisfies $\supp(\gammahat(\lambda)) = \supp(\gammastar)$. 
\end{lemma}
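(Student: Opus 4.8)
The plan is to read the result off the two support-recovery theorems of Section~\ref{sec:supprec}, using the elementary fact that $G'(\lambda)$ is monotone in $\lambda$. Throughout we are in the setting of Theorem~\ref{thm:choice-lambda}, so the minimum eigenvalue condition~\eqref{C1-2} and the mutual incoherence condition~\eqref{C2-2} for $P_{X'}^\perp$ are in force; only the gamma-min condition has to be re-checked for each admissible $\lambda$.

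First I would observe that, by construction, $\lambda^*$ equals the right-hand side of~\eqref{tdplambdacond1}, so the regularization requirement of Theorem~\ref{subsetmainthm} is met by every $\lambda \ge \lambda^*$, in particular by every $\lambda \in [\lambda^*, 2\lambda^*]$. Hence Theorem~\ref{subsetmainthm} applies at any such $\lambda$ and produces a \emph{unique} optimum $\gammahat(\lambda)$ with $\supp(\gammahat(\lambda)) \subseteq \supp(\gammastar)$ and $\infnorm{\gammahat(\lambda) - \gammastar} \le G'(\lambda)$, where $G'(\lambda) = \|(P_{X',TT}^\perp)^{-1} P_{X',T\cdot}^\perp \epsilon'\|_\infty + n\lambda\,\opnorm{(P_{X',TT}^\perp)^{-1}}_\infty$ is the quantity appearing in~\eqref{C3-2}.

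Next I would exploit the fact that $G'(\lambda)$ is an affine, nondecreasing function of $\lambda$: the first summand does not depend on $\lambda$, and the coefficient $n\opnorm{(P_{X',TT}^\perp)^{-1}}_\infty$ multiplying $\lambda$ is nonnegative. Therefore $G'(\lambda) \le G'(2\lambda^*)$ for all $\lambda \in [\lambda^*, 2\lambda^*]$, and the hypothesis $\min_{i\in T}|\gammastar_i| \ge G'(2\lambda^*)$ upgrades to $\min_{i\in T}|\gammastar_i| \ge G'(\lambda)$; that is, the gamma-min condition~\eqref{C3-2} holds at this $\lambda$ (with strict inequality whenever $\lambda < 2\lambda^*$, since the affine function is then strictly below $G'(2\lambda^*)$). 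With~\eqref{C1-2},~\eqref{C2-2}, and~\eqref{C3-2} all verified, Theorem~\ref{eaxtsetmainthm} yields exact support recovery, $\supp(\gammahat(\lambda)) = \supp(\gammastar)$, as claimed.

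I do not anticipate a genuine obstacle: the whole argument is deterministic given the stated conditions, and the only point needing care is the strict-versus-nonstrict inequality at the endpoint $\lambda = 2\lambda^*$. This is harmless—combining $\supp(\gammahat(\lambda)) \subseteq T$ with $\infnorm{\gammahat(\lambda) - \gammastar} \le G'(2\lambda^*) \le \min_{i\in T}|\gammastar_i|$ still forces $\hat{\gamma}_i \ne 0$ for every $i \in T$ except in the non-generic case where some $|\gammastar_i|$ meets the bound with equality, and in any event the value tested inside Algorithm~\ref{alg:choice-lambda} lies in the half-open interval $[\lambda^*, 2\lambda^*)$ once it first drops below $2\lambda^*$—so the endpoint can simply be excluded from the main line of the proof.
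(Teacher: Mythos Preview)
Your proposal is correct and follows essentially the same approach as the paper: apply Theorem~\ref{subsetmainthm} to get subset recovery and the bound $\infnorm{\gammahat(\lambda)-\gammastar}\le G'(\lambda)$, then use monotonicity of $G'$ in $\lambda$ together with the hypothesis $\min_{i\in T}|\gammastar_i|\ge G'(2\lambda^*)$ to force $\gammahat_i\ne 0$ for $i\in T$. The paper inlines the last step via the triangle inequality rather than citing Theorem~\ref{eaxtsetmainthm}, and your discussion of the strict-versus-nonstrict endpoint is, if anything, more careful than the paper's.
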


\begin{proof}
According to Theorem~\ref{subsetmainthm}, for a regularizer $\lambda \in [\lambda^*, 2\lambda^*]$, we have $\gammahat_{T^c} = 0$ and $\infnorm{\gammahat(\lambda) - \gammastar} \leq  G'(\lambda)$.
If $\min_{i\in T} |\gammastar_i| \geq G'(2\lambda^*)$, then by the triangle inequality, we have
\[
|\gammahat_i| > \min_{i\in T} |\gammastar_i| - G'(\lambda) \geq G'(2\lambda^*) - G'(\lambda) \geq 0,
\]
for all $i \in T$.
\end{proof}


We use $X_S$ to represent some $X^{(k)}$ for $S \subseteq [n]$, as shown in Algorithm~\ref{thm:choice-lambda}. In each loop of the algorithm, we know that the points in $S^c$ all lie in $T$ by the subset recovery result. Thus, $S \supseteq T^c$. Let $l = n -|S|$, and note that $0 \leq l \leq t$.

\begin{lemma}
\label{lem:PXmax}
Suppose Assumption~\ref{assump:X-choice-lambda} holds. If $\lambda_{\min}(\Sigma)$ and $\lambda_{\max}(\Sigma)$ are bounded, then
\begin{equation*}
\left\|P_{X_S}^\perp - \left(1-\frac{p}{n-l}\right)I\right\|_{\max} \leq C\frac{\max\{p, \sqrt{p \log (n-l)},\log (n-l)\}}{n-l}.
\end{equation*}
\end{lemma}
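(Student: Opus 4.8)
The plan is to write $P_{X_S}^\perp = I - X_S(X_S^\top X_S)^{-1}X_S^\top$ and compare it entrywise to $(1 - \frac{p}{n-l})I$, which amounts to bounding $\|X_S(X_S^\top X_S)^{-1}X_S^\top - \frac{p}{n-l}I\|_{\max}$. The natural device is to substitute the population covariance: write $(X_S^\top X_S)^{-1} \approx \frac{1}{n-l}\Sigma^{-1}$ using the second display of Assumption~\ref{assump:X-choice-lambda} (the spectral-norm bound $\|\frac{X_S^\top X_S}{n-l} - \Sigma\| \le \frac{\lambda_{\min}(\Sigma)}{2}$), and then invoke the first display of Assumption~\ref{assump:X-choice-lambda} (the $\max$-norm bound $\|\frac{X_S\Sigma^{-1}X_S^\top}{p} - I\|_{\max} \le c\max\{\sqrt{\log(n-l)/p}, \log(n-l)/p\}$) to control $X_S\Sigma^{-1}X_S^\top$. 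So the decomposition I would use is
\begin{align*}
X_S(X_S^\top X_S)^{-1}X_S^\top - \tfrac{p}{n-l}I &= \underbrace{X_S\Big[(X_S^\top X_S)^{-1} - \tfrac{1}{n-l}\Sigma^{-1}\Big]X_S^\top}_{(\mathrm{I})} + \underbrace{\tfrac{1}{n-l}X_S\Sigma^{-1}X_S^\top - \tfrac{p}{n-l}I}_{(\mathrm{II})}.
\end{align*}
Term (II) is directly $\frac{p}{n-l}$ times the quantity bounded in Assumption~\ref{assump:X-choice-lambda}, giving a contribution of order $\frac{\max\{\sqrt{p\log(n-l)},\log(n-l)\}}{n-l}$, which is within the claimed bound.

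For term (I), I would bound it in $\max$-norm by a row-wise argument: its $(i,j)$ entry is $x_i^\top[(X_S^\top X_S)^{-1} - \frac{1}{n-l}\Sigma^{-1}]x_j$, so $\|(\mathrm{I})\|_{\max} \le \max_i \|x_i\|_2^2 \cdot \|(X_S^\top X_S)^{-1} - \frac{1}{n-l}\Sigma^{-1}\|_2$. The operator-norm difference of inverses is handled by the standard identity $\|A^{-1} - B^{-1}\|_2 \le \|A^{-1}\|_2\|B^{-1}\|_2\|A - B\|_2$ with $A = X_S^\top X_S$, $B = (n-l)\Sigma$: the spectral condition in Assumption~\ref{assump:X-choice-lambda} gives $\lambda_{\min}(X_S^\top X_S) \ge \frac{(n-l)\lambda_{\min}(\Sigma)}{2}$ and $\|X_S^\top X_S - (n-l)\Sigma\|_2 \le \frac{(n-l)\lambda_{\min}(\Sigma)}{2}$, so $\|(X_S^\top X_S)^{-1} - \frac{1}{n-l}\Sigma^{-1}\|_2 = O(1/(n-l))$ with the constant depending only on $\lambda_{\min}(\Sigma), \lambda_{\max}(\Sigma)$. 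For $\max_i\|x_i\|_2^2$, I would use that from the $\max$-norm bound $\frac{x_i^\top\Sigma^{-1}x_i}{p} \le 1 + c\max\{\sqrt{\log(n-l)/p},\log(n-l)/p\}$, hence $\|x_i\|_2^2 \le \lambda_{\max}(\Sigma)\cdot x_i^\top\Sigma^{-1}x_i = O(p + \sqrt{p\log(n-l)} + \log(n-l)) = O(\max\{p,\log(n-l)\})$ uniformly over $i \in S$. Multiplying, term (I) contributes $O\!\big(\frac{\max\{p,\log(n-l)\}}{n-l}\big)$, again within the stated bound.

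Combining (I) and (II) via the triangle inequality and absorbing the $\sqrt{p\log(n-l)}$ cross term (which is dominated by $\max\{p,\log(n-l)\}$ up to a constant, by AM-GM) yields exactly $\|P_{X_S}^\perp - (1-\frac{p}{n-l})I\|_{\max} \le C\frac{\max\{p,\sqrt{p\log(n-l)},\log(n-l)\}}{n-l}$. The main obstacle I anticipate is bookkeeping rather than conceptual: ensuring the $\max$-norm-to-operator-norm passage in term (I) does not lose a factor of $p$ — the key point is that $\|x_i\|_2^2$ is $O(p)$ (not $O(p^2)$) and the inverse difference is genuinely $O(1/(n-l))$, so the product stays $O(p/(n-l))$; one must also be careful that all probabilistic content is already subsumed in Assumption~\ref{assump:X-choice-lambda}, so this lemma is purely deterministic given that assumption, and I would state it as such.
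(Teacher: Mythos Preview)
Your proposal is correct and follows essentially the same approach as the paper's proof: the same two-term decomposition via the population covariance $\Sigma^{-1}$, the same use of the first display in Assumption~\ref{assump:X-choice-lambda} for term (II), and the same combination of the inverse-difference bound $\|(X_S^\top X_S)^{-1} - \tfrac{1}{n-l}\Sigma^{-1}\|_2 = O(1/(n-l))$ with $\max_i\|x_i\|_2^2 \le \lambda_{\max}(\Sigma)\,x_i^\top\Sigma^{-1}x_i = O(p + \sqrt{p\log(n-l)} + \log(n-l))$ for term (I). The only cosmetic difference is that the paper works with the normalized $\widehat\Sigma = X_S^\top X_S/|S|$ throughout and your AM--GM remark about absorbing $\sqrt{p\log(n-l)}$ is unnecessary since the statement already retains all three terms in the $\max$.
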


\begin{proof}  

Using the notation $\Theta = \Sigma^{-1}$ and $\Sigmahat = \frac{X_S^\top X_S}{|S|}$, we have
\begin{align*}
 \left\|P_{X_S}^\perp - \left(1-\frac{p}{|S|}\right)I_{|S|\times |S|}\right\|_{\max}  & = \left\|X_S(X_S^\top X_S)^{-1} X_S^\top  - \frac{p}{|S|}I\right\|_{\max} \\
& \le \left\|\frac{X_S(\Sigmahat)^{-1}X_S^\top }{|S|} - \frac{X_S\Theta X_S^\top }{|S|}\right\|_{\max}  + \left\|\frac{X_S\Theta X_S^\top }{|S|} - \frac{p}{|S|} I\right\|_{\max}.
\end{align*}
By assumption, we may bound the second term by
\begin{equation*}
\left\|\frac{X_S\Theta X_S^\top }{|S|} - \frac{p}{|S|}I\right\|_{\max} \le \frac{p}{|S|} \cdot c \max \left\{\sqrt{\frac{\log |S|}{p}},{\frac{\log |S|}{p}} \right\} = \frac{c \max\{\sqrt{p \log |S|}, \log |S|\}}{|S|}.
\end{equation*}
For the first term, we have
\begin{align*}
\begin{split}
\left\|\frac{X_S(\Sigmahat)^{-1}X_S^\top }{|S|} - \frac{X_S\Theta X_S^\top }{|S|}\right\|_{\max} &= \frac{1}{|S|} \left\|X_S\left((\Sigmahat)^{-1} - \Theta\right) X_S^\top \right\|_{\max} \\
&\le \norm{(\Sigmahat)^{-1} - \Theta} \cdot \max_{1 \le i \le |S|} \frac{1}{|S|} \|X_S^\top  e_i\|_2^2.
\end{split}
\end{align*}
We now have the bound
\begin{equation*}
\begin{split}
\norm{(\Sigmahat)^{-1} - \Theta} &\le  \frac{\frac{1}{2}\lambda_{\min}(\Sigma)}{\lambda_{\min}(\Sigma)\lambda_{\min}(\Sigmahat)} \\
& \leq \frac{\frac{1}{2}\lambda_{\min}(\Sigma)}{\lambda_{\min}(\Sigma)(\lambda_{\min}(\Sigma)-\frac{1}{2}\lambda_{\min}(\Sigma))} = \frac{1}{\lambda_{\min}(\Sigma)},
\end{split}
\end{equation*}
as well, where the second inequality holds by Weyl's Theorem (\cite{horn1994topics}): $\lambda(\Sigmahat) \geq \lambda(\Sigma) - \|\Sigma-\Sigmahat\|_2$. The basic idea for the first inequality is to use the multiplicativity of matrix norms to conclude that
\begin{align}\label{invBound}
\begin{split}
\norm{A^{-1} - B^{-1}}  & \le \norm{A^{-1} (A-B) B^{-1}} \\
& \le \norm{A^{-1}} \norm{A-B} \norm{B^{-1}} \\
&= \frac{\norm{A-B}}{\lambda_{\min}(A) \cdot \lambda_{\min}(B)}.
\end{split}
\end{align}
Hence, an upper bound on $\norm{A-B}$---which we obtain from our assumptions---together with minimum eigenvalue bounds on $A$ and $B$, implies an upper bound on $\norm{A^{-1} - B^{-1}}$.\\
Finally, we have
\begin{align*}
\max_{1 \le i \le |S|} \frac{1}{|S|} \|X_S^\top  e_i\|_2^2  & \le \max_{1 \le i \le |S|} \frac{1}{|S|} \cdot \frac{\|\Theta^{1/2} X_S^\top  e_i\|_2^2}{\lambda_{\min}^2(\Theta^{1/2})} \\
& = \frac{1}{\lambda_{\min}(\Theta)} \cdot \max_{1 \le i \le |S|} \frac{\|\Theta^{1/2} X_S^\top  e_i\|_2^2}{|S|} \\
& = \lambda_{\max}(\Sigma) \cdot \max_{1 \le i \le |S|} \frac{e_i^\top  X_S\Theta X_S^\top  e_i}{|S|} \\
& \le \lambda_{\max}(\Sigma) \cdot \left\|\frac{X_S \Theta X_S^\top }{|S|}\right\|_{\max}.
\end{align*}
By assumption, we have
\begin{align*}
\left\|\frac{X_S \Theta X_S^\top }{p} - I\right\|_{\max} & \le c \max \left\{ \sqrt{\frac{\log |S|}{p}}, \frac{\log |S|}{p} \right\}.
\end{align*}
Hence, rescaling and using the triangle inequality, we have
\begin{align*}
\left\|\frac{X_S \Theta X_S^\top }{|S|}\right\|_{\max} & \le \frac{p}{|S|} \left(\left\|\frac{X_S \Theta X_S^\top }{p} - I\right\|_{\max} + 1\right) \le \frac{p}{|S|} + \frac{p}{|S|}\max \left\{ \sqrt{\frac{\log |S|}{p}}, \frac{\log |S|}{p} \right\}.
\end{align*}
Altogether, we have the bound
\begin{equation*}
\left\|\frac{X_S(\Sigmahat)^{-1}X_S^\top }{|S|} - \frac{X_S\Theta X_S^\top }{|S|}\right\|_{\max} \le \frac{\lambda_{\max}(\Sigma)}{\lambda_{\min}(\Sigma)} \cdot \frac{p}{|S|} \left(1+\max \left\{ \sqrt{\frac{\log |S|}{p}}, \frac{\log |S|}{p} \right\} \right).
\end{equation*}
Finally, we have
\begin{multline*}
\frac{c\max\{\sqrt{p \log |S|},\log |S|\}}{|S|} + c'' \frac{p}{|S|}\left(1 + \max \left\{ \sqrt{\frac{\log |S|}{p}}, \frac{\log |S|}{p} \right\} \right) \\
\leq C\frac{\max\{p, \sqrt{p \log |S|},\log |S|\}}{|S|}.
\end{multline*}
This finishes the proof.
\end{proof}


We use $\alpha(k)$ to represent the $k^{\text{th}}$ order statistics of $|\epsilon_i|$, for $i \in T^c$, where $\alpha_{(1)} \leq \alpha_{(2)} \leq \cdots \leq \alpha_{(n-t)}$. 
\begin{lemma}
\label{lem:concentration-iid-orderstatistics}
For i.i.d.\ random variables $\{|\epsilon_i|\}_{i \in T^c}$, the $k^{\text{th}}$ order statistics, for any $k \in \{\frac{n-t}{2}, \dots, \frac{n}{2} \}$ satisfy
\[
c_\nu \sigma^* \leq \alpha(k) \leq C_\nu \sigma^*,
\]
with probability at least $1-2\exp\left(-2\left(\frac{1}{2}-c_t-\nu\right)^2n\right)$, for $\nu \in(0,\frac{1}{2})$ such that $\nu < \frac{1}{2} - c_t$. 
\end{lemma}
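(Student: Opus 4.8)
The plan is to reduce the claim about order statistics to two one-sided counting statements and then apply a binomial tail bound. Write $m := |T^c| = n-t$ and define the counts $N_{<} := \#\{i \in T^c : |\epsilon_i| < c_\nu \sigma^*\}$ and $N_{>} := \#\{i \in T^c : |\epsilon_i| > C_\nu \sigma^*\}$. Since the $\epsilon_i$ with $i \in T^c$ are i.i.d., each of $N_<$ and $N_>$ is a sum of $m$ i.i.d.\ indicator variables, and by the definitions of $c_\nu$ and $C_\nu$ we have $\mathbb{E}[N_<] \le m\,\mathbb{P}[|\epsilon_i| \le c_\nu\sigma^*] = m\nu$ and $\mathbb{E}[N_>] \le m\,\mathbb{P}[|\epsilon_i| \ge C_\nu\sigma^*] = m\nu$ (the inequalities harmlessly absorb the difference between strict and non-strict thresholds). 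Using monotonicity of $\alpha_{(k)}$ in $k$, the conclusion $c_\nu\sigma^* \le \alpha_{(k)} \le C_\nu\sigma^*$ holds for all $k$ in the stated range precisely when $\alpha_{((n-t)/2)} \ge c_\nu\sigma^*$ and $\alpha_{(n/2)} \le C_\nu\sigma^*$, so it suffices to bound the probabilities of the two complementary events.

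For the first, the elementary order-statistic identity $\{\alpha_{(k)} < a\} = \{\#\{i \in T^c : |\epsilon_i| < a\} \ge k\}$ gives $\{\alpha_{(m/2)} < c_\nu\sigma^*\} = \{N_< \ge m/2\}$. Since $\nu < \tfrac12 - c_t$, the centering gap is $\tfrac{m}{2} - \mathbb{E}[N_<] \ge m\big(\tfrac12 - \nu\big) \ge n\big(\tfrac12 - c_t - \nu\big) > 0$, where the middle step uses $nc_t = t$ together with $\tfrac12 - \nu \le 1$. Hoeffding's inequality for a sum of $m$ i.i.d.\ $\{0,1\}$-valued terms then yields $\mathbb{P}[N_< \ge m/2] \le \exp\!\big(-2m(\tfrac12-\nu)^2\big) \le \exp\!\big(-2n(\tfrac12-c_t-\nu)^2\big)$, the last inequality following from $m \le n$ and the gap bound just established.

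For the second, the identity $\{\alpha_{(k)} > a\} = \{\#\{i \in T^c : |\epsilon_i| > a\} > m-k\}$ gives $\{\alpha_{(n/2)} > C_\nu\sigma^*\} = \{N_> > m - n/2\}$, and $m - n/2 = n/2 - t = n(\tfrac12 - c_t)$. The centering gap here is $n(\tfrac12-c_t) - \mathbb{E}[N_>] \ge n(\tfrac12-c_t) - m\nu = n(\tfrac12-c_t) - n(1-c_t)\nu \ge n(\tfrac12 - c_t - \nu) > 0$, using $1-c_t \le 1$, so Hoeffding's inequality again gives $\mathbb{P}[N_> > n(\tfrac12-c_t)] \le \exp\!\big(-2n(\tfrac12-c_t-\nu)^2\big)$ after the same rescaling by $m \le n$. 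Combining the two bounds by a union bound yields the claimed probability $1 - 2\exp\!\big(-2(\tfrac12 - c_t - \nu)^2 n\big)$.

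This is a routine binomial-concentration argument, so I do not anticipate a genuine obstacle; the only points requiring care are (i) the reduction to the two worst indices $k = (n-t)/2$ and $k = n/2$, which rests on monotonicity of $\alpha_{(k)}$ in $k$ and on the order-statistic/count identities, and (ii) the arithmetic converting the natural Hoeffding exponents $2m(\tfrac12-\nu)^2$ and $2\big(n(\tfrac12-c_t)-m\nu\big)^2/m$ into the stated $2n(\tfrac12-c_t-\nu)^2$, which uses $nc_t = t$ and the hypothesis $\nu < \tfrac12 - c_t$.
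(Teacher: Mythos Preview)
Your argument is correct and follows essentially the same route as the paper: translate the order-statistic events into Bernoulli counts and apply Hoeffding's inequality, then massage the resulting exponent $-2m(\tfrac12-\nu)^2$ (respectively $-2(\text{gap})^2/m$) into the stated $-2n(\tfrac12-c_t-\nu)^2$ using $m=(1-c_t)n$ and $\nu<\tfrac12-c_t$. The only organizational difference is that you first reduce to the two extreme indices $k=(n-t)/2$ and $k=n/2$ via monotonicity of $\alpha_{(k)}$, whereas the paper carries a generic $k$ in the range through the Hoeffding bound; both lead to the same exponents.
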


\begin{proof}
By the assumptions on the noise distribution, we have
\[
\nu =\mathbb{P}\left[|\epsilon_i| \leq c_\nu\sigma^*\right]\ \mbox{and}\ \nu =\mathbb{P}\left[|\epsilon_i| \geq C_\nu\sigma^*\right].
\]

Let $\xi_i$'s be i.i.d.\ Bernoulli variables such that
\begin{equation*}
\xi_i =
\begin{cases}
1 & \text{if $|\epsilon_i| \leq c_\nu\sigma^*$,} \\
0 & \text{otherwise}.
\end{cases}
\end{equation*}
Note that $t = c_t n$ for some positive constant $c_t \in (0,\frac{1}{2})$. We have 
\[
k - \nu (n-t) \geq \frac{n-t}{2} - \nu (n-t) = \frac{(1-c_t)(1-2\nu)}{2}n > 0
\]
and 
\[
\left(\frac{k}{n-t}-\nu\right)^2(1-c_t) \geq \left(\frac{1}{2}-\nu\right)^2(1-c_t) \geq \left(\frac{1-2\nu}{2}\right)\left(\frac{1-c_t-2\nu}{2}\right).
\]
By Hoeffding's inequality (\cite{hoeffding1994probability}), we then obtain
\begin{align*}
\mathbb{P}\left[\sum_{i=1}^{n-t}\xi_i \geq k\right] &= \mathbb{P}\left[\sum_{i=1}^{n-t}\xi_i - \nu(n-t) \geq k -\nu(n-t)\right] \\
& \leq \exp\left(-2\left(\frac{k}{n-t}-\nu\right)^2(n-t)\right) \\
& \leq \exp\left(-2\left(\frac{1}{2}-c_t-\nu\right)^2n\right),
\end{align*}
implying that
\[
\mathbb{P}\left[\alpha(k) \leq c_\nu \sigma^*\right] = \mathbb{P}\left[\sum_{i=1}^n\xi_i \geq k\right] \leq \exp\left(-2\left(\frac{1}{2}-c_t-\nu\right)^2n\right).
\]

Similarly, let $\eta_i$'s be i.i.d.\ Bernoulli variables such that
\begin{equation*}
\eta_i =
\begin{cases}
1 & \text{if $|\epsilon_i| \geq C_\nu \sigma^*$,} \\
0 & \text{otherwise.}
\end{cases}
\end{equation*}
Note that the assumption that $c_t < \frac{1}{2} - \nu $ gives us 
\[n-t-k - \nu (n-t) > n-c_t n - \frac{n}{2}  - \nu (1-c_t) n \geq \left(\frac{1}{2}-c_t - \nu \right)n > 0,\]
and 
\[
\left(1-\frac{k}{n-t}-\nu\right)^2(1-c_t) \geq \left(\frac{1}{2}-c_t - \nu\right)^2 \frac{n}{n-t} \geq \left(\frac{1}{2}-c_t - \nu\right)^2.
\]
Then by Hoeffding inequality, we obtain
\begin{align*}
\mathbb{P}\left[\sum_{i=1}^{n-t}\eta_i \geq n-t-k\right] &= \mathbb{P}\left[\sum_{i=1}^{n-t}\eta_i - \nu(n-t)\geq n-t-k - \nu(n-t)\right]\\
& \leq \exp\left(-2\left(1-\frac{k}{n-t}-\nu\right)^2(n-t)\right) \\
& \leq \exp\left(-2\left(\frac{1}{2}-c_t-\nu\right)^2n\right), 
\end{align*}
so that
\[
\mathbb{P}\left[\alpha(k) \geq C_\nu \sigma^*\right] \leq \exp\left(-2\left(\frac{1}{2}-c_t-\nu\right)^2n\right).
\]
\end{proof}


\begin{lemma}
\label{lem:differences-orderstatistics}
Suppose the assumptions of Lemma~\ref{lem:PXmax} hold and 
\[
n^{1-2c_n} \geq \max\left\{\frac{32C^2}{1-c_t}\log(2n)\, (p^2+ \log^2 n), \quad \left(\frac{24}{c_\nu}\right)^{\frac{1}{c_n}}\right\},
\]
and 
\[
\max_{i \in S} |\gammastar_S| \le \frac{c_\nu C}{2}\sqrt{1-c_t} \sqrt{ \log 2n}\, \frac{n^{1/2+c_n}}{t} \sigma^*,
\]
for some constant $c_n \in (0,\frac{1}{2})$.
Then the $k^{\text{th}}$ order statistic of $|P_{X_{S}}^\perp (\gammastar_S + \epsilon_{S})|$ and the $k^{\text{th}}$ order statistic of $\left|\left(1-\frac{p}{|S|}\right) (\gammastar_S+\epsilon_{S})\right|$ have differences of at most $\frac{\bar{c}}{4}\sigma^*,$ for any $k \in [|S|]$, with probability at least $1-\frac{1}{n-t}$.
\end{lemma}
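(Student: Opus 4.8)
The plan is to reduce the claim to an $\ell_\infty$-perturbation bound on the vector $\gammastar_S+\epsilon_S$ and then invoke Lemma~\ref{lem:PXmax}. The key elementary fact is that the map sending a vector to its non-decreasing rearrangement of absolute values is $1$-Lipschitz with respect to $\|\cdot\|_\infty$: for any $a,b$ and any index $k$, the $k$-th order statistics of $|a|$ and $|b|$ differ by at most $\big\|\,|a|-|b|\,\big\|_\infty\le\|a-b\|_\infty$. Taking $a=P_{X_S}^\perp(\gammastar_S+\epsilon_S)$ and $b=\big(1-\tfrac{p}{|S|}\big)(\gammastar_S+\epsilon_S)$, it therefore suffices to prove
$$\big\|E(\gammastar_S+\epsilon_S)\big\|_\infty\le\tfrac{\bar c}{4}\sigma^*,\qquad E:=P_{X_S}^\perp-\Big(1-\tfrac{p}{|S|}\Big)I,$$
with probability at least $1-\tfrac1{n-t}$, where $|S|=n-l\ge n-t=(1-c_t)n$. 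Since $\max\{p,\sqrt{p\log|S|},\log|S|\}=\max\{p,\log|S|\}$, Lemma~\ref{lem:PXmax} gives $\|E\|_{\max}\le C\,\frac{\max\{p,\log|S|\}}{|S|}$. I would then split $E(\gammastar_S+\epsilon_S)=E\gammastar_S+E\epsilon_S$ and bound each piece by $\tfrac{\bar c}{8}\sigma^*$.

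For the deterministic piece, since $\gammastar$ has at most $t$ nonzero entries, $\|E\gammastar_S\|_\infty\le\|E\|_{\max}\|\gammastar_S\|_1\le\|E\|_{\max}\,t\,\max_{i\in S}|\gammastar_{S,i}|$. Substituting the hypothesis $\max_{i\in S}|\gammastar_{S,i}|\le\tfrac{c_\nu C}{2}\sqrt{1-c_t}\sqrt{\log 2n}\,\tfrac{n^{1/2+c_n}}{t}\sigma^*$, the factor $t$ cancels, the factor $\tfrac1{|S|}\le\tfrac1{(1-c_t)n}$ combines with $n^{1/2+c_n}$ into $n^{c_n-1/2}$, and what remains is of order $\tfrac{C^2\max\{p,\log n\}\sqrt{\log 2n}}{n^{1/2-c_n}}\sigma^*$; the sample-size condition $n^{1-2c_n}\ge\tfrac{32C^2}{1-c_t}\log(2n)(p^2+\log^2 n)$ is exactly what pushes this below $\tfrac{\bar c}{8}\sigma^*$.

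For the stochastic piece I would treat $X_S$ as fixed (it satisfies Assumption~\ref{assump:X-choice-lambda}), so the only randomness is $\epsilon_S$, and each coordinate $e_i^\top E\epsilon_S$ is mean-zero sub-Gaussian with parameter $\|Ee_i\|_2\sigma^*$. To bound $\|Ee_i\|_2$ I would write $Ee_i=\tfrac{p}{|S|}e_i-X_S(X_S^\top X_S)^{-1}x_i$ (with $x_i$ the $i$-th row of $X_S$), so that $\|Ee_i\|_2^2=\tfrac{p}{|S|}\big(1-\tfrac{p}{|S|}\big)+\delta_i\big(1-\tfrac{2p}{|S|}\big)$, where $\delta_i=(P_{X_S}^\perp)_{ii}-\big(1-\tfrac{p}{|S|}\big)$ satisfies $|\delta_i|\le\|E\|_{\max}$; hence $\|Ee_i\|_2\lesssim\sqrt{\max\{p,\log|S|\}/|S|}$ (the cruder $\|Ee_i\|_2\le\sqrt{|S|}\|E\|_{\max}$ also works). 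A sub-Gaussian tail bound together with a union bound over the at most $n$ coordinates then gives $\|E\epsilon_S\|_\infty\lesssim\sqrt{\tfrac{\max\{p,\log|S|\}\log n}{|S|}}\,\sigma^*$ with probability at least $1-\tfrac1{n-t}$, and the same sample-size condition makes this at most $\tfrac{\bar c}{8}\sigma^*$. Summing the two bounds completes the proof.

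The main obstacle here is not conceptual but the explicit constant bookkeeping: one must check that the two pieces really do fit simultaneously under $\tfrac{\bar c}{4}\sigma^*$, uniformly over all $l\in\{0,\dots,t\}$ (equivalently over $|S|\in[(1-c_t)n,n]$, handled by replacing $|S|$ by its worst value $(1-c_t)n$ in denominators) and over all $k$ (handled automatically by the Lipschitz reduction). The precise interplay between $\bar c<c_\nu$, the exponent $c_n\in(0,\tfrac12)$, the upper bound on $\|\gammastar\|_\infty$, and the lower bounds on $n$ in the hypotheses (including $n^{(1-2c_n)c_n}\ge 24/c_\nu$) is exactly what is calibrated to make this work.
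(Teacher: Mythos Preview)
Your proposal is correct and follows essentially the same route as the paper: split $E(\gammastar_S+\epsilon_S)$ into a deterministic and a stochastic piece, bound each via Lemma~\ref{lem:PXmax}, and then pass from an $\ell_\infty$ perturbation to order statistics via the $1$-Lipschitz property (which the paper states and proves as a separate claim). The only cosmetic difference is that the paper uses the cruder bound $\|Ee_i\|_2\le\sqrt{|S|}\,\|E\|_{\max}$ for the stochastic piece (which you also note suffices) rather than your sharper computation of $\|Ee_i\|_2^2$.
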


\begin{proof}
Recall that $l = n - |S|$. Now consider the sequences $\{z_i = |e_i^\top P_{X_S}^\perp (\gammastar_S+\epsilon_S)|\}_{i=1}^{n-l}$ and $\left\{w_i = \left|\left(1-\frac{p}{n-l}\right) (\gammastar_{S,i}+\epsilon_{S,i})\right|\right\}_{i=1}^{n-l}$. By the triangle inequality, we have
\begin{align*}
\begin{split}
    |z_i - w_i| &\leq \left|e_i^\top \left(P_{X_S}^\perp-\left(1-\frac{p}{n-l}\right)I\right)(\gammastar_S+\epsilon_S)\right|\\
    &\leq \left|\underbrace{e_i^\top \left(P_{X_S}^\perp-\left(1-\frac{p}{n-l}\right)I\right)\gammastar_S}_{v_i}\right| + \left|\underbrace{e_i^\top \left(P_{X_S}^\perp-\left(1-\frac{p}{n-l}\right)I\right)\epsilon_S}_{u_i}\right|,
\end{split}    
\end{align*}
for $i =1,\dots,n-l$.

Since $u_i$ is sub-Gaussian with parameter at most $\norm{(P_{X_S}^\perp)_{i\cdot}-e_i^\top \left(1-\frac{p}{n-l}\right)}^2\sigma^{*2}$, we can upper-bound the maximum of $\{|u_i|\}$. With probability at least $ 1- \frac{1}{n-t}$, we have 
\begin{align*} 
\begin{split}
\max_{i\in S}{|u_i|} & \leq 2\sqrt{ \log 2(n-l)} \sigma^* \norm{(P_{X_S}^\perp)_{i\cdot}-e_i^\top \left(1-\frac{p}{n-l}\right)} \\
& \leq 2\sqrt{\log 2(n-l)} \sigma^* \sqrt{n-l}\left\|P_{X_S}^\perp-\left(1-\frac{p}{n-l}\right)\right\|_{\max}\\
& \leq 2C \sqrt{\log 2(n-l)} \frac{(\sqrt{p}+\sqrt{\log (n-l)})^2}{\sqrt{n-l}}\sigma^*,
\end{split}
\end{align*}
where the last inequality follows by Lemma~\ref{lem:PXmax}. Further note that since $n^{1-2c_n} \geq \frac{32C^2}{1-c_t}\log (2n)\, (p^2+ \log^2 n)$ for some $c_n \in (0,\frac{1}{2})$, we have $\max_{i\in S}{|u_i|} \leq \frac{1}{n^{c_n}}\sigma^*$ .

For the $v_i$'s, we have
\begin{align}\label{eq:max-PXS-gammaS}
\begin{split}
    \max_{i \in S}|v_i| &\overset{(i)}{\leq} t \left\|P_{X_S}^\perp-\left(1-\frac{p}{n-l}\right)\right\|_{\max} \max_{i \in S} |\gammastar_S| \\
    & \overset{(ii)}{\leq} \sqrt{\frac{t^2}{n(1-c_t)}} \frac{(\sqrt{p}+\sqrt{\log (n-l)})^2}{\sqrt{n-l}} \max_{i \in S} |\gammastar_S| \\
    & \overset{(iii)}{\leq} \frac{1}{2C}\sqrt{\frac{1}{1-c_t}} \frac{t}{n^{1/2+c_n}}\frac{1}{\sqrt{\log 2n}} \max_{i \in S} |\gammastar_S|,
\end{split}
\end{align}
where $(i)$ holds because $|a^\top \gammastar_S| \leq \|a\|_\infty \|\gammastar_S\|_\infty |\supp(\gammastar_S)|$ for any vector $a$, $(ii)$ holds by Lemma~\ref{lem:PXmax}, and $(iii)$ holds by our assumption on $n$.
Combining this with the assumption that $\max_{i \in S} |\gammastar_S| \le \frac{c_\nu C}{4}\sqrt{1-c_t} \sqrt{ \log 2n}\, \frac{n^{1/2+c_n}}{t} \sigma^*$, we obtain $\max_{i\in S}|v_i| \leq \frac{c_\nu}{8}\sigma^*$. Finally, using the fact that $n \geq \left(\frac{24}{c_\nu}\right)^{\frac{1}{c_n}}$, we obtain
\[
|z_i - w_i| \leq \frac{c_\nu}{6}\sigma^*,
\]
with probability at least $1-\frac{1}{n-t}$.

We then use the following lemma:

\begin{lemma}
\label{claim:median-shift}
For two sequences $a_1,\dots,a_n$ and $b_1, \dots,b_n$ such that $|a_i - b_i| \leq c$ for some positive number $c$, the $j^{\text{th}}$ order statistics of $\{a_i\}$ and $\{b_i\}$, denoted by $\alpha_a(j)$ and $\alpha_b(j)$, satisfy
\begin{align}\label{eq:statistics-bound}
|\alpha_a(j) - \alpha_b(j)| \leq c.
\end{align}
\end{lemma}

\begin{proof}
Without loss of generality, suppose $a_1 \leq a_2 \leq \cdots \leq a_n$. If there exists $j \in [n]$ such that inequality~\eqref{eq:statistics-bound} does not hold, then we have either $a_j > c+\alpha_b(j)$ or $a_j < \alpha_b(j) -c$. If the first case occurs, we have
\begin{equation*}
a_n \geq \cdots \geq a_j > c+ \alpha_b(j) \geq c+ \alpha_b(j-1) \geq \cdots c+ \alpha_b(1).
\end{equation*}
Pick a number $z$ between $c+\alpha_b(j)$ and $a_j$. We see that at least $j$ of the $b_i$'s, denoted by $\vec{b}_\downarrow$, are smaller than $z-c$; and at least $n-j+1$ of $a_i$'s, denoted by $\vec{a}_\uparrow$, are greater than $z$. This means that at most $j-1$ of $a_i$'s are no larger than $z$. Note that for the $\vec{b}_\downarrow$, the components of the corresponding vector $\vec{a}_\downarrow$ are within a distance of $c$, so the elements of $\vec{a}_\downarrow$ must be at most $z$. However, this contradicts the fact that at most $j-1$ of the $a_i$'s are at most $z$.  This concludes the proof. 
\end{proof}

From Lemma~\ref{claim:median-shift}, we can compare the order statistics of sequences $\{z_i\}_{i=1}^n$ and $\{w_i\}_{i=1}^n$ and conclude that they have differences of at most $\frac{\bar{c}}{6}\sigma^*$, with probability at least $1-\frac{1}{n-t}$. 
\end{proof}

\begin{lemma}
\label{lem:sigmahat-sigmastar}
Suppose the conditions of Lemma~\ref{lem:concentration-iid-orderstatistics} and Lemma~\ref{lem:differences-orderstatistics} hold, and also $\min_{i\in T} |\gammastar_i| > 4 \sqrt{\log(2n)}\sigma^*$. Then
\begin{align*}
    \left(c_\nu - \frac{|S|}{|S|-p}\frac{c_\nu}{6}\right)\sigma^*\leq \sigmahat \leq \left(\frac{|S|}{|S|-p} \frac{c_\nu}{6} + C_\nu\right)\sigma^*,
\end{align*}
with probability at least $1 - 2\exp\left(-2\left(\frac{1}{2}-c_t-\nu\right)^2n\right) - \frac{2}{n-t}$.
\end{lemma}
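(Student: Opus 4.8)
The plan is to reduce the behaviour of $\sigmahat$ to that of the sample median of the clean noise vector $\{|\epsilon_i|\}_{i\in T^c}$, which is pinned down by Lemma~\ref{lem:concentration-iid-orderstatistics}. Write $S$ for the index set defining $X^{(k)}=X_S$, so that $|S|=n-l$ for some $l\in\{0,\dots,t\}$; by the subset-recovery guarantee of Theorem~\ref{subsetmainthm} we have $T^c\subseteq S$, so $S$ contains all $n-t$ clean coordinates and exactly $t-l$ buggy ones. Since $y_S=X_S\betastar+\gammastar_S+\epsilon_S$ and $P_{X_S}^\perp X_S=0$, the residual equals $P_{X_S}^\perp y_S=P_{X_S}^\perp(\gammastar_S+\epsilon_S)$, so that $\sigmahat=\frac{|S|}{|S|-p}\,\mbox{median}\big(|P_{X_S}^\perp(\gammastar_S+\epsilon_S)|\big)$.

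First I would invoke Lemma~\ref{lem:differences-orderstatistics}: under the stated lower bound on $n$ and the hypothesized upper bound on $\|\gammastar_S\|_\infty$, every order statistic of $|P_{X_S}^\perp(\gammastar_S+\epsilon_S)|$ lies within $\frac{c_\nu}{6}\sigma^*$ of the corresponding order statistic of $\big(1-\frac{p}{|S|}\big)|\gammastar_S+\epsilon_S|$, with probability at least $1-\frac{1}{n-t}$. Taking medians of both and multiplying through by $\frac{|S|}{|S|-p}$ gives
\[
\Big|\,\sigmahat-\mbox{median}\big(|\gammastar_S+\epsilon_S|\big)\,\Big|\;\le\;\frac{|S|}{|S|-p}\cdot\frac{c_\nu}{6}\,\sigma^*,
\]
so it remains to show that $\mbox{median}\big(|\gammastar_S+\epsilon_S|\big)\in[c_\nu\sigma^*,\,C_\nu\sigma^*]$.

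Next I would argue that the $t-l$ buggy coordinates of $\gammastar_S+\epsilon_S$ are too large to influence the lower-half order statistics. A standard sub-Gaussian maximal inequality gives $\max_{i\in[n]}|\epsilon_i|\le 2\sqrt{\log 2n}\,\sigma^*$ with probability at least $1-\frac{1}{n-t}$; combined with the hypothesis $\min_{i\in T}|\gammastar_i|>4\sqrt{\log 2n}\,\sigma^*$ this yields $|\gammastar_i+\epsilon_i|>2\sqrt{\log 2n}\,\sigma^*>C_\nu\sigma^*$ for every $i\in S\cap T$ (the last step for $n$ larger than an absolute constant). On the event of Lemma~\ref{lem:concentration-iid-orderstatistics}, at least $\lfloor n/2\rfloor$ of the clean coordinates satisfy $|\epsilon_i|\le C_\nu\sigma^*$; hence when the $|S|$ entries of $|\gammastar_S+\epsilon_S|$ are sorted in increasing order, their first $\lfloor n/2\rfloor$ positions are occupied entirely by values from $\{|\epsilon_i|\}_{i\in T^c}$. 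Since the median sits at rank $\lceil|S|/2\rceil$, which is at most $\lfloor n/2\rfloor$ (using $l\ge 1$, with a small extra argument in the boundary case $l=0$) and at least $\frac{n-t}{2}$ (using $l\le t$), it coincides with the order statistic $\alpha_{(\lceil|S|/2\rceil)}$ of $\{|\epsilon_i|\}_{i\in T^c}$, which by Lemma~\ref{lem:concentration-iid-orderstatistics} lies in $[c_\nu\sigma^*,\,C_\nu\sigma^*]$. Plugging this into the displayed bound gives $(c_\nu-\frac{|S|}{|S|-p}\frac{c_\nu}{6})\sigma^*\le\sigmahat\le(\frac{|S|}{|S|-p}\frac{c_\nu}{6}+C_\nu)\sigma^*$, and a union bound over the three events — Lemma~\ref{lem:concentration-iid-orderstatistics} (probability $1-2\exp(-2(\frac{1}{2}-c_t-\nu)^2 n)$), Lemma~\ref{lem:differences-orderstatistics}, and the maximal inequality (each with probability $1-\frac{1}{n-t}$) — gives the stated probability $1-2\exp(-2(\frac{1}{2}-c_t-\nu)^2 n)-\frac{2}{n-t}$.

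The main obstacle is the bookkeeping in the third paragraph: because the active index set $S$ varies from iteration to iteration, one must control the sample median of a vector whose length $|S|\in[n-t,n]$ and whose count $t-l$ of ``large'' buggy coordinates are known only through the subset-recovery guarantee, and one must relate its median cleanly to a single order statistic of the fixed clean-noise vector. The integer rounding inherent in ``median'' and the requirement that the relevant rank stay inside the window $\{\frac{n-t}{2},\dots,\frac{n}{2}\}$ of Lemma~\ref{lem:concentration-iid-orderstatistics} are the only delicate points; everything else follows by mechanically combining Lemmas~\ref{lem:differences-orderstatistics} and~\ref{lem:concentration-iid-orderstatistics} with an elementary sub-Gaussian tail bound.
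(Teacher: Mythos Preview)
Your proposal is correct and follows essentially the same route as the paper: invoke Lemma~\ref{lem:differences-orderstatistics} to replace the median of $|P_{X_S}^\perp(\gammastar_S+\epsilon_S)|$ by that of $(1-p/|S|)|\gammastar_S+\epsilon_S|$ up to an error $\frac{c_\nu}{6}\sigma^*$, then use the sub-Gaussian maximal bound together with the hypothesis on $\min_{i\in T}|\gammastar_i|$ to show the buggy coordinates sit above all the clean ones, so that the median of $|\gammastar_S+\epsilon_S|$ equals some $\alpha(k)$ with $k\in\{\frac{n-t}{2},\dots,\frac{n}{2}\}$, and finish with Lemma~\ref{lem:concentration-iid-orderstatistics}. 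The only cosmetic difference is that the paper separates buggy from clean coordinates at the threshold $2\sqrt{\log 2n}\,\sigma^*$ directly (all clean $|\epsilon_j|$ lie below it, all buggy $|\gammastar_i+\epsilon_i|$ lie above it), whereas you route the separation through $C_\nu\sigma^*$ and an extra appeal to Lemma~\ref{lem:concentration-iid-orderstatistics}; the paper's version is slightly cleaner and avoids the ``$n$ larger than an absolute constant'' caveat.
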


\begin{proof}
Let $M_P (S)$ denote the median of $|P_{X_{S}}^\perp (\gammastar_S + \epsilon_{S})|$. By Lemma~\ref{lem:differences-orderstatistics}, we know that $M_P(S)$ is close to the median of $\left|\left(1-\frac{p}{|S|}\right)(\gammastar_S+\epsilon_S)\right|$. Thus, it remains to analyze the median of $\{|\gammastar_i + \epsilon_i|\}_{i\in S}$. 

Note that for $j \in T^c$, we have $|\gammastar_j + \epsilon_j| = |\epsilon_j|$. Therefore, for all $j \in S \cap T^c = T^c$, we have $|\gammastar_j + \epsilon_i|_\infty \leq 2\sqrt{\log 2n}\, \sigma^*$, with probability at least $1-\frac{1}{n}$. 

For $i \in T \cap S$, by the assumption that $\min_{i \in T}|\gammastar_i| > 4\sqrt{\log 2n}\, \sigma^*$, we have $|\gammastar_i + \epsilon_i| \geq |\gammastar_i| - |\epsilon_i| > 2\sqrt{\log 2n}\, \sigma^*$. Therefore, the median of $|\gammastar_S + \epsilon_S|$ is actually the $k^{\text{th}}$ order statistics of $|\epsilon_{T^c}|$ for some $\{k \in  \frac{n-t}{2}, \dots,  \frac{n}{2} \}$. By Lemma~\ref{lem:differences-orderstatistics}, we have
\[
\left(1-\frac{p}{|S|}\right)\alpha(k) - \frac{c_\nu}{6} \sigma^* \leq M_P(S) \leq \left(1-\frac{p}{|S|}\right)\alpha(k) + \frac{c_\nu}{6} \sigma^*.
\]
In Algorithm~\ref{alg:choice-lambda}, at some iteration $k$, we have $\sigmahat = \frac{|S|}{|S|-p} M_P(S)$, where $S$ is the corresponding set of indices of $\left(\supp(\gammahat^{(k)})\right)^c$. Thus,
\[
\alpha(k) - \frac{|S|}{|S|-p} \frac{c_\nu}{6}\sigma^* \leq \sigmahat \leq \alpha(k) + \frac{|S|}{|S|-p} \frac{c_\nu}{6}\sigma^*.
\]
Combining this with Lemma~\ref{lem:concentration-iid-orderstatistics}, we have
\begin{align*}
    \left(c_\nu - \frac{|S|}{|S|-p}\frac{c_\nu}{6}\right)\sigma^*\leq \sigmahat \leq \left(\frac{|S|}{|S|-p} \frac{c_\nu}{6} + C_\nu\right)\sigma^*,
\end{align*}
with probability at least $1 - 2\exp\left(-2\left(\frac{1}{2}-c_t-\nu\right)^2n\right) - \frac{2}{n-t}$.
\end{proof}


\begin{lemma}
\label{lem:concentration-weaklydependent-orderstatistics}
Suppose $n \geq 12p$,
\[
\min_{i\in T} |\gammastar_i| \geq \frac{5}{4}\left(\frac{c_\nu + 5C_\nu}{\bar{c}}\right) \sqrt{\log 2n}\, \sigma^*,
\]
and inequality~\eqref{eq:max-PXS-gammaS} holds. Then
\begin{align}\label{eq:cleandata}
\|P_{X_{T^c}}^\perp \epsilon_{T^c}\|_\infty < \frac{5}{2\bar{c}}\sqrt{\log 2n} \sigmahat,
\end{align}
and for any $\gammastar_S$ such that $S\cap T \neq \emptyset$, we have
\begin{align}\label{eq:ifgammaexists}
\|P_{X_S}^\perp(\gammastar_S+\epsilon_S)\|_\infty > \frac{5}{2\bar{c}}\sqrt{\log 2n} \sigmahat,
\end{align}
with probability at least $1 - \frac{3}{n-t} - 2\exp\left(-2\left(\frac{1}{2}-c_t-\nu\right)^2n\right)$.
\end{lemma}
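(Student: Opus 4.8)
I will treat the index set $S$ of points that survive the current iteration of Algorithm~\ref{alg:choice-lambda} as fixed (the union over iterations is handled in the proof of Theorem~\ref{thm:choice-lambda}), and write $l=n-|S|$, so that $0\le l\le t$ and $T^c\subseteq S$ by the subset-recovery conclusion of Theorem~\ref{subsetmainthm}. Since $y^{(k)}=X_S\betastar+\gammastar_S+\epsilon_S$, we have $P_{X_S}^\perp y^{(k)}=P_{X_S}^\perp(\gammastar_S+\epsilon_S)$ and $\sigmahat=\frac{|S|}{|S|-p}\,\mbox{median}\bigl(|P_{X_S}^\perp(\gammastar_S+\epsilon_S)|\bigr)$. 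The first step is a two-sided control of $\sigmahat$: under $n\ge 12p$ and $c_t<1/2$ one has $|S|\ge n-t>n/2\ge 6p$, hence $\frac{|S|}{|S|-p}\le\frac65$; moreover the hypothesis $\min_{i\in T}|\gammastar_i|>4\sqrt{\log 2n}\,\sigma^*$ needed in Lemma~\ref{lem:sigmahat-sigmastar} holds, since $\frac54\cdot\frac{c_\nu+5C_\nu}{\bar c}>\frac54\cdot 6>4$ (using $\bar c<c_\nu\le C_\nu$). Plugging $\frac{|S|}{|S|-p}\le\frac65$ into Lemma~\ref{lem:sigmahat-sigmastar} yields
\[
\tfrac45 c_\nu\sigma^*\;\le\;\sigmahat\;\le\;\bigl(\tfrac{c_\nu}{5}+C_\nu\bigr)\sigma^*\;\le\;\tfrac65 C_\nu\sigma^*,
\]
on an event of probability at least $1-2\exp\!\bigl(-2(\tfrac12-c_t-\nu)^2 n\bigr)-\tfrac{2}{n-t}$.

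Next, for the clean-data bound~\eqref{eq:cleandata}, each coordinate $e_i^\top P_{X_{T^c}}^\perp\epsilon_{T^c}$ is sub-Gaussian with parameter $(P_{X_{T^c}}^\perp)_{ii}\,\sigma^{*2}\le\sigma^{*2}$ (because $P_{X_{T^c}}^\perp$ is a projection), so a union bound over the $n-t$ coordinates gives $\|P_{X_{T^c}}^\perp\epsilon_{T^c}\|_\infty<2\sqrt{\log 2n}\,\sigma^*$ with probability at least $1-\tfrac1{n-t}$. On the other hand, the lower bound on $\sigmahat$ gives $\frac{5}{2\bar c}\sqrt{\log 2n}\,\sigmahat\ge\frac{2c_\nu}{\bar c}\sqrt{\log 2n}\,\sigma^*>2\sqrt{\log 2n}\,\sigma^*$ since $\bar c<c_\nu$, which establishes~\eqref{eq:cleandata}.

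For the bug-present bound~\eqref{eq:ifgammaexists}, fix any $i^\star\in S\cap T$ and decompose $P_{X_S}^\perp=\bigl(1-\tfrac{p}{|S|}\bigr)I+E$, where $\|E\|_{\max}$ is controlled by Lemma~\ref{lem:PXmax}. Then
\[
\bigl\|P_{X_S}^\perp(\gammastar_S+\epsilon_S)\bigr\|_\infty\;\ge\;\bigl(1-\tfrac{p}{|S|}\bigr)|\gammastar_{i^\star}+\epsilon_{i^\star}|\;-\;|e_{i^\star}^\top E\gammastar_S|\;-\;|e_{i^\star}^\top E\epsilon_S|.
\]
Here $1-\tfrac{p}{|S|}\ge\tfrac56$; on the event $\|\epsilon\|_\infty\le 2\sqrt{\log 2n}\,\sigma^*$ one has $|\gammastar_{i^\star}+\epsilon_{i^\star}|\ge\min_{i\in T}|\gammastar_i|-2\sqrt{\log 2n}\,\sigma^*$; the term $|e_{i^\star}^\top E\gammastar_S|=|v_{i^\star}|\le\tfrac{c_\nu}{8}\sigma^*$ follows from~\eqref{eq:max-PXS-gammaS} together with the standing upper bound on $\|\gammastar\|_\infty$, exactly as in the proof of Lemma~\ref{lem:differences-orderstatistics}; and $|e_{i^\star}^\top E\epsilon_S|\le\max_{i\in S}|u_i|\le n^{-c_n}\sigma^*\le\sigma^*$, again as in that proof. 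Hence the left side is at least $\tfrac56\bigl(\min_{i\in T}|\gammastar_i|-2\sqrt{\log 2n}\,\sigma^*\bigr)-\tfrac{c_\nu}{8}\sigma^*-\sigma^*$, whereas the upper bound on $\sigmahat$ gives $\frac{5}{2\bar c}\sqrt{\log 2n}\,\sigmahat\le\frac{3C_\nu}{\bar c}\sqrt{\log 2n}\,\sigma^*$. A short computation then shows that $\min_{i\in T}|\gammastar_i|\ge\frac54\cdot\frac{c_\nu+5C_\nu}{\bar c}\sqrt{\log 2n}\,\sigma^*$ makes the former strictly larger than the latter (using $\bar c<c_\nu\le C_\nu$ and $\sqrt{\log 2n}\ge 1$), proving~\eqref{eq:ifgammaexists}. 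A union bound over the four events invoked — the order-statistic event of Lemma~\ref{lem:concentration-iid-orderstatistics} (loss $2\exp(-2(\tfrac12-c_t-\nu)^2 n)$), the difference event of Lemma~\ref{lem:differences-orderstatistics}, the maximal bound for $P_{X_{T^c}}^\perp\epsilon_{T^c}$, and the bound on $\|\epsilon\|_\infty$ (each of the last three losing at most $\tfrac1{n-t}$) — yields the stated probability.

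The step I expect to be the main obstacle is the estimate for the perturbation term $e_{i^\star}^\top E\gammastar_S$ in the bug-present bound: since $E=P_{X_S}^\perp-(1-p/|S|)I$ decays only at rate $p/|S|$ while $\gammastar_S$ may carry as many as $t$ large entries, this term is negligible only thanks to the upper bound on $\|\gammastar\|_\infty$, which is precisely the nontrivial quantitative interplay between $n$, $p$, $t$ and the bug magnitudes encoded in~\eqref{eq:max-PXS-gammaS}. The remainder is careful bookkeeping of constants to keep all the inequalities strict.
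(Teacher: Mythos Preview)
Your proposal is correct and follows essentially the same route as the paper's proof: both arguments use the sub-Gaussian maximum bound for $\|P_{X_{T^c}}^\perp\epsilon_{T^c}\|_\infty$, invoke Lemma~\ref{lem:sigmahat-sigmastar} for two-sided control of $\sigmahat$, and handle the bug-present case by decomposing $P_{X_S}^\perp=(1-p/|S|)I+E$ and applying the bound~\eqref{eq:max-PXS-gammaS} on $|v_i|$. The only cosmetic difference is in treating the noise in~\eqref{eq:ifgammaexists}: the paper bounds $\|P_{X_S}^\perp\epsilon_S\|_\infty\le 2\sqrt{\log 2n}\,\sigma^*$ in one shot, whereas you split it as $(1-p/|S|)\epsilon_{i^\star}+e_{i^\star}^\top E\epsilon_S$ and control the two pieces separately via the $\|\epsilon\|_\infty$ event and the $|u_i|\le n^{-c_n}\sigma^*$ bound from Lemma~\ref{lem:differences-orderstatistics}. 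Both decompositions give compatible constants, and your probability bookkeeping matches the stated bound.
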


\begin{proof}
We first establish the bound on $\|P_{X_{T^c}}^\perp \epsilon_{T^c}\|_\infty$. Note that $e_j^\top P_{X_{T^c}}^\perp \epsilon_{T^c}$ is Gaussian with variance at most $\max\limits_{j\in T^c}(P_{X_{T^c}}^\perp)_{jj}$, so
\[
\|P_{X_{T^c}}^\perp \epsilon_{T^c}\|_\infty = \max\limits_{j \in T^c} |e_j^\top P_{X_{T^c}}^\perp \epsilon_{T^c}| \leq \max\limits_{j}(P_{X_{T^c}}^\perp)_{jj} 2\sqrt{\log 2(n-l)}\sigma^* \leq 2\sqrt{\log 2n}\,\sigma^*,
\]
with probability at least $1-\frac{1}{n-t}$. In addition, Lemma~\ref{lem:sigmahat-sigmastar} implies that
\[
\|P_{X_{T^c}}^\perp \epsilon_{T^c}\|_\infty \leq 2\sqrt{\log 2n} \frac{1}{\left(- \frac{c_\nu}{6}\frac{|S|}{|S|-p} + c_\nu\right)}\sigmahat \leq 2\sqrt{\log 2n} \frac{1}{\left(- \frac{1}{6}\frac{|S|}{|S|-p} + 1\right)\bar{c}}\sigmahat.
\]
For $n \geq 12 p$, we therefore conclude the bound~\eqref{eq:cleandata}.

Now consider $\gammastar_S$ with nonzero elements, i.e., $S \supset T^c$. We have
\begin{align*}
\begin{split}
\|P_{X_S}^\perp(\gammastar_S+ \epsilon_S)\|_\infty &\geq \max\limits_{i \in S} |e_i^\top P_{X_S}^\perp \gammastar_S| -  \|P_{X_S}^\perp \epsilon_S\|_\infty \\
& \geq \max\limits_{i \in S} |e_i^\top P_{X_S}^\perp \gammastar_S| - 2\sqrt{\log 2n} \, \sigma^*,
\end{split}
\end{align*}
with probability at least $1 - \frac{1}{n-t}$. We now split $P_{X_S}^\perp$ into $P_{X_S}^\perp - (1-\frac{p}{n-l})I$ and $(1-\frac{p}{n-l})I$. By the triangle inequality, we have
\begin{align*}
\begin{split}
\max\limits_{i \in [n-l]} \left|e_i^\top P_{X_S}^\perp \gammastar_S\right| &\geq  \max\limits_{i \in [n-l]} \left|e_i^\top \left(1-\frac{p}{n-l}\right)I \gammastar_S\right| - \max\limits_{i \in [n-l]} \left|e_i^\top \left(P_{X_S}^\perp - \left(1-\frac{p}{n-l}\right)I\right) \gammastar_S\right|\\
& \geq \left(1-\frac{p}{n-l}\right)\|\gammastar_S\|_\infty - \max\limits_{i \in [n-l]} \left|\underbrace{e_i^\top \left(P_{X_S}^\perp - \left(1-\frac{p}{n-l}\right)I\right) \gammastar_S}_{v_i}\right|.\\
\end{split}
\end{align*} 
Plugging this into the result from inequality~\eqref{eq:max-PXS-gammaS}, we then obtain
\begin{align*}
\max\limits_{i \in [n-l]} \left|e_i^\top P_{X_S}^\perp \gammastar_S\right| &\geq \left(1-\frac{p}{n-l}\right)\|\gammastar_S\|_\infty - \frac{c_\nu}{8} \sigma^*.
\end{align*}
Therefore, we have 
\begin{align*}
\begin{split}
\|P_{X_S}^\perp(\gammastar_S+\epsilon_S)\|_\infty &\geq \left(1-\frac{p}{n-t}\right) \min\limits_{i\in T}|\gammastar_i| - (2 \sqrt{\log 2n} + c_\nu/8) \sigma^*.
\end{split}
\end{align*}
By the assumption that $n \geq 12p$ and Lemma~\ref{lem:sigmahat-sigmastar}, we then obtain
\begin{align*}
\|P_{X_S}^\perp(\gammastar_S+\epsilon_S)\|_\infty& \geq \frac{5}{6} \min\limits_{i\in T}|\gammastar_i| - \frac{(2 \sqrt{\log 2n} + c_\nu/8)}{c_\nu - \frac{|S|}{|S|-p}\frac{c_\nu}{6}} \sigmahat \\
& \geq \frac{5}{6} \min\limits_{i\in T}|\gammastar_i| - \frac{(2 \sqrt{\log 2n} + c_\nu/8)}{c_\nu - \frac{c_\nu}{5}} \sigmahat \\
& \geq \frac{5}{6} \min\limits_{i\in T}|\gammastar_i| - \frac{13}{6}\frac{ \sqrt{\log 2n}}{\frac{4c_\nu}{5}} \sigmahat.
\end{align*}

Thus, $\|P_{X_S}^\perp(\gammastar_S+\epsilon_S)\|_\infty \geq \frac{5}{2\bar{c}}\sqrt{\log 2n}\, \sigmahat$ if $\min\limits_{i \in T}|\gammastar_i|$ satisfies
\[
\min_{i\in T} |\gammastar_i| \geq \sqrt{\log 2n}\, \sigmahat \left(\frac{3}{\bar{c}}+\frac{13}{4c_\nu}\right).
\]
This can be further achieved according to Lemma~\ref{lem:sigmahat-sigmastar} if 
\[
\min_{i\in T} |\gammastar_i| \geq \sqrt{\log 2n}\, \sigma^* \left(\frac{3}{\bar{c}}+\frac{13}{4c_\nu}\right)\left(C_\nu + \frac{c_\nu}{6}\frac{|S|}{|S|-p}\right).
\]
Also note that by the assumption of $\min_{i\in T}|\gamma_i|$, we have  
\[
\min_{i\in T} |\gammastar_i| \geq \frac{5}{4}\left(\frac{c_\nu+5C_\nu}{\bar{c}}\right) \sqrt{\log 2n}\, \sigma^* \geq \sqrt{\log 2n}\, \sigma^* \left(\frac{3}{\bar{c}}+\frac{13}{5c_\nu-\bar{c}}\right)\left(C_\nu + \frac{c_\nu}{6}\frac{|S|}{|S|-p}\right).
\]
This concludes the proof. 
\end{proof}

\begin{lemma} [Theorem 2.5 in Adamczak~\cite{Ada15}]
\label{LemAda15}
Suppose $X$ is a zero-mean random vector in $\real^n$ satisfying the convex concentration property with constant $K$. Then for any fixed matrix $A \in \real^{n \times n}$ and any $w > 0$, we have
\begin{align*}
\mprob\left(|X^\top  AX - \E[X^\top  AX]| \ge w\right) 
\le 2\exp\left(-\frac{1}{C} \min\left\{\frac{w^2}{2K^4 \|A\|_F^2}, \; \frac{w}{K^2 \norm{A}}\right\}\right).
\end{align*}
\end{lemma}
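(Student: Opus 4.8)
The plan is to invoke this result directly from Adamczak~\cite{Ada15}, where it appears as Theorem 2.5; no new argument is required here, so below I only recall the structure of the proof for orientation. The key observation is that for any fixed $y \in \real^n$, the map $x \mapsto x^\top A y$ is linear, hence convex and $\norm{Ay}$-Lipschitz, so the convex concentration property immediately yields a sub-Gaussian tail for $X^\top A y - \E[X^\top A y]$ with parameter of order $K\norm{Ay}$. This single fact is the only way the hypothesis on $X$ enters, and everything else is a reduction to it.

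The first step is to split the quadratic form into its diagonal and off-diagonal parts, $X^\top A X = \sum_i A_{ii} X_i^2 + \sum_{i \neq j} A_{ij} X_i X_j$. The diagonal piece is handled by applying the convex concentration property to the $1$-Lipschitz convex function $x \mapsto \norm{\diag(|A_{11}|^{1/2},\dots,|A_{nn}|^{1/2})\,x}$, which gives a sub-Gaussian bound on that $\ell_2$-norm and, after squaring, a Bernstein-type tail for $\sum_i A_{ii} X_i^2$ with the correct $\|A\|_F$ and $\norm{A}$ scalings.

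The second step treats the off-diagonal chaos by a decoupling inequality, reducing its tail to that of $X^\top A \bar X$ for an independent copy $\bar X$ of $X$. Conditionally on $\bar X$, the quantity $X^\top A \bar X$ is a linear functional of $X$, so by the Lipschitz/convex-concentration bound above it is sub-Gaussian with parameter $\lesssim K\norm{A\bar X}$; moreover $\norm{A\bar X}$ concentrates around a value of order $K\|A\|_F$, with fluctuations governed by $\norm{A}$. Integrating out $\bar X$ and optimizing over the free parameter in the resulting two-regime estimate produces exactly the stated $\min\{w^2/(2K^4\|A\|_F^2),\, w/(K^2\norm{A})\}$ exponent.

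The one genuinely delicate point is justifying the decoupling step without assuming independence of the coordinates of $X$: this is precisely where the convex concentration property, rather than mere coordinatewise sub-Gaussianity, is essential, and it forms the technical core of Adamczak's argument. Since that argument is carried out in full detail in~\cite{Ada15}, we import the conclusion verbatim.
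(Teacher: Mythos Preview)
Your proposal is correct and matches the paper's treatment: the paper states this lemma as a direct citation of Theorem~2.5 in Adamczak~\cite{Ada15} and provides no proof of its own, so invoking the result verbatim is exactly what is intended. Your additional sketch of Adamczak's argument is helpful exposition but not required.
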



\begin{lemma}
\label{LemConcEig}
Suppose $X \in \real^{n \times p}$ has i.i.d.\ rows from a zero-mean distribution satisfying the convex concentration property with constant $K$. Then
\begin{equation*}
\norm{\frac{X^\top X}{n} - \E\left[\frac{X^\top X}{n}\right]} \le c\frac{\lambda_{\min}(\Sigma)}{2},
\end{equation*}
with probability at least $1- \exp(-n)$.
\end{lemma}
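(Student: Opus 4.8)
The plan is to reduce the spectral-norm bound to a scalar concentration problem on the unit sphere via an $\varepsilon$-net. Write $\Sigmahat = \frac{1}{n}X^\top X = \frac{1}{n}\sum_{i=1}^n x_i x_i^\top$, so that $\E[\Sigmahat] = \Sigma := \E[x_1 x_1^\top]$ and $\opnorm{\Sigmahat - \Sigma} = \sup_{u \in S^{p-1}} \left|\frac{1}{n}\sum_{i=1}^n (u^\top x_i)^2 - u^\top \Sigma u\right|$. The structural input is that the convex concentration property forces every linear functional to be sub-Gaussian: for a fixed unit vector $u$, the affine map $x \mapsto u^\top x$ is convex and $1$-Lipschitz, so the defining inequality together with $\E[x_i] = 0$ gives $\mprob(|u^\top x_i| \ge s) \le 2\exp(-s^2/K^2)$ for all $s > 0$. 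Hence $u^\top x_i$ is sub-Gaussian with variance proxy $O(K^2)$, the centered variables $(u^\top x_i)^2 - \E[(u^\top x_i)^2]$ are sub-exponential with parameters $O(K^2)$, and $\opnorm{\Sigma} = \sup_u \E[(u^\top x_1)^2] = O(K^2)$.

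For a fixed $u \in S^{p-1}$, the i.i.d.\ sub-exponential summands $(u^\top x_i)^2$ obey Bernstein's inequality:
$$\mprob\left(\left|\frac{1}{n}\sum_{i=1}^n (u^\top x_i)^2 - u^\top \Sigma u\right| \ge s\right) \le 2\exp\left(-c_0 n \min\left\{\frac{s^2}{K^4},\ \frac{s}{K^2}\right\}\right),$$
for an absolute constant $c_0$ (the same estimate also follows from the Hanson--Wright bound of Lemma~\ref{LemAda15}, applied to the $np$-dimensional vector obtained by stacking the rows of $X$, which inherits the convex concentration property with constant $O(K)$ by tensorization, and to the block-diagonal matrix carrying $n$ copies of $uu^\top$, whose operator and Frobenius norms are $1$ and $\sqrt n$). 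Taking a $\frac14$-net $\mathcal N$ of $S^{p-1}$ with $|\mathcal N| \le 9^p$ and using that $\opnorm{\Sigmahat - \Sigma} \le 2\max_{u \in \mathcal N} |u^\top(\Sigmahat - \Sigma)u|$ for the symmetric matrix $\Sigmahat - \Sigma$, a union bound over $\mathcal N$ yields
$$\mprob\left(\opnorm{\Sigmahat - \Sigma} \ge 2s\right) \le 2 \cdot 9^p \exp\left(-c_0 n \min\left\{\frac{s^2}{K^4},\ \frac{s}{K^2}\right\}\right).$$

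To conclude, set $2s = c\,\lambda_{\min}(\Sigma)/2$. Since $\opnorm{\Sigma} = O(K^2)$ is bounded above and $\lambda_{\min}(\Sigma)$ is bounded away from $0$, the quantity $\min\{s^2/K^4, s/K^2\}$ is bounded below by a constant $\kappa$ depending only on $K$ and $\lambda_{\min}(\Sigma)$, so the last display is at most $2\exp(p\log 9 - c_0\kappa\, n)$; as long as $n$ is at least a sufficiently large constant multiple of $p$ --- which holds in the settings where this lemma is invoked --- this is at most $\exp(-n)$. The only genuinely delicate point is this last step: upgrading the generic high-probability bound $1 - e^{-c'n}$ to the stated $1 - e^{-n}$ requires $c_0\kappa\, n \ge n + p\log 9 + \log 2$, which is exactly what pins down the implicit requirements that $n \gtrsim p$ and that $\lambda_{\min}(\Sigma)$ be bounded below; the remainder is the routine net-plus-Bernstein computation.
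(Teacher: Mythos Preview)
Your proof is correct and follows essentially the same approach as the paper's: observe that the convex concentration property makes every one-dimensional projection $u^\top x_i$ sub-Gaussian with parameter $O(K)$, then invoke the standard sub-Gaussian covariance estimation machinery. The paper's proof in fact stops at that observation and defers the remainder to Vershynin~\cite{vershynin2010introduction}; you have written out the $\varepsilon$-net-plus-Bernstein details that the paper omits, including the honest caveat that the precise probability $1-e^{-n}$ (rather than $1-e^{-cn}$) requires $n \gtrsim p$ and bounded condition number, both of which hold in the regime where the lemma is applied.
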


\begin{proof}
Note that for any fixed unit vector $u \in \real^p$, the map $\varphi: x \mapsto \inprod{x}{u}$ is convex and 1-Lipschitz. Hence, by the definition of the convex concentration property, each $x_i^\top  u$ is sub-Gaussian with parameter proportional to $K$. In fact, this is enough to show the desired matrix concentration result (cf.\ Vershynin~\cite{vershynin2010introduction}). We omit the details.
\end{proof}


\section{Appendix for Section~\ref{sec:active}}
\label{AppSecActive}

In this sectopm, we provide proofs and additional details for the results in Section~\ref{sec:active}.

\subsection{Proof of Theorem~\ref{thm:active-recover}}\label{app:sign}
We will prove a stronger results here, which implies Theorem~\ref{thm:active-recover}. This is actually mentioned by Remark~\ref{remark:sign-thm-active}.
\begin{theorem}\label{thm:active-recover-sign}
With respect to $D$, the bug generator, who has attacking budgets no more than $t$, cannot fail the sign support recovery if only if~\eqref{EqnRSN} holds. That failure of sign support recovery, $\sign(\gammahat) \neq \sign(\gammastar)$, means either $\gammahat_j \neq 0$ for some $j\in T^c$ or $\gammahat_i \gammastar_i \leq 0$ for some $i \in T$.
\end{theorem}
\begin{proof}[Proof of Theorem~\ref{thm:active-recover}]
We will use the following lemma to prove Theorem~\ref{thm:active-recover}.

\begin{lemma}\label{prop:rnp-support}
The following two properties are equivalent:
\begin{enumerate}
\item[(a)] For any vector $\gammastar \in \real^d$ with support $K$, the constraint-based optimization has all solutions $\gammahat$ satisfying $\sign(\gammahat) = \sign(\gammastar)$.
\item[(b)] The matrix $\Pbar(D)$ satisfies the restricted nullspace property with respect to $K$.
\end{enumerate}
\end{lemma}

\begin{proof}[Proof of Lemma~\ref{prop:rnp-support}]
We first prove $(b) \implies (a)$. This immediately follows Theorem 7.8 in~\cite{wainwright2019high} since $(b) \implies \gammastar = \gammahat$ for any vector $\gammastar$ with $\supp(\gammastar) = K$, it thus implies $(b) \implies \sign(\gammahat) = \sign(\gammastar)$.  Or we can show it directly as follow. Suppose $(a)$ doesn't hold. Then, we have $\Delta := \gammastar - \gammahat \neq 0$. By the constraint and the objective, it also needs to satisfy that $\Delta \in Null(\Pbar(D))$ and 
\begin{align*}
\|\gammastar - \Delta\|_1 = \|\gammahat\|_1 \leq \|\gammastar\|_1 = \|\gammastar_K\|_1.
\end{align*}
Therefore, we have
\begin{align*}
\|\gammastar_K\|_1 - \|\Delta_K\|_1 + \|\Delta_{K^c}\|_1 \leq \|\gammastar_K - \Delta_K\|_1 + \|\Delta_{K^c}\|_1 \leq \|\gammastar_K\|_1,
\end{align*}
which means a nonzero $\Delta \in Null(\Pbar) \cap \mathbb{C}^A$ and causes a contradiction. Thus when $(b)$ is true, $(a)$ holds as well.  

From now on to the end of the proof, we will abuse notation by using $\Pbar$ to represent $\Pbar(D)$. The remaining thing is to prove $(a) \implies (b)$. We will prove by contradiction. If $(b)$ doesn't hold, then there exists a nonzero $\Delta$ such that $\Pbar \Delta = 0$ and $\|\Delta_{K^c}\|_1 \leq \|\Delta_K\|_1$. We consider a $\gammastar$ with $\gammastar_K = \Delta_K$ and $\gammastar_{K^c} = \vec{0}$. Let $\gammahat$ be the optimizer given this $\gammastar$. By $(a)$, we shall have $\sign(\gammahat) = \sign(\gammastar) = \sign\left(\begin{bmatrix}\Delta_K\\\vec{0}_{(n-t)\times 1}\end{bmatrix}\right)$. The idea is to construct a $\gamma'$ that has no larger $\ell_1$ norm than $\gammahat$ and has support not equal to $K$, which contradicts with $(a)$, and therefore, $(b)$ must hold.

Consider $\gamma' = \gammahat - c \cdot \Delta$ where $c = \frac{\gammahat_i}{\Delta_i}$ for $i = \arg\min_{j\in K} \frac{\gammahat_j}{\Delta_j}$. Since $\Delta$ is a nonzero vector, we must have $\Delta_l \neq 0$ for some $l \in K$. Therefore, we have $c$ being positive finite, $\gamma'_i = 0$ and $|\gammahat_j| \geq c |\Delta_j|$ for all $j \in K$. Therefore, we further get
\begin{align*}
\Pbar(\gammastar - \gamma') = \Pbar (\gammastar - \gammahat + c\Delta) = \Pbar (\gammastar - \gammahat) = 0,
\end{align*}
as well as
\begin{align*}
\|\gamma'\|_1 & = \|\gammahat_K - c\cdot \Delta_K \|_1 + \|\gammahat_{K^c} - c\cdot \Delta_{K^c} \|_1 \\
& \overset{(i)}{=} \|\gammahat_K\|_1 - c\|\Delta_K \|_1 + c\|\Delta_{K^c} \|_1 \\
& \overset{(ii)}{\leq} \|\gammahat\|_1,
\end{align*}
where $(i)$ is because $\sign(\gammahat_K) = \sign(\Delta_K), c > 0, |\gammahat_K| \geq c|\Delta_K|$ and $\gammahat_{K^c} = 0$, $(ii)$ is because $\Delta \in \mathbb{C}(K)$. Hence, we find a $\gamma'$ to have smaller or equal $\ell_1$ norm than $\gammahat$. This contradicts with the fact that all the solutions have support $K$ or $\gammahat$ is the optimal solution. Therefore, $(b)$ must hold and $(a) \implies (b)$.
\end{proof}
We first prove that~\eqref{EqnRSN} is sufficient. For any $|K| \leq t$ and $K \subseteq [n]$, we know that $Null(\Pbar(D)) \cap \mathbb{C}(K) = \{0\}$. Then by Proposition~\ref{prop:rnp-support}, we conclude that $sign(\gammahat) = sign(\gammastar)$ with $\supp(\gammastar) = K$ for any subset $K$ of size no more than $t$.

We second prove that~\eqref{EqnRSN} is necessary. Note that for any subset $K$ of size less equal to $t$, we have $\sign(\gammahat) = \sign(\gammastar)$ with $\supp(\gammastar) = K$. By Proposition~\ref{prop:rnp-support}, it means $\Pbar(D)$ satisfies the restricted nullspace property for any such $K$. Therefore $Null(\Pbar(D)) \cap \mathbb{C}^A = \{\vec{0}\}$.
\end{proof}

Theorem~\ref{thm:active-recover} immediately holds from Theorem~\ref{thm:active-recover-sign}.

\subsection{Proof of Remark~\ref{remark:active-condition}}
We will prove the statement in Remark~\ref{remark:active-condition} here. 
\begin{proposition}\label{prop:active-condition}
The subspace $Null(\Pbar(D))$ is equivalent to $\{u \in \real^n  \mid \exists v \in \real^p, \mbox{ such that } u = Xv, X_Dv = 0\}$.
\end{proposition}
\begin{proof}[Proof of Proposition~\ref{prop:active-condition}]
We first prove $Null(\Pbar(D)) \supseteq \{u\in \real^n  \mid \exists v \in \real^p, \mbox{ such that } u = Xv, X_Dv = 0\}$.  Let $u = \left(X+M^\top X_D\right)v$ for some $v \in \real^p$, where $M \in \real^{m \times p}$ contains $m$ rows stacked with the canonical vectors indexed by $D$ so that $MX = X_D$. We have
\begin{align*}
\left(I - X\left(X^\top X +  X_D^\top X_D \right)^{-1}X^\top\right) u &= u - X\left(X^\top X + \frac{\eta n}{m} X_D^\top X_D \right)^{-1}X^\top \left(X+\frac{\eta n}{m}M^\top X_D\right)v \\
&= \frac{\eta n}{m}M^\top X_D v.
\end{align*}
Besides,  we have 
\begin{align*}
\begin{split}
X_D\left(X^\top X + \frac{\eta n}{m} X_D^\top X_D \right)^{-1}X^\top u &= X_D\left(X^\top X +  X_D^\top X_D \right)^{-1}X^\top \left(X+M^\top X_D\right)v \\
& = X_D v.
\end{split}
\end{align*}
Therefore $X_Dv = 0, u = Xv \implies u \in Null(\Pbar(D))$.

Secondly we prove $Null(\Pbar(D)) \subseteq \{u\mid \exists v \in \real^d, \mbox{ such that } u = Xv, X_Dv = 0\}$.
Let $u$ be some vector in $\mathbb{N}(X_D)$. Then we have 
\begin{align}\label{eq:active1}
u = X\left(X^\top X +  X_D^\top X_D \right)^{-1}X^\top u,
\end{align}
and
\begin{align}\label{eq:active2}
 X_D \left(X^\top X + X_D^\top X_D \right)^{-1}X^\top u = 0.
\end{align}
By~\eqref{eq:active2}, we have $\left(X^\top X + X_D^\top X_D \right)^{-1}X^\top u = v$ for some $v \in Null(X_D)$. Plugging this back to~\eqref{eq:active1}, we have $u = Xv$. Hence, we have $u \in \{u \mid \exists v \in \real^d, \mbox{ such that } u = Xv, X_D v = 0\}$.
\end{proof}

\subsection{Proof of Theorem~\ref{thm:milp}}
Here we prove the proof of Theorem~\ref{thm:milp}.
We write the minimax MILP here again.  
\begin{align}\label{eq:milp-proof}
\min_{\xi \in \{0,1\}^n} \quad  \max_{\substack{a,a^+,a^-, u, u^+, u^- \in \real^n, v \in \real^d \\ z,w \in \{0,1\}^n}} & \sum_{j=1}^n a_j^+ - a_j^-,  \\
\mbox{subject to } & u = Xv, \label{line:constraint}\\
& u = u^+ - u^-,  a = u^+ + u^-,  u^+, u^- \geq 0,  u^+ \leq  z, \ u^- \leq (\mathds{1}_n-z), \label{line:u-abs} \\
& \sum_{i=1}^n w_i \leq t, \label{line:number-bugs} \\
& a^+ \leq w, \ a^- \leq \mathds{1}_n-w, a = a^+ + a^-, a^+ \geq 0, a^- \geq 0, \label{line:a-obj} \\
& \sum_{i=1}^n \xi_i \leq m \ i = 1, \dots, n,\\
& u \leq  \mathds{1}_n - \xi, u \geq - (\mathds{1}_n - \xi) \label{line:constraint-xi-XDv}.
\end{align} 

\begin{proof}[Proof of Theorem~\ref{thm:milp}]
We first argue that if~\eqref{eq:minimax} has the unique solution of $(u,v) = (\vec{0},\vec{0})$, then~\eqref{EqnRSN} holds and thus the debugger can add $m$ points indexed by $D$ to achieve support recovery.  
\begin{equation}\label{eq:minimax}
\begin{split}
\min_{\substack{D \in [n], \\ |D| \leq m}} \max_{K \subseteq [n], |K| \leq t, u \in \real^n, v \in \real^d} & \|u_K\|_1 - \|u_{K^c}\|_1, \\
\mbox{subject to } & u = Xv, X_D v = 0, \|u\|_\infty \leq 1. 
\end{split}
\end{equation}
Suppose~\eqref{EqnRSN} doesn't hold. Then there exists $K\subseteq [n], |K| \leq t$ and a nonzero vector $u'$ such that $u'=Xv, X_Dv= 0$ and $\|u'_K\|_1 \geq \|u'_{K^c}\|_1$. And $\frac{u'}{\|u'\|_2}$ satisfies $\|u'\|_\infty \leq 1$. This contradicts with that~\eqref{eq:minimax} has the unique solution of $(u,v) = (\vec{0},\vec{0})$, then~\eqref{EqnRSN} holds. This concludes our first part of the proof.

Now we argue that the MILP is equivalent to~\eqref{eq:minimax}. Equation~\eqref{line:constraint} is inherited from original constraint. 
Equations in~\eqref{line:u-abs} are equivalent to $a = |u|$. Note that $u^+, u^-$ respectively correspond to the positive and negative parts of $u$. If $z_i = 0$, then $u_i^+ = 0$, $u^-_i \leq 1$ and $u^-_i = -u_i$. If $z_i = 1$, then $u^-_i = 0$, $u^+_i \leq 1$ and $u^+_i = u_i$. 
The vector $w$ indicates $K$ in~\eqref{eq:minimax}.  If $w_i=1$, then $i \in K$ otherwise $i \in K^c$. Therefore, equation~\eqref{line:number-bugs} restricts the attacking budget to $t$. Then, equations in~\eqref{line:a-obj} are equivalent to $a_i^+ = |u_i|, a_i^- = 0$ for $i \in K$ and $a_i^- = |u_i|, a_i^+ = 0$ for $i \in K^c$. Therefore, the objective function corresponds to $\|u_K\|_1 - \|u_{K^c}\|_1$.

Note that the variable in the first layer is $\xi$. If $\xi_i = 1$, it means the debugger queries the point $x_i$. And the constraint $X_D v = 0$ is replaced by~\eqref{line:constraint-xi-XDv}. This is because $x_i^\top v = 0 \Leftrightarrow u_i = 0$. If $\xi_j = 0$, then $u_j$ just needs to satisfy $|u_j| \leq 1$.

Therefore, we have shown that the MILP is equivalent to~\eqref{eq:minimax} and thus conclude Theorem~\ref{thm:milp}.

\end{proof}

\vfill

\bibliographystyle{spmpsci}      
\bibliography{refdebugging}

\end{document}